\newtheorem{theorem}{\textbf{Theorem}}
\newtheorem{assumption}{\textbf{Assumption}}
\newtheorem{lemma}{\textbf{Lemma}}
\newtheorem{remark}{\textbf{Remark}}
\title{Federated Compositional Deep AUC Maximization}
\author{%
   Xinwen Zhang \thanks{Equal contributions}\\
    Temple University \\
    Philadelphia, PA, USA \\
   \texttt{ellenz@temple.edu} \\
   \And
   Yihan Zhang $^*$\\
   Temple University \\
     Philadelphia, PA, USA \\
   \texttt{yihan.zhang0002@temple.edu} \\
 \And
   Tianbao Yang \\
   Texas A\&M University \\
   College Station, TX, USA \\
   \texttt{ianbao-yang@tamu.edu} \\
    \And 
   Richard Souvenir \\
    Temple University \\
      Philadelphia, PA, USA \\
   \texttt{souvenir@temple.edu} \\
    \And 
   Hongchang Gao \\
    Temple University \\
      Philadelphia, PA, USA \\
   \texttt{hongchang.gao@temple.edu} \\
}
\begin{document}

\maketitle

\begin{abstract}
  Federated learning has attracted increasing attention due to the promise of balancing privacy and large-scale learning; numerous approaches have been proposed. However, most existing approaches focus on problems with balanced data, and prediction performance is far from satisfactory for many real-world applications where the number of samples in different classes is highly imbalanced. To address this challenging problem, we developed a novel federated learning method for imbalanced data by directly optimizing the area under curve (AUC) score. In particular,  we formulate the AUC maximization problem as a federated compositional minimax optimization problem,  develop a local stochastic compositional gradient descent ascent with momentum algorithm, and provide bounds on the computational and communication complexities of our algorithm. To the best of our knowledge, this is the first work to achieve such favorable theoretical results. Finally, extensive experimental results confirm the efficacy of our method. 
\end{abstract}

\section{Introduction}\label{sec:intro}
\vspace{-0.1in}
Federated learning \cite{mcmahan2017communication,yang2013trading} is a paradigm for training a machine learning model across multiple devices without sharing the raw data from each device. 
Practically, models are trained on each device, and, periodically, model parameters are exchanged between these devices. By not sharing the data itself, federated learning allows private information in the raw data to be preserved to some extent. This property has allowed federated learning to be proposed for numerous real-world computer vision and machine learning tasks. 

Currently, one main drawback of existing federated learning methodologies is the assumption of balanced data, where the number of samples across classes is essentially the same. Most real-world data is imbalanced, even highly imbalanced. For example, in the healthcare domain, it is common to encounter problems where the amount of data from one class (e.g., patients with a rare disease) is significantly lower than the other class(es), leading to a distribution that is highly imbalanced. Traditional federated learning methods do not handle such imbalanced data scenarios very well. Specifically, training the classifier typically requires minimizing a classification-error induced loss function (e.g., cross-entropy). As a result, the resulting classifier may excel at classifying the majority, while failing to classify the minority.

To handle imbalanced data classification, the most common approach is to train the classifier by optimizing metrics designed for imbalanced data distributions. For instance, under the single-machine setting,  Ying et al.~\cite{ying2016stochastic} proposed to train the classifier by maximizing the Area under the ROC curve (AUC) score. Since the AUC score can be affected by performance on both the majority and minority classes, the classifier is less prone to favoring one class above the rest. Later, \cite{guo2020communication,yuan2021federated} extended this approach to federated learning. 
However, optimizing the AUC score introduces some new challenges, since maximizing the AUC score requires solving a minimax optimization problem, which is more challenging to optimize than conventional minimization problems. More specifically, when the classifier is a deep neural network, recent work \cite{yuan2021compositional} has  demonstrated empirically that training a deep classifier from scratch with the AUC objective function cannot learn discriminative features; the resulting classifier sometimes fails to achieve satisfactory performance.  To address this issue, \cite{yuan2021compositional} developed a compositional deep AUC maximization model under the single-machine setting, which combines the  AUC loss function and the traditional cross-entropy loss function, leading to a stochastic compositional minimax optimization problem. This compositional deep AUC maximization model can learn discriminative features, achieving superior performance over traditional models consistently.   

Considering its remarkable performance under the single-machine setting, a natural question is: 
\textbf{How can a compositional deep AUC maximization model be applied to federated learning?}
The challenge is that the loss function of the compositional model involves \textit{two levels of distributed functions}.
Moreover, the stochastic compositional gradient is a \textit{biased} estimation of the full gradient. 
Therefore, on the algorithmic design side,  it is unclear what variables should be communicated when estimating the stochastic compositional gradient. On the theoretical analysis side, it is unclear if the convergence rate can achieve the linear speedup with respect to the number of devices in the presence of a \textit{biased stochastic compositional gradient}, \textit{two levels of distributed functions}, and the \textit{minimax structure of the loss function}. 

To address the aforementioned challenges, in this paper, we developed a novel local stochastic compositional gradient descent ascent with momentum (LocalSCGDAM) algorithm for federated compositional deep AUC maximization. In particular, we demonstrated which variables should be communicated to address the  issue of two levels of distributed functions. Moreover, for this nonconvex-strongly-concave problem, we established the convergence rate of our algorithm, disclosing how the communication period and the number of devices affect the computation and communication complexities. Specifically, with theoretical guarantees, the communication period  can be as large as $O(T^{1/4}/K^{3/4})$ so that our algorithm can achieve $O(1/\sqrt{KT})$ convergence rate and $O(T^{3/4}/K^{3/4})$ communication complexity, where $K$ is the number of devices and $T$ is the number of iterations.  To the best of our knowledge, this is the first work to achieve such favorable theoretical results for the federated compositional minimax problem. Finally, we conduct extensive experiments on multiple image classification benchmark datasets, and the experimental results confirm the efficacy of our algorithm. 

In summary, we made the following important contributions in our work.
\begin{itemize}
	\vspace{-0.1in}
	\item We developed a novel federated optimization algorithm, which enables compositional deep AUC maximization for federated learning. 
	\vspace{-0.07in}
	\item We established the theoretical convergence rate of our algorithm, demonstrating how it is affected by the communication period and the number of devices. 	\vspace{-0.07in}
	\item We conducted extensive experiments on multiple imbalanced benchmark datasets, confirming the efficacy of our algorithm. 
\end{itemize}

\section{Related Work}
\vspace{-0.1in}
\paragraph{Imbalanced Data Classification.}
In the field of machine learning, there has been a fair amount of work addressing imbalanced data classification. Instead of using conventional cross-entropy loss functions, which are not suitable for imbalanced datasets, optimizing the AUC score has been proposed. For instance, Ying et al.~\cite{ying2016stochastic} have proposed the minimax loss function to optimize the AUC score for learning linear classifiers. Liu et al.~\cite{liu2019stochastic} extended this minimax method to deep neural networks and developed the nonconvex-strongly-concave loss function. Yuan et al.~\cite{yuan2021compositional} have proposed a  compositional training framework for end-to-end deep AUC maximization, which  minimizes a compositional loss function, where the outer-level function is an AUC loss, and the inner-level function substitutes a gradient descent step for minimizing a traditional loss. Based on  empirical results, this  approach improved the classification performance by a large degree. 

To address stochastic minimax optimization problems, there have been a number of diverse efforts launched in recent years. In particular, numerous stochastic gradient descent ascent (SGDA) algorithms \cite{zhang2020single,lin2020gradient,qiu2020single,yan2020optimal}  have been proposed. However, most of them focus on non-compositional optimization problems. On the other hand, to solve compositional optimization problems, existing work \cite{wang2017stochastic,zhang2019composite,yuan2019stochastic,DBLP:conf/icml/Guo0LY23} tends to only focus on the minimization problem. Only two recent works \cite{gao2021convergence,yuan2021compositional} studied how to optimize the compositional minimax optimization problem, but they focused on  the single-machine setting.

\paragraph{Federated Learning.}
In recent years, federated learning has shown promise with several empirical studies in the field of large-scale deep learning~\cite{mcmahan2017communication, povey2014parallel,su2015experiments}. The FedAvg \cite{mcmahan2017communication} algorithm has spawned a number of variants \cite{stich2018local,yu2019parallel,yu2019linear} designed to address the minimization problem. 
For instance, by maintaining a local momentum, Yu et al.~\cite{yu2019linear}  have provided rigorous theoretical studies for the convergence of the local stochastic gradient descent with momentum (LocalSGDM) algorithm. These algorithms are often applied to balanced datasets, and their performance in the imbalanced imbalanced regime is lacking.

To address minimax optimization for federated learning,  Deng et al.\cite{deng2021local} proposed local stochastic gradient descent ascent (LocalSGDA) algorithms to provably optimize federated minimax problems. However, their theoretical convergence rate was suboptimal and later improved by \cite{tarzanagh2022fednest}. However, neither method could achieve a linear speedup with respect to the number of devices. Recently, Sharma et al.~\cite{sharma2022federated} developed the local stochastic gradient descent ascent with momentum (LocalSGDAM) algorithm, whose convergence rate is able to achieve a linear speedup for nonconvex-strongly-concave optimization problems. 
Guo et al. \cite{guo2020communication} proposed and analyzed a communication-efficient distributed optimization algorithm (CoDA) for the minimax AUC loss function under the assumption of PL-condition, which can also achieve a linear speedup, in theory. Yuan et al.~\cite{yuan2021federated} extended CoDA to hetereogneous data distributions and established its convergence rate. Shen et al. \cite{shen2021agnostic} proposed to handle the imbalance issue via introducing a constrained optimization problem and then formulated it as an unconstrained minimax problem. While these algorithms are designed for  federated minimax  problems, none can deal with the federated \textit{compositional} minimax  problems.  

To handle the compositional optimization problem under the distributed setting, Gao et al. \cite{gao2021fast} developed the \textit{first} parallel stochastic compositional gradient descent algorithm and established its convergence rate for nonconvex problems, inspiring many federated learning methods \cite{huang2021compositional,wang2021memory,gao2022convergence-icml,tarzanagh2022fednest,DBLP:conf/icml/Guo0LY23} in the past few years. For instance, \cite{huang2021compositional} directly used a biased stochastic gradient to do local updates, suffering from large sample and communication complexities. \cite{gao2022convergence-icml} employed the stochastic compositional gradient and the momentum technique, which can achieve much better sample and communication complexities than \cite{huang2021compositional}. \cite{DBLP:conf/icml/Guo0LY23} considered the setting where the inner-level  function is distributed on different devices, which shares similar sample and communication complexities as \cite{gao2022convergence-icml}. However, all these works restrict their focus on the compositional \textit{minimization} problem.

It is worth noting that our method is significantly different from the heterogeneous federated learning approaches \cite{karimireddy2020scaffold,li2020federated,wang2020tackling}. Specifically, most existing heterogeneous federated learning approaches consider a setting where \textit{the local distribution is imbalanced but the global distribution is balanced}. For example, Scaffold \cite{karimireddy2020scaffold} method uses the global gradient to correct the local gradient because it assumes the global gradient is computed on a balanced distribution. On the contrary, our work considers a setting where \textit{both the local and global distributions are imbalanced}, which is much more challenging than existing heterogeneous federated learning methods.

\section{Preliminaries}\label{sec:math}
\vspace{-0.1in}
In this section, we first introduce the compositional deep AUC maximization model under the single-machine setting and then provide the problem setup in federated learning.  
\subsection{Compositional Deep AUC Maximization}
\vspace{-0.05in}
Training classifiers by optimizing AUC (\cite{hanley1982meaning,herschtal2004optimising}) is an effective way to handle highly imbalanced datasets. However, traditional AUC maximization models typically depend on  pairwise sample input, limiting the application to large-scale data. Recently, 
Ying et al.~\cite{ying2016stochastic} formulated AUC maximization model as a  minimax  optimization problem, defined as follows:
\begin{equation} \label{aucloss}
	\begin{aligned}
		& \min_{\mathbf{w},\tilde{w}_1,\tilde{w}_2} \max_{\tilde{w}_3} \mathcal{L}_{AUC} (\mathbf{w},\tilde{w}_1,\tilde{w}_2,\tilde{w}_3; a,b)\triangleq (1-p)(h(\mathbf{w};a)-\tilde{w}_1)^2\mathbb{I}_{[b=1]} -p(1-p)\tilde{w}_3^2 \\
		& \quad \quad + p(h(\mathbf{w};a)-\tilde{w}_2)^2\mathbb{I}_{[b=-1]}+ 2(1+\tilde{w}_3)(ph(\mathbf{w};a)\mathbb{I}_{[b=-1]}-(1-p)h(\mathbf{w};a)\mathbb{I}_{[b=1]}) \ ,\\
	\end{aligned}
\end{equation}
where $h$ denotes the classifier parameterized by $\mathbf{w}\in \mathbb{R}^d$,  $\tilde{w}_1\in \mathbb{R}, \tilde{w}_2 \in \mathbb{R}, \tilde{w}_3 \in \mathbb{R}$ are the parameters for measuring AUC score, $(a,b)$ substitutes the sample's feature and label, $p$ is the prior probability of the positive class, and $\mathbb{I}$ is an indicator function that takes value 1 if the argument is true and 0 otherwise. Such a minimax objective function decouples the dependence of pairwise samples so that it can be applied to large-scale data. 

Since training a deep classifier from scratch  with $\mathcal{L}_{AUC}$ loss function did not yield satisfactory performance, Yuan et al.~\cite{yuan2021compositional} developed the compositional deep AUC maximization model, which is defined as follows:
\begin{equation} \label{compositionalauc}
	\begin{aligned}
		& \min_{\tilde{\mathbf{w}},\tilde{w}_1,\tilde{w}_2} \max_{\tilde{w}_3} \mathcal{L}_{AUC} (\tilde{\mathbf{w}},\tilde{w}_1,\tilde{w}_2,\tilde{w}_3; a,b) \quad \quad  s.t. \quad \tilde{\mathbf{w}} = \mathbf{w}-\rho \nabla_{\mathbf{w}} \mathcal{L}_{CE}(\mathbf{w}; a, b) \ .
	\end{aligned}
\end{equation}
Here,  $\mathcal{L}_{CE}$ denotes the cross-entropy loss function, $\mathbf{w}-\rho \nabla_{\mathbf{w}} \mathcal{L}_{CE}$ indicates using the gradient descent method to minimize the cross-entropy loss function, where $\rho>0$ is the learning rate. Then, for the obtained model parameter $\tilde{\mathbf{w}}$, one can  optimize it  through optimizing the AUC loss function. 

By denoting $g(\mathbf{x}) = \mathbf{x} -\rho \Delta (\mathbf{x})$ and $\mathbf{y}=\tilde{w}_3$, where $\mathbf{x}=[\mathbf{w}^T, \tilde{w}_1, \tilde{w}_2]^T$,  $\Delta (\mathbf{x}) = [\nabla_{\mathbf{w}} \mathcal{L}_{CE}(\mathbf{w}; a, b)^T, 0, 0]^T$, and $f=\mathcal{L}_{AUC}$, Eq.~(\ref{compositionalauc}) can be represented as a generic compositional minimax optimization problem as follows:
\begin{equation}
	\min_{\mathbf{x}\in \mathbb{R}^{d_1}}\max_{\mathbf{y}\in \mathbb{R}^{d_2}} f(g(\mathbf{x}), \mathbf{y}) \ , 
\end{equation}
where $g$ is the inner-level function and $f$ is the outer-level function. It is worth noting that when $f$ is a nonlinear function, the stochastic gradient regarding $\mathbf{x}$ is a biased estimation of the full gradient. As such, the stochastic compositional gradient \cite{wang2017stochastic} is typically used to optimize this kind of problem. We will demonstrate how to adapt this compositional minimax optimization problem to federated learning and address the unique challenges. 

\subsection{Problem Setup}
\vspace{-0.05in}
In this paper, to optimize the  deep compositional AUC maximization problem under the cross-silo federated learning setting, we will concentrate on developing an efficient optimization algorithm to solve the following generic federated stochastic compositional minimax optimization problem: 
\vspace{-0.05in}
\begin{equation} \label{loss}
	\begin{aligned}
		& \min_{\mathbf{x}\in\mathbb{R}^{d_1}} \max_{\mathbf{y}\in \mathbb{R}^{d_2}} \frac{1}{K}\sum_{k=1}^{K} f^{(k)}\Big(\frac{1}{K}\sum_{k'=1}^{K} g^{(k')}(\mathbf{x}), \mathbf{y}\Big)  \ ,
	\end{aligned}
	\vspace{-0.05in}
\end{equation}
where $K$ is the number of devices, $g^{(k)}(\cdot)=\mathbb{E}_{\xi \sim \mathcal{D}_g^{(k)}}[g^{(k)}(\cdot;\xi)] \in \mathbb{R}^{d_g}$ denotes the inner-level function for the data distribution $\mathcal{D}_g^{(k)}$ of the $k$-th device, $f^{(k)}(\cdot, \cdot)=\mathbb{E}_{\zeta \sim \mathcal{D}_f^{(k)}}[f^{(k)}(\cdot, \cdot;\zeta)]$ represents the outer-level function for the data distribution $\mathcal{D}_f^{(k)}$ of the $k$-th device. It is worth noting that \textit{both the inner-level function and the outer-level function are distributed on different devices}, which is significantly different from traditional federated learning models. Therefore, we need to design a new federated optimization algorithm to address this unique challenge. 

Here, we introduce the commonly-used assumptions from existing work \cite{goodfellow2014explaining,zhang2019composite,yuan2021compositional,gao2021convergence} for investigating the convergence rate of our algorithm. 
\begin{assumption} \label{assumption_smooth}
	The gradient of the outer-level function $f^{(k)}(\cdot, \cdot)$ is  $L_f$-Lipschitz continuous where $L_f>0$, i.e., 
	\vspace{-0.1in}
	\begin{equation}
		\begin{aligned}
			&   \|\nabla_{g} f^{(k)}(\mathbf{z}_1, \mathbf{y}_1)-\nabla_{g}  f^{(k)}(\mathbf{z}_2, \mathbf{y}_2)\|^2  \leq L_f^2\|(\mathbf{z}_1, \mathbf{y}_1)-(\mathbf{z}_2,\mathbf{y}_2)\|^2  \ , \\
			&  \|\nabla_{\mathbf{y}}  f^{(k)}(\mathbf{z}_1, \mathbf{y}_1)-\nabla_{\mathbf{y}}  f^{(k)}(\mathbf{z}_2, \mathbf{y}_2)\|^2 \leq  L_f^2\|(\mathbf{z}_1, \mathbf{y}_1)-(\mathbf{z}_2,\mathbf{y}_2)\|^2 \ ,   \\
		\end{aligned}
	\end{equation}	
	hold for  $\forall (\mathbf{z}_1, \mathbf{y}_1), (\mathbf{z}_2, \mathbf{y}_2)\in \mathbb{R}^{d_g}\times \mathbb{R}^{d_2}$.
	The gradient of the inner-level function $g^{(k)}(\cdot)$ is $L_g$-Lipschitz continuous where $L_g>0$, i.e., 
	\vspace{-0.05in}
	\begin{equation}
		\begin{aligned}
			& \|\nabla g^{(k)}(\mathbf{x}_1) - \nabla g^{(k)}(\mathbf{x}_2)\|^2 \leq L_g^2 \|\mathbf{x}_1-\mathbf{x}_2\|^2   \ , 
		\end{aligned}
		\vspace{-0.05in}
	\end{equation}
	holds for $\forall \mathbf{x}_1, \mathbf{x}_2 \in \mathbb{R}^{d_1}$.

\end{assumption}

\begin{assumption} \label{assumption_bound_gradient}
	
	The second moment of  $\nabla_{g} f^{(k)}(\mathbf{z}, \mathbf{y}; \zeta)$ and $\nabla g^{(k)}(\mathbf{x}; \xi)$ satisfies:
	\begin{equation}
		\begin{aligned}
			& \mathbb{E}[\|\nabla_{g} f^{(k)}(\mathbf{z}, \mathbf{y}; \zeta)\|^2] \leq C_f^2  \ , \mathbb{E}[\|\nabla g^{(k)}(\mathbf{x}; \xi)\|^2]\leq C_g^2   \ , 
		\end{aligned}
	\end{equation}
	for $\forall (\mathbf{z}, \mathbf{y}) \in \mathbb{R}^{d_g}\times\mathbb{R}^{d_2}$ and $\forall \mathbf{x} \in \mathbb{R}^{d_1}$,  where $C_f>0$ and $C_g>0$.  
	Meanwhile, the second moment of the full gradient is  assumed to have the same upper bound.
\end{assumption}

\begin{assumption} \label{assumption_bound_variance}
	The variance of the stochastic gradient of  the outer-level function $f^{(k)}(\cdot, \cdot)$  satisfies:
	\begin{equation}
		\begin{aligned}
			& \mathbb{E}[\|\nabla_{g} f^{(k)}(\mathbf{z}, \mathbf{y}; \zeta) - \nabla_{g} f^{(k)}(\mathbf{z}, \mathbf{y})\|^2] \leq \sigma_f^2,  \mathbb{E}[\|\nabla_{\mathbf{y}} f^{(k)}(\mathbf{z}, \mathbf{y}; \zeta) - \nabla_{\mathbf{y}} f^{(k)}(\mathbf{z}, \mathbf{y})\|^2] \leq \sigma_f^2  \ ,  \\
		\end{aligned}
	\end{equation}
	for $\forall (\mathbf{z}, \mathbf{y}) \in \mathbb{R}^{d_g}\times\mathbb{R}^{d_2}$, where  $\sigma_f>0$.
	Additionally, the variance of the stochastic gradient  and the stochastic function value of $g^{(k)}(\cdot)$ satisfies:
	\begin{equation}
		\begin{aligned}
			& \mathbb{E}[\|\nabla g^{(k)}(\mathbf{x}; \xi) - \nabla g^{(k)}(\mathbf{x})\|^2] \leq \sigma_{g'}^2 \ ,  \mathbb{E}[\| g^{(k)}(\mathbf{x}; \xi) -  g^{(k)}(\mathbf{x})\|^2] \leq \sigma_g^2  \ ,  \\
		\end{aligned}
	\end{equation}
	for $\forall \mathbf{x} \in \mathbb{R}^{d_1}$, where $\sigma_g>0$ and $\sigma_{g'}>0$. 
\end{assumption}

\begin{assumption} \label{assumption_strong}
	The outer-level  function $f^{(k)}(\mathbf{z}, \mathbf{y})$ is $\mu$-strongly-concave with respect to $\mathbf{y}$ for any fixed $\mathbf{z} \in \mathbb{R}^{d_g}$, where $\mu>0$, i.e.,
	\begin{equation}
		\begin{aligned}
			& f^{(k)}(\mathbf{z}, \mathbf{y_1}) \leq f^{(k)}(\mathbf{z}, \mathbf{y_2}) +  \langle \nabla_{y} f^{(k)}(\mathbf{z}, \mathbf{y_2}), \mathbf{y}_1 -\mathbf{y}_2\rangle - \frac{\mu}{2}\|\mathbf{y}_1-\mathbf{y}_2\|^2 \ . \\
		\end{aligned}
	\end{equation}
\end{assumption}

\textbf{Notation:} Throughout this paper, $\mathbf{a}_t^{(k)}$ denotes the variable of the $k$-th device in the $t$-th iteration and $\bar{\mathbf{a}}_t=\frac{1}{K}\sum_{k=1}^{K}\mathbf{a}_t^{(k)}$ denotes the averaged variable across all devices, where $a$ denotes any variables used in this paper. $\mathbf{x}_{*}$ denotes the optimal solution.

\section{Methodology}
\vspace{-0.1in}
In this section, we present the details of our algorithm for the federated compositional deep AUC maximization  problem defined in Eq.~(\ref{loss}).

\begin{algorithm}[h]
	\caption{LocalSCGDAM}
	\label{alg_dscgdamgp}
	\begin{algorithmic}[1]
		\REQUIRE $\mathbf{x}_0$, $\mathbf{y}_0$, $\eta\in (0, 1)$, $\gamma_x>0$, $\gamma_y>0$, $\beta_x>0$, $\beta_y>0$, $\alpha>0$,  $\alpha\eta \in (0, 1)$, $\beta_x\eta \in (0, 1)$, $\beta_y\eta \in (0, 1)$. \\
		All workers conduct the  steps below to update $\mathbf{x}$, $\mathbf{y}$. 
		$\mathbf{x}_{0}^{(k)}=\mathbf{x}_0$ ,  
		$\mathbf{y}_{0}^{(k)}=\mathbf{y}_0$, \\  
		$\mathbf{h}_{0}^{(k)}= g^{(k)}(\mathbf{x}_{ 0}^{(k)};  \xi_{ 0}^{(k)})$ , \\
		$\mathbf{u}_{ 0}^{(k)}=\nabla g^{(k)}(\mathbf{x}_{ 0}^{(k)};  \xi_{ 0}^{(k)})^T\nabla_{g}  f^{(k)}(\mathbf{h}_{0}^{(k)}, \mathbf{y}_{0}^{(k)}; \zeta_{0}^{(k)})$, \quad 
		$\mathbf{v}_{0}^{(k)}=\nabla_{y}  f^{(k)}(\mathbf{h}_{0}^{(k)}, \mathbf{y}_{0}^{(k)}; \zeta_{0}^{(k)})$,
		\FOR{$t=0,\cdots, T-1$} 
		
		\STATE Update $\mathbf{x}$ and $\mathbf{y}$: \\
		$\mathbf{x}_{ t+1}^{(k)}= \mathbf{x}_{ t}^{(k)} -\gamma_x\eta\mathbf{u}_{ t}^{(k)}$ , 
		$ \quad \mathbf{y}_{ t+1}^{(k)}= \mathbf{y}_{ t}^{(k)} +\gamma_y\eta\mathbf{v}_{ t}^{(k)}$  \ , 
		
		\STATE Estimate the inner-level function: \\
		$\mathbf{h}_{ t+1}^{(k)} = (1- \alpha\eta) \mathbf{h}_{ t}^{(k)} +  \alpha\eta g^{(k)}(\mathbf{x}_{t+1}^{(k)};  \xi_{ t+1}^{(k)})$, \\
		\STATE Update momentum: \\
		$\mathbf{u}_{ t+1}^{(k)} = (1-\beta_x\eta)\mathbf{u}_{ t}^{(k)} + \beta_x \eta\nabla g^{(k)}(\mathbf{x}_{ t+1}^{(k)};  \xi_{ t+1}^{(k)})^T\nabla_{g}  f^{(k)}(\mathbf{h}_{t+1}^{(k)}, \mathbf{y}_{t+1}^{(k)}; \zeta_{t+1}^{(k)})$,\\
		$\mathbf{v}_{ t+1}^{(k)} = (1-\beta_y\eta) \mathbf{v}_{ t}^{(k)} + \beta_y\eta\nabla_{y}  f^{(k)}(\mathbf{h}_{t+1}^{(k)}, \mathbf{y}_{t+1}^{(k)}; \zeta_{t+1}^{(k)})$,
		
		\IF {$\text{mod}(t+1, p)==0$}
		\STATE  
		$\mathbf{h}_{t+1}^{(k)}= \bar{\mathbf{h}}_{t+1}\triangleq \frac{1}{K}\sum_{k'=1}^{K}\mathbf{h}_{ t+1}^{(k')}$  ,  \\
		$\mathbf{u}_{t+1}^{(k)}= \bar{\mathbf{u}}_{t+1}\triangleq \frac{1}{K}\sum_{k'=1}^{K}\mathbf{u}_{ t+1}^{(k')}$  ,
		$\quad \mathbf{v}_{t+1}^{(k)}= \bar{\mathbf{v}}_{t+1}\triangleq \frac{1}{K}\sum_{k'=1}^{K}\mathbf{v}_{ t+1}^{(k')}$  ,  \\
		$\mathbf{x}_{t+1}^{(k)}= \bar{\mathbf{x}}_{t+1}\triangleq \frac{1}{K}\sum_{k'=1}^{K}\mathbf{x}_{ t+1}^{(k')}$  , 
		$\quad \mathbf{y}_{t+1}^{(k)}= \bar{\mathbf{y}}_{t+1}\triangleq \frac{1}{K}\sum_{k'=1}^{K}\mathbf{y}_{ t+1}^{(k')}$  ,  \\
		\ENDIF
		\ENDFOR
	\end{algorithmic}
\end{algorithm}

To optimize Eq.~(\ref{loss}), we developed a novel local stochastic compositional gradient descent ascent with momentum algorithm, shown in  Algorithm \ref{alg_dscgdamgp}. Generally speaking, in the $t$-th iteration, we employ the local stochastic (compositional) gradient with momentum to update the local model parameters $\mathbf{x}_{t}^{(k)}$ and $\mathbf{y}_{t}^{(k)}$ on the $k$-th device. There exists an unique challenge when computing the local stochastic compositional gradient compared to traditional federated learning models. Specifically, as shown in Eq.~(\ref{loss}), \textit{the objective function depends on the global inner-level function}. However, it is not feasible to communicate the inner-level function in every iteration. To address this challenge, we propose to employ the \textit{local} inner-level function to compute the stochastic compositional gradient at each iteration and then communicate the estimation of this function periodically to obtain the global inner-level function.

In detail, since the objective function in Eq.~(\ref{loss}) is a compositional function whose stochastic gradient regarding $\mathbf{x}$, i.e., $\nabla g^{(k)}(\mathbf{x}_{t}^{(k)};  \xi_{ t}^{(k)})^T\nabla_{g}  f^{(k)}(g^{(k)}(\mathbf{x}_{ t}^{(k)};  \xi_{ t}^{(k)}), \mathbf{y}_{t}^{(k)}; \zeta_{t}^{(k)})$, is a biased estimation for the full gradient,  we employ the stochastic compositional gradient $\nabla g^{(k)}(\mathbf{x}_{ t}^{(k)};  \xi_{ t}^{(k)})^T\nabla_{g}  f^{(k)}(\mathbf{h}_{t}^{(k)}, \mathbf{y}_{t}^{(k)}; \zeta_{t}^{(k)})$ to update the model parameter $\mathbf{x}$, where $\mathbf{h}_{t}^{(k)}$ is the moving-average estimation of the inner-level function $g^{(k)}(\mathbf{x}_{ t}^{(k)};  \xi_{ t}^{(k)})$ on the $k$-th device, which is defined as follows:
\begin{equation}
	\mathbf{h}_{ t}^{(k)} = (1- \alpha\eta) \mathbf{h}_{ t-1}^{(k)} +  \alpha\eta g^{(k)}(\mathbf{x}_{t}^{(k)};  \xi_{ t}^{(k)}) \ , 
\end{equation}
where $\alpha>0$ and $\eta>0$ are two hyperparameters, and $\alpha\eta\in(0, 1)$. The objective function in Eq.~(\ref{loss}) is not compositional regarding  $\mathbf{y}$, thus we can directly leverage its stochastic gradient to perform an update.  Then, based on the obtained stochastic (compositional) gradient, we compute the momentum as follows:
\begin{equation} 
	\begin{aligned}
		& 
		\mathbf{u}_{ t}^{(k)} =(1-\beta_x\eta)\mathbf{u}_{ t-1}^{(k)} +  \beta_x \eta\nabla g^{(k)}(\mathbf{x}_{ t}^{(k)};  \xi_{ t}^{(k)})^T\nabla_{g}  f^{(k)}(\mathbf{h}_{t}^{(k)}, \mathbf{y}_{t}^{(k)}; \zeta_{t}^{(k)})   \ , \\
		& 
		\mathbf{v}_{ t}^{(k)} = (1-\beta_y\eta) \mathbf{v}_{ t-1}^{(k)}  + \beta_y\eta\nabla_{y}  f^{(k)}(\mathbf{h}_{t}^{(k)}, \mathbf{y}_{t}^{(k)}; \zeta_{t}^{(k)}) \ , \\
	\end{aligned}
\end{equation}
where $\beta_x>0$ and $\beta_y>0$ are two hyperparameters,  $\beta_x\eta \in (0, 1)$, and $\beta_y\eta \in (0, 1)$. Based on the obtained momentum, each device updates its local model parameters as follows:
\begin{equation}
	\begin{aligned}
		& \mathbf{x}_{ t+1}^{(k)}= \mathbf{x}_{ t}^{(k)} -\gamma_x\eta\mathbf{u}_{ t}^{(k)}  \  , \mathbf{y}_{ t+1}^{(k)}= \mathbf{y}_{ t}^{(k)} +\gamma_y\eta\mathbf{v}_{ t}^{(k)}   \ , 
	\end{aligned}
\end{equation}
where $\gamma_x>0$ and $\gamma_y>0$. 

As we mentioned before, to obtain the global inner-level function, our algorithm periodically communicates the moving-average estimation of the inner-level function, i.e., $\mathbf{h}_{ t+1}^{(k)}$. In particular, at every $p$ iterations, i.e., $\text{mod}(t+1,p)==0$ where $p>1$ is the \textit{communication period}, each device uploads $\mathbf{h}_{t+1}^{(k)}$ to the central server and the central server computes the average of all received variables, which will be further broadcast to all devices as follows:
\begin{equation} 
	\begin{aligned}
		\mathbf{h}_{t+1}^{(k)}= \bar{\mathbf{h}}_{t+1}\triangleq \frac{1}{K}\sum_{k'=1}^{K}\mathbf{h}_{ t+1}^{(k')} \ . \
	\end{aligned}
\end{equation}
In this way, each device is able to obtain the estimate of the global inner-level function periodically. 
As for the model parameters and momentum, we employ the same strategy as traditional federated learning methods \cite{sharma2022federated,yu2019linear} to communicate them periodically with the central server, which is shown in Step 6 in Algorithm~\ref{alg_dscgdamgp}. 

In summary, we developed a novel local stochastic compositional gradient descent ascent with momentum algorithm for the compositional minimax problem, which shows how to deal with two distributed functions in federated learning. With our algorithm, we can enable federated learning for the compositional deep AUC maximization model, benefiting imbalanced data classification tasks.

\section{Theoretical Analysis}
\vspace{-0.1in}
In this section, we provide the convergence rate of our algorithm to show how it is affected by the number of devices and communication period.

To investigate the convergence rate of our algorithm, we  introduce the following auxiliary functions:
\begin{equation}
\small
	\begin{aligned}
		& \mathbf{y}_*(\mathbf{x}) = \arg \max_{\mathbf{y}\in \mathbb{R}^{d_2}} \frac{1}{K}\sum_{k=1}^{K}f^{(k)}(\frac{1}{K}\sum_{k'=1}^{K}g^{(k')}(\mathbf{x}), \mathbf{y}) \ , 
		& \Phi(\mathbf{x}) =  \frac{1}{K}\sum_{k=1}^{K}f^{(k)}(\frac{1}{K}\sum_{k'=1}^{K}g^{(k')}(\mathbf{x}), \mathbf{y}_*(\mathbf{x})) \ .  \  \\
	\end{aligned}
\end{equation}
Then, based on  Assumptions~\ref{assumption_smooth}-\ref{assumption_strong}, we can obtain that $\Phi^{(k)}$ is $L_{\Phi}$-smooth,  where $L_{\Phi}=\frac{2C_g^2L_f^2}{\mu}  + C_fL_g$. The proof can be found in Lemma~2 of Appendix~A. In terms of these auxiliary functions, we establish the convergence rate of our algorithm.  
\begin{theorem}\label{th}
	Given Assumption~\ref{assumption_smooth}-\ref{assumption_strong}, by setting $\alpha>0$, $\beta_x>0$, $\beta_y>0$, $\eta \leq \min\{\frac{1}{2\gamma_x L_{\phi}}, \frac{1}{10p^2\gamma_yL_f}, \frac{1}{\alpha}, \frac{1}{\beta_x}, \frac{1}{\beta_y},  1\}$, $\gamma_y \leq \min\Big\{ \frac{1}{6L_f},
			\frac{3\mu\beta_y^2}{400L_f^2}, \frac{3\beta_x^2}{16\mu}\Big\}$, and $\gamma_x \leq \min\Big\{ 
			\frac{\alpha\mu}{100C_g^2L_f\sqrt{1+6L_f^2}}, \frac{\beta_x}{32\sqrt{C_g^4L_f^2+C_f^2L_g^2}} ,  \frac {\beta_y\mu}{144C_g^2L_f^2}, \frac{\sqrt{\alpha}\mu}{24C_g\sqrt{100C_g^2L_f^4+2C_g^2\mu^2}},\frac{\gamma_y\mu^2}{20C_g^2L_f^2}\Big\}$, 
	Algorithm \ref{alg_dscgdamgp} has the following convergence rate
	\begin{equation} 
		\begin{aligned}
			& \frac{1}{T}\sum_{t=0}^{T-1} \mathbb{E}[\|\nabla \Phi(\bar{\mathbf{x}}_{t})\|^2]  \leq \frac{2(\Phi({\mathbf{x}}_{0})-\Phi({\mathbf{x}}_{*})}{\gamma_x\eta T} + \frac{24C_g^2L_f^2}{\gamma_y\eta\mu T} \|{\mathbf{y}}_{0}   - \mathbf{y}^{*}({\mathbf{x}}_0)\|^2  + O(\frac{\eta}{  K}) + O(\frac{1}{ \eta T})\\
			& \quad + O(p^2\eta^2) + O(p^4\eta^4) + O(p^6\eta^6)  + O(p^8\eta^8) + O(p^{10}\eta^{10}) \ . \\
		\end{aligned}
	\end{equation}
	
\end{theorem}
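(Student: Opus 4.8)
The plan is to track the progress of $\Phi$ along the averaged iterate $\bar{\mathbf{x}}_t$ through a Lyapunov function that combines $\Phi(\bar{\mathbf{x}}_t)$ with several auxiliary error quantities, and to show that their coupled recursions contract fast enough that the averaged squared gradient telescopes. Since Lemma~2 gives that $\Phi$ is $L_{\Phi}$-smooth and the averaged update is $\bar{\mathbf{x}}_{t+1}=\bar{\mathbf{x}}_t-\gamma_x\eta\bar{\mathbf{u}}_t$, the first step is the descent inequality $\Phi(\bar{\mathbf{x}}_{t+1})\le\Phi(\bar{\mathbf{x}}_t)-\gamma_x\eta\langle\nabla\Phi(\bar{\mathbf{x}}_t),\bar{\mathbf{u}}_t\rangle+\tfrac{L_\Phi\gamma_x^2\eta^2}{2}\|\bar{\mathbf{u}}_t\|^2$. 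Writing $\langle\nabla\Phi,\bar{\mathbf{u}}_t\rangle=\tfrac12\|\nabla\Phi\|^2+\tfrac12\|\bar{\mathbf{u}}_t\|^2-\tfrac12\|\bar{\mathbf{u}}_t-\nabla\Phi(\bar{\mathbf{x}}_t)\|^2$, the negative $\tfrac12\|\nabla\Phi\|^2$ term is what I want to accumulate, so everything reduces to controlling the gradient-tracking error $\|\bar{\mathbf{u}}_t-\nabla\Phi(\bar{\mathbf{x}}_t)\|^2$ (writing $\bar g=\tfrac1K\sum_{k'}g^{(k')}$ throughout).

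Second, I would decompose this tracking error into the pieces the algorithm's auxiliary variables are designed to kill and bound each by its own one-step recursion. (i) The momentum lag: because $\bar{\mathbf{u}}_{t+1}=(1-\beta_x\eta)\bar{\mathbf{u}}_t+\beta_x\eta\,\overline{G}_{t+1}$ with $\overline{G}$ the averaged stochastic compositional gradient, the error to its conditional mean contracts at rate $(1-\beta_x\eta)$, leaves a variance term of order $\beta_x^2\eta^2\sigma^2/K$ (the $1/K$ from averaging $K$ independent stochastic gradients, which is the source of the linear speedup $O(\eta/K)$), plus drift terms from the movement of $\bar{\mathbf{x}}_t,\bar{\mathbf{y}}_t,\bar{\mathbf{h}}_t$ via $L_f,L_g$-smoothness; the same template handles $\bar{\mathbf{v}}_t$. (ii) The inner-function bias: since the compositional gradient uses $\mathbf{h}$ rather than $g(\mathbf{x})$, I bound $\mathbb{E}\|\bar{\mathbf{h}}_t-\bar g(\bar{\mathbf{x}}_t)\|^2$, whose moving-average recursion contracts at $(1-\alpha\eta)$ with variance $O(\alpha^2\eta^2\sigma_g^2/K)$ and an $\mathbf{x}$-drift term of order $\|\bar{\mathbf{x}}_{t+1}-\bar{\mathbf{x}}_t\|^2/(\alpha\eta)$. (iii) The dual suboptimality $\mathbb{E}\|\bar{\mathbf{y}}_t-\mathbf{y}_*(\bar{\mathbf{x}}_t)\|^2$: $\mu$-strong concavity yields contraction, but since $\mathbf{y}_*(\cdot)$ is $O(L_f/\mu)$-Lipschitz, updating $\mathbf{x}$ reintroduces a coupling term proportional to $\|\bar{\mathbf{x}}_{t+1}-\bar{\mathbf{x}}_t\|^2$.

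Third — the federated heart of the argument — I would bound the consensus (local-drift) errors $\tfrac1K\sum_k\mathbb{E}\|\mathbf{a}_t^{(k)}-\bar{\mathbf{a}}_t\|^2$ for each of $\mathbf{a}\in\{\mathbf{x},\mathbf{y},\mathbf{h},\mathbf{u},\mathbf{v}\}$. These vanish at every communication step and grow over the at-most-$p$ local steps in between; using the bounded-gradient Assumption~\ref{assumption_bound_gradient} and the smoothness assumptions, each variable's drift is controlled by the drift of the others it depends on (the $\mathbf{u}$-drift feeds on $\mathbf{x}$-, $\mathbf{y}$-, $\mathbf{h}$-drift, the $\mathbf{h}$-drift on $\mathbf{x}$-drift, and so on), so the five bounds form a coupled linear system. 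Unrolling it over a window of length $p$ produces a finite geometric-type expansion in the small quantity $p\eta$, and it is exactly this chained substitution across the five auxiliary variables that generates the tower of terms $O(p^2\eta^2),O(p^4\eta^4),\dots,O(p^{10}\eta^{10})$ in the final bound.

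Finally, I would assemble a single potential $V_t=\Phi(\bar{\mathbf{x}}_t)+c_1\|\bar{\mathbf{y}}_t-\mathbf{y}_*(\bar{\mathbf{x}}_t)\|^2+c_2\|\bar{\mathbf{h}}_t-\bar g(\bar{\mathbf{x}}_t)\|^2$ plus weighted gradient-tracking errors of $\bar{\mathbf{u}}_t,\bar{\mathbf{v}}_t$ and weighted consensus terms, and choose the constants $c_i$ together with the stated bounds on $\gamma_x,\gamma_y,\eta$ so that every positive cross term (drift, coupling, variance) is absorbed by a contraction term, leaving $-\tfrac{\gamma_x\eta}{2}\|\nabla\Phi(\bar{\mathbf{x}}_t)\|^2$ as the dominant per-step decrease. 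Telescoping $V_{t+1}\le V_t-\tfrac{\gamma_x\eta}{2}\|\nabla\Phi(\bar{\mathbf{x}}_t)\|^2+(\text{variance}+\text{consensus})$ from $0$ to $T-1$ and dividing by $\gamma_x\eta T$ yields the initialization terms $\tfrac{2(\Phi(\mathbf{x}_0)-\Phi(\mathbf{x}_*))}{\gamma_x\eta T}$ and $\tfrac{24C_g^2L_f^2}{\gamma_y\eta\mu T}\|\mathbf{y}_0-\mathbf{y}_*(\mathbf{x}_0)\|^2$, together with $O(\eta/K)+O(1/(\eta T))$ and the consensus tower. The main obstacle I anticipate is precisely the bookkeeping of step (iii): keeping the five mutually coupled error recursions and their consensus counterparts consistent, and verifying that one choice of Lyapunov weights and step sizes (the delicate constraints on $\gamma_x,\gamma_y$) makes every coupling coefficient nonpositive, all while the biased compositional gradient and the product-of-averages structure of $\nabla\Phi$ obstruct the clean cancellations available in the non-compositional minimax setting.
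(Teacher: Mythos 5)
Your proposal is correct and follows essentially the same route as the paper's proof: the $L_{\Phi}$-smoothness descent step with the inner-product identity, the decomposition of the gradient-tracking error into momentum lag ($\bar{\mathbf{u}}_t$, $\bar{\mathbf{v}}_t$ recursions contracting at rates $\beta_x\eta$, $\beta_y\eta$ with $O(\sigma^2/K)$ variance), the inner-function estimation error $\|\bar{\mathbf{h}}_t-\bar g(\bar{\mathbf{x}}_t)\|^2$ contracting at rate $\alpha\eta$, the dual suboptimality $\|\bar{\mathbf{y}}_t-\mathbf{y}_*(\bar{\mathbf{x}}_t)\|^2$ via strong concavity and the Lipschitzness of $\mathbf{y}_*(\cdot)$, and the five coupled consensus recursions over windows of length $p$ whose chained substitution produces the $O(p^2\eta^2)$ through $O(p^{10}\eta^{10})$ tower, concluded by step-size conditions that make the net coefficients of $\frac{1}{T}\sum_t\mathbb{E}[\|\bar{\mathbf{u}}_t\|^2]$ and $\frac{1}{T}\sum_t\mathbb{E}[\|\bar{\mathbf{v}}_t\|^2]$ non-positive. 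The only cosmetic difference is bookkeeping: the paper telescopes each error recursion separately over $t$ and substitutes the resulting time-averaged bounds into the summed descent inequality, rather than assembling an explicit per-step Lyapunov function, which is algebraically equivalent to your plan.
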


\begin{remark}
	In terms of Theorem \ref{th}, for sufficiently large $T$, by setting the learning rate $\eta = O(K^{1/2}/T^{1/2}), p = O(T^{1/4}/K^{3/4})$, Algorithm \ref{alg_dscgdamgp} can achieve $O(1/\sqrt{KT})$ convergence rate, which indicates a linear speedup with respect to the number of devices $K$.  In addition, it is straightforward to show that the communication complexity of our algorithm is $T/p=O(K^{3/4}T^{3/4})$. Moreover, to achieve the $\epsilon$-accuracy solution, i.e., $ \frac{1}{T}\sum_{t=0}^{T-1} \mathbb{E}[\|\nabla \Phi(\bar{\mathbf{x}}_{t})\|^2]\leq \epsilon^2$, by setting $\eta=O(K\epsilon^2)$ and $p=O(1/(K\epsilon))$, then the sample complexity on each device is $O(1/(K\epsilon^4))$ and the  communication complexity is $O(1/\epsilon^3)$.
\end{remark}

\paragraph{Challenges.} The compositional structure in the loss function, especially when the inner-level functions are distributed on different devices, makes the convergence analysis challenging. In fact, the existing federated compositional \textit{minimization} algorithms \cite{gao2022convergence-icml,tarzanagh2022fednest,huang2021compositional,wang2021memory} only consider a much simpler case where the inner-level functions are not distributed across devices. Therefore, our setting is much more challenging than existing works. On the other hand, all existing federated compositional \textit{minimization} algorithms \cite{gao2022convergence-icml,tarzanagh2022fednest,huang2021compositional,wang2021memory} fail to achieve linear speedup with respect to the numbe of devices. Thus, it is still unclear whether the linear speedup is achievable for federated compositional optimization algorithm. In this paper, we successfully addressed these challenges with novel theoretical analysis strategies and achieved the linear speedup for the first time for federated compositional minimax optimization algorithms. We believe our approaches, e.g., that for bounding consensus errors, can be applied to the minimization algorithms to achieve linear speedup.

\section{Experiments}
In this section, we present the experimental results to demonstrate the performance of our algorithm. 

\begin{figure*}[h!] 
\vspace{-0.15in}
	\centering 
	\hspace{-12pt}
    \vspace{-0.05in}
	\subfigure[CATvsDOG]{	\includegraphics[scale=0.315]{./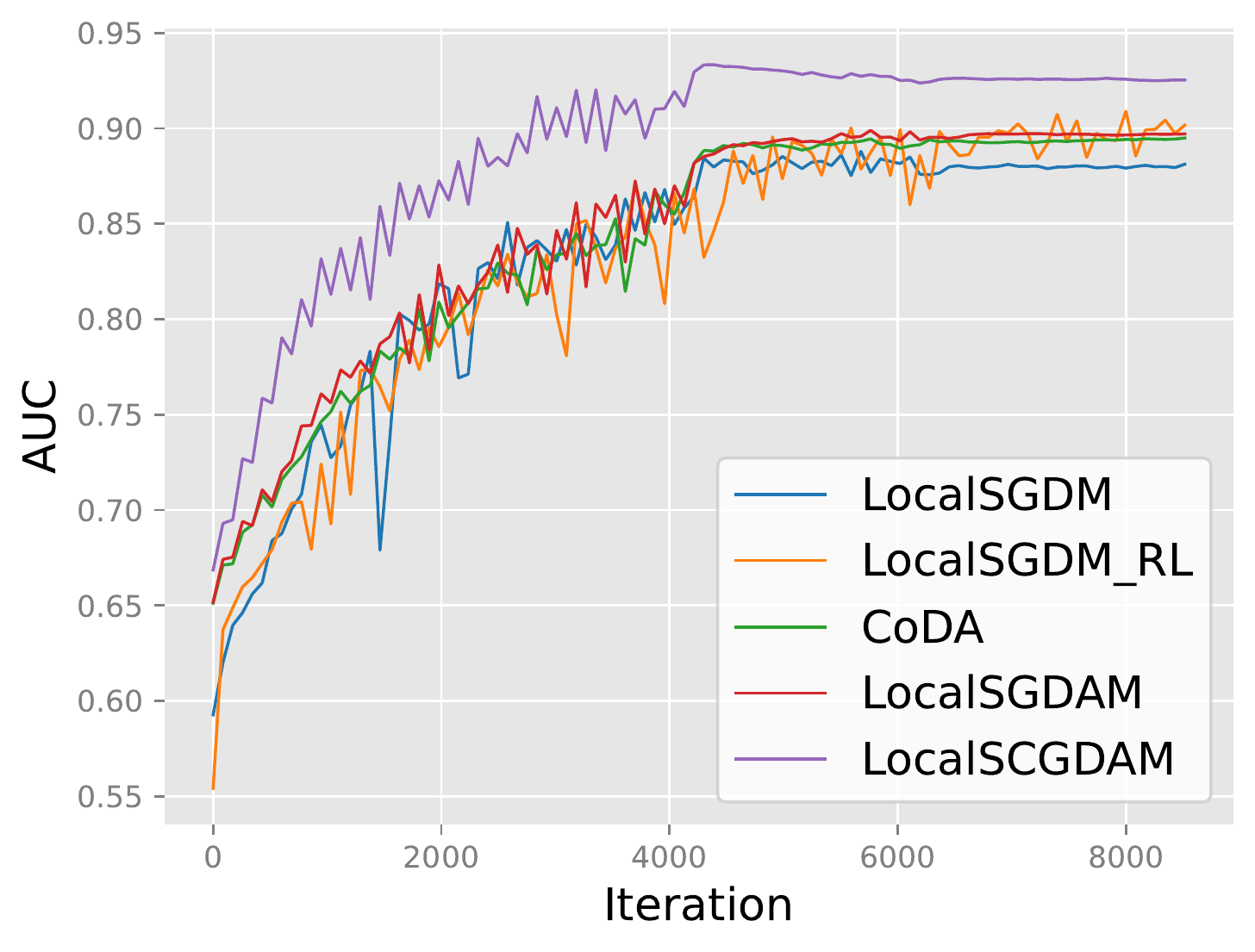}
	}
	\hspace{-12pt}
	\subfigure[CIFAR10]{
		\includegraphics[scale=0.315]{./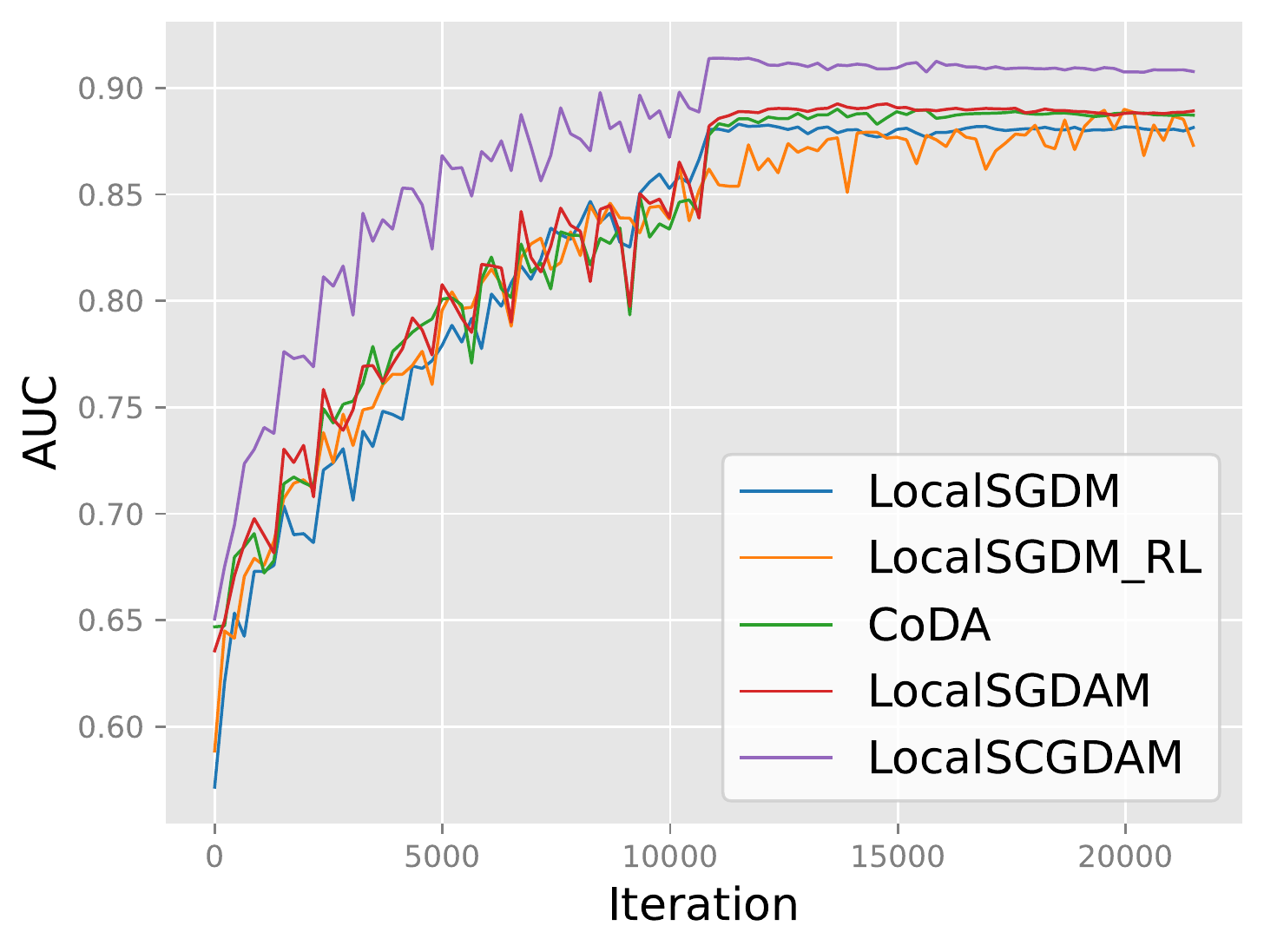}
	}
	\hspace{-12pt}
	\subfigure[CIFAR100]{
		\includegraphics[scale=0.315]{./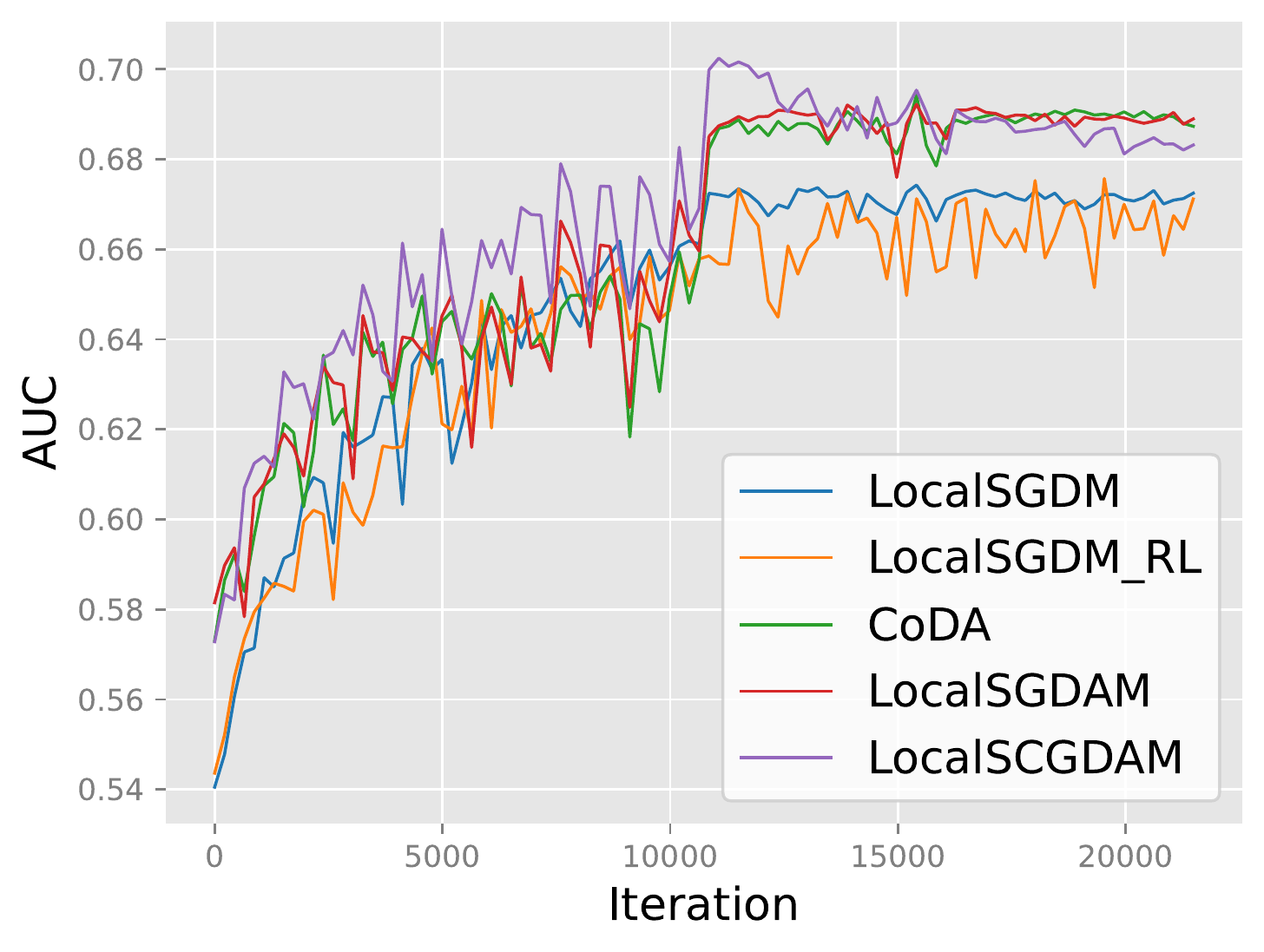}
	}\\
	\hspace{-12pt}
    \vspace{-0.05in}
	\subfigure[STL10]{
		\includegraphics[scale=0.315]{./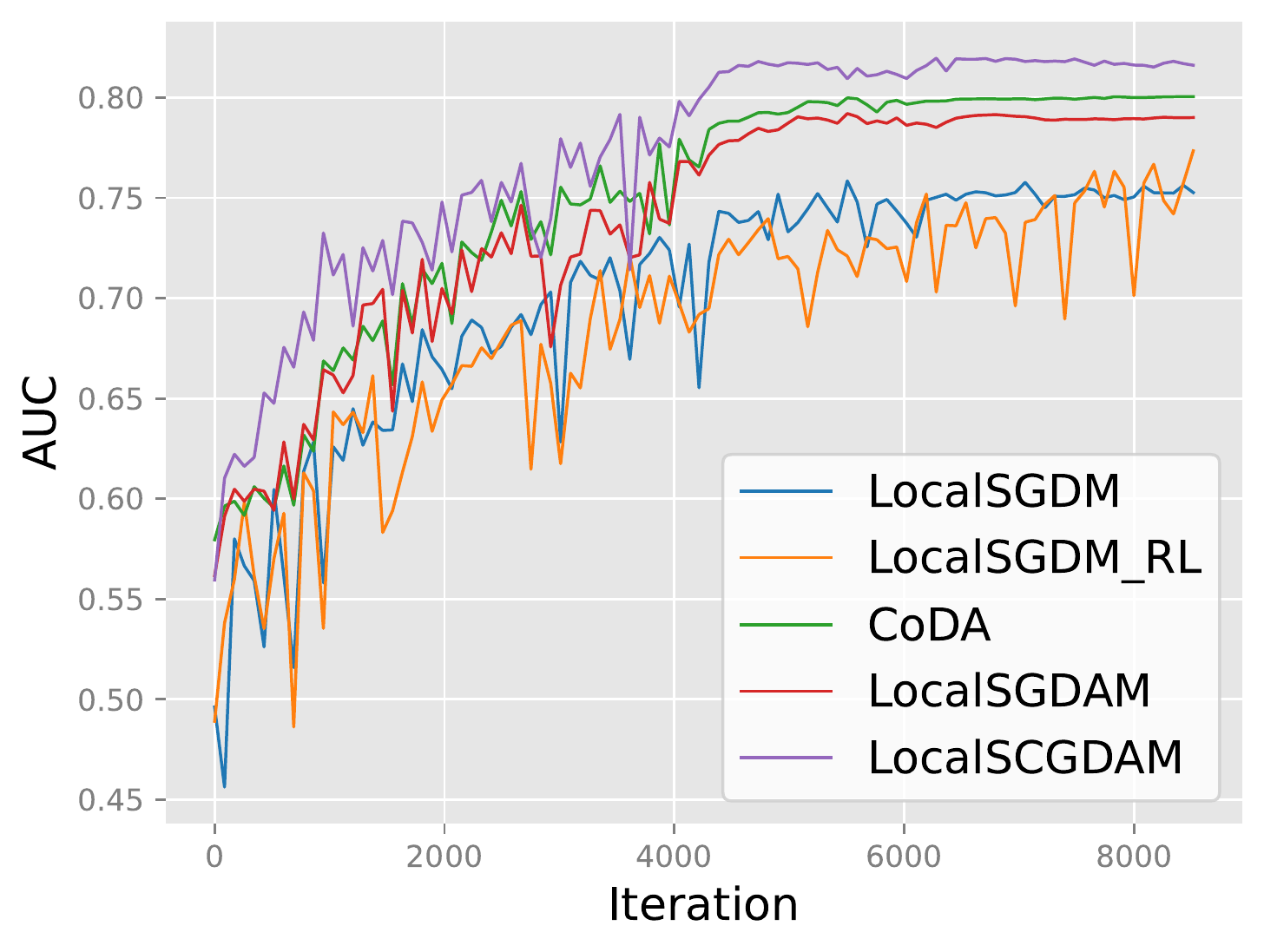}
	}
	\hspace{-12pt}
	\subfigure[FashionMNIST]{
		\includegraphics[scale=0.315]{./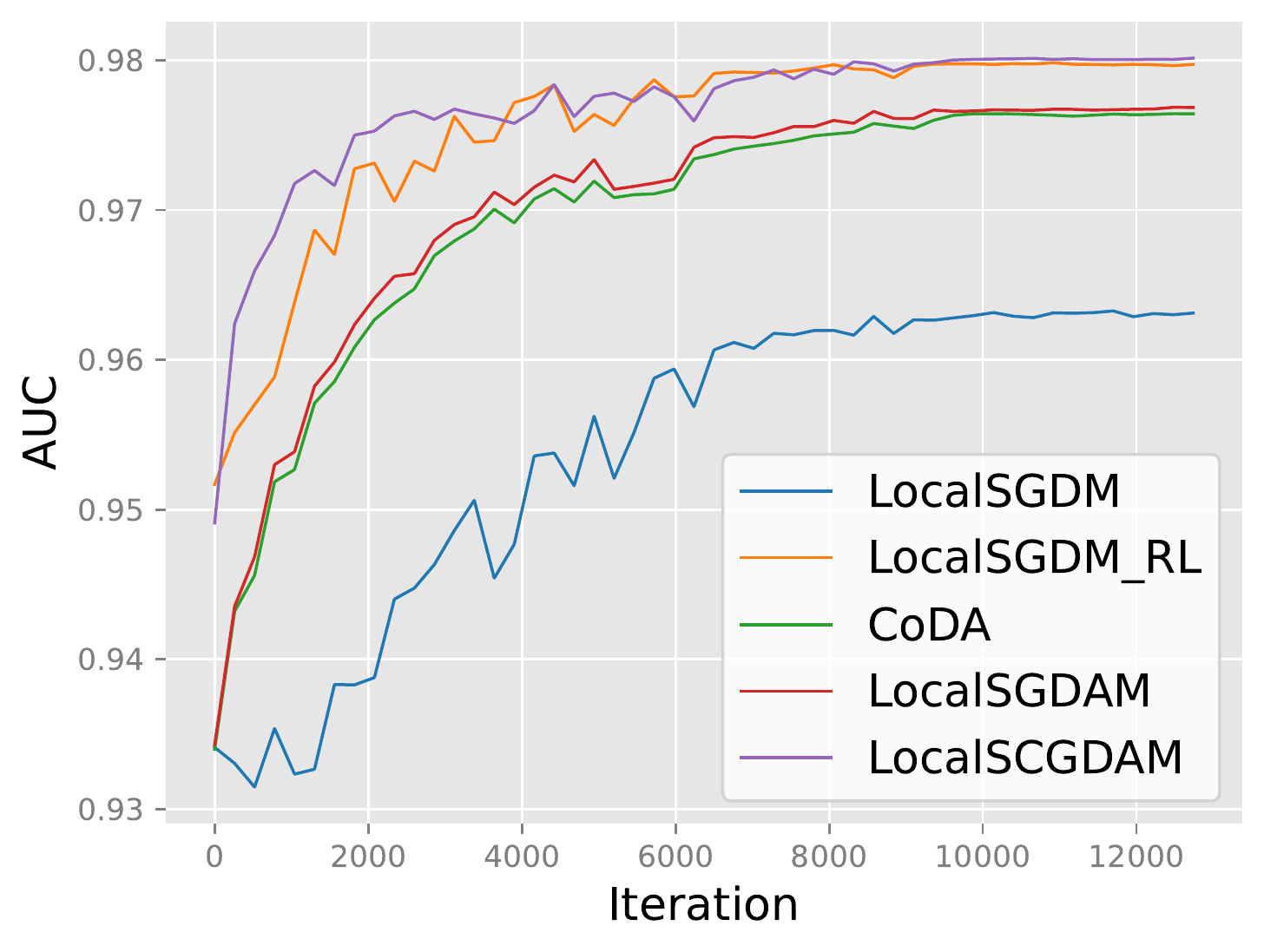}
	}
	\hspace{-12pt}
	\subfigure[Melanoma]{
		\includegraphics[scale=0.315]{./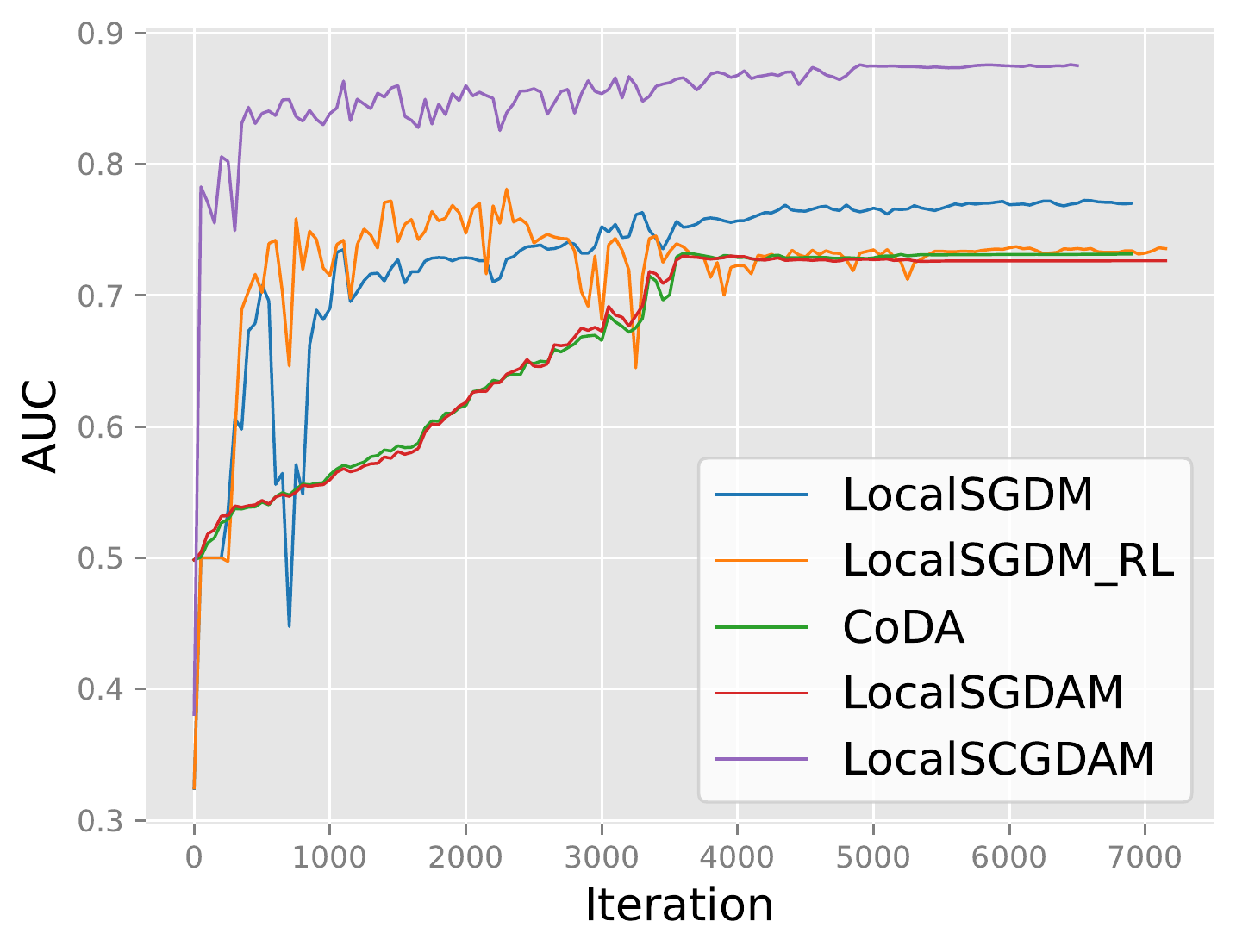}
	}
	\vspace{-0.05in}
	\caption{Testing performance with AUC score versus the number of iterations when the communication period $p=4$. }
	\label{fig:p=4}
\end{figure*}

\begin{figure*}[ht] 
\vspace{-0.15in}
	\centering 
	\hspace{-12pt}
    \vspace{-0.05in}
	\subfigure[CATvsDOG]{
		\includegraphics[scale=0.315]{./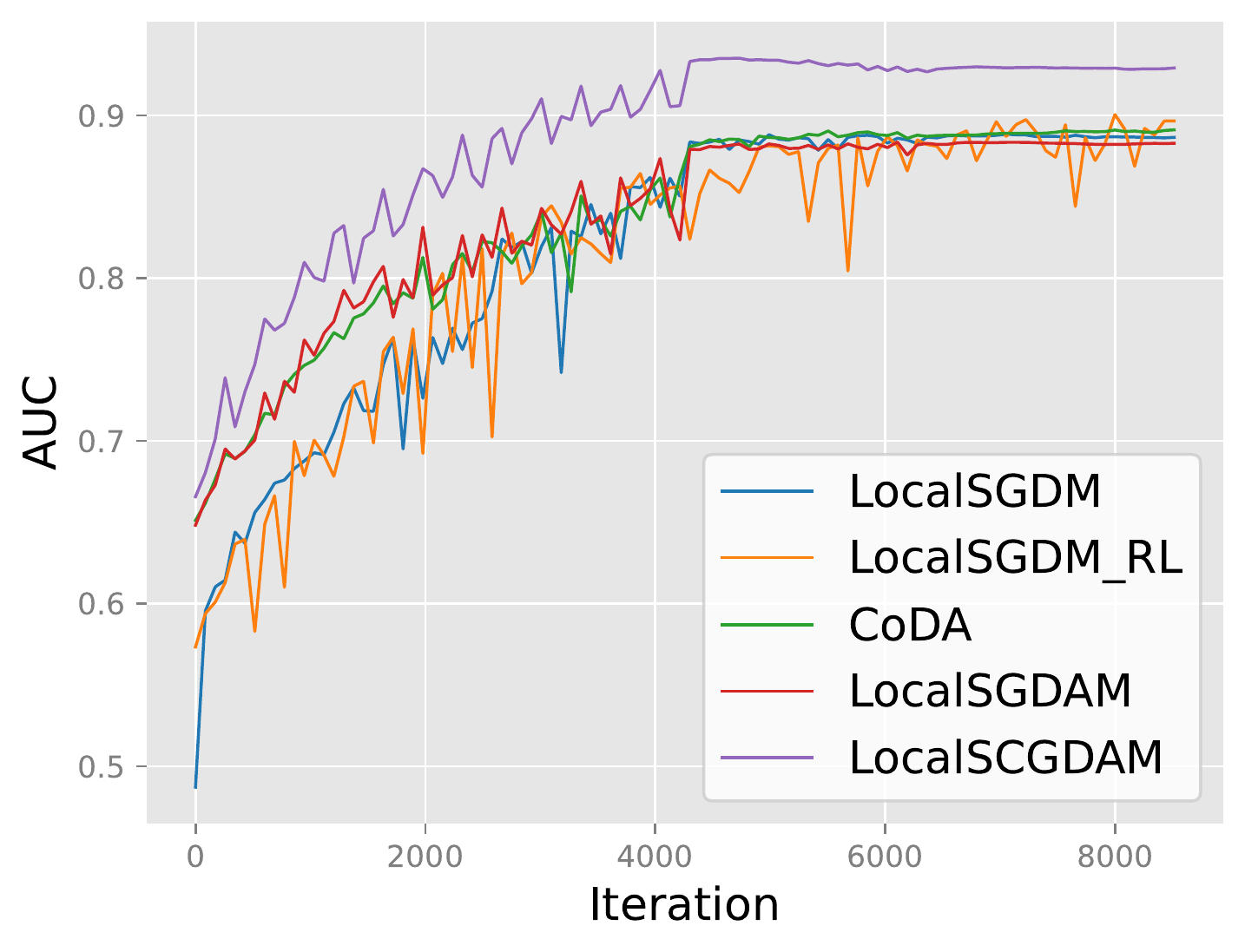}
	}
	\hspace{-12pt}
	\subfigure[CIFAR10]{
		\includegraphics[scale=0.315]{./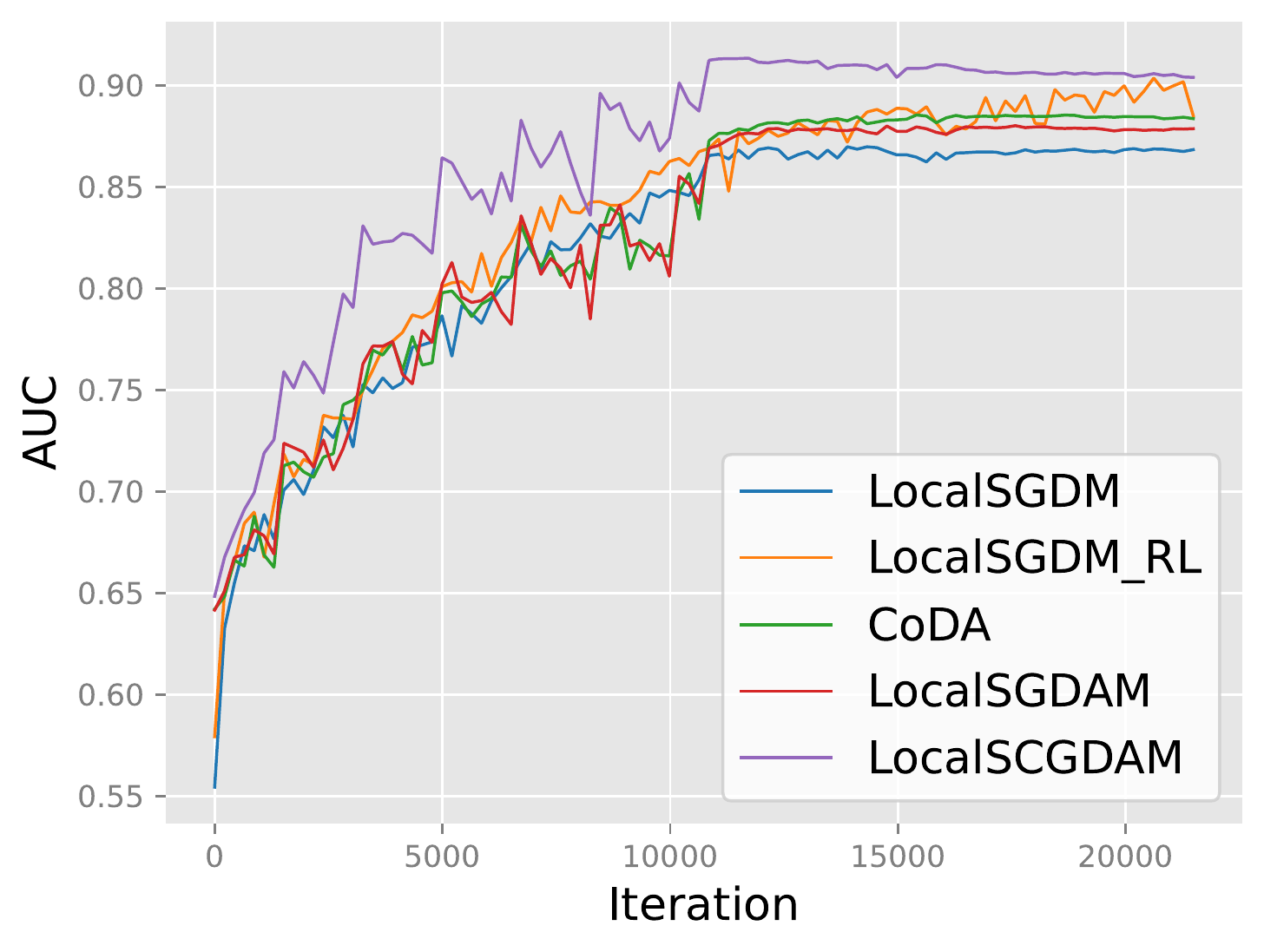}
	}
	\hspace{-12pt}
	\subfigure[CIFAR100]{
		\includegraphics[scale=0.315]{./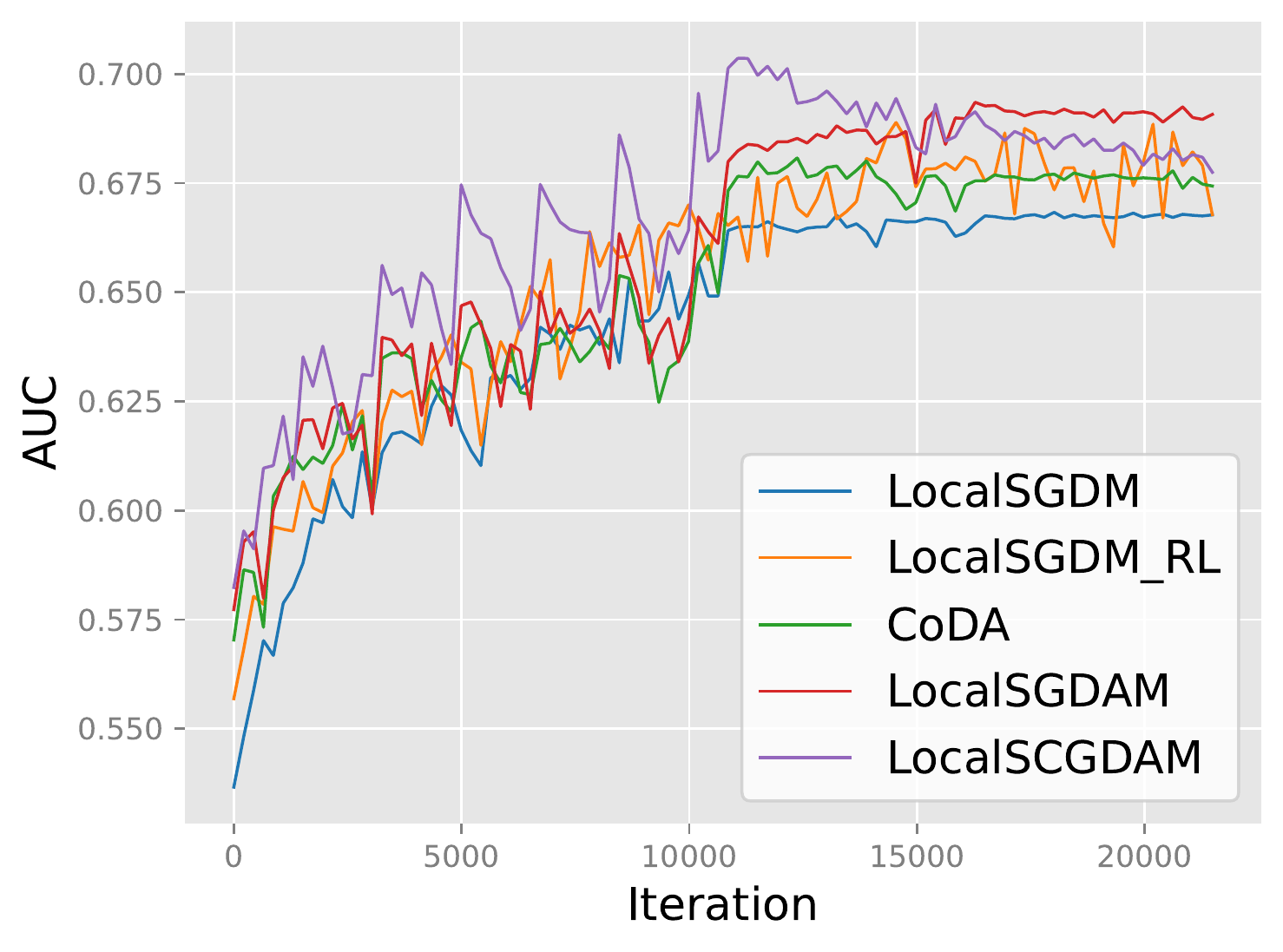}
	}\\
	\hspace{-12pt}
    \vspace{-0.05in}
	\subfigure[STL10]{
		\includegraphics[scale=0.315]{./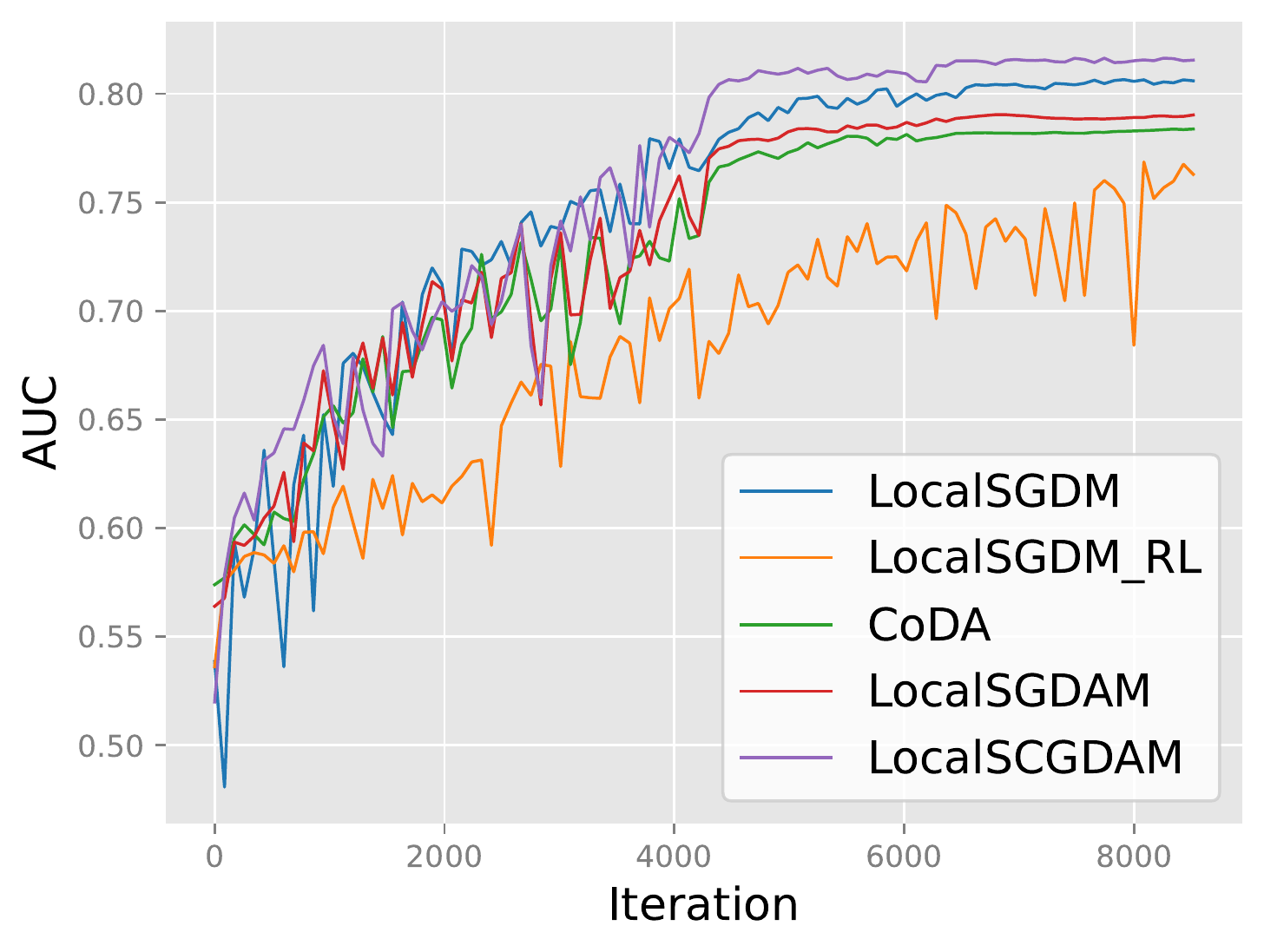}
	}
	\hspace{-12pt}
	\subfigure[FashionMNIST]{
		\includegraphics[scale=0.315]{./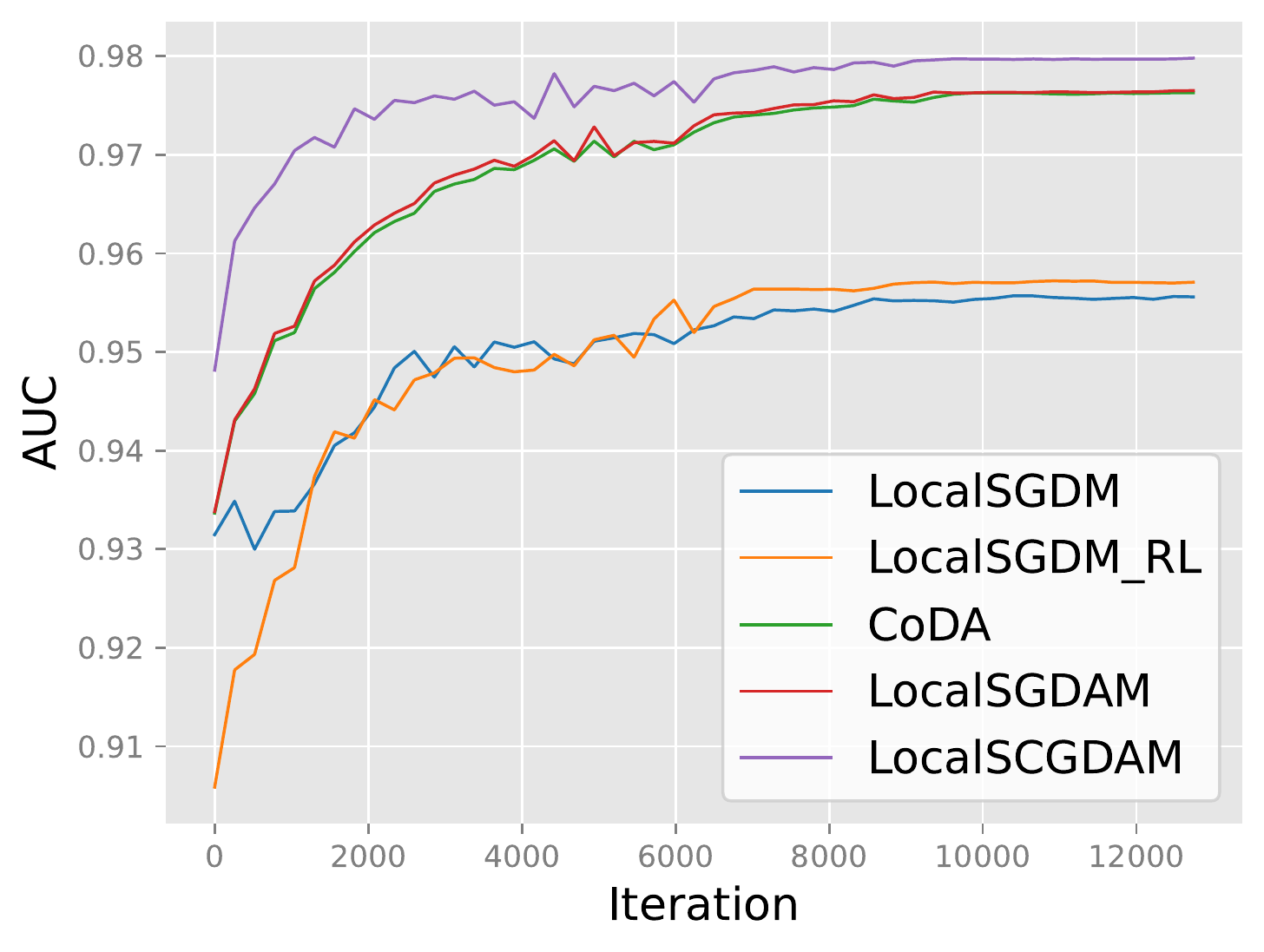}
	}
	\hspace{-12pt}
	\subfigure[Melanoma]{
		\includegraphics[scale=0.315]{./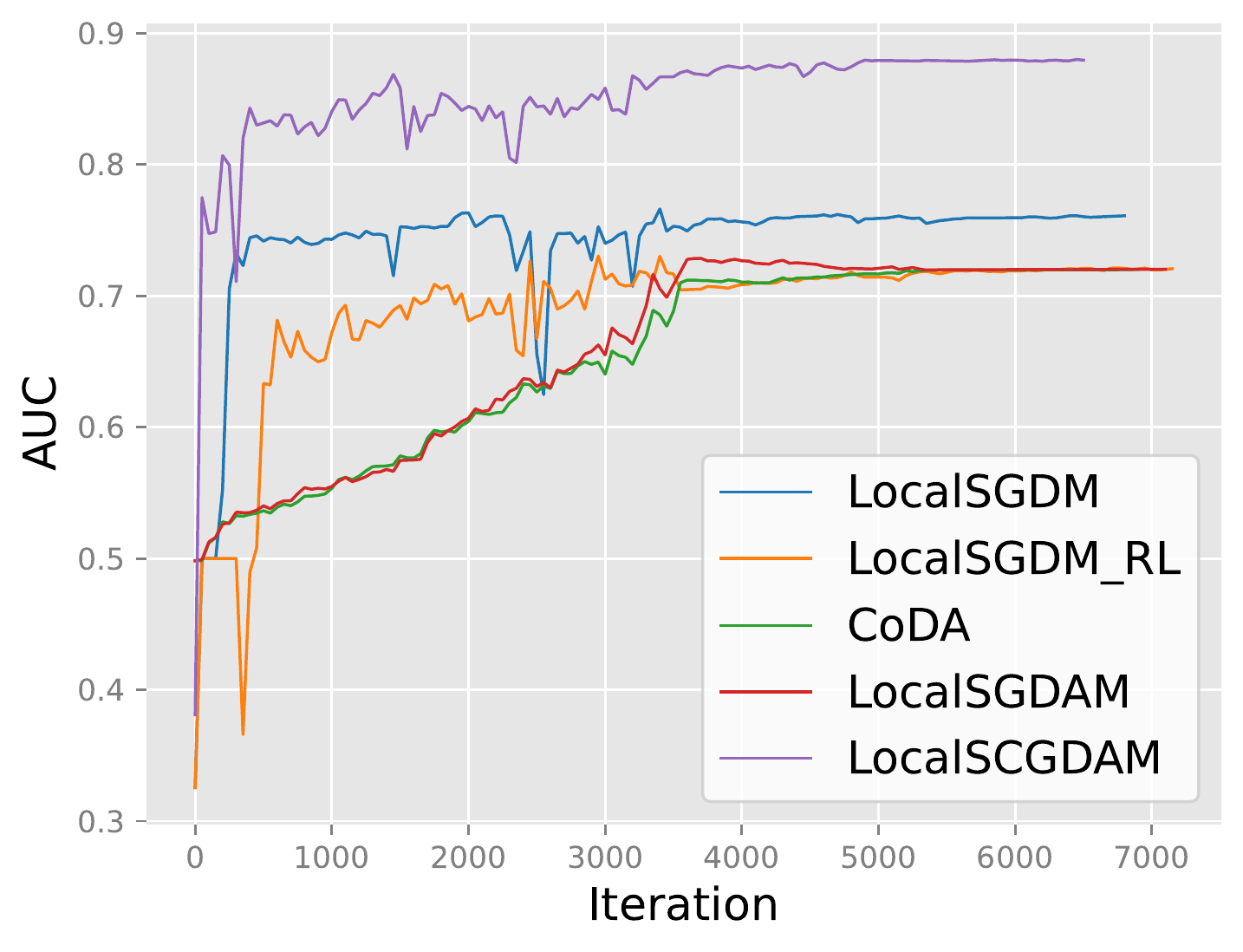}
	}
	\caption{Testing performance with AUC score versus the number of iterations when the communication period $p=8$. }
	\label{fig:p=8}
\end{figure*}

\vspace{-0.1in}
\begin{table*}[h!] 
\vspace{-0.18in}

	\setlength{\tabcolsep}{10pt}
	\caption{The comparison between the test AUC score of different methods on all datasets. Here, $p$ denotes the communication period. }
    \vspace{-0.1in}
	\label{result_auc}
	\begin{center}
		\begin{tabular}{l|c|ccc}
			\toprule
			{Datasets} & {Methods} &  \multicolumn{3}{c} { AUC }  \\
			\cline{3-5}
			& & \ $p=4$ & \ $p=8$ & \ $p=16$ \\
			\midrule
			{CATvsDOG} &\ \textbf{LocalSCGDAM} & \ \textbf{0.933$\pm$0.000} & \ \textbf{0.936$\pm$0.000} & \ \textbf{0.928$\pm$0.000}   \\
			& \ CoDA & \ 0.895$\pm$0.000 & \ 0.892$\pm$0.000 & \ 0.883$\pm$0.001 \\
			& \ LocalSGDAM & \ 0.899$\pm$0.000 & \ 0.884$\pm$0.000 & \ 0.884$\pm$0.001 \\
			& \ LocalSGDM & \ 0.888$\pm$0.001 & \ 0.889$\pm$0.000 & \ 0.887$\pm$0.000 \\
   		& \ LocalSGDM\_RL & \ 0.909$\pm$0.000 & \ 0.901$\pm$0.001 & \ 0.917$\pm$0.001 \\
			\hline
			{CIFAR10} &\ \textbf{LocalSCGDAM} & \ \textbf{0.914$\pm$0.000} & \ \textbf{0.914$\pm$0.000} & \ \textbf{0.916$\pm$0.000}   \\
			& \ CoDA & \ 0.890$\pm$0.000 & \ 0.886$\pm$0.000 & \ 0.883$\pm$0.000 \\
			& \ LocalSGDAM & \ 0.893$\pm$0.000 & \ 0.880$\pm$0.000 & \ 0.880$\pm$0.000 \\
			& \ LocalSGDM & \ 0.883$\pm$0.001 & \ 0.871$\pm$0.000 & \ 0.874$\pm$0.000 \\
   		& \ LocalSGDM\_RL & \ 0.890$\pm$0.000 & \ 0.904$\pm$0.001 & \ 0.883$\pm$0.001 \\
			\hline
			{CIFAR100} &\ \textbf{LocalSCGDAM} & \ \textbf{0.702$\pm$0.000} & \ \textbf{0.704$\pm$0.001} & \ \textbf{0.703$\pm$0.001}   \\
			& \ CoDA & \ 0.694$\pm$0.001 & \ 0.681$\pm$0.001 & \ 0.685$\pm$0.000 \\
			& \ LocalSGDAM & \ 0.692$\pm$0.000 & \ 0.694$\pm$0.000 & \ 0.689$\pm$0.001 \\
			& \ LocalSGDM & \ 0.675$\pm$0.001 & \ 0.669$\pm$0.000 & \ 0.669$\pm$0.000 \\
   		& \ LocalSGDM\_RL & \ 0.676$\pm$0.000 & \ 0.690$\pm$0.001 & \ 0.682$\pm$0.001 \\
			\hline
			{STL10} &\ \textbf{LocalSCGDAM} & \ \textbf{0.820$\pm$0.001} & \ \textbf{0.817$\pm$0.000} & \ \textbf{0.801$\pm$0.000}   \\
			& \ CoDA & \ 0.801$\pm$0.000 & \ 0.784$\pm$0.000 & \ 0.783$\pm$0.000 \\
			& \ LocalSGDAM & \ 0.792$\pm$0.000 & \ 0.790$\pm$0.000 & \ 0.780$\pm$0.000 \\
			& \ LocalSGDM & \ 0.760$\pm$0.001 & \ 0.808$\pm$0.000 & \ 0.757$\pm$0.001 \\
   		& \ LocalSGDM\_RL & \ 0.773$\pm$0.001 & \ 0.771$\pm$0.003 & \ 0.752$\pm$0.003 \\
			\hline
			{FashionMNIST} &\ \textbf{LocalSCGDAM} & \ \textbf{0.980$\pm$0.000} & \ \textbf{0.980$\pm$0.000} & \ \textbf{0.980$\pm$0.000}   \\
			& \ CoDA & \ 0.976$\pm$0.000 & \ 0.976$\pm$0.000 & \ 0.976$\pm$0.000 \\
			& \ LocalSGDAM & \ 0.977$\pm$0.000 & \ 0.977$\pm$0.000 & \ 0.976$\pm$0.000 \\
			& \ LocalSGDM & \ 0.963$\pm$0.000 & \ 0.956$\pm$0.000 & \ 0.955$\pm$0.000 \\
   		& \ LocalSGDM\_RL & \ 0.980$\pm$0.000 & \ 0.957$\pm$0.000 & \ 0.962$\pm$0.000 \\
			\hline
			{Melanoma} &\ \textbf{LocalSCGDAM} & \ \textbf{0.876$\pm$0.000} & \ \textbf{0.880$\pm$0.000} & \ \textbf{0.870$\pm$0.000}   \\
			& \ CoDA & \ 0.734$\pm$0.002 & \ 0.721$\pm$0.000 & \ 0.725$\pm$0.003 \\
			& \ LocalSGDAM & \ 0.730$\pm$0.000 & \ 0.729$\pm$0.000 & \ 0.721$\pm$0.003 \\
			& \ LocalSGDM & \ 0.774$\pm$0.001 & \ 0.766$\pm$0.001 & \ 0.750$\pm$0.000 \\
   		& \ LocalSGDM\_RL & \ 0.737$\pm$0.001 & \ 0.721$\pm$0.000 & \ 0.773$\pm$0.001 \\
			\bottomrule 
		\end{tabular}
	\end{center}
 \vspace{-0.05in}
\end{table*}

\begin{figure*}[ht] 
\vspace{-0.2in}
	\centering 
	\hspace{-12pt}
    \vspace{-0.05in}
	\subfigure[CATvsDOG]{
		\includegraphics[scale=0.315]{./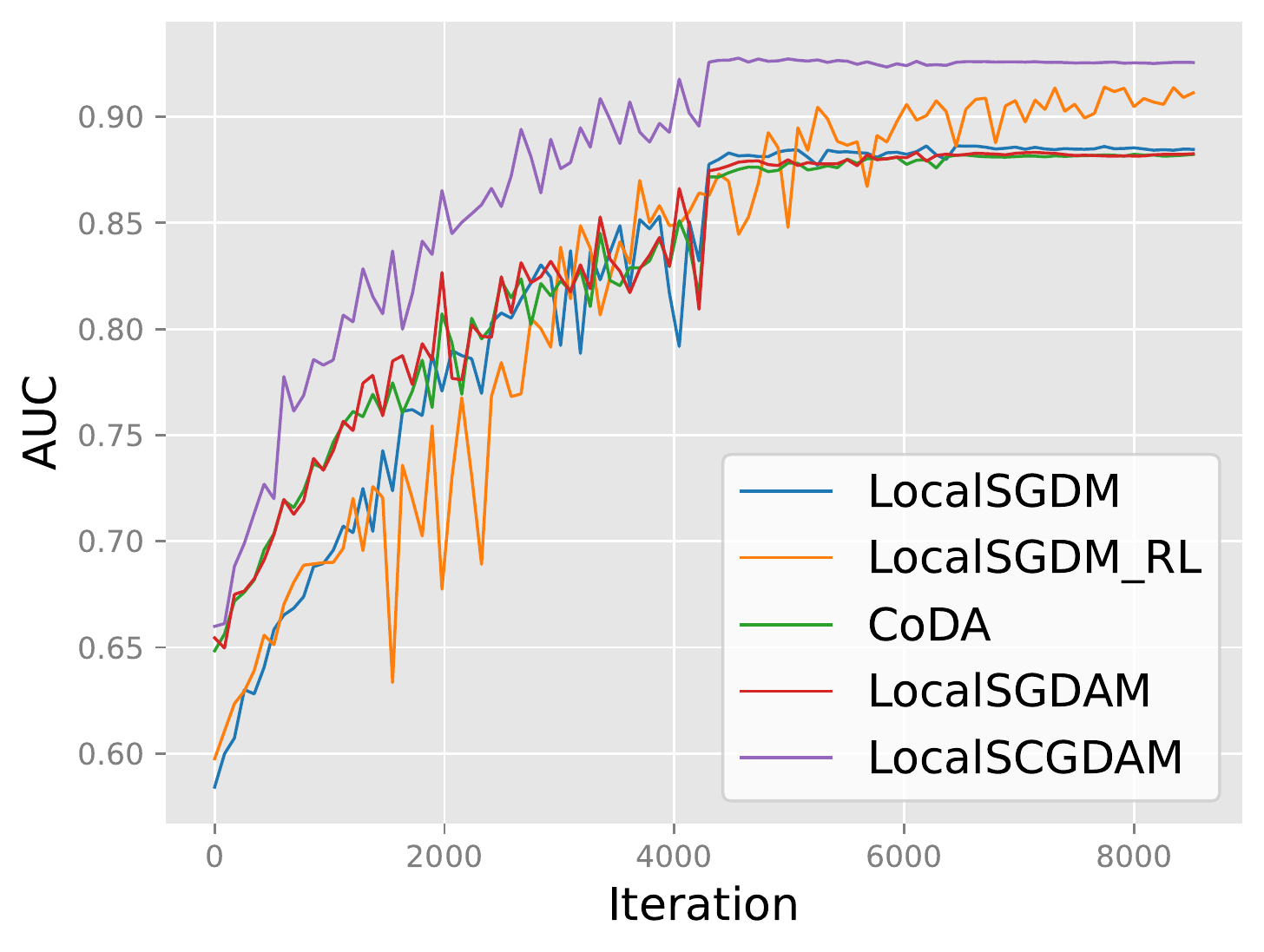}
	}
	\hspace{-12pt}
	\subfigure[CIFAR10]{
		\includegraphics[scale=0.315]{./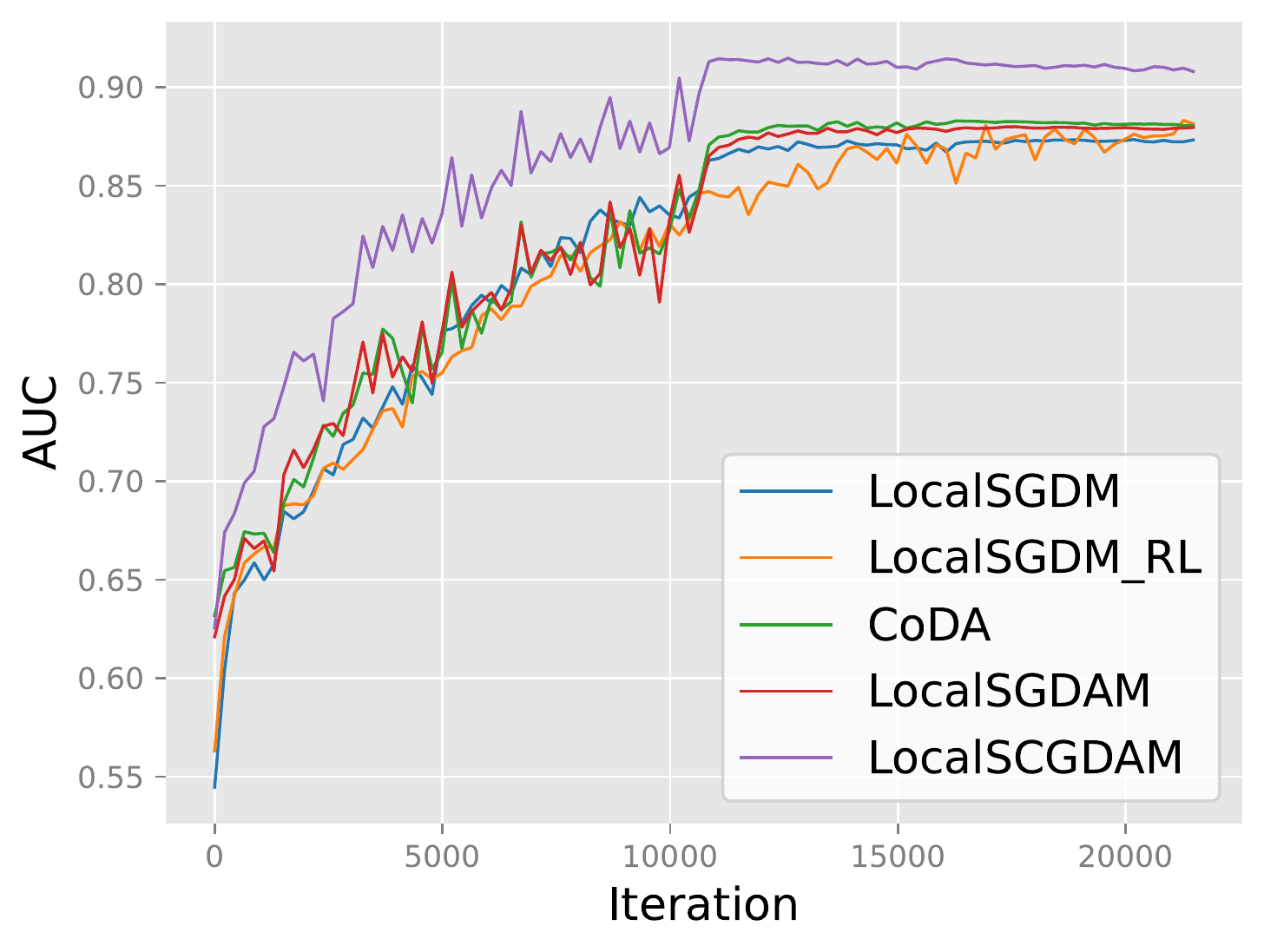}
	}
	\hspace{-12pt}
	\subfigure[CIFAR100]{
		\includegraphics[scale=0.315]{./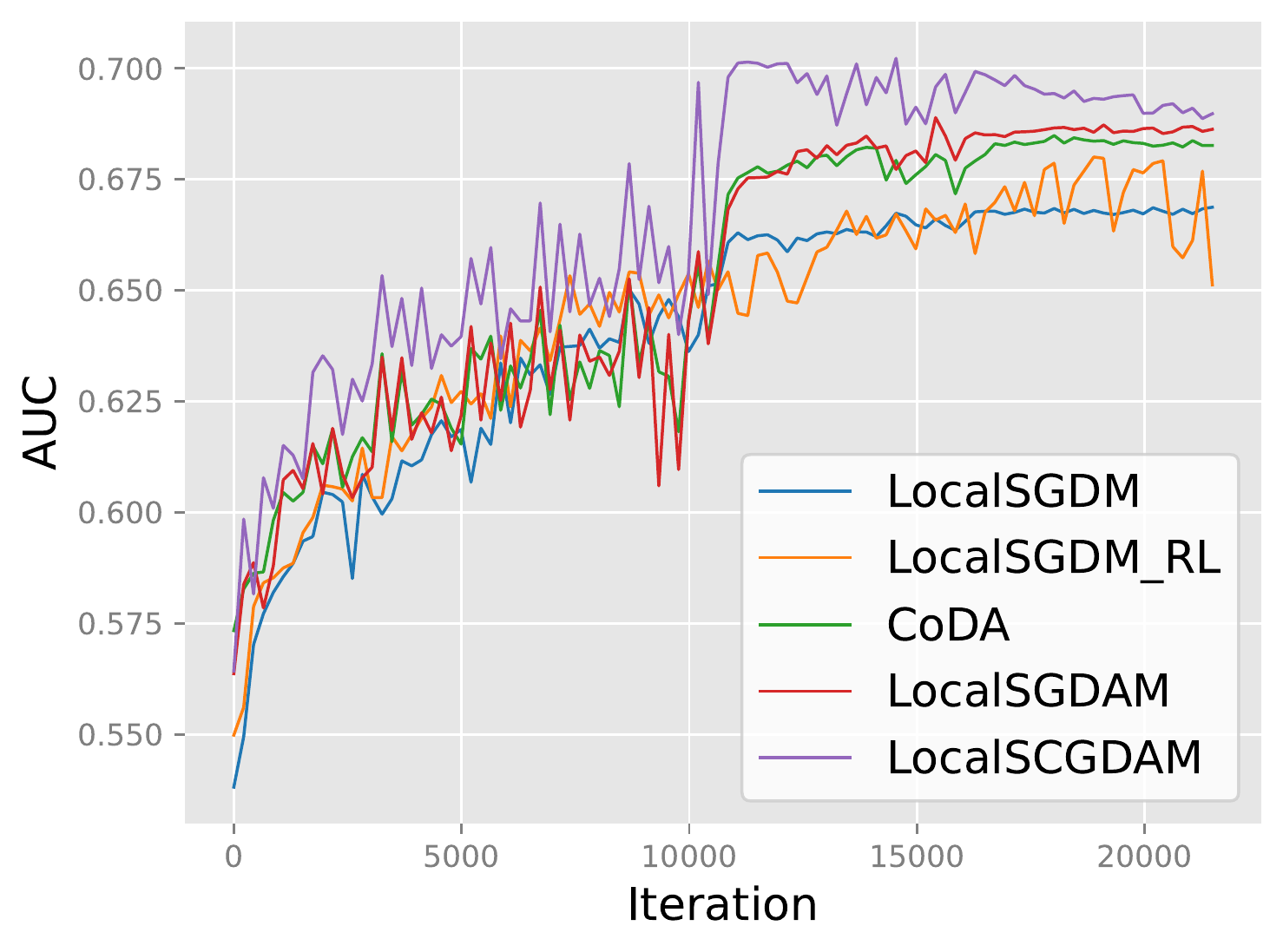}
	}\\
	\hspace{-12pt}
    \vspace{-0.05in}
	\subfigure[STL10]{
		\includegraphics[scale=0.315]{./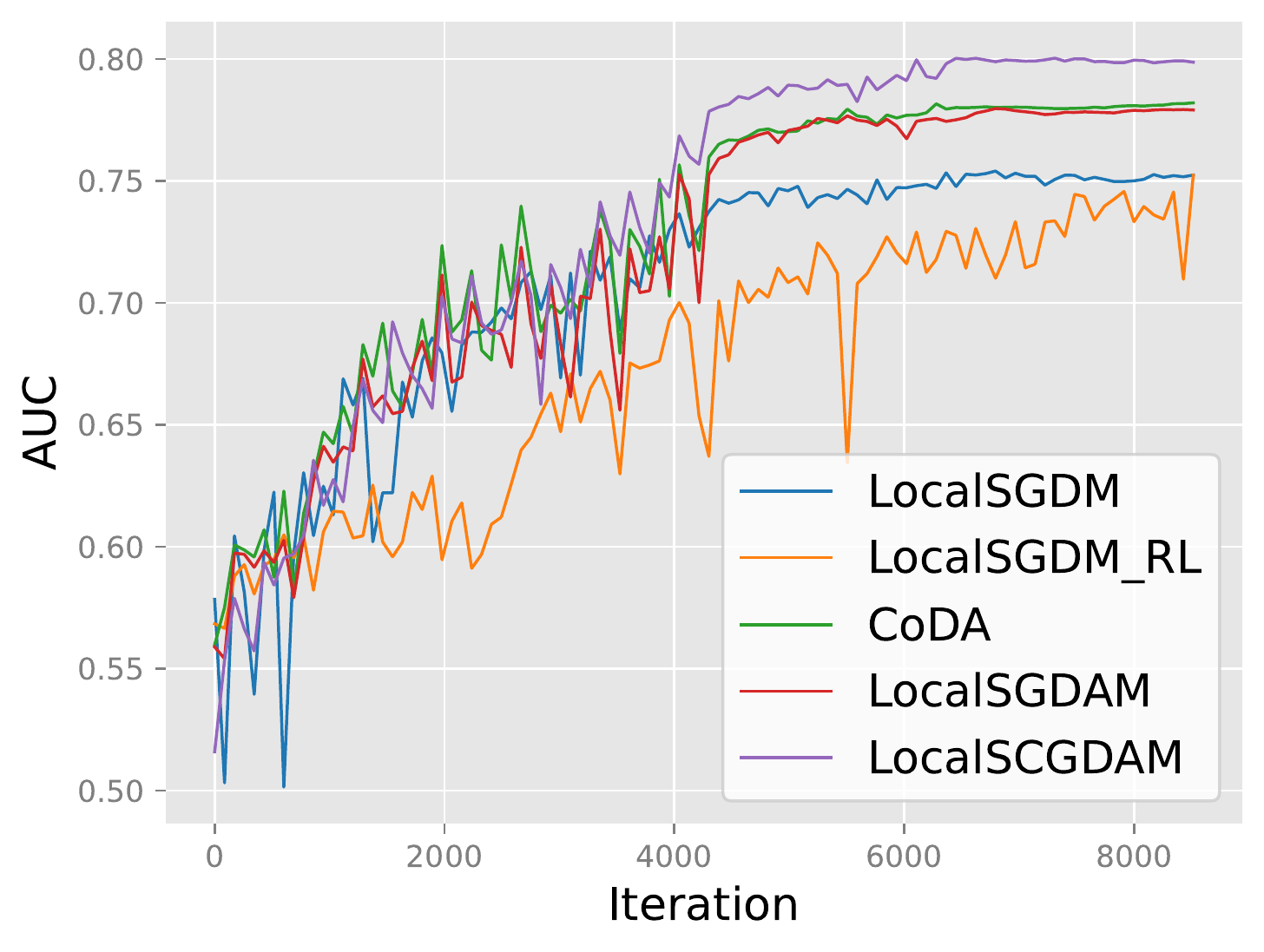}
	}
	\hspace{-12pt}
	\subfigure[FashionMNIST]{
		\includegraphics[scale=0.315]{./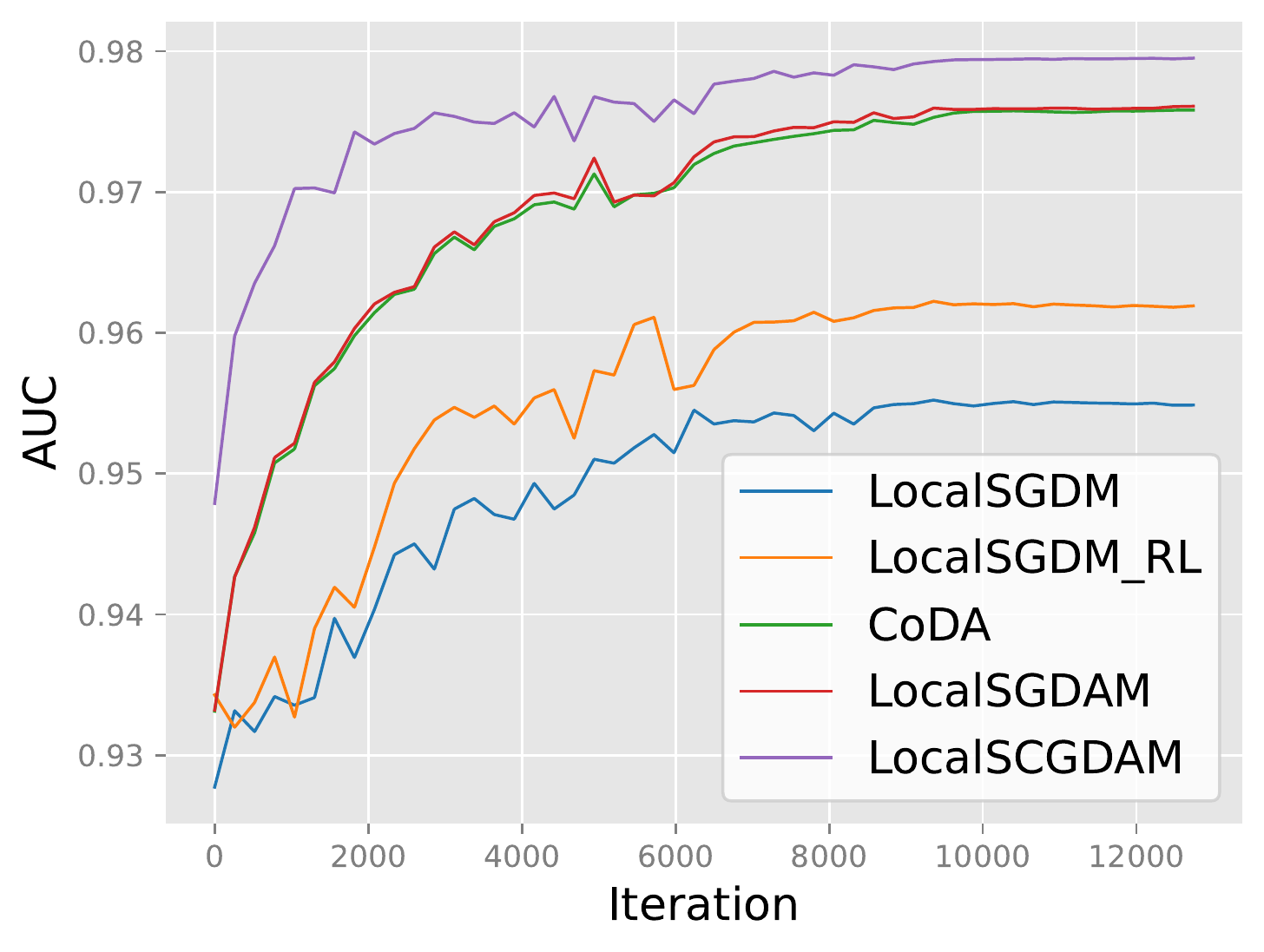}
	}
	\hspace{-12pt}
	\subfigure[Melanoma]{
		\includegraphics[scale=0.315]{./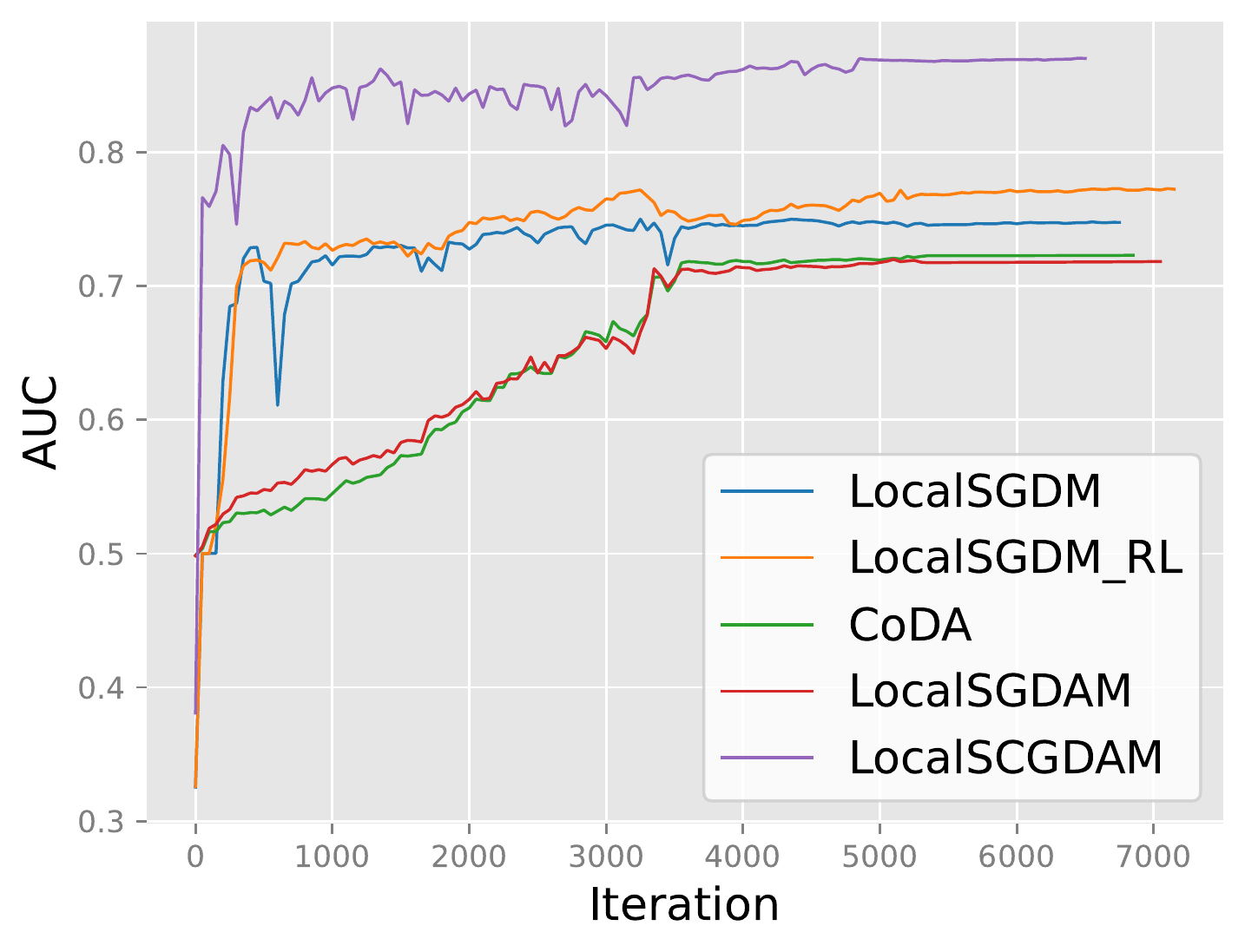}
	}
	\caption{Testing performance with AUC score versus the number of iterations when the communication period $p=16$. }
	\label{fig:p=16}
\end{figure*}

\subsection{Experimental Setup}

\paragraph{Datasets.}
In our experiments, we employ six image classification datasets, including  CIFAR10 \cite{krizhevsky2009learning}, CIFAR100 \cite{krizhevsky2009learning}, STL10 \cite{coates2011analysis}, FashionMNIST \cite{xiao2017/online}, CATvsDOG \footnote{\url{https://www.kaggle.com/c/dogs-vs-cats}}, and Melanoma \cite{rotemberg2021patient}. For the first four datasets, following \cite{yuan2021compositional}, we consider the first half of classes to be the positive class, and the second half as the negative class. Then, in order to construct highly imbalanced data, we randomly drop some samples of the positive class in the training set. Specifically, the ratio between positive samples and all samples is set to 0.1. For the two-class dataset, CATvsDOG, we employ the same strategy to construct the imbalanced training data. For these synthetic imbalanced datasets, the testing set is balanced. Melanoma is an intrinsically imbalanced medical image classification dataset, which we do not modify.  The details about these benchmark datasets are summarized in Table~\ref{dataset}.

\paragraph{Experimental Settings.}  For Melanoma, we use DenseNet121 \cite{huang2017densely} where the dimensionality of the last layer is set to 1 for binary classification. The details for the classifier for FashionMNIST can be found in  Appendix~B. For the other datasets, we use ResNet20 \cite{he2016deep}, where the last layer is also set to 1. To demonstrate the performance of our algorithm, we compare it with three state-of-the-art methods:  LocalSGDM \cite{yu2019linear}, LocalSGDM\_RL \cite{wang2021addressing}, CoDA \cite{guo2020communication}, LocalSGDAM \cite{sharma2022federated}. Specifically, LocalSGDM uses momentum SGD to optimize the standard cross-entropy loss function. LocalSGDM\_RL employs momentum SGD to optimize a Ratio Loss function, which is to add a regularization term to the standard cross-entropy loss function to address the imbalance distribution issue. CoDA leverages SGDA to optimize AUC loss, while LocalSGDAM exploits momentum SGDA to optimize AUC loss.  For a fair comparison, we use similar learning rates for all algorithms. The details can be found in  Appendix~B. We use 4 devices (i.e., GPUs) in our experiment. The batch size on each device is set to 8 for STL10, 16 for Melanoma, and 32 for the others.

\subsection{Experimental Results}
In Table~\ref{result_auc}, we report the AUC score of the test set for all methods, where we show the average and variance computed across all devices. Here, the communication period is set to $4$, $8$, and $16$, respectively. It can be observed that our LocalSCGDAM algorithm outperforms all competing methods for all cases. For instance,  our LocalSCGDAM can beat baseline methods on CATvsDOG dataset with a large margin for all communication periods. These observations confirm the effectiveness of our algorithm.  In addition, we plot the average AUC score of the test set versus the number of iterations in Figures~\ref{fig:p=4},~\ref{fig:p=8},~\ref{fig:p=16}. It can also be observed that our algorithm outperforms baseline methods consistently, which further confirms the efficacy of our algorithm.

To further demonstrate the performance of our algorithm, we apply these algorithms to the dataset with different imbalance ratios. Using the CATvsDOG dataset, we set the imbalance ratio to 0.01, 0.05, and 0.2 to construct three imbalanced training sets. The averaged testing AUC score of these three datasets versus the number of iterations is shown in Figure~\ref{fig:imratio}. It can be observed that our algorithm outperforms competing methods consistently and is robust to large imbalances in the training data. Especially when the training set is highly imbalanced, e.g., the imbalance ratio is 0.01, all AUC based methods outperform the  cross-entropy loss based method significantly, and our LocalSCGDAM beats other AUC based methods with a large margin. 

\vspace{-0.05in}
\begin{figure*}[!htbp]
	\vspace{-0.05in}
	\centering 
	\vspace{-0.05in}
	\hspace{-12pt} 
	\subfigure[Imbalance Ratio:0.01]{
		\includegraphics[scale=0.315]{./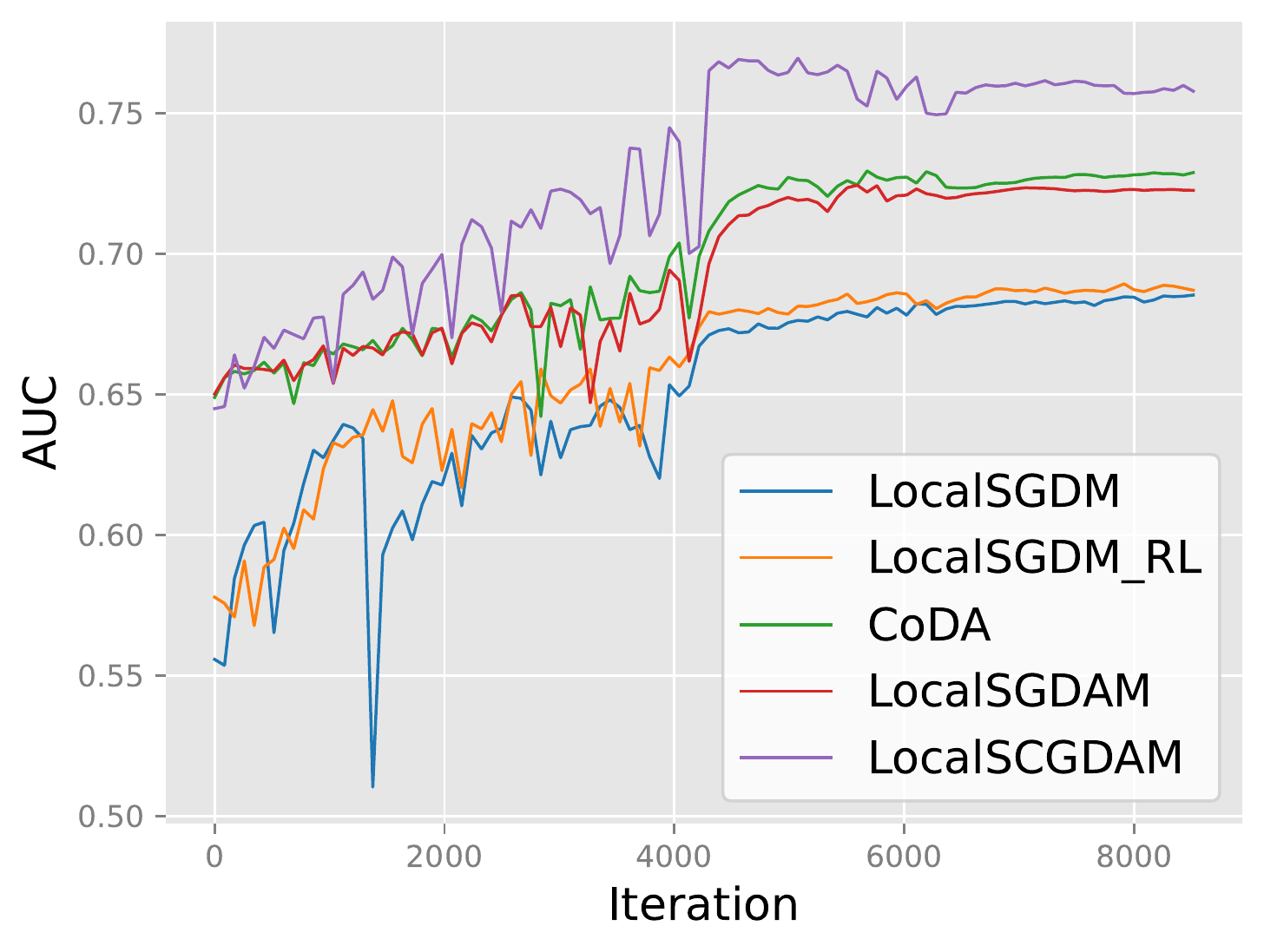}
	}
	\hspace{-12pt}
	\subfigure[Imbalance Ratio:0.05]{
		\includegraphics[scale=0.315]{./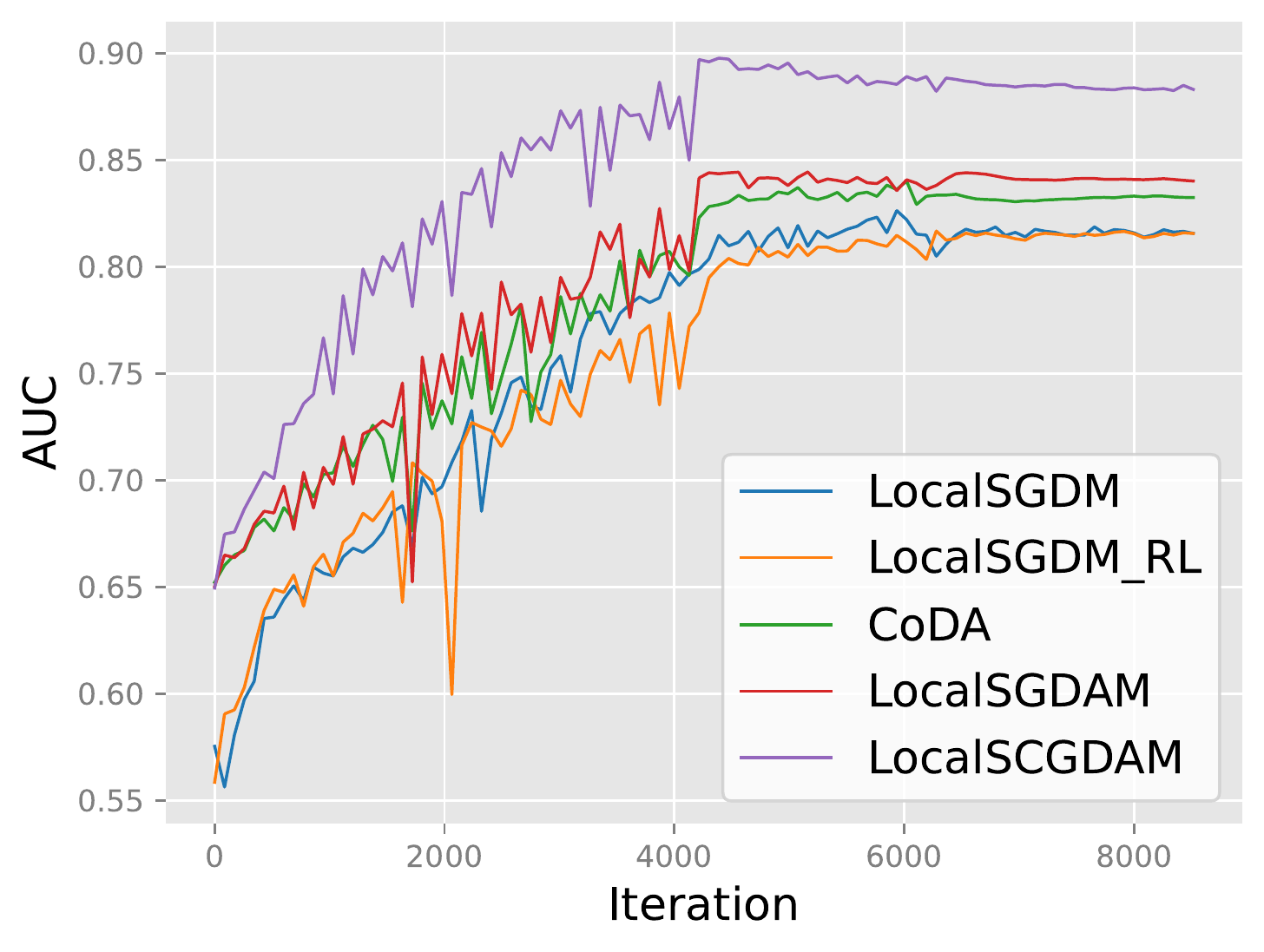}
	}
	\hspace{-12pt}
	\subfigure[Imbalance Ratio:0.2]{
		\includegraphics[scale=0.315]{./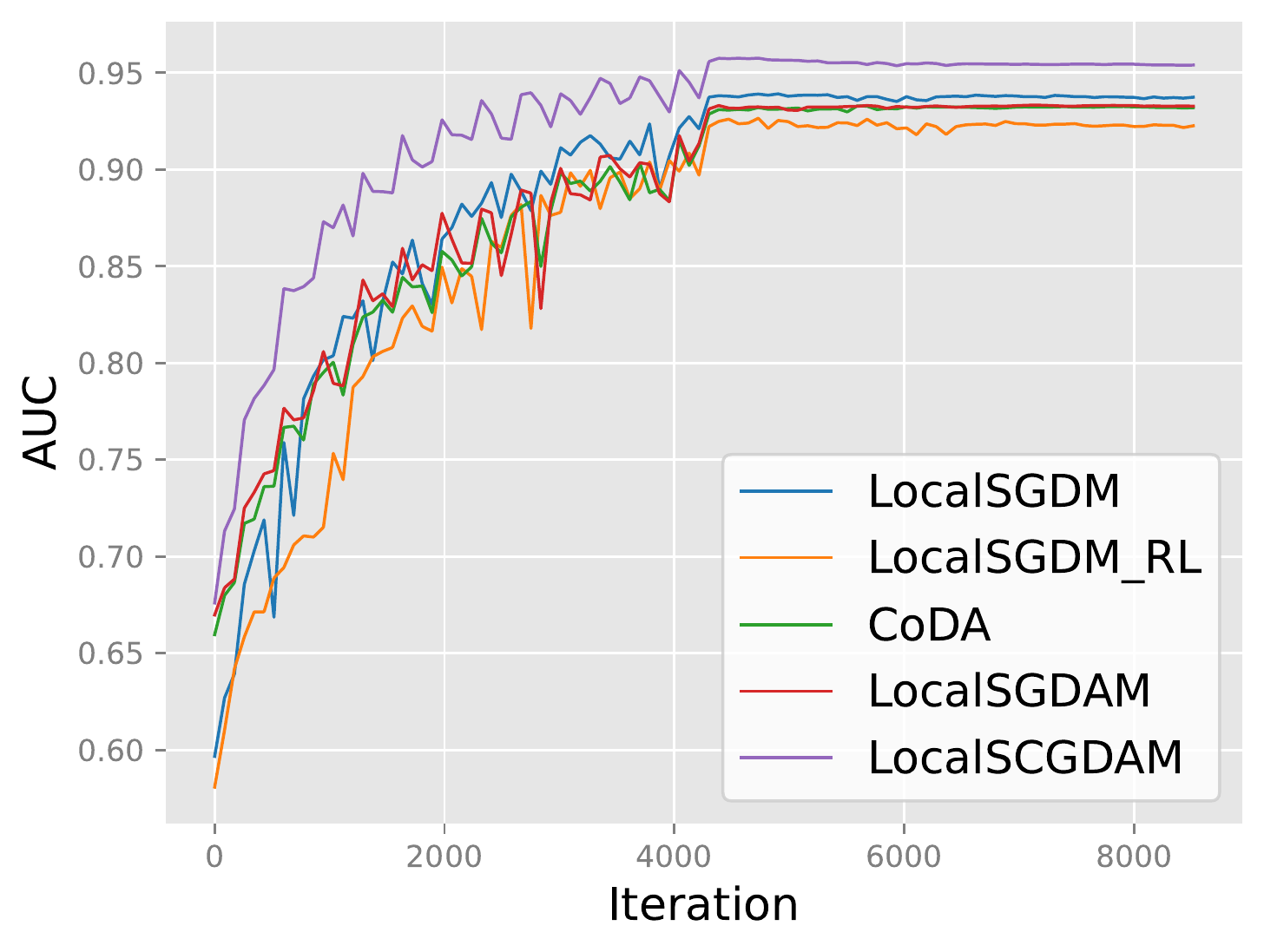}
	}
	\caption{The test AUC score versus the number of iterations when using different imbalance ratios for CATvsDOG. }
	\label{fig:imratio}
\end{figure*}

\begin{wrapfigure}[14]{R}{2.4in}
	\vspace{-15pt}
	\centering
	\includegraphics[scale=0.4]{./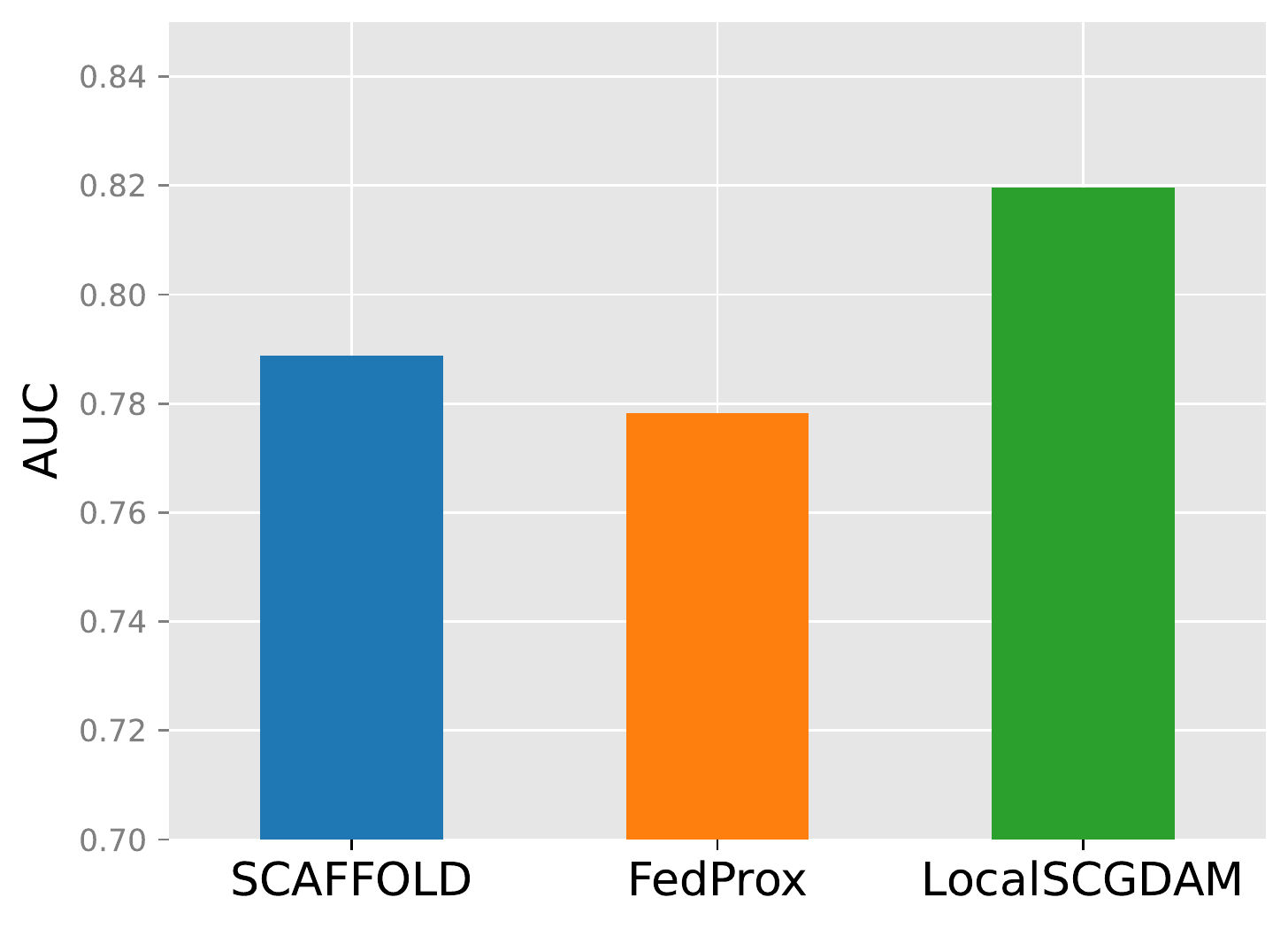}
	\vspace{-0.1in}
	\caption{The test AUROC score  for STL10.}
	\label{fig:globalbalance}
\end{wrapfigure} 

Finally, we compare our algorithm with two additional baseline methods: SCAFFOLD \cite{karimireddy2020scaffold} and FedProx \cite{li2020federated}. These two methods assume the local data distribution is imbalanced but the global one is balanced.  Then, they use the global gradient to correct the local one. However, this kind of methods do not work when the global data distribution is imbalanced. Specifically,
when the global gradient itself is computed on the imbalanced data, rather than the balanced one, it cannot alleviate the imbalance issue in the local gradient.  In Figure~\ref{fig:globalbalance}, we show the test AUC score of the STL10 dataset, where  we use the same experimental setting as that of Figure~\ref{fig:p=4}, i.e., both the local and global data distributions are imbalanced.   It can be observed that our algorithm outperforms those two baselines with a large margin, which confirms the effectiveness of our algorithm in handling the global imbalanced data distribution.

\section{Conclusion}
\vspace{-0.1in}
In this paper, we developed a novel local stochastic compositional gradient descent ascent algorithm to solve the federated compositional deep AUC maximization problem. On the theoretical side, we established the convergence rate of our algorithm, which enjoys a linear speedup with respect to the number devices. On the empirical side, extensive experimental results on multiple imbalanced image classification tasks confirm the effectiveness of our algorithm.

\section*{Acknowledgments}\vspace*{-0.1in}
We thank anonymous reviewers for constructive comments. T. Yang was partially supported by NSF Career Award 2246753, NSF Grant 2246757. 

{
\bibliographystyle{plainnat}
\bibliography{neurips_2023}
}

\appendix
\newpage

\section{Proof}\label{appendix:proof}
To investigate the convergence rate of our algorithm, we first introduce the following auxiliary functions:
\begin{equation}
	\begin{aligned}
		\vspace{-0.1in}
		& \mathbf{y}_*^{(k)}(\mathbf{x}) = \arg \max_{\mathbf{y}\in \mathbb{R}^{d_2}} f^{(k)}(\frac{1}{K}\sum_{k'=1}^{K}g^{(k')}(\mathbf{x}), \mathbf{y})  \ , \mathbf{y}_*(\mathbf{x}) = \arg \max_{\mathbf{y}\in \mathbb{R}^{d_2}} \frac{1}{K}\sum_{k=1}^{K}f^{(k)}(\frac{1}{K}\sum_{k'=1}^{K}g^{(k')}(\mathbf{x}), \mathbf{y}) \ ,  \\
		& \Phi^{(k)}(\mathbf{x}) =  f^{(k)}(\frac{1}{K}\sum_{k'=1}^{K}g^{(k')}(\mathbf{x}), \mathbf{y}_*(\mathbf{x})) \ ,   \Phi(\mathbf{x}) =  \frac{1}{K}\sum_{k=1}^{K}f^{(k)}(\frac{1}{K}\sum_{k'=1}^{K}g^{(k')}(\mathbf{x}), \mathbf{y}_*(\mathbf{x}))  \ , \\
		& g(\mathbf{x}) = \frac{1}{K}\sum_{k=1}^{K}g^{(k)}(\mathbf{x}) \ .
	\end{aligned}
\end{equation}
Based on these auxiliary functions, we provide the following lemmas to complete the proof. \\

\vspace{-0.1in}
\begin{lemma} 
\label{lemma_y_opt_smooth}
Given Assumptions~\ref{assumption_smooth}-\ref{assumption_strong}, the function $\mathbf{y}_*^{(k)}(\mathbf{x})$  is $L_{\mathbf{y}_*}$-Lipschitz continuous, where $L_{\mathbf{y}_*}=\frac{C_gL_f}{\mu}$.
\end{lemma}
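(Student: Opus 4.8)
The plan is to exploit the first-order optimality condition together with the $\mu$-strong concavity in $\mathbf{y}$ from Assumption~\ref{assumption_strong}, reducing the claim to a standard argmax-sensitivity argument and then propagating the resulting Lipschitz constant through the inner map $g$. Throughout I abbreviate $g(\mathbf{x}) = \frac{1}{K}\sum_{k'=1}^{K} g^{(k')}(\mathbf{x})$, so that $\mathbf{y}_*^{(k)}(\mathbf{x})$ maximizes $f^{(k)}(g(\mathbf{x}),\cdot)$ over all of $\mathbb{R}^{d_2}$. Because $f^{(k)}(g(\mathbf{x}),\cdot)$ is $\mu$-strongly concave, this maximizer is unique and is the unique stationary point, so the first step is to record $\nabla_{\mathbf{y}} f^{(k)}(g(\mathbf{x}),\mathbf{y}_*^{(k)}(\mathbf{x})) = 0$ for every $\mathbf{x}$.

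Next, fix two points $\mathbf{x}_1,\mathbf{x}_2$ and write $\mathbf{y}_i = \mathbf{y}_*^{(k)}(\mathbf{x}_i)$ and $\mathbf{z}_i = g(\mathbf{x}_i)$. Applying the $\mu$-strong concavity of $f^{(k)}(\mathbf{z}_2,\cdot)$ (the strong-monotonicity form of Assumption~\ref{assumption_strong}) to the pair $\mathbf{y}_1,\mathbf{y}_2$ yields $\mu\|\mathbf{y}_1-\mathbf{y}_2\|^2 \le \langle \nabla_{\mathbf{y}} f^{(k)}(\mathbf{z}_2,\mathbf{y}_2) - \nabla_{\mathbf{y}} f^{(k)}(\mathbf{z}_2,\mathbf{y}_1),\, \mathbf{y}_1-\mathbf{y}_2\rangle$. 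I then substitute the two stationarity identities: $\nabla_{\mathbf{y}} f^{(k)}(\mathbf{z}_2,\mathbf{y}_2)=0$ (optimality at $\mathbf{x}_2$) and $\nabla_{\mathbf{y}} f^{(k)}(\mathbf{z}_1,\mathbf{y}_1)=0$ (optimality at $\mathbf{x}_1$), which rewrites the right-hand side as $\langle \nabla_{\mathbf{y}} f^{(k)}(\mathbf{z}_1,\mathbf{y}_1) - \nabla_{\mathbf{y}} f^{(k)}(\mathbf{z}_2,\mathbf{y}_1),\, \mathbf{y}_1-\mathbf{y}_2\rangle$. Cauchy--Schwarz together with the $L_f$-Lipschitz continuity of $\nabla_{\mathbf{y}} f^{(k)}$ from Assumption~\ref{assumption_smooth} (the two arguments differ only in the first slot) gives $\mu\|\mathbf{y}_1-\mathbf{y}_2\| \le L_f\|\mathbf{z}_1-\mathbf{z}_2\|$.

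The final step propagates this through $g$. Since $g(\mathbf{x})$ is the average of the $g^{(k')}$ and each $g^{(k')}$ has full gradient (Jacobian) bounded in norm by $C_g$ per Assumption~\ref{assumption_bound_gradient}, each $g^{(k')}$ is $C_g$-Lipschitz, and hence $\|\mathbf{z}_1-\mathbf{z}_2\| = \|g(\mathbf{x}_1)-g(\mathbf{x}_2)\| \le \frac{1}{K}\sum_{k'=1}^{K}\|g^{(k')}(\mathbf{x}_1)-g^{(k')}(\mathbf{x}_2)\| \le C_g\|\mathbf{x}_1-\mathbf{x}_2\|$. Combining the two displayed bounds gives $\|\mathbf{y}_*^{(k)}(\mathbf{x}_1)-\mathbf{y}_*^{(k)}(\mathbf{x}_2)\| \le \frac{C_gL_f}{\mu}\|\mathbf{x}_1-\mathbf{x}_2\|$, i.e.\ $L_{\mathbf{y}_*} = C_gL_f/\mu$ as claimed.

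The one point requiring care is the last step: it relies on reading Assumption~\ref{assumption_bound_gradient} as bounding the norm of the \emph{full} Jacobian $\nabla g^{(k)}$ (not merely its stochastic estimate) by $C_g$, which the assumption grants explicitly via its final sentence that the full gradient shares the same upper bound; this is what makes $g^{(k)}$ genuinely $C_g$-Lipschitz. The remainder is the textbook sensitivity estimate for the maximizer of a strongly concave function, and the reason the device index $k$ causes no difficulty is that $\mathbf{y}_*^{(k)}(\mathbf{x})$ depends on $\mathbf{x}$ only through the shared averaged inner value $g(\mathbf{x})$, so no cross-device consensus terms appear at this stage.
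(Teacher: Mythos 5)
Your proof is correct and follows essentially the same route as the paper's: optimality conditions at both maximizers, the strong-monotonicity consequence of Assumption~\ref{assumption_strong} to produce the $\mu\|\mathbf{y}_*^{(k)}(\mathbf{x}_1)-\mathbf{y}_*^{(k)}(\mathbf{x}_2)\|^2$ term, Cauchy--Schwarz with the $L_f$-Lipschitz gradient of Assumption~\ref{assumption_smooth} to isolate the change in the inner argument, and the full-gradient bound of Assumption~\ref{assumption_bound_gradient} to conclude $\|g(\mathbf{x}_1)-g(\mathbf{x}_2)\|\le C_g\|\mathbf{x}_1-\mathbf{x}_2\|$. The only cosmetic difference is that you invoke unconstrained stationarity $\nabla_{\mathbf{y}} f^{(k)}(g(\mathbf{x}),\mathbf{y}_*^{(k)}(\mathbf{x}))=0$ where the paper adds the two variational-inequality forms of optimality; these are equivalent here since the maximization runs over all of $\mathbb{R}^{d_2}$.
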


\begin{proof}
	Since $\mathbf{y}_*^{(k)}(\mathbf{x}) = \arg \max_{\mathbf{y}\in \mathbb{R}^{d_2}} f^{(k)}(\frac{1}{K}\sum_{k'=1}^{K}g^{(k')}(\mathbf{x}), \mathbf{y})$,  according to the optimality condition, for   any $\mathbf{x}_1, \mathbf{x}_2 \in  \mathbb{R}^{d_1}$,  we can get 
	\begin{equation}
		\begin{aligned}
			& 	\langle \nabla_{\mathbf{y}} f^{(k)}(\frac{1}{K}\sum_{k'=1}^{K}g^{(k')}(\mathbf{x}_1), \mathbf{y}^{(k)}_*(\mathbf{x}_1)), \mathbf{y}^{(k)}_*(\mathbf{x}_2) - \mathbf{y}^{(k)}_*(\mathbf{x}_1)\rangle\leq 0 \ , \\
			&\langle\nabla_{\mathbf{y}} f^{(k)}(\frac{1}{K}\sum_{k'=1}^{K}g^{(k')}(\mathbf{x}_2), \mathbf{y}^{(k)}_*(\mathbf{x}_2)), \mathbf{y}^{(k)}_*(\mathbf{x}_1) - \mathbf{y}^{(k)}_*(\mathbf{x}_2)\rangle\leq 0 \ , 
		\end{aligned}
	\end{equation}
	As a result, we can get
	\begin{equation}
		\langle\mathbf{y}^{(k)}_*(\mathbf{x}_2) - \mathbf{y}^{(k)}_*(\mathbf{x}_1), \nabla_{\mathbf{y}} f^{(k)}(\frac{1}{K}\sum_{k'=1}^{K}g^{(k')}(\mathbf{x}_1), \mathbf{y}^{(k)}_*(\mathbf{x}_1))-\nabla_{\mathbf{y}} f^{(k)}(\frac{1}{K}\sum_{k'=1}^{K}g^{(k')}(\mathbf{x}_2), \mathbf{y}^{(k)}_*(\mathbf{x}_2))\rangle\leq 0 \ .
	\end{equation}
	Meanwhile, due to the strong monotonicity of the gradient with respect to $\mathbf{y}$, we can get
	\begin{equation}
 \begin{aligned}
		& \langle\mathbf{y}^{(k)}_*(\mathbf{x}_2) - \mathbf{y}^{(k)}_*(\mathbf{x}_1), \nabla_{\mathbf{y}} f^{(k)}(\frac{1}{K}\sum_{k'=1}^{K}g^{(k')}(\mathbf{x}_1), \mathbf{y}^{(k)}_*(\mathbf{x}_2))-\nabla_{\mathbf{y}} f^{(k)}(\frac{1}{K}\sum_{k'=1}^{K}g^{(k')}(\mathbf{x}_1), \mathbf{y}^{(k)}_*(\mathbf{x}_1))\rangle \\
        & \quad + \mu \|\mathbf{y}^{(k)}_*(\mathbf{x}_2) - \mathbf{y}^{(k)}_*(\mathbf{x}_1)\|^2 \leq 0 \ . \\
        \end{aligned}
	\end{equation}
	By adding above two inequalities together, we can get
	\begin{equation}
		\begin{aligned}
			& \quad \mu \|\mathbf{y}^{(k)}_*(\mathbf{x}_2) - \mathbf{y}^{(k)}_*(\mathbf{x}_1)\|^2 \\
			& \leq \langle\mathbf{y}^{(k)}_*(\mathbf{x}_2) - \mathbf{y}^{(k)}_*(\mathbf{x}_1), \nabla_{\mathbf{y}} f^{(k)}(\frac{1}{K}\sum_{k'=1}^{K}g^{(k')}(\mathbf{x}_2), \mathbf{y}^{(k)}_*(\mathbf{x}_2))- \nabla_{\mathbf{y}} f^{(k)}(\frac{1}{K}\sum_{k'=1}^{K}g^{(k')}(\mathbf{x}_1), \mathbf{y}^{(k)}_*(\mathbf{x}_2))\rangle \\
			& \leq \|\mathbf{y}^{(k)}_*(\mathbf{x}_2) - \mathbf{y}^{(k)}_*(\mathbf{x}_1)\|\|\nabla_{\mathbf{y}} f^{(k)}(\frac{1}{K}\sum_{k'=1}^{K}g^{(k')}(\mathbf{x}_2), \mathbf{y}^{(k)}_*(\mathbf{x}_2))- \nabla_{\mathbf{y}} f^{(k)}(\frac{1}{K}\sum_{k'=1}^{K}g^{(k')}(\mathbf{x}_1), \mathbf{y}^{(k)}_*(\mathbf{x}_2))\| \\
			& \leq L_f \|\mathbf{y}^{(k)}_*(\mathbf{x}_2) - \mathbf{y}^{(k)}_*(\mathbf{x}_1)\|\|\frac{1}{K}\sum_{k'=1}^{K}g^{(k')}(\mathbf{x}_2)- \frac{1}{K}\sum_{k'=1}^{K}g^{(k')}(\mathbf{x}_1)\| \\
			& \leq C_gL_f\|\mathbf{y}^{(k)}_*(\mathbf{x}_2) - \mathbf{y}^{(k)}_*(\mathbf{x}_1)\|\|\mathbf{x}_2- \mathbf{x}_1\| \ . \\
		\end{aligned}
	\end{equation}
	where the third step holds due to  Assumption~\ref{assumption_smooth} and the last step holds due to Assumption~\ref{assumption_bound_gradient}.  Therefore, we can get
		\begin{equation}
			\begin{aligned}
				&   \|\mathbf{y}^{(k)}_*(\mathbf{x}_2) - \mathbf{y}^{(k)}_*(\mathbf{x}_1)\|  \leq \frac{C_gL_f}{\mu}\|\mathbf{x}_2- \mathbf{x}_1\|   \ .  \\
			\end{aligned}
		\end{equation}
\end{proof}

\begin{lemma}  \label{lemma_phi_smooth}
Given Assumptions~\ref{assumption_smooth}-\ref{assumption_strong}, $\Phi^{(k)}$ is $L_{\Phi}$-smooth,  where $L_{\Phi}=\frac{2C_g^2L_f^2}{\mu}  + C_fL_g$.
\end{lemma}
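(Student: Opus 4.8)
The plan is to establish smoothness of $\Phi^{(k)}$ by bounding $\|\nabla \Phi^{(k)}(\mathbf{x}_1) - \nabla \Phi^{(k)}(\mathbf{x}_2)\|$ directly, after first recording an explicit formula for $\nabla \Phi^{(k)}$. Writing $g(\mathbf{x}) = \frac{1}{K}\sum_{k'} g^{(k')}(\mathbf{x})$, the target is the composition $\Phi^{(k)}(\mathbf{x}) = f^{(k)}(g(\mathbf{x}), \mathbf{y}_*(\mathbf{x}))$ in which the inner maximizer $\mathbf{y}_*(\mathbf{x})$ itself depends on $\mathbf{x}$. Since $\mathbf{y}_*(\mathbf{x})$ maximizes the averaged objective $\frac{1}{K}\sum_k f^{(k)}(g(\mathbf{x}), \cdot)$, the first-order optimality condition $\frac{1}{K}\sum_k \nabla_{\mathbf{y}} f^{(k)}(g(\mathbf{x}), \mathbf{y}_*(\mathbf{x})) = 0$ lets me invoke Danskin's theorem, so the relevant gradient reduces to the composition term $\nabla \Phi^{(k)}(\mathbf{x}) = \nabla g(\mathbf{x})^{T} \nabla_g f^{(k)}(g(\mathbf{x}), \mathbf{y}_*(\mathbf{x}))$.

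Before carrying out the estimate, I would first extend Lemma~\ref{lemma_y_opt_smooth} from the per-device maximizer to the global maximizer $\mathbf{y}_*(\mathbf{x})$: because averaging over $k$ preserves both the $\mu$-strong concavity in $\mathbf{y}$ and the $L_f$-Lipschitz continuity of $\nabla_{\mathbf{y}} f$, the identical optimality-plus-strong-monotonicity argument shows $\mathbf{y}_*(\mathbf{x})$ is $L_{\mathbf{y}_*} = C_g L_f / \mu$-Lipschitz. I would also record the elementary consequence of Assumption~\ref{assumption_bound_gradient} that $\|g(\mathbf{x}_1) - g(\mathbf{x}_2)\| \le C_g \|\mathbf{x}_1 - \mathbf{x}_2\|$, since each $g^{(k)}$ has a $C_g$-bounded gradient.

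With the gradient formula in hand, the core estimate adds and subtracts $\nabla g(\mathbf{x}_2)^{T} \nabla_g f^{(k)}(g(\mathbf{x}_1), \mathbf{y}_*(\mathbf{x}_1))$ to split the difference into two pieces: one controlling the variation of $\nabla g$ with $\nabla_g f$ held bounded, which gives $C_f L_g \|\mathbf{x}_1 - \mathbf{x}_2\|$ via Assumptions~\ref{assumption_smooth} and \ref{assumption_bound_gradient}; and one controlling the variation of $\nabla_g f$ with $\nabla g$ held bounded. The second piece uses the $L_f$-Lipschitzness of $\nabla_g f$ in its joint argument, bounded by $C_g L_f(\|g(\mathbf{x}_1) - g(\mathbf{x}_2)\| + \|\mathbf{y}_*(\mathbf{x}_1) - \mathbf{y}_*(\mathbf{x}_2)\|) \le C_g L_f(C_g + C_g L_f / \mu)\|\mathbf{x}_1 - \mathbf{x}_2\|$. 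Summing the two pieces yields $C_f L_g + C_g^2 L_f + C_g^2 L_f^2/\mu$, and since $\mu \le L_f$ necessarily holds for a $\mu$-strongly concave function with $L_f$-Lipschitz gradient, the middle term is absorbed into $C_g^2 L_f^2/\mu$, producing the stated constant $L_\Phi = 2 C_g^2 L_f^2 / \mu + C_f L_g$.

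The main obstacle is the implicit $\mathbf{x}$-dependence of the inner maximizer $\mathbf{y}_*(\mathbf{x})$: it is precisely what injects the $L_f/\mu$ factor, and hence the $C_g^2 L_f^2/\mu$ term, into the smoothness constant, and controlling it rests entirely on the extended Lipschitz bound for $\mathbf{y}_*$. A subtlety worth handling carefully is that Danskin's envelope identity eliminates the $\nabla_{\mathbf{y}} f$ contribution only through the \emph{global} optimality condition for the averaged objective $\Phi = \frac{1}{K}\sum_k \Phi^{(k)}$, rather than a per-device condition, so the reduction to the composition gradient should be justified at the level of the averaged stationarity relation.
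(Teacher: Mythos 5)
Your proposal is correct and follows essentially the same route as the paper's proof: the same add-and-subtract decomposition of $\nabla\Phi^{(k)}(\mathbf{x}_1)-\nabla\Phi^{(k)}(\mathbf{x}_2)$ into a $\nabla g$-variation piece (giving $C_fL_g$) and a $\nabla_g f$-variation piece (giving $C_gL_f(C_g + C_gL_f/\mu)$ via the Lipschitz bound on $\mathbf{y}_*$), with the middle term absorbed using $L_f/\mu\geq 1$. If anything, you are more careful than the paper, which silently applies Lemma~\ref{lemma_y_opt_smooth} (stated for $\mathbf{y}_*^{(k)}$) to the global maximizer $\mathbf{y}_*$ and writes $\nabla\Phi^{(k)}$ as the pure composition term without addressing the Danskin-type subtlety that you correctly flag, namely that the envelope cancellation of the $\nabla_{\mathbf{y}}f^{(k)}$ contribution is only guaranteed after averaging over $k$.
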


\begin{proof}
	Since $ \Phi^{(k)}(\mathbf{x}) =  f^{(k)}(\frac{1}{K}\sum_{k'=1}^{K}g^{(k')}(\mathbf{x}), \mathbf{y}_*(\mathbf{x})) $, we can get
	\begin{equation}
		\small
		\begin{aligned}
			& \quad \Big\|\nabla \Phi^{(k)}(\mathbf{x}_1) - \nabla \Phi^{(k)}(\mathbf{x}_2)\Big\| \\
			&  = \Big\|\Big(\frac{1}{K}\sum_{k'=1}^{K}\nabla g^{(k')}(\mathbf{x}_1)\Big)^T\nabla_g f^{(k)}(\frac{1}{K}\sum_{k'=1}^{K}g^{(k')}(\mathbf{x}_1), \mathbf{y}_*(\mathbf{x}_1)) \\
			&\quad -  \Big(\frac{1}{K}\sum_{k'=1}^{K}\nabla g^{(k')}(\mathbf{x}_1)\Big)^T\nabla_g f^{(k)}(\frac{1}{K}\sum_{k'=1}^{K}g^{(k')}(\mathbf{x}_2), \mathbf{y}_*(\mathbf{x}_2))\\
			& \quad + \Big(\frac{1}{K}\sum_{k'=1}^{K}\nabla g^{(k')}(\mathbf{x}_1)\Big)^T\nabla_g f^{(k)}(\frac{1}{K}\sum_{k'=1}^{K}g^{(k')}(\mathbf{x}_2), \mathbf{y}_*(\mathbf{x}_2)) \\
			& \quad - \Big( \frac{1}{K}\sum_{k'=1}^{K}\nabla g^{(k')}(\mathbf{x}_2)\Big)^T\nabla_g f^{(k)}(\frac{1}{K}\sum_{k'=1}^{K}g^{(k')}(\mathbf{x}_2), \mathbf{y}_*(\mathbf{x}_2))\Big\| \\
			& \leq \Big\|\Big(\frac{1}{K}\sum_{k'=1}^{K}\nabla g^{(k')}(\mathbf{x}_1)\Big)^T\nabla_g f^{(k)}(\frac{1}{K}\sum_{k'=1}^{K}g^{(k')}(\mathbf{x}_1), \mathbf{y}_*(\mathbf{x}_1)) \\
			& \quad -  \Big(\frac{1}{K}\sum_{k'=1}^{K}\nabla g^{(k')}(\mathbf{x}_1)\Big)^T\nabla_g f^{(k)}(\frac{1}{K}\sum_{k'=1}^{K}g^{(k')}(\mathbf{x}_2), \mathbf{y}_*(\mathbf{x}_2))\Big\|\\
			& \quad + \Big\|\Big(\frac{1}{K}\sum_{k'=1}^{K}\nabla g^{(k')}(\mathbf{x}_1)\Big)^T\nabla_g f^{(k)}(\frac{1}{K}\sum_{k'=1}^{K}g^{(k')}(\mathbf{x}_2), \mathbf{y}_*(\mathbf{x}_2)) \\
			& \quad - \Big( \frac{1}{K}\sum_{k'=1}^{K}\nabla g^{(k')}(\mathbf{x}_2)\Big)^T\nabla_g f^{(k)}(\frac{1}{K}\sum_{k'=1}^{K}g^{(k')}(\mathbf{x}_2), \mathbf{y}_*(\mathbf{x}_2))\Big\| \\
			& \leq C_g\Big\|\nabla_g f^{(k)}(\frac{1}{K}\sum_{k'=1}^{K}g^{(k')}(\mathbf{x}_1), \mathbf{y}_*(\mathbf{x}_1))- \nabla_g f^{(k)}(\frac{1}{K}\sum_{k'=1}^{K}g^{(k')}(\mathbf{x}_2), \mathbf{y}_*(\mathbf{x}_2))\Big\|\\
			& \quad + C_f\Big\| \frac{1}{K}\sum_{k'=1}^{K}\nabla g^{(k')}(\mathbf{x}_1) -  \frac{1}{K}\sum_{k'=1}^{K}\nabla g^{(k')}(\mathbf{x}_2)\Big\| \\
			& \leq C_gL_f\Big\|\Big(\frac{1}{K}\sum_{k'=1}^{K}g^{(k')}(\mathbf{x}_1), \mathbf{y}_*(\mathbf{x}_1)\Big) -\Big(\frac{1}{K}\sum_{k'=1}^{K}g^{(k')}(\mathbf{x}_2),\mathbf{y}_*(\mathbf{x}_2)\Big)\Big\| + C_f L_g \|\mathbf{x}_1-\mathbf{x}_2\| \\
			& \leq C_g L_f (C_g\|\mathbf{x}_1-\mathbf{x}_2\|+\frac{C_gL_f}{\mu}\|\mathbf{x}_1- \mathbf{x}_2\|  ) + C_f L_g \|\mathbf{x}_1-\mathbf{x}_2\| \\
			& \leq  \Big(\frac{2C_g^2L_f^2}{\mu}  + C_fL_g\Big) \|\mathbf{x}_1-\mathbf{x}_2\|  \ ,
		\end{aligned}
	\end{equation}
	where the second inequality holds due to Assumption~\ref{assumption_bound_gradient}, the third inequality holds due to Assumption~\ref{assumption_smooth}, the fourth inequality holds due to  Assumption~\ref{assumption_bound_gradient} and Lemma~\ref{lemma_y_opt_smooth}, the last step holds due to $L_f/\mu>1$.
\end{proof}

\begin{lemma}\label{lemma:consensus_r}
	Given Assumption~\ref{assumption_smooth}-\ref{assumption_strong} and $\eta \leq \frac{1}{2\gamma_x L_{\phi}}$,  we can get
		\begin{equation} 
		\begin{aligned}
			& \mathbb{E}[\Phi(\bar{\mathbf{x}}_{t+1}) ] \leq \mathbb{E}[\Phi(\bar{\mathbf{x}}_{t})] - \frac{\gamma_x\eta  }{2} \mathbb{E}[\|\nabla \Phi(\bar{\mathbf{x}}_{t})\|^2]-  \frac{\gamma_x\eta  }{4}\mathbb{E}[\|\bar{\mathbf{u}}_t\|^2]  + 3\gamma_x\eta {C}_g^2L_f^2 \mathbb{E}[\|\mathbf{y}_*(\bar{\mathbf{x}}_{t})-\bar{\mathbf{y}}_{t} \|^2]  \\
			& \quad + 6\gamma_x\eta(C_g^4L_f^2+ C_f^2L_g^2)\frac{1}{K}\sum_{k=1}^{K}\Big[\Big \|\bar{\mathbf{x}}_{t}- {\mathbf{x}}_{t}^{(k)} \Big \|^2 \Big] +   3\gamma_x\eta C_g^2L_f^2 \frac{1}{K}\sum_{k=1}^{K}\mathbb{E}\Big[\Big\|\bar{\mathbf{y}}_{t}-  {\mathbf{y}}_{t}^{(k)} \Big\|^2 \Big] \\
			& \quad + 3\gamma_x\eta  \mathbb{E}\Big[\Big\|\frac{1}{K}\sum_{k=1}^{K}\nabla_{\mathbf{x}} f^{(k)}(g^{(k)}({\mathbf{x}}_{t}^{(k)}), {\mathbf{y}}_{t}^{(k)}) -\bar{\mathbf{u}}_{t}\Big\|^2\Big]  \  .  \\
		\end{aligned}
	\end{equation}
\end{lemma}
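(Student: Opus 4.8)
The plan is to run a one-step descent argument on the averaged potential $\Phi=\frac1K\sum_k\Phi^{(k)}$ and then absorb the resulting gradient-approximation error into the consensus and suboptimality quantities on the right-hand side. First I would average the local update $\mathbf{x}_{t+1}^{(k)}=\mathbf{x}_t^{(k)}-\gamma_x\eta\mathbf{u}_t^{(k)}$ over $k$; since the periodic projection onto the mean leaves $\bar{\mathbf{x}}$ unchanged, this gives the exact recursion $\bar{\mathbf{x}}_{t+1}=\bar{\mathbf{x}}_t-\gamma_x\eta\bar{\mathbf{u}}_t$. By Lemma~\ref{lemma_phi_smooth} each $\Phi^{(k)}$ is $L_{\Phi}$-smooth, hence so is $\Phi$, and the descent inequality at $\bar{\mathbf{x}}_t$ yields $\mathbb{E}[\Phi(\bar{\mathbf{x}}_{t+1})]\le\mathbb{E}[\Phi(\bar{\mathbf{x}}_t)]-\gamma_x\eta\,\mathbb{E}[\langle\nabla\Phi(\bar{\mathbf{x}}_t),\bar{\mathbf{u}}_t\rangle]+\tfrac{L_{\Phi}\gamma_x^2\eta^2}{2}\mathbb{E}[\|\bar{\mathbf{u}}_t\|^2]$.

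Next I would rewrite the cross term with the polarization identity $-\langle a,b\rangle=-\tfrac12\|a\|^2-\tfrac12\|b\|^2+\tfrac12\|a-b\|^2$ applied to $a=\nabla\Phi(\bar{\mathbf{x}}_t)$ and $b=\bar{\mathbf{u}}_t$, producing $-\tfrac{\gamma_x\eta}{2}\|\nabla\Phi(\bar{\mathbf{x}}_t)\|^2-\tfrac{\gamma_x\eta}{2}\|\bar{\mathbf{u}}_t\|^2+\tfrac{\gamma_x\eta}{2}\|\nabla\Phi(\bar{\mathbf{x}}_t)-\bar{\mathbf{u}}_t\|^2$. The step-size condition $\eta\le\frac{1}{2\gamma_x L_{\Phi}}$ forces $\tfrac{L_{\Phi}\gamma_x^2\eta^2}{2}\le\tfrac{\gamma_x\eta}{4}$, so the smoothness quadratic is absorbed and the $\|\bar{\mathbf{u}}_t\|^2$ coefficient collapses to $-\tfrac{\gamma_x\eta}{4}$, recovering the first two driving terms of the claim exactly and leaving $\tfrac{\gamma_x\eta}{2}\|\nabla\Phi(\bar{\mathbf{x}}_t)-\bar{\mathbf{u}}_t\|^2$ still to be controlled.

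The remaining and substantive work is to bound the gradient mismatch $\|\nabla\Phi(\bar{\mathbf{x}}_t)-\bar{\mathbf{u}}_t\|^2$. I would split it via Young's/Jensen's inequality into (i) the deviation of the true gradient $\nabla\Phi(\bar{\mathbf{x}}_t)$ from the local full compositional gradient $P_t:=\frac1K\sum_k\nabla_{\mathbf{x}}f^{(k)}(g^{(k)}(\mathbf{x}_t^{(k)}),\mathbf{y}_t^{(k)})$ evaluated at the per-device iterates, and (ii) the residual $\|P_t-\bar{\mathbf{u}}_t\|^2$, which is precisely the last term in the claim and is deferred to subsequent lemmas (it is the momentum/inner-function tracking error governed by the moving average $\mathbf{h}_t^{(k)}$ and the periodic communication). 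For (i) I would telescope through intermediate gradients that swap, one factor at a time, $\bar{\mathbf{x}}_t\!\to\!\mathbf{x}_t^{(k)}$ in both the Jacobian and the inner argument, and $\mathbf{y}_*(\bar{\mathbf{x}}_t)\!\to\!\bar{\mathbf{y}}_t\!\to\!\mathbf{y}_t^{(k)}$, bounding each increment with the boundedness of $\nabla g$ and $\nabla_g f$ (Assumption~\ref{assumption_bound_gradient}), the Lipschitz continuity of $\nabla g$ and $\nabla_g f$ (Assumption~\ref{assumption_smooth}), and the maximizer bound $L_{\mathbf{y}_*}=C_gL_f/\mu$ from Lemma~\ref{lemma_y_opt_smooth}. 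Grouping the increments yields the $\mathbf{x}$-consensus term with composite constant $C_g^4L_f^2+C_f^2L_g^2$, the $\mathbf{y}$-consensus term with $C_g^2L_f^2$, and the suboptimality term $\|\mathbf{y}_*(\bar{\mathbf{x}}_t)-\bar{\mathbf{y}}_t\|^2$ with $C_g^2L_f^2$; the stated numerical coefficients then follow by tracking the weights used in each Cauchy--Schwarz/Young split.

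The hard part will be step (i), and specifically the bookkeeping forced by the \emph{distributed} inner function: $\nabla\Phi$ is built from the global map $\frac1K\sum_{k'}g^{(k')}$ and its Jacobian, whereas $P_t$ and the algorithm's updates use the device-local $g^{(k)}$ together with $\mathbf{h}_t^{(k)}$. Chaining the compositional smoothness estimates so that the cross terms collapse cleanly into $\mathbf{x}$- and $\mathbf{y}$-consensus errors, rather than leaving an irreducible local--global heterogeneity contribution, is the delicate point where this analysis departs from the non-distributed-inner-function setting; the portion genuinely attributable to the inner-function estimate is quarantined inside $\|P_t-\bar{\mathbf{u}}_t\|^2$ and treated in later lemmas. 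Everything else reduces to routine Cauchy--Schwarz and Young's-inequality manipulation.
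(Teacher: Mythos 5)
Your proposal follows essentially the same route as the paper's proof: the exact averaged recursion $\bar{\mathbf{x}}_{t+1}=\bar{\mathbf{x}}_t-\gamma_x\eta\bar{\mathbf{u}}_t$ combined with $L_{\Phi}$-smoothness, the polarization identity plus the step-size condition $\eta\leq\frac{1}{2\gamma_x L_{\Phi}}$ to produce the $-\frac{\gamma_x\eta}{4}\mathbb{E}[\|\bar{\mathbf{u}}_t\|^2]$ term, and then a split of $\|\nabla\Phi(\bar{\mathbf{x}}_t)-\bar{\mathbf{u}}_t\|^2$ into the $\mathbf{y}$-suboptimality term, the $\mathbf{x}$- and $\mathbf{y}$-consensus terms, and the deferred gradient-tracking residual $\|\frac{1}{K}\sum_k\nabla_{\mathbf{x}}f^{(k)}(g^{(k)}(\mathbf{x}_t^{(k)}),\mathbf{y}_t^{(k)})-\bar{\mathbf{u}}_t\|^2$, exactly as in the paper. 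The one ``delicate point'' you leave open --- ensuring no irreducible local--global heterogeneity residue survives --- is handled in the paper simply by invoking the homogeneous data distribution assumption (so that $g^{(k)}(\bar{\mathbf{x}}_t)=g(\bar{\mathbf{x}}_t)$ and local and global gradients coincide at common arguments), which makes the cross terms collapse into consensus errors precisely as you anticipate.
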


\begin{proof}
	Because  $\Phi(\bar{\mathbf{x}}_{t})$ is $L_\phi$-smooth,  we can get
	\begin{equation} \label{loss_decrease}
		\begin{aligned}
			& \mathbb{E}[\Phi(\bar{\mathbf{x}}_{t+1}) ]\leq  \mathbb{E}[\Phi(\bar{\mathbf{x}}_{t}) ] + \mathbb{E}[\langle \nabla \Phi(\bar{\mathbf{x}}_{t}), \bar{\mathbf{x}}_{t+1}-\bar{\mathbf{x}}_{t}\rangle]  + \frac{L_{\Phi}}{2} \mathbb{E}[\|\bar{\mathbf{x}}_{t+1}-\bar{\mathbf{x}}_{t}\|^2] \\
			& = \mathbb{E}[\Phi(\bar{\mathbf{x}}_{t})] -\gamma_x\eta \mathbb{E}[\langle \nabla \Phi(\bar{\mathbf{x}}_{t}), \bar{\mathbf{u}}_{t}\rangle]  + \frac{\gamma_x^2\eta^2L_{\Phi}}{2} \mathbb{E}[\|\bar{\mathbf{u}}_{t}\|^2] \\
			& = \mathbb{E}[\Phi(\bar{\mathbf{x}}_{t}) ]- \frac{\gamma_x\eta  }{2} \mathbb{E}[\|\nabla \Phi(\bar{\mathbf{x}}_{t})\|^2]+ \Big(\frac{\gamma_x^2\eta^2  L_{\Phi}}{2}-  \frac{\gamma_x\eta  }{2}\Big) \mathbb{E}[\|\bar{\mathbf{u}}_t\|^2]  + \frac{\gamma_x\eta  }{2} \mathbb{E}[\|\nabla \Phi(\bar{\mathbf{x}}_{t}) -\bar{\mathbf{u}}_t\|^2 ]\\
			& \leq \mathbb{E}[\Phi(\bar{\mathbf{x}}_{t})] - \frac{\gamma_x\eta  }{2} \mathbb{E}[\|\nabla \Phi(\bar{\mathbf{x}}_{t})\|^2]-  \frac{\gamma_x\eta  }{4}\mathbb{E}[\|\bar{\mathbf{u}}_t\|^2]  \\
			& \quad + 3\gamma_x\eta \underbrace{\mathbb{E}\Big[\Big \|\nabla \Phi(\bar{\mathbf{x}}_{t}) -\frac{1}{K}\sum_{k=1}^{K}\nabla_{\mathbf{x}} f^{(k)}(\frac{1}{K}\sum_{k'=1}^{K}g^{(k')}(\bar{\mathbf{x}}_{t}), \bar{\mathbf{y}}_{t}) \Big\|^2\Big]}_{T_1} \\
			& \quad + 3\gamma_x\eta  \underbrace{\mathbb{E}\Big[\Big\|\frac{1}{K}\sum_{k=1}^{K}\nabla_{\mathbf{x}} f^{(k)}(\frac{1}{K}\sum_{k'=1}^{K}g^{(k')}(\bar{\mathbf{x}}_{t}), \bar{\mathbf{y}}_{t}) -\frac{1}{K}\sum_{k=1}^{K}\nabla_{\mathbf{x}} f^{(k)}(g^{(k)}({\mathbf{x}}_{ t}^{(k)}), {\mathbf{y}}_{ t}^{(k)})\Big\|^2\Big] }_{T_2}\\
			& \quad + 3\gamma_x\eta  \mathbb{E}\Big[\Big\|\frac{1}{K}\sum_{k=1}^{K}\nabla_{\mathbf{x}} f^{(k)}(g^{(k)}({\mathbf{x}}_{t}^{(k)}), {\mathbf{y}}_{t}^{(k)}) -\bar{\mathbf{u}}_{t}\Big\|^2\Big]  \ ,  
		\end{aligned}
	\end{equation}
	where the last inequality holds due to $\eta \leq \frac{1}{2\gamma_x L_{\phi}}$. As for $T_1$, we can get
	\begin{equation}
		\begin{aligned}
			& T_1 = \mathbb{E}\Big[\Big\|\frac{1}{K}\sum_{k=1}^{K}\nabla \Phi^{(k)}(\bar{\mathbf{x}}_{t}) -\frac{1}{K}\sum_{k=1}^{K}\nabla_{\mathbf{x}} f^{(k)}(\frac{1}{K}\sum_{k'=1}^{K}g^{(k')}(\bar{\mathbf{x}}_{t}), \bar{\mathbf{y}}_{t}) \Big\|^2\Big] \\
			& \leq  \frac{1}{K}\sum_{k=1}^{K}\mathbb{E}\Big[\Big\|\Big(\frac{1}{K}\sum_{k'=1}^{K}\nabla g^{(k')}(\bar{\mathbf{x}}_{t})\Big)^T\nabla_g f^{(k)}(\frac{1}{K}\sum_{k'=1}^{K}g^{(k')}(\bar{\mathbf{x}}_{t}), \mathbf{y}_*(\bar{\mathbf{x}}_{t})) \\
			& \quad - \Big(\frac{1}{K}\sum_{k'=1}^{K}\nabla g^{(k')}(\bar{\mathbf{x}}_{t})\Big)^T \nabla_g f^{(k)}(\frac{1}{K}\sum_{k'=1}^{K}g^{(k')}(\bar{\mathbf{x}}_{t}), \bar{\mathbf{y}}_{t})\Big\|^2\Big] \\
			& \leq  \frac{1}{K}\sum_{k=1}^{K}\mathbb{E}\Big[\Big\|\frac{1}{K}\sum_{k'=1}^{K}\nabla g^{(k')}(\bar{\mathbf{x}}_{t})\Big\|^2\Big\|\nabla_g f^{(k)}(\frac{1}{K}\sum_{k'=1}^{K}g^{(k')}(\bar{\mathbf{x}}_{t}), \mathbf{y}_*(\bar{\mathbf{x}}_{t})) - \nabla_g f^{(k)}(\frac{1}{K}\sum_{k'=1}^{K}g^{(k')}(\bar{\mathbf{x}}_{t}), \bar{\mathbf{y}}_{t})\Big\|^2\Big] \\
			& \leq   {C}_g^2L_f^2 \mathbb{E}[\|\mathbf{y}_*(\bar{\mathbf{x}}_{t})-\bar{\mathbf{y}}_{t} \|^2]  \ ,
		\end{aligned}
	\end{equation}
	where the last step holds due to Assumptions~\ref{assumption_smooth}-\ref{assumption_bound_gradient}. 
	As for $T_2$, we can get
	\begin{equation}
		\begin{aligned}
			& T_2 = \mathbb{E}\Big[\Big\|\frac{1}{K}\sum_{k=1}^{K}\nabla_{\mathbf{x}} f^{(k)}(\frac{1}{K}\sum_{k'=1}^{K}g^{(k')}(\bar{\mathbf{x}}_{t}), \bar{\mathbf{y}}_{t}) - \frac{1}{K}\sum_{k=1}^{K}\nabla_{\mathbf{x}} f^{(k)}(g^{(k)}(\bar{\mathbf{x}}_{t}), {\mathbf{y}}_{t}^{(k)}) \\
			& \quad + \frac{1}{K}\sum_{k=1}^{K}\nabla_{\mathbf{x}} f^{(k)}(g^{(k)}(\bar{\mathbf{x}}_{t}), {\mathbf{y}}_{t}^{(k)})- \frac{1}{K}\sum_{k=1}^{K}\nabla_{\mathbf{x}} f^{(k)}(g^{(k)}({\mathbf{x}}_{t}^{(k)}), {\mathbf{y}}_{t}^{(k)})\Big\|^2 \Big] \\
			& \leq 2\underbrace{\mathbb{E}\Big[\Big\|\frac{1}{K}\sum_{k=1}^{K}\nabla_{\mathbf{x}} f^{(k)}(\frac{1}{K}\sum_{k'=1}^{K}g^{(k')}(\bar{\mathbf{x}}_{t}), \bar{\mathbf{y}}_{t}) - \frac{1}{K}\sum_{k=1}^{K}\nabla_{\mathbf{x}} f^{(k)}(g^{(k)}(\bar{\mathbf{x}}_{t}), {\mathbf{y}}_{t}^{(k)}) \Big\|^2 \Big]}_{T_3} \\
			& \quad + 2\underbrace{\mathbb{E}\Big[\Big\|\frac{1}{K}\sum_{k=1}^{K}\nabla_{\mathbf{x}} f^{(k)}(g^{(k)}(\bar{\mathbf{x}}_{t}), {\mathbf{y}}_{t}^{(k)})- \frac{1}{K}\sum_{k=1}^{K}\nabla_{\mathbf{x}} f^{(k)}(g^{(k)}({\mathbf{x}}_{t}^{(k)}), {\mathbf{y}}_{t}^{(k)})\Big\|^2 \Big]}_{T_4} \  .  \\
		\end{aligned}
	\end{equation}
	Then, as for $T_3$, we can get
	\begin{equation}
		\begin{aligned}
			& T_3 \leq  \frac{1}{K}\sum_{k=1}^{K}\mathbb{E}\Big[\Big\|\nabla_{\mathbf{x}} f^{(k)}(\frac{1}{K}\sum_{k'=1}^{K}g^{(k')}(\bar{\mathbf{x}}_{t}), \bar{\mathbf{y}}_{t}) - \nabla_{\mathbf{x}} f^{(k)}(g^{(k)}(\bar{\mathbf{x}}_{t}), {\mathbf{y}}_{t}^{(k)}) \Big\|^2 \Big] \\
			&  = \frac{1}{K}\sum_{k=1}^{K}\mathbb{E}\Big[\Big\|\Big(\frac{1}{K}\sum_{k'=1}^{K}\nabla g^{(k')}(\bar{\mathbf{x}}_{t})\Big)^T\nabla_{g} f^{(k)}(\frac{1}{K}\sum_{k'=1}^{K}g^{(k')}(\bar{\mathbf{x}}_{t}), \bar{\mathbf{y}}_{t}) - \nabla g^{(k)}(\bar{\mathbf{x}}_{t})^T\nabla_{g} f^{(k)}(g^{(k)}(\bar{\mathbf{x}}_{t}), {\mathbf{y}}_{t}^{(k)}) \Big\|^2 \Big] \\
			& \leq C_g^2 \frac{1}{K}\sum_{k=1}^{K}\mathbb{E}\Big[\Big\|\nabla_{g} f^{(k)}(\frac{1}{K}\sum_{k'=1}^{K}g^{(k')}(\bar{\mathbf{x}}_{t}), \bar{\mathbf{y}}_{t}) - \nabla_{g} f^{(k)}(g^{(k)}(\bar{\mathbf{x}}_{t}), {\mathbf{y}}_{t}^{(k)}) \Big\|^2 \Big] \\
			& \leq C_g^2L_f^2 \frac{1}{K}\sum_{k=1}^{K}\mathbb{E}\Big[\Big\|\bar{\mathbf{y}}_{t}-  {\mathbf{y}}_{t}^{(k)} \Big\|^2 \Big] \ ,  \\
		\end{aligned}
	\end{equation}
	where the third step holds due to the homogeneous data distribution assumption and Assumption~\ref{assumption_bound_gradient},  the last step also holds due to the homogeneous data distribution assumption and Assumption~\ref{assumption_smooth}.

	As for $T_4$, we can get
	\begin{equation}
		\begin{aligned}
			& T_4 \leq  \frac{1}{K}\sum_{k=1}^{K}\mathbb{E}\Big[\Big\|\nabla_{\mathbf{x}} f^{(k)}(g^{(k)}(\bar{\mathbf{x}}_{t}), {\mathbf{y}}_{t}^{(k)})- \nabla_{\mathbf{x}} f^{(k)}(g^{(k)}({\mathbf{x}}_{t}^{(k)}), {\mathbf{y}}_{t}^{(k)})\Big\|^2 \Big] \\
			& = \frac{1}{K}\sum_{k=1}^{K}\mathbb{E}\Big[\Big \|\nabla g^{(k)}(\bar{\mathbf{x}}_{t})^T\nabla_{g} f^{(k)}(g^{(k)}(\bar{\mathbf{x}}_{t}), {\mathbf{y}}_{t}^{(k)})- \nabla g^{(k)}({\mathbf{x}}_{t}^{(k)})^T\nabla_{g} f^{(k)}(g^{(k)}({\mathbf{x}}_{t}^{(k)}), {\mathbf{y}}_{t}^{(k)})\Big \|^2 \Big]\\
			& \leq  2\frac{1}{K}\sum_{k=1}^{K}\mathbb{E}\Big[\Big \|\nabla g^{(k)}(\bar{\mathbf{x}}_{t})^T\nabla_{g} f^{(k)}(g^{(k)}(\bar{\mathbf{x}}_{t}), {\mathbf{y}}_{t}^{(k)})- \nabla g^{(k)}(\bar{\mathbf{x}}_{t})^T\nabla_{g} f^{(k)}(g^{(k)}({\mathbf{x}}_{t}^{(k)}), {\mathbf{y}}_{t}^{(k)}) \Big \|^2\Big] \\
			& \quad + 2\frac{1}{K}\sum_{k=1}^{K}\mathbb{E}\Big[\Big \|\nabla g^{(k)}(\bar{\mathbf{x}}_{t})^T\nabla_{g} f^{(k)}(g^{(k)}({\mathbf{x}}_{t}^{(k)}), {\mathbf{y}}_{t}^{(k)})- \nabla g^{(k)}({\mathbf{x}}_{t}^{(k)})^T\nabla_{g} f^{(k)}(g^{(k)}({\mathbf{x}}_{t}^{(k)}), {\mathbf{y}}_{t}^{(k)})\Big \|^2 \Big]\\
			& \leq 2C_g^2L_f^2\frac{1}{K}\sum_{k=1}^{K}\mathbb{E}\Big[\Big \|g^{(k)}(\bar{\mathbf{x}}_{t})- g^{(k)}({\mathbf{x}}_{t}^{(k)}) \Big \|^2\Big] + 2C_f^2\frac{1}{K}\sum_{k=1}^{K}\mathbb{E}\Big[\Big \|\nabla g^{(k)}(\bar{\mathbf{x}}_{t})- \nabla g^{(k)}({\mathbf{x}}_{t}^{(k)})\Big \|^2\Big] \\
			& \leq 2C_g^4L_f^2\frac{1}{K}\sum_{k=1}^{K}\mathbb{E}\Big[\Big \|\bar{\mathbf{x}}_{t}- {\mathbf{x}}_{t}^{(k)} \Big \|^2\Big] + 2C_f^2L_g^2\frac{1}{K}\sum_{k=1}^{K}\Big[\Big \|\bar{\mathbf{x}}_{t}- {\mathbf{x}}_{t}^{(k)}\Big \|^2 \Big]\\
			& = 2(C_g^4L_f^2+ C_f^2L_g^2)\frac{1}{K}\sum_{k=1}^{K}\Big[\Big \|\bar{\mathbf{x}}_{t}- {\mathbf{x}}_{t}^{(k)} \Big \|^2 \Big] \ , 
		\end{aligned}
	\end{equation}
	where fourth and fifth steps hold due to Assumptions~\ref{assumption_smooth} and ~\ref{assumption_bound_gradient}.  By combining $T_3$ and $T_4$, we can get
	\begin{equation}
		\begin{aligned}
			& T_2  \leq 2C_g^2L_f^2 \frac{1}{K}\sum_{k=1}^{K}\mathbb{E}\Big[\Big\|\bar{\mathbf{y}}_{t}-  {\mathbf{y}}_{t}^{(k)} \Big\|^2 \Big] + 4(C_g^4L_f^2+ C_f^2L_g^2)\frac{1}{K}\sum_{k=1}^{K}\Big[\Big \|\bar{\mathbf{x}}_{t}- {\mathbf{x}}_{t}^{(k)} \Big \|^2 \Big]  \ . 
		\end{aligned}
	\end{equation}
	
	By combining $T_1$ and $T_2$, we can get
	\begin{equation} 
		\begin{aligned}
			& \mathbb{E}[\Phi(\bar{\mathbf{x}}_{t+1}) ] \leq \mathbb{E}[\Phi(\bar{\mathbf{x}}_{t})] - \frac{\gamma_x\eta  }{2} \mathbb{E}[\|\nabla \Phi(\bar{\mathbf{x}}_{t})\|^2]-  \frac{\gamma_x\eta  }{4}\mathbb{E}[\|\bar{\mathbf{u}}_t\|^2]  + 3\gamma_x\eta {C}_g^2L_f^2 \mathbb{E}[\|\mathbf{y}_*(\bar{\mathbf{x}}_{t})-\bar{\mathbf{y}}_{t} \|^2]  \\
			& \quad + 12\gamma_x\eta(C_g^4L_f^2+ C_f^2L_g^2)\frac{1}{K}\sum_{k=1}^{K}\Big[\Big \|\bar{\mathbf{x}}_{t}- {\mathbf{x}}_{t}^{(k)} \Big \|^2 \Big] +   6\gamma_x\eta C_g^2L_f^2 \frac{1}{K}\sum_{k=1}^{K}\mathbb{E}\Big[\Big\|\bar{\mathbf{y}}_{t}-  {\mathbf{y}}_{t}^{(k)} \Big\|^2 \Big] \\
			& \quad + 3\gamma_x\eta  \mathbb{E}\Big[\Big\|\frac{1}{K}\sum_{k=1}^{K}\nabla_{\mathbf{x}} f^{(k)}(g^{(k)}({\mathbf{x}}_{t}^{(k)}), {\mathbf{y}}_{t}^{(k)}) -\bar{\mathbf{u}}_{t}\Big\|^2\Big]  \ ,  \\
		\end{aligned}
	\end{equation}
	which completes the proof. 
\end{proof}

\begin{lemma} \label{lemma:consensus_u}
	Given Assumption~\ref{assumption_smooth}-\ref{assumption_strong} and $\beta_x\eta \in (0, 1)$, we have
	\begin{equation} 
		\begin{aligned}
			&\quad \frac{1}{K}\sum_{k=1}^{K} \mathbb{E}[\|\mathbf{u}^{(k)}_{t+1}-\bar{\mathbf{u}}_{t+1}\|^2] \leq 6p^2\beta_x^2\eta^2 C_g^2C_f^2    \ . \\
		\end{aligned}
	\end{equation}
\end{lemma}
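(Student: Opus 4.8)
The plan is to exploit the periodic communication structure: between two consecutive averaging steps each device performs at most $p$ local momentum updates, and immediately after an averaging step the consensus error vanishes identically. First I would let $t_0 \le t+1$ denote the most recent iteration at which averaging occurred (the largest multiple of $p$ not exceeding $t+1$), so that $\mathbf{u}_{t_0}^{(k)} = \bar{\mathbf{u}}_{t_0}$ for every $k$. If $t+1$ is itself an averaging step the left-hand side is zero and there is nothing to prove, so I may assume $0 \le t+1-t_0 \le p$. Writing $\mathbf{d}_s^{(k)} \triangleq \nabla g^{(k)}(\mathbf{x}_s^{(k)};\xi_s^{(k)})^T \nabla_g f^{(k)}(\mathbf{h}_s^{(k)},\mathbf{y}_s^{(k)};\zeta_s^{(k)})$ for the one-step stochastic compositional gradient, the recursion $\mathbf{u}_s^{(k)} = (1-\beta_x\eta)\mathbf{u}_{s-1}^{(k)} + \beta_x\eta\,\mathbf{d}_s^{(k)}$ unrolls from $t_0$ to $t+1$ as
\[
\mathbf{u}_{t+1}^{(k)} = (1-\beta_x\eta)^{t+1-t_0}\mathbf{u}_{t_0}^{(k)} + \beta_x\eta\sum_{s=t_0+1}^{t+1}(1-\beta_x\eta)^{t+1-s}\mathbf{d}_s^{(k)} \ .
\]

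Averaging this identity over the devices and subtracting, the term carrying $\mathbf{u}_{t_0}^{(k)}$ cancels exactly, since $\mathbf{u}_{t_0}^{(k)}=\bar{\mathbf{u}}_{t_0}$ is already in consensus, leaving
\[
\mathbf{u}_{t+1}^{(k)} - \bar{\mathbf{u}}_{t+1} = \beta_x\eta\sum_{s=t_0+1}^{t+1}(1-\beta_x\eta)^{t+1-s}\big(\mathbf{d}_s^{(k)} - \bar{\mathbf{d}}_s\big) \ .
\]
Because $\beta_x\eta \in (0,1)$, every weight satisfies $(1-\beta_x\eta)^{t+1-s}\le 1$, and the sum has at most $p$ summands; a Cauchy--Schwarz (Jensen) step over the at-most-$p$ terms then yields a first factor of $p$, so that $\|\mathbf{u}_{t+1}^{(k)} - \bar{\mathbf{u}}_{t+1}\|^2 \le \beta_x^2\eta^2\,p\sum_{s=t_0+1}^{t+1}\|\mathbf{d}_s^{(k)} - \bar{\mathbf{d}}_s\|^2$. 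Averaging over $k$ and using that the empirical variance is dominated by the second moment, $\frac{1}{K}\sum_{k}\|\mathbf{d}_s^{(k)} - \bar{\mathbf{d}}_s\|^2 \le \frac{1}{K}\sum_{k}\|\mathbf{d}_s^{(k)}\|^2$, together with the bounded-gradient Assumption~\ref{assumption_bound_gradient} applied to the product structure of the compositional gradient (giving $\mathbb{E}\|\mathbf{d}_s^{(k)}\|^2 \le \mathbb{E}\|\nabla g^{(k)}\|^2\,\mathbb{E}\|\nabla_g f^{(k)}\|^2 \le C_g^2 C_f^2$ for independent inner/outer minibatches), bounds each inner sum by $C_g^2 C_f^2$. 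Summing the at-most-$p$ terms produces the second factor of $p$, giving $p^2\beta_x^2\eta^2 C_g^2 C_f^2$; absorbing the slack from the triangle and Cauchy--Schwarz estimates into the constant yields the stated bound $6p^2\beta_x^2\eta^2 C_g^2 C_f^2$.

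The main obstacle is the bookkeeping of the unrolled recursion, specifically verifying that the geometric term $(1-\beta_x\eta)^{t+1-t_0}\mathbf{u}_{t_0}^{(k)}$ drops out precisely because of the consensus enforced at the last communication, and correctly capping the number of accumulated local updates by the communication period $p$ (which is what supplies the two powers of $p$). Once the telescoped deviation is isolated, the remaining estimates are routine: the only delicate point is justifying $\mathbb{E}\|\mathbf{d}_s^{(k)}\|^2 \le C_g^2 C_f^2$, which relies on the factorization of the stochastic compositional gradient into the inner Jacobian and outer gradient and on the independence of their respective minibatch samples, so that the second moments multiply.
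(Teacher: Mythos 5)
Your proof is correct and reaches the stated bound, but it takes a mechanically different route from the paper's. The paper never unrolls the momentum recursion in vector form: it derives a one-step recursive inequality for the consensus error $\frac{1}{K}\sum_k \mathbb{E}\|\mathbf{u}_t^{(k)}-\bar{\mathbf{u}}_t\|^2$ using Young's inequality with the weights $(1+\tfrac{1}{p})$ and $(1+p)$, bounds the per-step forcing term by $2p\beta_x^2\eta^2 C_g^2C_f^2$ via the same variance-dominated-by-second-moment step you use, and then unrolls that scalar recursion over the at most $p$ steps since the last synchronization, paying a geometric factor $(1+\tfrac{1}{p})^p<3$; this is where its constant $6$ comes from. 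You instead unroll the vector recursion exactly, observe that the $(1-\beta_x\eta)^{t+1-t_0}\mathbf{u}_{t_0}^{(k)}$ term cancels identically against the average because of consensus at $t_0$, and then apply a single Cauchy--Schwarz over the at most $p$ weighted summands. Your route buys a tighter constant ($p^2\beta_x^2\eta^2C_g^2C_f^2$ rather than $6p^2\beta_x^2\eta^2C_g^2C_f^2$, so the lemma follows a fortiori) and avoids the somewhat ad hoc $(1+\tfrac{1}{p})/(1+p)$ splitting; the paper's recursive-inequality style has the advantage of being reusable verbatim for the analogous bounds on $\mathbf{v}$ and $\mathbf{h}$ (its Lemmas on those quantities follow the identical template), where the forcing term is not i.i.d.-bounded and an exact unrolling would be messier. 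Two small points to tighten: your inequality $\mathbb{E}\|\mathbf{d}_s^{(k)}\|^2\le \mathbb{E}\|\nabla g^{(k)}\|^2\,\mathbb{E}\|\nabla_g f^{(k)}\|^2$ should be stated conditionally on the filtration (the factorization uses conditional independence of $\xi$ and $\zeta$ given the iterates, exactly as the paper does explicitly in its later variance lemma); and note that both your argument and the paper's silently assume $\mathbf{u}_{t_0}^{(k)}=\bar{\mathbf{u}}_{t_0}$ also when $t_0=0$, which does not hold under the algorithm's stated initialization (the $\mathbf{u}_0^{(k)}$ are built from local samples and never averaged), so the first communication period needs either an initial averaging of $\mathbf{u}_0$ or a separate treatment --- this is a defect you share with the paper, not one introduced by your approach.
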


\begin{proof}
	\begin{equation} \label{u1}
		\begin{aligned}
			&\quad \frac{1}{K}\sum_{k=1}^{K} \mathbb{E}[\|\mathbf{u}^{(k)}_{t+1}-\bar{\mathbf{u}}_{t+1}\|^2] \\
			&  =\frac{1}{K}\sum_{k=1}^{K} \mathbb{E}\Big[\Big\|(1-\beta_x\eta  )\mathbf{u}^{(k)}_{t}+\beta_x\eta   \nabla g^{(k)}(\mathbf{x}^{(k)}_{t+1};  \xi^{(k)}_{t+1})^T\nabla_{g}  f^{(k)}(\mathbf{h}^{(k)}_{t+1}, \mathbf{y}^{(k)}_{t+1}; \zeta^{(k)}_{t+1})\\
			&\quad   - (1-\beta_x\eta  )\bar{\mathbf{u}}_{t}- \beta_x\eta    \frac{1}{K}\sum_{k'=1}^{K}\nabla g^{(k')}(\mathbf{x}^{(k')}_{t+1};  \xi^{(k')}_{t+1})^T\nabla_{g}  f^{(k')}(\mathbf{h}^{(k')}_{t+1}, \mathbf{y}^{(k')}_{t+1}; \zeta^{(k')}_{t+1})\Big\|^2\Big] \\
			&  \leq (1-\beta_x\eta  )^2(1+\frac{1}{p})\frac{1}{K}\sum_{k=1}^{K} \mathbb{E}[\|\mathbf{u}^{(k)}_{t}-\bar{\mathbf{u}}_{t} \|^2]\\
            &\quad  +  (1+p)\beta_x^2\eta ^2\frac{1}{K}\sum_{k=1}^{K} \mathbb{E}\Big[\Big\|  \nabla g^{(k)}(\mathbf{x}^{(k)}_{t+1};  \xi^{(k)}_{t+1})^T\nabla_{g}  f^{(k)}(\mathbf{h}^{(k)}_{t+1}, \mathbf{y}^{(k)}_{t+1}; \zeta^{(k)}_{t+1}) \\
			& \quad -    \frac{1}{K}\sum_{k'=1}^{K}\nabla g^{(k')}(\mathbf{x}^{(k')}_{t+1};  \xi^{(k')}_{t+1})^T\nabla_{g}  f^{(k')}(\mathbf{h}^{(k')}_{t+1}, \mathbf{y}^{(k')}_{t+1}; \zeta^{(k')}_{t+1})\Big\|^2\Big] \\
			&  \leq(1+\frac{1}{p})\frac{1}{K}\sum_{k=1}^{K} \mathbb{E}[\|\mathbf{u}^{(k)}_{t}-\bar{\mathbf{u}}_{t} \|^2]  +  2p\beta_x^2\eta^2 \frac{1}{K}\sum_{k=1}^{K} \mathbb{E}\Big[\Big\|  \nabla g^{(k)}(\mathbf{x}^{(k)}_{t+1};  \xi^{(k)}_{t+1})^T\nabla_{g}  f^{(k)}(\mathbf{h}^{(k)}_{t+1}, \mathbf{y}^{(k)}_{t+1}; \zeta^{(k)}_{t+1}) \\
			& \quad -    \frac{1}{K}\sum_{k'=1}^{K}\nabla g^{(k')}(\mathbf{x}^{(k')}_{t+1};  \xi^{(k')}_{t+1})^T\nabla_{g}  f^{(k')}(\mathbf{h}^{(k')}_{t+1}, \mathbf{y}^{(k')}_{t+1}; \zeta^{(k')}_{t+1})\Big\|^2\Big] \\
			&  \leq (1+\frac{1}{p})\frac{1}{K}\sum_{k=1}^{K} \mathbb{E}[\|\mathbf{u}^{(k)}_{t}-\bar{\mathbf{u}}_{t} \|^2]  +  2p\beta_x^2\eta^2\frac{1}{K}\sum_{k=1}^{K} \mathbb{E}\Big[\Big\|  \nabla g^{(k)}(\mathbf{x}^{(k)}_{t+1};  \xi^{(k)}_{t+1})^T\nabla_{g}  f^{(k)}(\mathbf{h}^{(k)}_{t+1}, \mathbf{y}^{(k)}_{t+1}; \zeta^{(k)}_{t+1}) \Big\|^2\Big] \\
			&  \leq (1+\frac{1}{p})\frac{1}{K}\sum_{k=1}^{K} \mathbb{E}[\|\mathbf{u}^{(k)}_{t}-\bar{\mathbf{u}}_{t} \|^2]  +  2p\beta_x^2\eta^2 C_g^2C_f^2 \\
			& \leq 2p\beta_x^2\eta^2 C_g^2C_f^2 \sum_{t'=s_tp}^{t}(1+\frac{1}{p})^{t-t'} \\
		   & \leq  6p^2\beta_x^2\eta^2 C_g^2C_f^2  \ , 
		\end{aligned}
	\end{equation}
	where $s_t=\lfloor (t+1)/p\rfloor$, the third step holds due to $\beta_x\eta\in (0, 1)$ and $1+p<2p$, the fifth step holds due to Assumption~\ref{assumption_bound_gradient}, 	the last step holds due to $(1+\frac{1}{p})^p<3$. 
\end{proof}

\begin{lemma}\label{lemma:consensus_v}
	Given Assumption~\ref{assumption_smooth}-\ref{assumption_strong}, $\beta_y\eta \in (0, 1)$, and $\eta\leq \frac{1}{10p^2\gamma_yL_f}$ , we have
		\begin{equation} 
	\begin{aligned}
		& \quad \frac{1}{T}\sum_{t=0}^{T-1} \frac{1}{K}\sum_{k=1}^{K} \mathbb{E}[\|\mathbf{v}^{(k)}_{t}-\bar{\mathbf{v}}_{t}\|^2]\\
		&\leq   3456\beta_y^2\alpha^2p^4\eta^4C_g^2L_f^2 \sigma_g^2 +  41472\beta_y^2\alpha^2\gamma_x^2  \beta_x^2p^8\eta^8C_g^6C_f^2L_f^2  +  96\beta_y^2p^2\eta^2  \sigma_f^2  \ .\\
	\end{aligned}
\end{equation}
\end{lemma}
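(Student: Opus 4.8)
The plan is to mirror the recursive argument used for the $\mathbf{u}$-consensus error in Lemma~\ref{lemma:consensus_u}, but with one essential modification: Assumption~\ref{assumption_bound_gradient} bounds the second moments of $\nabla_g f^{(k)}$ and $\nabla g^{(k)}$ only, so it gives \emph{no} uniform bound on $\nabla_{\mathbf{y}} f^{(k)}$. Hence the crude ``bound the driving term by the second moment'' step that produced $C_g^2C_f^2$ in Lemma~\ref{lemma:consensus_u} is unavailable here, and I would instead decompose the driving term into a variance part and a Lipschitz part. First I would write the one-step recursion. Writing $\mathbf{d}_{t+1}^{(k)}\triangleq\nabla_{\mathbf{y}}f^{(k)}(\mathbf{h}_{t+1}^{(k)},\mathbf{y}_{t+1}^{(k)};\zeta_{t+1}^{(k)})$, the update gives $\mathbf{v}_{t+1}^{(k)}-\bar{\mathbf{v}}_{t+1}=(1-\beta_y\eta)(\mathbf{v}_t^{(k)}-\bar{\mathbf{v}}_t)+\beta_y\eta(\mathbf{d}_{t+1}^{(k)}-\bar{\mathbf{d}}_{t+1})$, so applying Young's inequality with the same weights $(1+\tfrac1p)$ and $(1+p)$ as in Lemma~\ref{lemma:consensus_u} (and using $1+p<2p$, $(1-\beta_y\eta)^2(1+\tfrac1p)\le 1+\tfrac1p$) yields
\begin{equation}
\frac{1}{K}\sum_{k}\mathbb{E}[\|\mathbf{v}_{t+1}^{(k)}-\bar{\mathbf{v}}_{t+1}\|^2]\leq \Big(1+\tfrac1p\Big)\frac{1}{K}\sum_k\mathbb{E}[\|\mathbf{v}_t^{(k)}-\bar{\mathbf{v}}_t\|^2]+2p\beta_y^2\eta^2\frac{1}{K}\sum_k\mathbb{E}[\|\mathbf{d}_{t+1}^{(k)}-\bar{\mathbf{d}}_{t+1}\|^2]\ .
\end{equation}

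Next I would bound the driving term $\frac1K\sum_k\|\mathbf{d}^{(k)}-\bar{\mathbf{d}}\|^2$. Using the variational identity $\frac1K\sum_k\|\mathbf{d}^{(k)}-\bar{\mathbf{d}}\|^2\le\frac1K\sum_k\|\mathbf{d}^{(k)}-c\|^2$ for the common reference $c=\nabla_{\mathbf{y}}f(\bar{\mathbf{h}}_{t+1},\bar{\mathbf{y}}_{t+1})$---legitimate because the homogeneous-data assumption (already invoked in Lemma~\ref{lemma:consensus_r}) makes every $f^{(k)}$ share the expectation $f$---and splitting $\mathbf{d}^{(k)}-c$ into the stochastic noise $\mathbf{d}^{(k)}-\nabla_{\mathbf{y}}f(\mathbf{h}^{(k)},\mathbf{y}^{(k)})$ and the deterministic deviation $\nabla_{\mathbf{y}}f(\mathbf{h}^{(k)},\mathbf{y}^{(k)})-\nabla_{\mathbf{y}}f(\bar{\mathbf{h}},\bar{\mathbf{y}})$, Assumption~\ref{assumption_bound_variance} controls the first by $\sigma_f^2$ and Assumption~\ref{assumption_smooth} controls the second by $L_f^2$ times the $\mathbf{h}$- and $\mathbf{y}$-consensus errors:
\begin{equation}
\frac1K\sum_k\mathbb{E}[\|\mathbf{d}_{t+1}^{(k)}-\bar{\mathbf{d}}_{t+1}\|^2]\leq 2\sigma_f^2+2L_f^2\frac1K\sum_k\mathbb{E}[\|\mathbf{h}_{t+1}^{(k)}-\bar{\mathbf{h}}_{t+1}\|^2]+2L_f^2\frac1K\sum_k\mathbb{E}[\|\mathbf{y}_{t+1}^{(k)}-\bar{\mathbf{y}}_{t+1}\|^2]\ .
\end{equation}
I would then unroll the recursion back to the most recent synchronization index $s_tp$, where the consensus error is zero, bounding the geometric prefactors by $(1+\tfrac1p)^p<3$ as in Lemma~\ref{lemma:consensus_u} (this inserts the extra order-$p$ factor), and finally average over $t=0,\dots,T-1$. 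This produces coefficients of order $p^2\beta_y^2\eta^2$ in front of $\sigma_f^2$ and in front of the averaged $\mathbf{h}$- and $\mathbf{y}$-consensus errors, which after collecting constants is exactly the $96\beta_y^2p^2\eta^2\sigma_f^2$ term.

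The last step is to substitute the consensus bounds for $\mathbf{h}$ and $\mathbf{y}$. The $\mathbf{h}$-consensus error, established in the preceding lemma analogously to Lemma~\ref{lemma:consensus_u} from the update $\mathbf{h}_{t+1}^{(k)}=(1-\alpha\eta)\mathbf{h}_t^{(k)}+\alpha\eta g^{(k)}(\mathbf{x}_{t+1}^{(k)};\xi_{t+1}^{(k)})$, contributes the $\alpha^2\sigma_g^2$ term (from the stochastic function-value variance of $g$) and, propagating the $\mathbf{x}$-drift---which is driven by $\mathbf{u}$ with second moment $C_g^2C_f^2$ and step $\gamma_x\eta$---the $\alpha^2\gamma_x^2\beta_x^2C_g^6C_f^2$ term with its high powers of $p\eta$. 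The main obstacle, and the reason the hypothesis $\eta\le\frac{1}{10p^2\gamma_yL_f}$ is imposed, is the $\mathbf{y}$-consensus term: since $\mathbf{y}_{t+1}^{(k)}=\mathbf{y}_t^{(k)}+\gamma_y\eta\mathbf{v}_t^{(k)}$, the averaged $\mathbf{y}$-consensus error is itself proportional to $p^2\gamma_y^2\eta^2$ times the averaged \emph{$\mathbf{v}$}-consensus error, so the bound is self-referential. I would resolve this by moving that term to the left-hand side: after collecting all prefactors the self-coupling coefficient is of order $p^4\gamma_y^2\eta^2L_f^2$, and the stepsize restriction forces $\gamma_y\eta L_f\le\frac{1}{10p^2}$, hence this coefficient is bounded well away from $1$, allowing the $\mathbf{v}$-consensus error to be solved for and the remaining constants to be consolidated into the stated bound.
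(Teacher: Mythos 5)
Your proposal is correct and takes essentially the same route as the paper's proof: the same Young-inequality recursion with weights $(1+\tfrac{1}{p})$ and $(1+p)$, the same variance-plus-Lipschitz decomposition of the driving term under the homogeneity assumption (the paper uses an explicit four-term add-and-subtract where you invoke the variational identity about the mean, which only yields smaller constants and hence a tighter bound), the same unrolling to the last synchronization point via $(1+\tfrac{1}{p})^p<3$, and the same resolution of the self-referential $\mathbf{y}$--$\mathbf{v}$ coupling by absorbing the $O(p^4\gamma_y^2\eta^2L_f^2)$ self-coupling term into the left-hand side under $\eta\le\frac{1}{10p^2\gamma_y L_f}$, followed by substitution of the $\mathbf{h}$-consensus bound. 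Your observation that Assumption~\ref{assumption_bound_gradient} provides no second-moment bound for $\nabla_{\mathbf{y}}f^{(k)}$, forcing this decomposition rather than the cruder argument of Lemma~\ref{lemma:consensus_u}, matches exactly why the paper's proofs of the two lemmas differ.
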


\begin{proof}
	\begin{equation} \label{v1}
		\begin{aligned}
			&\quad  \frac{1}{K}\sum_{k=1}^{K} \mathbb{E}[\|\mathbf{v}^{(k)}_{t+1}-\bar{\mathbf{v}}_{t+1}\|^2] \\
			&  = \frac{1}{K}\sum_{k=1}^{K} \mathbb{E}\Big[\Big\|(1-\beta_y\eta  )\mathbf{v}^{(k)}_{t}+\beta_y\eta   \nabla_{y}  f^{(k)}(\mathbf{h}^{(k)}_{t+1}, \mathbf{y}^{(k)}_{t+1}; \zeta^{(k)}_{t+1}) \\
   & \quad - (1-\beta_y\eta  )\bar{\mathbf{v}}_{t}- \beta_y\eta    \frac{1}{K}\sum_{k'=1}^{K}\nabla_{y}  f^{(k')}(\mathbf{h}^{(k')}_{t+1}, \mathbf{y}^{(k')}_{t+1}; \zeta^{(k')}_{t+1})\Big\|^2\Big] \\
			&  \leq (1-\beta_y\eta  )^2(1+\frac{1}{p})\frac{1}{K}\sum_{k=1}^{K} \mathbb{E}[\|\mathbf{v}^{(k)}_{t}-\bar{\mathbf{v}}_{t} \|^2]  +  (1+p)\beta_y^2\eta ^2\frac{1}{K}\sum_{k=1}^{K} \mathbb{E}\Big[\Big\|  \nabla_{y}  f^{(k)}(\mathbf{h}^{(k)}_{t+1}, \mathbf{y}^{(k)}_{t+1}; \zeta^{(k)}_{t+1}) \\
			& \quad -    \frac{1}{K}\sum_{k'=1}^{K}\nabla_{y}  f^{(k')}(\mathbf{h}^{(k')}_{t+1}, \mathbf{y}^{(k')}_{t+1}; \zeta^{(k')}_{t+1})\Big\|^2\Big] \\
			&  \leq (1+\frac{1}{p})\frac{1}{K}\sum_{k=1}^{K} \mathbb{E}[\|\mathbf{v}^{(k)}_{t}-\bar{\mathbf{v}}_{t} \|^2]  +  2p\beta_y^2\eta^2 \frac{1}{K}\sum_{k=1}^{K} \mathbb{E}\Big[\Big\|  \nabla_{y}  f^{(k)}(\mathbf{h}^{(k)}_{t+1}, \mathbf{y}^{(k)}_{t+1}; \zeta^{(k)}_{t+1})\\
            & \quad -    \frac{1}{K}\sum_{k'=1}^{K}\nabla_{y}  f^{(k')}(\mathbf{h}^{(k')}_{t+1}, \mathbf{y}^{(k')}_{t+1}; \zeta^{(k')}_{t+1})\Big\|^2\Big]  \ , \\
		\end{aligned}
	\end{equation}
	where the third step holds due to $\beta_y\eta\in (0, 1)$ and $1+p<2p$.

	The last term can be bounded as follows:
	\begin{equation}
		\begin{aligned}
			& \quad \frac{1}{K}\sum_{k=1}^{K} \mathbb{E}\Big[\Big\|  \nabla_{y}  f^{(k)}(\mathbf{h}^{(k)}_{t+1}, \mathbf{y}^{(k)}_{t+1}; \zeta^{(k)}_{t+1})  -    \frac{1}{K}\sum_{k'=1}^{K}\nabla_{y}  f^{(k')}(\mathbf{h}^{(k')}_{t+1}, \mathbf{y}^{(k')}_{t+1}; \zeta^{(k')}_{t+1})\Big\|^2\Big] \\
			& \leq 4\frac{1}{K}\sum_{k=1}^{K} \mathbb{E}\Big[\Big\|  \nabla_{y}  f^{(k)}(\mathbf{h}^{(k)}_{t+1}, \mathbf{y}^{(k)}_{t+1}; \zeta^{(k)}_{t+1}) - \nabla_{y}  f^{(k)}(\mathbf{h}^{(k)}_{t+1}, \mathbf{y}^{(k)}_{t+1}) \Big\|^2\Big]\\
			& \quad + 4\frac{1}{K}\sum_{k=1}^{K} \mathbb{E}\Big[\Big\| \nabla_{y}  f^{(k)}(\mathbf{h}^{(k)}_{t+1}, \mathbf{y}^{(k)}_{t+1})  -\nabla_{y}  f^{(k)}(\bar{\mathbf{h}}_{t+1}, \bar{\mathbf{y}}_{t+1}) \Big\|^2\Big] \\
			& \quad + 4\frac{1}{K}\sum_{k=1}^{K} \mathbb{E}\Big[\Big\| \nabla_{y}  f^{(k)}(\bar{\mathbf{h}}_{t+1}, \bar{\mathbf{y}}_{t+1}) -  \frac{1}{K}\sum_{k'=1}^{K}\nabla_{y}  f^{(k')}(\mathbf{h}^{(k')}_{t+1}, \mathbf{y}^{(k')}_{t+1}) \Big\|^2\Big]\\
			& \quad +4\frac{1}{K}\sum_{k=1}^{K} \mathbb{E}\Big[\Big\| \frac{1}{K}\sum_{k'=1}^{K}\nabla_{y}  f^{(k')}(\mathbf{h}^{(k')}_{t+1}, \mathbf{y}^{(k')}_{t+1})  -    \frac{1}{K}\sum_{k'=1}^{K}\nabla_{y}  f^{(k')}(\mathbf{h}^{(k')}_{t+1}, \mathbf{y}^{(k')}_{t+1}; \zeta^{(k')}_{t+1})\Big\|^2\Big] \\
			& \leq 8\sigma_f^2 + 8L_f^2\frac{1}{K}\sum_{k=1}^{K} \mathbb{E}[\|\mathbf{h}^{(k)}_{t+1}-\bar{\mathbf{h}}_{t+1} \|^2] + 8L_f^2\frac{1}{K}\sum_{k=1}^{K} \mathbb{E}[\|\mathbf{y}^{(k)}_{t+1}-\bar{\mathbf{y}}_{t+1} \|^2] \ . 
		\end{aligned}
	\end{equation}
	Then, by combining these two inequalities, we can get
		\begin{equation} 
		\begin{aligned}
			&  \frac{1}{K}\sum_{k=1}^{K} \mathbb{E}[\|\mathbf{v}^{(k)}_{t+1}-\bar{\mathbf{v}}_{t+1}\|^2]  \leq (1+\frac{1}{p})\frac{1}{K}\sum_{k=1}^{K} \mathbb{E}[\|\mathbf{v}^{(k)}_{t}-\bar{\mathbf{v}}_{t} \|^2]  +  16p\beta_y^2\eta^2  \sigma_f^2 \\
			& + 16p\beta_y^2\eta^2L_f^2\frac{1}{K}\sum_{k=1}^{K} \mathbb{E}[\|\mathbf{h}^{(k)}_{t+1}-\bar{\mathbf{h}}_{t+1} \|^2] + 16p\beta_y^2\eta^2  L_f^2\frac{1}{K}\sum_{k=1}^{K} \mathbb{E}[\|\mathbf{y}^{(k)}_{t+1}-\bar{\mathbf{y}}_{t+1} \|^2] \ . \\
		\end{aligned}
	\end{equation}
	
	In addition, we have
	\begin{equation}
		\begin{aligned}
			& \quad \frac{1}{K}\sum_{k=1}^{K} \mathbb{E}[\|\mathbf{y}^{(k)}_{t+1}-\bar{\mathbf{y}}_{t+1}\|^2]  = \frac{1}{K}\sum_{k=1}^{K} \mathbb{E}[\| \mathbf{y}^{(k)}_{s_tp} + \gamma_y\eta\sum_{t'=s_tp}^{t} \mathbf{v}_{t'}^{(k)}- \bar{\mathbf{y}}_{s_tp} - \gamma_y\eta\sum_{t'=s_tp}^{t} \bar{\mathbf{v}}_{t'}\|^2] \\
			& \leq p\gamma_y^2\eta^2\frac{1}{K}\sum_{k=1}^{K}  \sum_{t'=s_tp}^{t}\mathbb{E}[\|  \mathbf{v}_{t'}^{(k)}-  \bar{\mathbf{v}}_{t'}\|^2]  \ ,  \\
		\end{aligned}
	\end{equation}
	where $s_t=\lfloor (t+1)/p\rfloor$. Thus, we can get
		\begin{equation} 
		\begin{aligned}
			&  \frac{1}{K}\sum_{k=1}^{K} \mathbb{E}[\|\mathbf{v}^{(k)}_{t+1}-\bar{\mathbf{v}}_{t+1}\|^2]  \leq (1+\frac{1}{p})\frac{1}{K}\sum_{k=1}^{K} \mathbb{E}[\|\mathbf{v}^{(k)}_{t}-\bar{\mathbf{v}}_{t} \|^2]  +  16p\beta_y^2\eta^2  \sigma_f^2 \\
			& + 16p\beta_y^2\eta^2L_f^2\frac{1}{K}\sum_{k=1}^{K} \mathbb{E}[\|\mathbf{h}^{(k)}_{t+1}-\bar{\mathbf{h}}_{t+1} \|^2] + 16p^2\gamma_y^2\beta_y^2\eta^4  L_f^2\frac{1}{K}\sum_{k=1}^{K}  \sum_{t'=s_tp}^{t}\mathbb{E}[\|  \mathbf{v}_{t'}^{(k)}-  \bar{\mathbf{v}}_{t'}\|^2]  \\
			& \leq 16p^2\gamma_y^2\beta_y^2\eta^4  L_f^2\sum_{t'=s_tp}^{t}(1+\frac{1}{p})^{t-t'}\frac{1}{K}\sum_{k=1}^{K}  \sum_{t''=s_tp}^{t'}\mathbb{E}[\|  \mathbf{v}_{t^{''}}^{(k)}-  \bar{\mathbf{v}}_{t^{''}}\|^2]  \\
			& \quad + 16p\beta_y^2\eta^2L_f^2\sum_{t'=s_tp}^{t}(1+\frac{1}{p})^{t-t'}\frac{1}{K}\sum_{k=1}^{K} \mathbb{E}[\|\mathbf{h}^{(k)}_{t'+1}-\bar{\mathbf{h}}_{t'+1} \|^2] +  16p\beta_y^2\eta^2  \sigma_f^2 \sum_{t'=s_tp}^{t}(1+\frac{1}{p})^{t-t'}\\
			& \leq 48p^3\gamma_y^2\beta_y^2\eta^4  L_f^2\frac{1}{K}\sum_{k=1}^{K}  \sum_{t'=s_tp}^{t}\mathbb{E}[\|  \mathbf{v}_{t^{'}}^{(k)}-  \bar{\mathbf{v}}_{t^{'}}\|^2]  \\
			& \quad + 48p\beta_y^2\eta^2L_f^2\sum_{t'=s_tp}^{t}\frac{1}{K}\sum_{k=1}^{K} \mathbb{E}[\|\mathbf{h}^{(k)}_{t'+1}-\bar{\mathbf{h}}_{t'+1} \|^2] +  48p^2\beta_y^2\eta^2  \sigma_f^2 \ , \\
		\end{aligned}
	\end{equation}
	where  the last step holds due to $(1+\frac{1}{p})^p<3$. Then, we can get
		\begin{equation} 
		\begin{aligned}
			& \frac{1}{T}\sum_{t=0}^{T-1} \frac{1}{K}\sum_{k=1}^{K} \mathbb{E}[\|\mathbf{v}^{(k)}_{t}-\bar{\mathbf{v}}_{t}\|^2]\leq 48p^4\gamma_y^2\beta_y^2\eta^4  L_f^2\frac{1}{T}\sum_{t=0}^{T-1} \frac{1}{K}\sum_{k=1}^{K} \mathbb{E}[\|  \mathbf{v}_{t}^{(k)}-  \bar{\mathbf{v}}_{t}\|^2]  \\
			& \quad + 48p^2\beta_y^2\eta^2L_f^2\frac{1}{T}\sum_{t=0}^{T-1} \frac{1}{K}\sum_{k=1}^{K} \mathbb{E}[\|\mathbf{h}^{(k)}_{t}-\bar{\mathbf{h}}_{t} \|^2] +  48p^2\beta_y^2\eta^2  \sigma_f^2 \\
			& \leq 48p^4\gamma_y^2\eta^2  L_f^2\frac{1}{T}\sum_{t=0}^{T-1} \frac{1}{K}\sum_{k=1}^{K} \mathbb{E}[\|  \mathbf{v}_{t}^{(k)}-  \bar{\mathbf{v}}_{t}\|^2]   + 48p^2\beta_y^2\eta^2L_f^2\frac{1}{T}\sum_{t=0}^{T-1} \frac{1}{K}\sum_{k=1}^{K} \mathbb{E}[\|\mathbf{h}^{(k)}_{t}-\bar{\mathbf{h}}_{t} \|^2] \\
			& \quad +  48p^2\beta_y^2\eta^2  \sigma_f^2 \ ,  \\
		\end{aligned}
	\end{equation}
	where the last step holds due to $\beta_y\eta\leq 1$.  By setting $\eta\leq \frac{1}{10p^2\gamma_yL_f}$ such that $1-48p^4\gamma_y^2\eta^2  L_f^2\geq \frac{1}{2}$, we can get
		\begin{equation} 
		\begin{aligned}
			& \frac{1}{T}\sum_{t=0}^{T-1} \frac{1}{K}\sum_{k=1}^{K} \mathbb{E}[\|\mathbf{v}^{(k)}_{t}-\bar{\mathbf{v}}_{t}\|^2]\leq  96p^2\beta_y^2\eta^2L_f^2\frac{1}{T}\sum_{t=0}^{T-1} \frac{1}{K}\sum_{k=1}^{K} \mathbb{E}[\|\mathbf{h}^{(k)}_{t}-\bar{\mathbf{h}}_{t} \|^2] +  96p^2\beta_y^2\eta^2  \sigma_f^2 \\
			& \leq 3456\beta_y^2\alpha^2p^4\eta^4C_g^2L_f^2 \sigma_g^2 +  41472\beta_y^2\alpha^2\gamma_x^2  \beta_x^2p^8\eta^8C_g^6C_f^2L_f^2  +  96\beta_y^2p^2\eta^2  \sigma_f^2  \ .\\
		\end{aligned}
	\end{equation}
	where the last step holds due to Lemma~\ref{lemma:consensus_h}. 
	
\end{proof}

\begin{lemma}\label{lemma:consensus_x}
	Given Assumption~\ref{assumption_smooth}-\ref{assumption_strong}, we have
	\begin{equation} 
		\begin{aligned}
			&\quad \frac{1}{K}\sum_{k=1}^{K} \mathbb{E}[\|\mathbf{x}^{(k)}_{t+1}-\bar{\mathbf{x}}_{t+1}\|^2]  \leq   6\gamma_x^2  \beta_x^2p^4\eta^4 C_g^2C_f^2  \ . \\
		\end{aligned}
	\end{equation}
\end{lemma}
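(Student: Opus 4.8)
The plan is to reduce the consensus error of $\mathbf{x}$ to the already-controlled consensus error of the momentum $\mathbf{u}$ and then invoke Lemma~\ref{lemma:consensus_u} directly. First I would locate the most recent synchronization round by setting $s_t=\lfloor (t+1)/p\rfloor$, so that $s_tp$ is the last iteration at which Step~6 of Algorithm~\ref{alg_dscgdamgp} reset every local copy to the average; in particular $\mathbf{x}^{(k)}_{s_tp}=\bar{\mathbf{x}}_{s_tp}$ for all $k$. Unrolling the descent step $\mathbf{x}^{(k)}_{t+1}=\mathbf{x}^{(k)}_{t}-\gamma_x\eta\mathbf{u}^{(k)}_{t}$ from $s_tp$ to $t$, subtracting the identical recursion satisfied by the average $\bar{\mathbf{x}}$, and using that the synchronization terms cancel, yields
\[
\mathbf{x}^{(k)}_{t+1}-\bar{\mathbf{x}}_{t+1}=-\gamma_x\eta\sum_{t'=s_tp}^{t}\bigl(\mathbf{u}^{(k)}_{t'}-\bar{\mathbf{u}}_{t'}\bigr).
\]
This is precisely the same device used for the $\mathbf{y}$-consensus term in Lemma~\ref{lemma:consensus_v}, now applied to $(\mathbf{x},\mathbf{u})$ instead of $(\mathbf{y},\mathbf{v})$.

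Next I would bound the squared norm. Since the inner sum contains at most $p$ terms (indeed $t-s_tp+1\le p-1<p$ by the definition of $s_t$), Jensen's inequality gives
\[
\bigl\|\mathbf{x}^{(k)}_{t+1}-\bar{\mathbf{x}}_{t+1}\bigr\|^2\le \gamma_x^2\eta^2\,p\sum_{t'=s_tp}^{t}\bigl\|\mathbf{u}^{(k)}_{t'}-\bar{\mathbf{u}}_{t'}\bigr\|^2.
\]
Taking expectations, averaging over the $K$ devices, and applying the uniform bound $\frac{1}{K}\sum_{k}\mathbb{E}[\|\mathbf{u}^{(k)}_{t'}-\bar{\mathbf{u}}_{t'}\|^2]\le 6p^2\beta_x^2\eta^2C_g^2C_f^2$ from Lemma~\ref{lemma:consensus_u} to each of the at-most-$p$ summands produces
\[
\frac{1}{K}\sum_{k=1}^{K}\mathbb{E}\bigl[\|\mathbf{x}^{(k)}_{t+1}-\bar{\mathbf{x}}_{t+1}\|^2\bigr]\le \gamma_x^2\eta^2\cdot p\cdot p\cdot 6p^2\beta_x^2\eta^2C_g^2C_f^2=6\gamma_x^2\beta_x^2p^4\eta^4C_g^2C_f^2,
\]
which is exactly the asserted bound.

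This argument is essentially mechanical once Lemma~\ref{lemma:consensus_u} is in hand, so there is no deep obstacle. The two points that require care are (i) recognizing that the consensus error vanishes at the synchronization instant $s_tp$, so that only the drift accumulated within the current communication window survives and the sum is taken over at most $p$ indices; and (ii) tracking the powers of $p$, since one factor arises from Jensen's inequality, a second from the length of the sum, and the remaining $p^2$ from the Lemma~\ref{lemma:consensus_u} bound, combining to give the $p^4$ in the final expression. No probabilistic bookkeeping (filtrations, conditional expectations) is needed here, because Lemma~\ref{lemma:consensus_u} already supplies a deterministic upper bound on the expected momentum-consensus error that holds uniformly in the iteration index, allowing it to be applied termwise.
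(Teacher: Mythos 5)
Your proposal is correct and follows essentially the same route as the paper's proof: unroll the local update from the last synchronization instant $s_tp$ (where $\mathbf{x}^{(k)}_{s_tp}=\bar{\mathbf{x}}_{s_tp}$), bound the squared sum of at most $p$ momentum-consensus differences via Jensen's inequality, and then apply Lemma~\ref{lemma:consensus_u} termwise to obtain $p\cdot p\cdot 6p^2\beta_x^2\eta^2 C_g^2C_f^2\cdot\gamma_x^2\eta^2 = 6\gamma_x^2\beta_x^2 p^4\eta^4 C_g^2C_f^2$. Your accounting of the powers of $p$ and your observation that Lemma~\ref{lemma:consensus_u} holds uniformly in the iteration index match the paper's argument exactly.
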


\begin{proof}
	\begin{equation}
		\begin{aligned}
			& \quad \frac{1}{K}\sum_{k=1}^{K} \mathbb{E}[\|\mathbf{x}^{(k)}_{t+1}-\bar{\mathbf{x}}_{t+1}\|^2] = \frac{1}{K}\sum_{k=1}^{K} \mathbb{E}[\| \mathbf{x}^{(k)}_{s_tp} - \gamma_x\eta\sum_{t'=s_tp}^{t} \mathbf{u}_{t'}^{(k)}- \bar{\mathbf{x}}_{s_tp} + \gamma_x\eta\sum_{t'=s_tp}^{t} \bar{\mathbf{u}}_{t'}\|^2] \\
			& \leq p\gamma_x^2\eta^2\frac{1}{K}\sum_{k=1}^{K}  \sum_{t'=s_tp}^{t}\mathbb{E}[\|  \mathbf{u}_{t'}^{(k)}-  \bar{\mathbf{u}}_{t'}\|^2] \leq 6\gamma_x^2  \beta_x^2p^4\eta^4 C_g^2C_f^2 \ , \\
		\end{aligned}
	\end{equation}
where $s_t=\lfloor (t+1)/p\rfloor$,  the last step holds due to Lemma~\ref{lemma:consensus_u}. 
\end{proof}

\begin{lemma}\label{lemma:consensus_y}
	Given Assumption~\ref{assumption_smooth}-\ref{assumption_strong}, we have
\begin{equation}
	\begin{aligned}
		& \quad \frac{1}{T}\sum_{t=0}^{T-1}\frac{1}{K}\sum_{k=1}^{K} \mathbb{E}[\|\mathbf{y}^{(k)}_{t}-\bar{\mathbf{y}}_{t}\|^2]   \\
		& \leq 3456\gamma_y^2\beta_y^2\alpha^2p^6\eta^6C_g^2L_f^2 \sigma_g^2 +  41472\gamma_y^2\beta_y^2\alpha^2\gamma_x^2  \beta_x^2p^{10}\eta^{10}C_g^6C_f^2L_f^2  +  96\gamma_y^2\beta_y^2p^4\eta^4  \sigma_f^2  \ .\\
	\end{aligned}
\end{equation}
\end{lemma}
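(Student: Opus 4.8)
The plan is to recognize that this lemma is essentially a corollary of Lemma~\ref{lemma:consensus_v}, obtained by converting the bound on the $\mathbf{v}$-consensus error into a bound on the $\mathbf{y}$-consensus error through the update rule $\mathbf{y}_{t+1}^{(k)}=\mathbf{y}_{t}^{(k)}+\gamma_y\eta\,\mathbf{v}_{t}^{(k)}$. First I would exploit the periodic synchronization: letting $s_t=\lfloor(t+1)/p\rfloor$ denote the index of the most recent communication round, all devices agree at that point, so $\mathbf{y}_{s_tp}^{(k)}=\bar{\mathbf{y}}_{s_tp}$ for every $k$. Unrolling the $\mathbf{y}$-update from $s_tp$ to $t$ therefore gives
\begin{equation}
\mathbf{y}_{t+1}^{(k)}-\bar{\mathbf{y}}_{t+1}=\gamma_y\eta\sum_{t'=s_tp}^{t}\big(\mathbf{v}_{t'}^{(k)}-\bar{\mathbf{v}}_{t'}\big) \ .
\end{equation}
This is exactly the intermediate identity already used inside the proof of Lemma~\ref{lemma:consensus_v}, so I would simply reuse it here as the starting point.

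Next I would apply the Cauchy--Schwarz (Jensen) inequality $\|\sum_{i=1}^{n}a_i\|^2\le n\sum_{i=1}^{n}\|a_i\|^2$ to the sum of at most $p$ terms, yielding
\begin{equation}
\frac{1}{K}\sum_{k=1}^{K}\mathbb{E}[\|\mathbf{y}_{t+1}^{(k)}-\bar{\mathbf{y}}_{t+1}\|^2]\le p\gamma_y^2\eta^2\sum_{t'=s_tp}^{t}\frac{1}{K}\sum_{k=1}^{K}\mathbb{E}[\|\mathbf{v}_{t'}^{(k)}-\bar{\mathbf{v}}_{t'}\|^2] \ .
\end{equation}
The key bookkeeping step is then the averaging over $t=0,\dots,T-1$: for each fixed $t'$, the quantity $\frac{1}{K}\sum_k\mathbb{E}[\|\mathbf{v}_{t'}^{(k)}-\bar{\mathbf{v}}_{t'}\|^2]$ is counted only for those $t$ whose window $[s_tp,t]$ contains $t'$, and since a window spans at most one communication period, this happens for at most $p$ values of $t$. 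This double-counting argument contributes an extra factor of $p$, so that
\begin{equation}
\frac{1}{T}\sum_{t=0}^{T-1}\frac{1}{K}\sum_{k=1}^{K}\mathbb{E}[\|\mathbf{y}_{t}^{(k)}-\bar{\mathbf{y}}_{t}\|^2]\le p^2\gamma_y^2\eta^2\cdot\frac{1}{T}\sum_{t=0}^{T-1}\frac{1}{K}\sum_{k=1}^{K}\mathbb{E}[\|\mathbf{v}_{t}^{(k)}-\bar{\mathbf{v}}_{t}\|^2] \ .
\end{equation}
Substituting the three-term bound from Lemma~\ref{lemma:consensus_v} and multiplying each term by $p^2\gamma_y^2\eta^2$ produces precisely the claimed expression, with the coefficients $3456$, $41472$, and $96$ carried over unchanged and the powers of $p$ and $\eta$ each raised by two.

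I do not expect any genuine obstacle here, since the heavy lifting—bounding the $\mathbf{v}$-consensus error, which itself required recursively unrolling the momentum recursion and invoking the $\mathbf{h}$-consensus bound from Lemma~\ref{lemma:consensus_h}—is already completed in Lemma~\ref{lemma:consensus_v}. The only point requiring care is the counting argument that converts the per-iteration window sum into a global time average: one must verify that the window length is bounded by $p$ and that each $\mathbf{v}$-term is reused at most $p$ times, which together justify the factor $p^2$ rather than a looser $p$ or a $T$-dependent constant. Keeping the indexing of $s_t$ consistent between the inner window sum and the outer average is the main place an error could creep in.
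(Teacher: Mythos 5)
Your proposal is correct and follows essentially the same route as the paper's proof: unroll the $\mathbf{y}$-update from the last synchronization point $s_tp$ (where $\mathbf{y}^{(k)}_{s_tp}=\bar{\mathbf{y}}_{s_tp}$), apply the Jensen/Cauchy--Schwarz inequality over the at-most-$p$ window terms, convert the per-iteration window sum into a time average via the factor-$p$ double-counting argument to obtain $p^2\gamma_y^2\eta^2$, and substitute the three-term bound of Lemma~\ref{lemma:consensus_v}. Your explicit justification of the $p^2$ factor (window length at most $p$, each $\mathbf{v}$-consensus term reused at most $p$ times) is in fact spelled out more carefully than the paper's ``it is easy to know'' step, but the argument is identical.
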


\begin{proof}
\begin{equation}
	\begin{aligned}
		& \quad \frac{1}{K}\sum_{k=1}^{K} \mathbb{E}[\|\mathbf{y}^{(k)}_{t+1}-\bar{\mathbf{y}}_{t+1}\|^2]   = \frac{1}{K}\sum_{k=1}^{K} \mathbb{E}[\| \mathbf{y}^{(k)}_{s_tp} + \gamma_y\eta\sum_{t'=s_tp}^{t} \mathbf{v}_{t'}^{(k)}- \bar{\mathbf{y}}_{s_tp} - \gamma_y\eta\sum_{t'=s_tp}^{t} \bar{\mathbf{v}}_{t'}\|^2] \\
		&\leq p\gamma_y^2\eta^2\frac{1}{K}\sum_{k=1}^{K}  \sum_{t'=s_tp}^{t}\mathbb{E}[\|  \mathbf{v}_{t'}^{(k)}-  \bar{\mathbf{v}}_{t'}\|^2]    \ ,  \\
	\end{aligned}
\end{equation}
where $s_t=\lfloor (t+1)/p\rfloor$. Then, it is easy to know
\begin{equation}
	\begin{aligned}
		& \quad \frac{1}{T}\sum_{t=0}^{T-1}\frac{1}{K}\sum_{k=1}^{K} \mathbb{E}[\|\mathbf{y}^{(k)}_{t}-\bar{\mathbf{y}}_{t}\|^2] \leq p^2\gamma_y^2\eta^2\frac{1}{T}\sum_{t=0}^{T-1}\frac{1}{K}\sum_{k=1}^{K}  \mathbb{E}[\|  \mathbf{v}_{t}^{(k)}-  \bar{\mathbf{v}}_{t}\|^2]      \\
		& \leq 3456\gamma_y^2\beta_y^2\alpha^2p^6\eta^6C_g^2L_f^2 \sigma_g^2 +  41472\gamma_y^2\beta_y^2\alpha^2\gamma_x^2  \beta_x^2p^{10}\eta^{10}C_g^6C_f^2L_f^2  +  96\gamma_y^2\beta_y^2p^4\eta^4  \sigma_f^2  \ .\\
	\end{aligned}
\end{equation}
where the last step holds due to Lemma~\ref{lemma:consensus_v}.

\end{proof}

\begin{lemma} \label{lemma:consensus_g}
		Given Assumptions~\ref{assumption_smooth}-\ref{assumption_strong}, we can get
	\begin{equation}
		\begin{aligned}
			& \quad \frac{1}{K}\sum_{k=1}^{K}\mathbb{E}\Big[\Big\|g^{(k)}({\mathbf{x}}^{(k)}_{t}) -\frac{1}{K}\sum_{k'=1}^{K}g^{(k')}({\mathbf{x}}^{(k')}_{t}) \Big\|^2\Big] \leq 24\gamma_x^2  \beta_x^2p^4\eta^4 C_g^4C_f^2  \ .  \\
		\end{aligned}
	\end{equation}
\end{lemma}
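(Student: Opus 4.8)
The plan is to reduce this inner-function consensus error to the iterate consensus error $\frac{1}{K}\sum_k \mathbb{E}[\|\mathbf{x}_t^{(k)}-\bar{\mathbf{x}}_t\|^2]$, which Lemma~\ref{lemma:consensus_x} already bounds by $6\gamma_x^2\beta_x^2 p^4\eta^4 C_g^2 C_f^2$. First I would insert the reference point $g^{(k)}(\bar{\mathbf{x}}_t)$. Under the homogeneous data distribution assumption used throughout the analysis (each $g^{(k)}$ is the same deterministic map when evaluated at a common point), we have $g^{(k)}(\bar{\mathbf{x}}_t)=\frac{1}{K}\sum_{k'=1}^{K}g^{(k')}(\bar{\mathbf{x}}_t)$ for every $k$, so this reference point also serves as a common anchor for the averaged term.

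Next I would apply the elementary inequality $\|\mathbf{a}-\mathbf{b}\|^2\leq 2\|\mathbf{a}-\mathbf{c}\|^2+2\|\mathbf{c}-\mathbf{b}\|^2$ with $\mathbf{a}=g^{(k)}(\mathbf{x}_t^{(k)})$, $\mathbf{b}=\frac{1}{K}\sum_{k'}g^{(k')}(\mathbf{x}_t^{(k')})$, and $\mathbf{c}=g^{(k)}(\bar{\mathbf{x}}_t)$. This splits the target into a ``local'' term $2\|g^{(k)}(\mathbf{x}_t^{(k)})-g^{(k)}(\bar{\mathbf{x}}_t)\|^2$ and an ``averaged'' term $2\|g^{(k)}(\bar{\mathbf{x}}_t)-\frac{1}{K}\sum_{k'}g^{(k')}(\mathbf{x}_t^{(k')})\|^2$. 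For the averaged term I would rewrite $g^{(k)}(\bar{\mathbf{x}}_t)-\frac{1}{K}\sum_{k'}g^{(k')}(\mathbf{x}_t^{(k')})=\frac{1}{K}\sum_{k'}\big(g^{(k')}(\bar{\mathbf{x}}_t)-g^{(k')}(\mathbf{x}_t^{(k')})\big)$ using the homogeneity identity above, and then apply Jensen's inequality to pull the norm inside the average.

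At this point both terms are sums of expressions of the form $\|g^{(k)}(\mathbf{x}_t^{(k)})-g^{(k)}(\bar{\mathbf{x}}_t)\|^2$, which I would bound via the $C_g$-Lipschitz continuity of $g^{(k)}$ (a consequence of the bounded-gradient Assumption~\ref{assumption_bound_gradient}) to get $C_g^2\|\mathbf{x}_t^{(k)}-\bar{\mathbf{x}}_t\|^2$. Averaging over $k$, the two contributions combine into $4C_g^2\cdot\frac{1}{K}\sum_k\mathbb{E}[\|\mathbf{x}_t^{(k)}-\bar{\mathbf{x}}_t\|^2]$, and substituting the bound from Lemma~\ref{lemma:consensus_x} yields $4C_g^2\cdot 6\gamma_x^2\beta_x^2 p^4\eta^4 C_g^2 C_f^2 = 24\gamma_x^2\beta_x^2 p^4\eta^4 C_g^4 C_f^2$, exactly the claimed bound; the factor $4$ is precisely what the $2+2$ Young splitting produces. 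This argument is mostly mechanical, so there is no serious obstacle; the only point requiring care is using homogeneity to identify $g^{(k)}(\bar{\mathbf{x}}_t)$ with the average $\frac{1}{K}\sum_{k'}g^{(k')}(\bar{\mathbf{x}}_t)$, since without it the averaged cross term would not telescope cleanly into iterate differences.
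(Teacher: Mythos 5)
Your proposal is correct and follows essentially the same route as the paper's own proof: the same insertion of the anchor $g^{(k)}(\bar{\mathbf{x}}_t)=g(\bar{\mathbf{x}}_t)$ justified by homogeneity, the same $2+2$ Young splitting with Jensen applied to the averaged term, the same Lipschitz bound $C_g$ derived from Assumption~\ref{assumption_bound_gradient}, and the same substitution of Lemma~\ref{lemma:consensus_x} giving $4C_g^2\cdot 6\gamma_x^2\beta_x^2p^4\eta^4C_g^2C_f^2=24\gamma_x^2\beta_x^2p^4\eta^4C_g^4C_f^2$. No gaps.
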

\begin{proof}
	\begin{equation}
		\begin{aligned}
			& \quad \frac{1}{K}\sum_{k=1}^{K}\mathbb{E}\Big[\Big\|g^{(k)}({\mathbf{x}}^{(k)}_{t}) -\frac{1}{K}\sum_{k'=1}^{K}g^{(k')}({\mathbf{x}}^{(k')}_{t}) \Big\|^2\Big] \\
			& \leq \frac{1}{K}\sum_{k=1}^{K}\mathbb{E}\Big[\Big\|g^{(k)}({\mathbf{x}}^{(k)}_{t}) - g^{(k)}(\bar{\mathbf{x}}_{t}) + g(\bar{\mathbf{x}}_{t}) -\frac{1}{K}\sum_{k'=1}^{K}g^{(k')}({\mathbf{x}}^{(k')}_{t}) \Big\|^2\Big] \\
			& \leq 2\frac{1}{K}\sum_{k=1}^{K}\mathbb{E}\Big[\Big\|g^{(k)}({\mathbf{x}}^{(k)}_{t}) - g^{(k)}(\bar{\mathbf{x}}_{t})\Big\|^2\Big]  +2\frac{1}{K}\sum_{k=1}^{K}\mathbb{E}\Big[\Big\| g(\bar{\mathbf{x}}_{t}) -\frac{1}{K}\sum_{k'=1}^{K}g^{(k')}({\mathbf{x}}^{(k')}_{t}) \Big\|^2\Big] \\
			& \leq 4C_g^2 \frac{1}{K}\sum_{k=1}^{K}\mathbb{E}[\|{\mathbf{x}}^{(k)}_{t} - \bar{\mathbf{x}}_{t}\|^2 ] \\
			& \leq  24\gamma_x^2  \beta_x^2p^4\eta^4 C_g^4C_f^2   \ ,\\
		\end{aligned}
	\end{equation}
	where the second step holds due to  $g^{(k)}(\bar{\mathbf{x}}_{t}) =g(\bar{\mathbf{x}}_{t})$ for the homogeneous data distribution, the third step holds due to Assumption~\ref{assumption_bound_gradient}, the last step holds due to Lemma~\ref{lemma:consensus_x}. 
\end{proof}

\begin{lemma} \label{lemma:consensus_h}
	Given Assumptions~\ref{assumption_smooth}-\ref{assumption_strong}, we can get
	\begin{equation}
		\begin{aligned}
			&\quad \frac{1}{K}\sum_{k=1}^{K} \mathbb{E}\Big[\Big\| \mathbf{h}^{(k)}_{t+1}   - \frac{1}{K}\sum_{k'=1}^{K}\mathbf{h}^{(k')}_{t+1}  \Big\|^2\Big]  \leq  36\alpha^2p^2\eta^2C_g^2\sigma_g^2 +  432\alpha^2\gamma_x^2  \beta_x^2p^6\eta^6C_g^6C_f^2    \ .  \\
		\end{aligned}
	\end{equation}
\end{lemma}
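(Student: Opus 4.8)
The plan is to reproduce the consensus-error recursion of Lemma~\ref{lemma:consensus_u}, but now for the moving-average estimate $\mathbf{h}$ rather than the momentum $\mathbf{u}$, and to feed in the deterministic inner-function consensus bound of Lemma~\ref{lemma:consensus_g} for the ``drift'' part of the per-step innovation. Subtracting the average from the update $\mathbf{h}^{(k)}_{t+1}=(1-\alpha\eta)\mathbf{h}^{(k)}_{t}+\alpha\eta\,g^{(k)}(\mathbf{x}^{(k)}_{t+1};\xi^{(k)}_{t+1})$ gives
\begin{equation}
	\mathbf{h}^{(k)}_{t+1}-\bar{\mathbf{h}}_{t+1}=(1-\alpha\eta)\big(\mathbf{h}^{(k)}_{t}-\bar{\mathbf{h}}_{t}\big)+\alpha\eta\Big(g^{(k)}(\mathbf{x}^{(k)}_{t+1};\xi^{(k)}_{t+1})-\tfrac{1}{K}\textstyle\sum_{k'=1}^{K}g^{(k')}(\mathbf{x}^{(k')}_{t+1};\xi^{(k')}_{t+1})\Big) .
\end{equation}
Taking the squared norm, averaging over $k$, and applying Young's inequality with weights $(1+\tfrac1p)$ and $(1+p)$ exactly as in Lemma~\ref{lemma:consensus_u} --- using $(1-\alpha\eta)^2\le1$ and $1+p<2p$ --- reduces the bound to
\begin{equation}
	\tfrac{1}{K}\sum_{k=1}^{K}\mathbb{E}\|\mathbf{h}^{(k)}_{t+1}-\bar{\mathbf{h}}_{t+1}\|^2\le (1+\tfrac1p)\tfrac{1}{K}\sum_{k=1}^{K}\mathbb{E}\|\mathbf{h}^{(k)}_{t}-\bar{\mathbf{h}}_{t}\|^2 + 2p\alpha^2\eta^2\, I_{t+1} ,
\end{equation}
where $I_{t+1}=\tfrac{1}{K}\sum_{k}\mathbb{E}\|g^{(k)}(\mathbf{x}^{(k)}_{t+1};\xi^{(k)}_{t+1})-\tfrac1K\sum_{k'}g^{(k')}(\mathbf{x}^{(k')}_{t+1};\xi^{(k')}_{t+1})\|^2$ is the one-step innovation.

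Second, I would bound $I_{t+1}$ by splitting each sampled value into its mean $g^{(k)}(\mathbf{x}^{(k)}_{t+1})$ plus zero-mean noise, centering against the \emph{deterministic} average $\tfrac1K\sum_{k'}g^{(k')}(\mathbf{x}^{(k')}_{t+1})$ (which upper-bounds the centered sum). The noise piece contributes at most the stochastic variance $\sigma_g^2$ of Assumption~\ref{assumption_bound_variance}, while the remaining piece is precisely the deterministic inner-function consensus quantity controlled by Lemma~\ref{lemma:consensus_g}, namely $\le 24\gamma_x^2\beta_x^2 p^4\eta^4 C_g^4 C_f^2$. Hence $I_{t+1}$ is uniformly bounded by a constant of the form $c_1\sigma_g^2 + c_2\gamma_x^2\beta_x^2 p^4\eta^4 C_g^4 C_f^2$, whose two terms reproduce (up to absolute constants and any extra $C_g^2$ factors absorbed in a looser estimate) the two terms of the claimed bound.

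Third, I would unroll the recursion over a single communication block. Because the synchronization step resets $\mathbf{h}^{(k)}$ to $\bar{\mathbf{h}}$, the consensus error vanishes at the start of each block, so summing from the last synchronization index $s_tp=\lfloor (t+1)/p\rfloor\,p$ up to $t$ yields $\tfrac{1}{K}\sum_k\mathbb{E}\|\mathbf{h}^{(k)}_{t+1}-\bar{\mathbf{h}}_{t+1}\|^2\le 2p\alpha^2\eta^2\sum_{t'=s_tp}^{t}(1+\tfrac1p)^{t-t'}I_{t'+1}$. A block contains at most $p$ terms and $(1+\tfrac1p)^{t-t'}\le(1+\tfrac1p)^p<3$, so the geometric sum collapses into the prefactor $6p^2\alpha^2\eta^2$; substituting the uniform bound on $I_{t'+1}$ then gives the stated inequality.

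The main obstacle is the dependency chain hidden inside $I_{t+1}$: its drift part invokes Lemma~\ref{lemma:consensus_g}, which itself rests on the $\mathbf{x}$-consensus bound and ultimately on the $\mathbf{u}$-momentum consensus of Lemma~\ref{lemma:consensus_u}, so the argument only closes once those are in place. The delicate points are therefore (i) invoking cross-device independence of the samples $\xi^{(k)}_{t+1}$ so that the additive noise term does not couple into and inflate the drift term, and (ii) keeping the block-wise reset bookkeeping correct so that the geometric accumulation is truncated at the last communication round rather than growing over all $T$ iterations; once these are handled, the remaining steps are the same routine algebra used throughout the preceding consensus lemmas.
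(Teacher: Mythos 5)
Your proposal is correct and follows essentially the same route as the paper's proof: the same consensus recursion with Young's inequality weights $(1+\tfrac1p)$ and $(1+p)$, the same mean-plus-noise decomposition of the one-step innovation (variance term from Assumption~\ref{assumption_bound_variance} plus the drift term from Lemma~\ref{lemma:consensus_g}), and the same block-wise unrolling with $(1+\tfrac1p)^p<3$ after the reset at each communication round. Your parenthetical about ``extra $C_g^2$ factors absorbed in a looser estimate'' is apt, since the paper's own derivation inserts an unexplained $C_g^2$ factor in the unrolling step to arrive at the stated constants.
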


\begin{proof}
\small{
	\begin{equation}
		\begin{aligned}
			&\quad \frac{1}{K}\sum_{k=1}^{K} \mathbb{E}\Big[\Big\| \mathbf{h}^{(k)}_{t+1}   - \frac{1}{K}\sum_{k'=1}^{K}\mathbf{h}^{(k')}_{t+1}  \Big\|^2\Big] \\
			& =  \frac{1}{K}\sum_{k=1}^{K}\mathbb{E}\Big[\Big\| (1-\alpha\eta)\mathbf{h}^{(k)}_{t} + \alpha\eta   g^{(k)}(\mathbf{x}_{t+1}^{(k)};  \xi_{ t+1}^{(k)})  - (1-\alpha\eta)\frac{1}{K}\sum_{k'=1}^{K}\mathbf{h}^{(k')}_{t}   - \alpha\eta\frac{1}{K}\sum_{k'=1}^{K}    g^{(k')}(\mathbf{x}_{t+1}^{(k')};  \xi_{ t+1}^{(k')})  \Big\|^2\Big] \\
			& \leq  (1-\alpha\eta)^2(1+\frac{1}{p}) \frac{1}{K}\sum_{k=1}^{K}\mathbb{E}\Big[\Big\| \mathbf{h}^{(k)}_{t}   - \frac{1}{K}\sum_{k'=1}^{K}\mathbf{h}^{(k')}_{t}\Big\|^2\Big] \\
            & \quad+ \alpha^2\eta^2  (1+p)\frac{1}{K}\sum_{k=1}^{K}\mathbb{E}\Big[\Big\|  g^{(k)}(\mathbf{x}_{t+1}^{(k)};  \xi_{ t+1}^{(k)})    - \frac{1}{K}\sum_{k'=1}^{K}    g^{(k')}(\mathbf{x}_{t+1}^{(k')};  \xi_{ t+1}^{(k')})  \Big\|^2\Big] \\
			& \leq  (1+\frac{1}{p}) \frac{1}{K}\sum_{k=1}^{K}\mathbb{E}\Big[\Big\| \mathbf{h}^{(k)}_{t}   - \frac{1}{K}\sum_{k'=1}^{K}\mathbf{h}^{(k')}_{t}\Big\|^2\Big] + 2p\alpha^2\eta^2  \frac{1}{K}\sum_{k=1}^{K}\mathbb{E}\Big[\Big\|  g^{(k)}(\mathbf{x}_{t+1}^{(k)};  \xi_{ t+1}^{(k)})    - \frac{1}{K}\sum_{k'=1}^{K}    g^{(k')}(\mathbf{x}_{t+1}^{(k')};  \xi_{ t+1}^{(k')})  \Big\|^2\Big] \\
			& \leq   2p\alpha^2\eta^2C_g^2 \sum_{t'=s_tp}^{t}(1+\frac{1}{p})^{t-t'} \frac{1}{K}\sum_{k=1}^{K}\mathbb{E}\Big[\Big\|  g^{(k)}(\mathbf{x}_{t'}^{(k)};  \xi_{ t'}^{(k)})    - \frac{1}{K}\sum_{k'=1}^{K}    g^{(k')}(\mathbf{x}_{t'}^{(k')};  \xi_{ t'}^{(k')})  \Big\|^2\Big]  \ , \\
		\end{aligned}
	\end{equation}
 }
	where $s_t=\lfloor (t+1)/p\rfloor$, the third step holds due to $\alpha\eta \in (0, 1)$ and $1+p<2p$. In addition, we can get
	\begin{equation}
		\begin{aligned}
			&\quad  \frac{1}{K}\sum_{k=1}^{K}\mathbb{E}\Big[\Big\|  g^{(k)}(\mathbf{x}_{t}^{(k)};  \xi_{ t}^{(k)})    - \frac{1}{K}\sum_{k'=1}^{K}    g^{(k')}(\mathbf{x}_{t}^{(k')};  \xi_{ t}^{(k')})  \Big\|^2\Big]  \\
			& = \frac{1}{K}\sum_{k=1}^{K}\mathbb{E}\Big[\Big\|  g^{(k)}(\mathbf{x}_{t}^{(k)};  \xi_{ t}^{(k)})  - g^{(k)}(\mathbf{x}_{t}^{(k)})   + g^{(k)}(\mathbf{x}_{t}^{(k)})  \\
			& \quad - \frac{1}{K}\sum_{k'=1}^{K}  g^{(k')}(\mathbf{x}_{t}^{(k')})   + \frac{1}{K}\sum_{k'=1}^{K}  g^{(k')}(\mathbf{x}_{t}^{(k')})    - \frac{1}{K}\sum_{k'=1}^{K}    g^{(k')}(\mathbf{x}_{t}^{(k')};  \xi_{ t}^{(k')})  \Big\|^2\Big]  \\
			& \leq 6\sigma_g^2 +  3 \frac{1}{K}\sum_{k=1}^{K}\mathbb{E}\Big[\Big\|  g^{(k)}(\mathbf{x}_{t}^{(k)})  - \frac{1}{K}\sum_{k'=1}^{K}  g^{(k')}(\mathbf{x}_{t}^{(k')})     \Big\|^2\Big]  \\
			& \leq 6\sigma_g^2 +  72\gamma_x^2  \beta_x^2p^4\eta^4 C_g^4C_f^2 \ , \\
		\end{aligned}
	\end{equation}
where the last step holds due to Lemma~\ref{lemma:consensus_g}. Therefore, we can get
\begin{equation}
	\begin{aligned}
		&\quad \frac{1}{K}\sum_{k=1}^{K} \mathbb{E}\Big[\Big\| \mathbf{h}^{(k)}_{t+1}   - \frac{1}{K}\sum_{k'=1}^{K}\mathbf{h}^{(k')}_{t+1}  \Big\|^2\Big] \\
		& \leq   2p\alpha^2\eta^2C_g^2 \sum_{t'=s_tp}^{t}(1+\frac{1}{p})^{t-t'} \frac{1}{K}\sum_{k=1}^{K}\mathbb{E}\Big[\Big\|  g^{(k)}(\mathbf{x}_{t'}^{(k)};  \xi_{ t'}^{(k)})    - \frac{1}{K}\sum_{k'=1}^{K}    g^{(k')}(\mathbf{x}_{t'}^{(k')};  \xi_{ t'}^{(k')})  \Big\|^2\Big]  \\
		& \leq  2p\alpha^2\eta^2C_g^2(6\sigma_g^2 + 72\gamma_x^2  \beta_x^2p^4\eta^4 C_g^4C_f^2 ) \sum_{t'=s_tp}^{t}(1+\frac{1}{p})^{t-t'}   \\
		& \leq  36\alpha^2p^2\eta^2C_g^2\sigma_g^2 +  432\alpha^2\gamma_x^2  \beta_x^2p^6\eta^6C_g^6C_f^2  \ .  \\
	\end{aligned}
\end{equation}
where the last step holds due to $(1+\frac{1}{p})^p<3$. 
\end{proof}

\begin{lemma} \label{lemma:grad_fx_increment}
	Given Assumptions~\ref{assumption_smooth}-\ref{assumption_strong}, we can get
	\begin{equation}
		\begin{aligned}
			& \quad \frac{1}{K}\sum_{k=1}^{K}\mathbb{E}\Big[\Big\| \nabla_x f^{(k)}(g^{(k)}({\mathbf{x}}^{(k)}_{t}), {\mathbf{y}}^{(k)}_{t})-\nabla_x f^{(k)}(g^{(k)}({\mathbf{x}}^{(k)}_{t-1}), {\mathbf{y}}^{(k)}_{t-1})\Big\|^2\Big]\\
			& \leq   4\gamma_x^2\eta^2(C_g^4L_f^2+ C_f^2L_g^2 )\mathbb{E}\Big[\Big\| \bar{\mathbf{u}}_{t-1}\Big\|^2\Big] + 4\gamma_y^2\eta^2C_g^2L_f^2 \mathbb{E}\Big[\Big\| \bar{\mathbf{v}}_{t-1}\Big\|^2\Big] + 24\beta_x^2\gamma_x^2p^2\eta^4(C_g^4L_f^2+ C_f^2L_g^2 )   C_g^2C_f^2 \\
            & \quad + {13824\beta_y^2\gamma_y^2\alpha^2p^4\eta^6C_g^4L_f^4 \sigma_g^2 +  165888\beta_y^2\gamma_y^2\alpha^2\gamma_x^2  \beta_x^2p^8\eta^{10}C_g^8C_f^2L_f^4  +  384\beta_y^2\gamma_y^2p^2\eta^4C_g^2L_f^2  \sigma_f^2 }  \  .   \\
		\end{aligned}
	\end{equation}
\end{lemma}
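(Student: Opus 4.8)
The plan is to reduce this compositional-gradient increment to the one-step displacement of the local iterates, reusing the decompositions already obtained for the terms $T_3$ and $T_4$ in the proof of Lemma~\ref{lemma:consensus_r}. Writing $\nabla_x f^{(k)}(g^{(k)}(\mathbf{x}), \mathbf{y}) = \nabla g^{(k)}(\mathbf{x})^T \nabla_g f^{(k)}(g^{(k)}(\mathbf{x}), \mathbf{y})$ by the chain rule and inserting the intermediate point $(g^{(k)}(\mathbf{x}_t^{(k)}), \mathbf{y}_{t-1}^{(k)})$, the increment splits into a $\mathbf{y}$-variation piece (handled as in $T_3$: pull out the factor $C_g$ bounding $\nabla g^{(k)}$ and use the $L_f$-Lipschitzness of $\nabla_g f^{(k)}$ in its second argument) and an $\mathbf{x}$-variation piece (handled as in $T_4$: separate the change of $\nabla g^{(k)}$ from that of $\nabla_g f^{(k)}$, invoking $L_g$- and $L_f$-smoothness together with the bounds $C_f, C_g$). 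Under Assumptions~\ref{assumption_smooth} and~\ref{assumption_bound_gradient} this produces, up to the absolute constants tracked below,
\begin{equation*}
	\frac{1}{K}\sum_{k=1}^{K}\mathbb{E}\big[\|D_t^{(k)}\|^2\big] \;\lesssim\; (C_g^4L_f^2 + C_f^2L_g^2)\,\frac{1}{K}\sum_{k=1}^{K}\mathbb{E}\big[\|\mathbf{x}_t^{(k)} - \mathbf{x}_{t-1}^{(k)}\|^2\big] + C_g^2L_f^2\,\frac{1}{K}\sum_{k=1}^{K}\mathbb{E}\big[\|\mathbf{y}_t^{(k)} - \mathbf{y}_{t-1}^{(k)}\|^2\big],
\end{equation*}
where $D_t^{(k)}$ abbreviates the gradient difference on the left-hand side of the lemma.

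Next I would substitute the local update rules $\mathbf{x}_t^{(k)} - \mathbf{x}_{t-1}^{(k)} = -\gamma_x\eta\mathbf{u}_{t-1}^{(k)}$ and $\mathbf{y}_t^{(k)} - \mathbf{y}_{t-1}^{(k)} = \gamma_y\eta\mathbf{v}_{t-1}^{(k)}$ from Algorithm~\ref{alg_dscgdamgp}, turning the two displacement terms into $\gamma_x^2\eta^2\frac{1}{K}\sum_k\|\mathbf{u}_{t-1}^{(k)}\|^2$ and $\gamma_y^2\eta^2\frac{1}{K}\sum_k\|\mathbf{v}_{t-1}^{(k)}\|^2$. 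To expose the averaged momenta $\bar{\mathbf{u}}_{t-1}$ and $\bar{\mathbf{v}}_{t-1}$ that appear in the target, I would split each local momentum via $\frac{1}{K}\sum_k\|\mathbf{a}^{(k)}\|^2 = \|\bar{\mathbf{a}}\|^2 + \frac{1}{K}\sum_k\|\mathbf{a}^{(k)} - \bar{\mathbf{a}}\|^2$. Propagating the absolute constants from the preceding decomposition then fixes the leading coefficients to $4\gamma_x^2\eta^2(C_g^4L_f^2 + C_f^2L_g^2)\|\bar{\mathbf{u}}_{t-1}\|^2$ and $4\gamma_y^2\eta^2 C_g^2L_f^2\|\bar{\mathbf{v}}_{t-1}\|^2$.

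It remains to bound the two consensus remainders. Lemma~\ref{lemma:consensus_u} supplies $\frac{1}{K}\sum_k\|\mathbf{u}_{t-1}^{(k)} - \bar{\mathbf{u}}_{t-1}\|^2 \leq 6p^2\beta_x^2\eta^2 C_g^2C_f^2$, whose contribution is the $24\beta_x^2\gamma_x^2 p^2\eta^4(C_g^4L_f^2 + C_f^2L_g^2)C_g^2C_f^2$ term, while Lemma~\ref{lemma:consensus_v} bounds $\frac{1}{K}\sum_k\|\mathbf{v}_{t-1}^{(k)} - \bar{\mathbf{v}}_{t-1}\|^2$ and, once scaled by $4\gamma_y^2\eta^2 C_g^2L_f^2$, yields the three trailing terms in $\sigma_g^2$, in $\eta^{10}$, and in $\sigma_f^2$ (the constants $13824$, $165888$, $384$ arising as $4$ times $3456$, $41472$, $96$). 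Assembling these pieces gives the claimed inequality.

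I expect the principal obstacle to be the first step: carrying out the $T_3$/$T_4$ decomposition so that the Lipschitz constant splits exactly into $C_g^4L_f^2 + C_f^2L_g^2$ for the $\mathbf{x}$-variation (where both $\nabla g^{(k)}$ and $\nabla_g f^{(k)}$ move) and $C_g^2L_f^2$ for the $\mathbf{y}$-variation (where only $\nabla_g f^{(k)}$ moves), and, more delicately, bookkeeping the various factors of two so that the leading constants come out to exactly $4$. Two further subtleties deserve mention. First, the displacement identities $\mathbf{x}_t^{(k)} - \mathbf{x}_{t-1}^{(k)} = -\gamma_x\eta\mathbf{u}_{t-1}^{(k)}$ hold verbatim only strictly between communication rounds; at a synchronization step every local copy is overwritten by its mean, so that case must be treated separately, using that replacing the iterates by $\bar{\mathbf{x}}_t, \bar{\mathbf{y}}_t$ only introduces additional consensus terms already controlled by Lemmas~\ref{lemma:consensus_x} and~\ref{lemma:consensus_y}. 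Second, since Lemma~\ref{lemma:consensus_v} controls only the time-\emph{average} of the $\mathbf{v}$-consensus error, the per-iteration bound displayed in the statement is to be read, and later applied, after averaging over $t$.
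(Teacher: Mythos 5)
Your proposal follows essentially the same route as the paper's proof: chain rule, peeling off the variation of $\nabla g^{(k)}$ from that of $\nabla_g f^{(k)}$, bounding via Assumptions~\ref{assumption_smooth}--\ref{assumption_bound_gradient}, substituting the local update rules, splitting each local momentum into its average plus a consensus error, and invoking Lemmas~\ref{lemma:consensus_u} and~\ref{lemma:consensus_v} exactly as you describe (indeed the trailing constants are $4\times 3456$, $4\times 41472$, $4\times 96$).

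One concrete correction to your bookkeeping, on precisely the point you flagged as delicate. Your intermediate point $(g^{(k)}(\mathbf{x}_t^{(k)}), \mathbf{y}_{t-1}^{(k)})$ produces a three-term telescoping: first the $\mathbf{y}$-variation of $\nabla_g f^{(k)}$, then a $T_4$-style two-way split of the $\mathbf{x}$-variation. Tracking the factors of two, this yields $4(C_g^4L_f^2+C_f^2L_g^2)\,\mathbb{E}[\|\mathbf{x}_t^{(k)}-\mathbf{x}_{t-1}^{(k)}\|^2]+2C_g^2L_f^2\,\mathbb{E}[\|\mathbf{y}_t^{(k)}-\mathbf{y}_{t-1}^{(k)}\|^2]$, so after the mean-plus-consensus split the leading coefficients become $8$ on $\|\bar{\mathbf{u}}_{t-1}\|^2$ and $48$ on the $\beta_x^2$ term --- twice the stated constants. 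The paper instead inserts the single intermediate term $\nabla g^{(k)}(\mathbf{x}_t^{(k)})^T\nabla_g f^{(k)}(g^{(k)}(\mathbf{x}_{t-1}^{(k)}), \mathbf{y}_{t-1}^{(k)})$, i.e.\ it bounds the variation of $\nabla_g f^{(k)}$ in \emph{both} arguments jointly, which Assumption~\ref{assumption_smooth} permits since $\|\nabla_{g} f^{(k)}(\mathbf{z}_1, \mathbf{y}_1)-\nabla_{g} f^{(k)}(\mathbf{z}_2, \mathbf{y}_2)\|^2 \leq L_f^2(\|\mathbf{z}_1-\mathbf{z}_2\|^2+\|\mathbf{y}_1-\mathbf{y}_2\|^2)$. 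With only that two-way split the constants come out to exactly $4$ and $24$ as stated; this does not change the order of any term, only the absolute constants, but since the lemma asserts specific constants your derivation needs this adjustment.

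Finally, your two flagged subtleties are real and are in fact glossed over by the paper itself: the displacement identities $\mathbf{x}_t^{(k)}-\mathbf{x}_{t-1}^{(k)}=-\gamma_x\eta\mathbf{u}_{t-1}^{(k)}$ fail at synchronization steps (where the iterate is overwritten by $\bar{\mathbf{x}}_t$ while $\mathbf{x}_{t-1}^{(k)}$ is a pre-averaging local copy), and Lemma~\ref{lemma:consensus_v} as stated controls only the time-average of the $\mathbf{v}$-consensus error yet is applied pointwise at iteration $t-1$. Your proposed remedies (handling synchronization steps via the consensus Lemmas~\ref{lemma:consensus_x} and~\ref{lemma:consensus_y}, and reading the present lemma as a bound used after averaging over $t$, which is consistent with how it is consumed in the later lemmas) make your argument more careful than the paper's on both counts.
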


\begin{proof}
	\begin{equation}
		\begin{aligned}
			& \quad \frac{1}{K}\sum_{k=1}^{K}\mathbb{E}\Big[\Big\| \nabla_x f^{(k)}(g^{(k)}({\mathbf{x}}^{(k)}_{t}), {\mathbf{y}}^{(k)}_{t})-\nabla_x f^{(k)}(g^{(k)}({\mathbf{x}}^{(k)}_{t-1}), {\mathbf{y}}^{(k)}_{t-1})\Big\|^2\Big]\\
			&    = \frac{1}{K}\sum_{k=1}^{K}\mathbb{E}\Big[\Big\|  \nabla g^{(k)}(\mathbf{x}^{(k)}_{t})^T\nabla_g f^{(k)}(g^{(k)}({\mathbf{x}}^{(k)}_{t}), {\mathbf{y}}^{(k)}_{t})- \nabla g^{(k)}(\mathbf{x}^{(k)}_{t})^T\nabla_g f^{(k)}(g^{(k)}({\mathbf{x}}^{(k)}_{t-1}), {\mathbf{y}}^{(k)}_{t-1})\\
			& \quad + \nabla g^{(k)}(\mathbf{x}^{(k)}_{t})^T\nabla_g f^{(k)}(g^{(k)}({\mathbf{x}}^{(k)}_{t-1}), {\mathbf{y}}^{(k)}_{t-1})-\nabla g^{(k)}(\mathbf{x}^{(k)}_{t-1})^T\nabla_g f^{(k)}(g^{(k)}({\mathbf{x}}^{(k)}_{t-1}), {\mathbf{y}}^{(k)}_{t-1})\Big\|^2\Big] \\
			& \leq  2\frac{1}{K}\sum_{k=1}^{K}\mathbb{E}\Big[\Big\| \nabla g^{(k)}(\mathbf{x}^{(k)}_{t})^T\nabla_g f^{(k)}(g^{(k)}({\mathbf{x}}^{(k)}_{t}), {\mathbf{y}}^{(k)}_{t})- \nabla g^{(k)}(\mathbf{x}^{(k)}_{t})^T\nabla_g f^{(k)}(g^{(k)}({\mathbf{x}}^{(k)}_{t-1}), {\mathbf{y}}^{(k)}_{t-1})\Big\|^2\Big] \\
			& \quad + 2\frac{1}{K}\sum_{k=1}^{K}\mathbb{E}\Big[\Big\| \nabla g^{(k)}(\mathbf{x}^{(k)}_{t})^T\nabla_g f^{(k)}(g^{(k)}({\mathbf{x}}^{(k)}_{t-1}), {\mathbf{y}}^{(k)}_{t-1})-\nabla g^{(k)}(\mathbf{x}^{(k)}_{t-1})^T\nabla_g f^{(k)}(g^{(k)}({\mathbf{x}}^{(k)}_{t-1}), {\mathbf{y}}^{(k)}_{t-1})\Big\|^2\Big] \\
			& \leq 2C_g^2L_f^2\Big(C_g^2\frac{1}{K}\sum_{k=1}^{K}\mathbb{E}\Big[\Big\| {\mathbf{x}}^{(k)}_{t} - {\mathbf{x}}^{(k)}_{t-1}\Big\|^2\Big]+ \frac{1}{K}\sum_{k=1}^{K}\mathbb{E}\Big[\Big\| {\mathbf{y}}^{(k)}_{t}- {\mathbf{y}}^{(k)}_{t-1}\Big\|^2\Big]\Big) + 2C_f^2L_g^2 \frac{1}{K}\sum_{k=1}^{K}\mathbb{E}\Big[\Big\| \mathbf{x}^{(k)}_{t} - \mathbf{x}^{(k)}_{t-1}\Big\|^2\Big]\\
			& = 2(C_g^4L_f^2+ C_f^2L_g^2 )\frac{1}{K}\sum_{k=1}^{K}\mathbb{E}\Big[\Big\| {\mathbf{x}}^{(k)}_{t} - {\mathbf{x}}^{(k)}_{t-1}\Big\|^2\Big]+ 2C_g^2L_f^2 \frac{1}{K}\sum_{k=1}^{K}\mathbb{E}\Big[\Big\| {\mathbf{y}}^{(k)}_{t}- {\mathbf{y}}^{(k)}_{t-1}\Big\|^2\Big]   \\
			& \leq  4\gamma_x^2\eta^2(C_g^4L_f^2+ C_f^2L_g^2 )\frac{1}{K}\sum_{k=1}^{K}\mathbb{E}\Big[\Big\|  {\mathbf{u}}^{(k)}_{t-1}-\bar{\mathbf{u}}_{t-1}\Big\|^2\Big] + 4\gamma_x^2\eta^2(C_g^4L_f^2+ C_f^2L_g^2 )\mathbb{E}\Big[\Big\| \bar{\mathbf{u}}_{t-1}\Big\|^2\Big]\\
			& \quad + 4\gamma_y^2\eta^2C_g^2L_f^2 \frac{1}{K}\sum_{k=1}^{K}\mathbb{E}\Big[\Big\| {\mathbf{v}}^{(k)}_{t-1}-\bar{\mathbf{v}}_{t-1}\Big\|^2\Big]  + 4\gamma_y^2\eta^2C_g^2L_f^2 \mathbb{E}\Big[\Big\| \bar{\mathbf{v}}_{t-1}\Big\|^2\Big]    \\
			& \leq   4\gamma_x^2\eta^2(C_g^4L_f^2+ C_f^2L_g^2 )\mathbb{E}\Big[\Big\| \bar{\mathbf{u}}_{t-1}\Big\|^2\Big] + 4\gamma_y^2\eta^2C_g^2L_f^2 \mathbb{E}\Big[\Big\| \bar{\mathbf{v}}_{t-1}\Big\|^2\Big]  + 24\beta_x^2\gamma_x^2p^2\eta^4(C_g^4L_f^2+ C_f^2L_g^2 )   C_g^2C_f^2 \\
            & \quad +  13824\beta_y^2\gamma_y^2\alpha^2p^4\eta^6C_g^4L_f^4 \sigma_g^2 +  165888\beta_y^2\gamma_y^2\alpha^2\gamma_x^2  \beta_x^2p^8\eta^{10}C_g^8C_f^2L_f^4  +  384\beta_y^2\gamma_y^2p^2\eta^4C_g^2L_f^2  \sigma_f^2 \ ,  \\
		\end{aligned}
	\end{equation}
	where the third step holds  due to  Assumptions~\ref{assumption_smooth}, ~\ref{assumption_bound_gradient}, the last step holds due to Lemma~\ref{lemma:consensus_u} and Lemma~\ref{lemma:consensus_v}. 
\end{proof}

\begin{lemma} \label{lemma:h_variance}
		Given Assumptions~\ref{assumption_smooth}-\ref{assumption_strong} and $\alpha\eta \in (0, 1)$, we can get
\small{
\begin{equation} 
	\begin{aligned}
		& \quad \frac{1}{T}\sum_{t=0}^{T-1}
  \mathbb{E}\Big[\Big\|\bar{\mathbf{h}}_{t}   - \frac{1}{K}\sum_{k=1}^{K}g^{(k)}({\mathbf{x}}^{(k)}_{t})\Big\|^2\Big] 
		& \leq  \frac{2\gamma_x^2C_g^2}{\alpha^2}\frac{1}{T}\sum_{t=0}^{T-1}\mathbb{E}[\| \bar{\mathbf{u}}_{t}  \|^2]  + \frac{\sigma_g^2}{\alpha \eta TK} + \frac{12\gamma_x^2\beta_x^2p^2\eta^2 C_g^4C_f^2}{\alpha^2} + \frac{\alpha\eta \sigma_g^2}{K}   \  .   \\
	\end{aligned}
\end{equation}
}
\end{lemma}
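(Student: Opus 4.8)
The plan is to derive a one-step contraction recursion for the averaged tracking error $e_t := \bar{\mathbf{h}}_t - \frac{1}{K}\sum_{k=1}^K g^{(k)}(\mathbf{x}_t^{(k)})$ and then telescope. First I would observe that averaging is idempotent, so the periodic communication of $\mathbf{h}$ in Step~6 leaves $\bar{\mathbf{h}}_{t+1}$ unchanged; consequently the moving-average update collapses, at \emph{every} iteration, to $\bar{\mathbf{h}}_{t+1} = (1-\alpha\eta)\bar{\mathbf{h}}_t + \alpha\eta\,\frac{1}{K}\sum_{k=1}^K g^{(k)}(\mathbf{x}_{t+1}^{(k)};\xi_{t+1}^{(k)})$. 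Writing $\bar{g}_t := \frac{1}{K}\sum_k g^{(k)}(\mathbf{x}_t^{(k)})$ for the noise-free target, subtracting $\bar{g}_{t+1}$ from both sides and regrouping gives
\begin{equation*}
e_{t+1} = (1-\alpha\eta)e_t + (1-\alpha\eta)(\bar{g}_t - \bar{g}_{t+1}) + \alpha\eta\Big(\tfrac{1}{K}\textstyle\sum_k g^{(k)}(\mathbf{x}_{t+1}^{(k)};\xi_{t+1}^{(k)}) - \bar{g}_{t+1}\Big).
\end{equation*}

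Next I would treat the last term as a martingale difference: conditioned on the history through iteration $t$ (which determines $\mathbf{x}_{t+1}^{(k)}$), it has mean zero, hence it is uncorrelated with the two $\mathcal{F}_t$-measurable terms. Taking expectations therefore splits $\mathbb{E}\|e_{t+1}\|^2$ into the squared norm of the first two terms plus $\alpha^2\eta^2$ times the noise variance. Because the fresh samples $\xi_{t+1}^{(k)}$ are independent across devices, Assumption~\ref{assumption_bound_variance} gives $\mathbb{E}\|\frac{1}{K}\sum_k g^{(k)}(\mathbf{x}_{t+1}^{(k)};\xi_{t+1}^{(k)}) - \bar{g}_{t+1}\|^2 \le \sigma_g^2/K$, which contributes the $\alpha^2\eta^2\sigma_g^2/K$ term. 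The identical computation at $t=0$, using $\mathbf{h}_0^{(k)}=g^{(k)}(\mathbf{x}_0^{(k)};\xi_0^{(k)})$, bounds $\mathbb{E}\|e_0\|^2 \le \sigma_g^2/K$, which becomes the initialization term.

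Then I would apply Young's inequality with the deliberately chosen parameter $\alpha\eta$: since $(1+\alpha\eta)(1-\alpha\eta)^2 \le 1-\alpha\eta$ and $1+\frac{1}{\alpha\eta} \le \frac{2}{\alpha\eta}$ (using $\alpha\eta\in(0,1)$), the first two terms are dominated by $(1-\alpha\eta)\mathbb{E}\|e_t\|^2 + \frac{2}{\alpha\eta}\mathbb{E}\|\bar{g}_t - \bar{g}_{t+1}\|^2$. For the drift I would use $\mathbf{x}_{t+1}^{(k)} - \mathbf{x}_t^{(k)} = -\gamma_x\eta\,\mathbf{u}_t^{(k)}$ together with the $C_g$-Lipschitz continuity of each $g^{(k)}$ implied by Assumption~\ref{assumption_bound_gradient} to obtain $\|\bar{g}_t - \bar{g}_{t+1}\|^2 \le \frac{C_g^2\gamma_x^2\eta^2}{K}\sum_k\|\mathbf{u}_t^{(k)}\|^2$, and then split $\frac{1}{K}\sum_k\|\mathbf{u}_t^{(k)}\|^2 = \|\bar{\mathbf{u}}_t\|^2 + \frac{1}{K}\sum_k\|\mathbf{u}_t^{(k)} - \bar{\mathbf{u}}_t\|^2$, bounding the consensus piece by Lemma~\ref{lemma:consensus_u}. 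This yields $\mathbb{E}\|e_{t+1}\|^2 \le (1-\alpha\eta)\mathbb{E}\|e_t\|^2 + \frac{2C_g^2\gamma_x^2\eta}{\alpha}\mathbb{E}\|\bar{\mathbf{u}}_t\|^2 + \frac{12C_g^4C_f^2\gamma_x^2\beta_x^2 p^2\eta^3}{\alpha} + \frac{\alpha^2\eta^2\sigma_g^2}{K}$. Summing this linear recursion over $t=0,\dots,T-1$, the contraction factor $\alpha\eta$ divides through, giving $\frac{1}{T}\sum_t \mathbb{E}\|e_t\|^2 \le \frac{\mathbb{E}\|e_0\|^2}{\alpha\eta T} + \frac{1}{\alpha\eta}\cdot\frac{1}{T}\sum_t(\text{remainder})$, and substituting the pieces reproduces exactly the four claimed terms.

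The main obstacle is the bookkeeping around periodic communication: one must verify that averaging is a genuine no-op for $\bar{\mathbf{h}}$ and $\bar{\mathbf{x}}$ so the clean recursion above holds at every iteration, and one must route the drift through the averaged momentum $\bar{\mathbf{u}}_t$ (kept as the leading optimization-progress term) while offloading the local deviations onto the already-established consensus bound of Lemma~\ref{lemma:consensus_u}. Choosing the Young parameter to be exactly $\alpha\eta$ is what produces the tight $1/\alpha^2$ and $1/(\alpha\eta)$ scalings; a coarser split would inflate these factors.
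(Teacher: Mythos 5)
Your proof is correct and takes essentially the same route as the paper's: the identical error recursion $e_{t+1}=(1-\alpha\eta)e_t+(1-\alpha\eta)(\bar g_t-\bar g_{t+1})+\alpha\eta(\text{noise})$, the zero-mean argument that isolates the $\alpha^2\eta^2\sigma_g^2/K$ variance term, a Young split putting an $O(1/(\alpha\eta))$ weight on the drift, Lipschitzness of $g^{(k)}$ from Assumption~\ref{assumption_bound_gradient} plus Lemma~\ref{lemma:consensus_u} to handle $\mathbf{x}_{t+1}^{(k)}-\mathbf{x}_t^{(k)}=-\gamma_x\eta\,\mathbf{u}_t^{(k)}$, and the same telescoping with $\mathbb{E}\|e_0\|^2\le\sigma_g^2/K$. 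Your only cosmetic deviations (Young parameter $\alpha\eta$ instead of the paper's $a=\tfrac{1-\alpha\eta}{\alpha\eta}$, and the exact identity $\tfrac{1}{K}\sum_k\|\mathbf{u}_t^{(k)}\|^2=\|\bar{\mathbf{u}}_t\|^2+\tfrac{1}{K}\sum_k\|\mathbf{u}_t^{(k)}-\bar{\mathbf{u}}_t\|^2$ in place of the paper's factor-of-two bound) offset each other and reproduce the paper's constants exactly.
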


\begin{proof}
\small{
	\begin{equation} \label{eq:g_var}
		\begin{aligned}
			& \quad \mathbb{E}\Big[\Big\|\bar{\mathbf{h}}_{t+1}   - \frac{1}{K}\sum_{k=1}^{K}g^{(k)}({\mathbf{x}}^{(k)}_{t+1})\Big\|^2\Big]  = \mathbb{E}\Big[\Big\|(1-\alpha\eta)\frac{1}{K}\sum_{k=1}^{K}\mathbf{h}^{(k)}_{t} +\alpha\eta \frac{1}{K}\sum_{k=1}^{K}g^{(k)}({\mathbf{x}}^{(k)}_{t+1}; \xi_{t+1}^{(k)})   - \frac{1}{K}\sum_{k=1}^{K}g^{(k)}({\mathbf{x}}^{(k)}_{t+1}) \Big\|^2\Big] \\
			& = \mathbb{E}\Big[\Big\|(1-\alpha\eta)\frac{1}{K}\sum_{k=1}^{K}\Big(\mathbf{h}^{(k)}_{t} - g^{(k)}({\mathbf{x}}^{(k)}_{t}) \Big)  + (1-\alpha\eta) \frac{1}{K}\sum_{k=1}^{K}\Big(g^{(k)}({\mathbf{x}}^{(k)}_{t+1}) - g^{(k)}({\mathbf{x}}^{(k)}_{t}) \Big)  \\
			& \quad +\alpha\eta \frac{1}{K}\sum_{k=1}^{K}\Big(g^{(k)}({\mathbf{x}}^{(k)}_{t+1}; \xi_{t+1}^{(k)})   - g^{(k)}({\mathbf{x}}^{(k)}_{t+1}) \Big)\Big\|^2\Big] \\
			& =  \mathbb{E}\Big[\Big\|(1-\alpha\eta)\frac{1}{K}\sum_{k=1}^{K}\Big(\mathbf{h}^{(k)}_{t} - g^{(k)}({\mathbf{x}}^{(k)}_{t}) \Big)  + (1-\alpha\eta) \frac{1}{K}\sum_{k=1}^{K}\Big(g^{(k)}({\mathbf{x}}^{(k)}_{t+1}) - g^{(k)}({\mathbf{x}}^{(k)}_{t}) \Big)  \Big\|^2\Big] \\
			& \quad +\mathbb{E}\Big[\Big\|\alpha\eta \frac{1}{K}\sum_{k=1}^{K}\Big(g^{(k)}({\mathbf{x}}^{(k)}_{t+1}; \xi_{t+1}^{(k)})   - g^{(k)}({\mathbf{x}}^{(k)}_{t+1}) \Big)\Big\|^2\Big] \\
			& \leq (1-\alpha\eta)^2(1+\frac{1}{a})\mathbb{E}\Big[\Big\|\frac{1}{K}\sum_{k=1}^{K}\Big(\mathbf{h}^{(k)}_{t} - g^{(k)}({\mathbf{x}}^{(k)}_{t}) \Big)\Big\|^2\Big] \\
   & \quad + (1-\alpha\eta)^2(1+a)\mathbb{E}\Big[\Big\| \frac{1}{K}\sum_{k=1}^{K}\Big(g^{(k)}({\mathbf{x}}^{(k)}_{t+1}) - g^{(k)}({\mathbf{x}}^{(k)}_{t}) \Big)  \Big\|^2\Big] + \frac{\alpha^2\eta^2\sigma_g^2}{K}\\
			& \leq (1-\alpha\eta)\mathbb{E}\Big[\Big\|\frac{1}{K}\sum_{k=1}^{K}\Big(\mathbf{h}^{(k)}_{t} - g^{(k)}({\mathbf{x}}^{(k)}_{t}) \Big)\Big\|^2\Big]  + \frac{C_g^2}{\alpha\eta}\frac{1}{K}\sum_{k=1}^{K}\mathbb{E}\Big[\Big\| {\mathbf{x}}^{(k)}_{t+1} - {\mathbf{x}}^{(k)}_{t}  \Big\|^2\Big] + \frac{\alpha^2\eta^2\sigma_g^2}{K}\\
			& \leq (1-\alpha\eta)\mathbb{E}\Big[\Big\|\bar{\mathbf{h}}_{t} - \frac{1}{K}\sum_{k=1}^{K}g^{(k)}({\mathbf{x}}^{(k)}_{t}) \Big\|^2\Big]  + \frac{2\eta\gamma_x^2 C_g^2}{\alpha}\frac{1}{K}\sum_{k=1}^{K}\mathbb{E}\Big[\Big\| {\mathbf{u}}^{(k)}_{t} - \bar{\mathbf{u}}_{t}  \Big\|^2\Big] + \frac{2\eta\gamma_x^2 C_g^2}{\alpha}\mathbb{E}[\| \bar{\mathbf{u}}_{t}  \|^2] + \frac{\alpha^2\eta^2\sigma_g^2}{K}  \\
			& \leq (1-\alpha\eta)\mathbb{E}\Big[\Big\|\bar{\mathbf{h}}_{t} - \frac{1}{K}\sum_{k=1}^{K}g^{(k)}({\mathbf{x}}^{(k)}_{t}) \Big\|^2\Big]  +  \frac{2\eta \gamma_x^2C_g^2}{\alpha}\mathbb{E}[\| \bar{\mathbf{u}}_{t}  \|^2]  + \frac{  12\gamma_x^2\beta_x^2p^2\eta^3 C_g^4C_f^2}{\alpha} + \frac{\alpha^2\eta^2\sigma_g^2}{K}  \ ,  \\
		\end{aligned}
	\end{equation}
 }
where the fifth step holds due to $a=\frac{1-\alpha\eta}{\alpha\eta}$ and $\alpha\eta<1$,  the second to last step holds due to Assumption~\ref{assumption_bound_variance}, the second to last step holds due to Lemma~\ref{lemma:consensus_u}.

It can be reformulated as follows:
\begin{equation}
	\begin{aligned}
		& \quad \alpha\eta \mathbb{E}\Big[\Big\|\bar{\mathbf{h}}_{t} - \frac{1}{K}\sum_{k=1}^{K}g^{(k)}({\mathbf{x}}^{(k)}_{t}) \Big\|^2\Big] \leq \mathbb{E}\Big[\Big\|\bar{\mathbf{h}}_{t} - \frac{1}{K}\sum_{k=1}^{K}g^{(k)}({\mathbf{x}}^{(k)}_{t}) \Big\|^2\Big] \\
        & \quad -  \mathbb{E}\Big[\Big\|\bar{\mathbf{h}}_{t+1}   - \frac{1}{K}\sum_{k=1}^{K}g^{(k)}({\mathbf{x}}^{(k)}_{t+1})\Big\|^2\Big]  +  \frac{2\eta\gamma_x^2 C_g^2}{\alpha}\mathbb{E}[\| \bar{\mathbf{u}}_{t}  \|^2]  + \frac{  12\gamma_x^2\beta_x^2p^2\eta^3 C_g^4C_f^2}{\alpha} + \frac{\alpha^2\eta^2\sigma_g^2}{K}  \ .   \\
	\end{aligned}
\end{equation}
By summing over $t$ from $0$ to $T-1$, we can get
\begin{equation}
	\begin{aligned}
		& \quad \frac{1}{T}\sum_{t=0}^{T-1}\mathbb{E}\Big[\Big\|\bar{\mathbf{h}}_{t} - \frac{1}{K}\sum_{k=1}^{K}g^{(k)}({\mathbf{x}}^{(k)}_{t}) \Big\|^2\Big] \\
		& \leq \frac{1}{\alpha\eta T}\mathbb{E}\Big[\Big\|\bar{\mathbf{h}}_{0} - \frac{1}{K}\sum_{k=1}^{K}g^{(k)}({\mathbf{x}}^{(k)}_{0}) \Big\|^2\Big]    +  \frac{2 \gamma_x^2C_g^2}{\alpha^2}\frac{1}{T}\sum_{t=0}^{T-1}\mathbb{E}[\| \bar{\mathbf{u}}_{t}  \|^2]  + \frac{  12\gamma_x^2\beta_x^2p^2\eta^2 C_g^4C_f^2}{\alpha^2}  + \frac{\alpha\eta \sigma_g^2}{K}    \\
		& \leq   \frac{2\gamma_x^2C_g^2}{\alpha^2}\frac{1}{T}\sum_{t=0}^{T-1}\mathbb{E}[\| \bar{\mathbf{u}}_{t}  \|^2]  + \frac{\sigma_g^2}{\alpha \eta TK}  + \frac{12\gamma_x^2\beta_x^2p^2\eta^2 C_g^4C_f^2}{\alpha^2} + \frac{\alpha\eta \sigma_g^2}{K}  \ ,    \\
	\end{aligned}
\end{equation}
where the last step holds due to the following  inequality: 
\begin{equation}
	\begin{aligned}
		&  \mathbb{E}\Big[\Big\|\bar{\mathbf{h}}_{0} - \frac{1}{K}\sum_{k=1}^{K}g^{(k)}({\mathbf{x}}^{(k)}_{0}) \Big\|^2\Big]     = \mathbb{E}\Big[\Big\|\frac{1}{K}\sum_{k=1}^{K} g^{(k)}(\mathbf{x}_{ 0}^{(k)};  \xi_{ 0}^{(k)}) - \frac{1}{K}\sum_{k=1}^{K}g^{(k)}({\mathbf{x}}^{(k)}_{0}) \Big\|^2\Big]   \leq \frac{\sigma_g^2}{K}  \ .   \\
	\end{aligned}
\end{equation}

\end{proof}

\begin{lemma} \label{lemma:grad_fx_variance}
			Given Assumptions~\ref{assumption_smooth}-\ref{assumption_strong}, we can get
   \small{
	\begin{equation}
		\begin{aligned}
			&\quad   \mathbb{E}\Big[\Big\|\frac{1}{K}\sum_{k=1}^{K}\Big(\nabla g^{(k)}(\mathbf{x}^{(k)}_{t})^T\nabla_{g}  f^{(k)}(\mathbf{h}^{(k)}_{t}, \mathbf{y}^{(k)}_{t})  - \nabla g^{(k)}(\mathbf{x}^{(k)}_{t}; \xi^{(k)}_{t})^T\nabla_{g}  f^{(k)}(\mathbf{h}^{(k)}_{t}, \mathbf{y}^{(k)}_{t}) \Big)\Big\|^2\Big]\leq \frac{C_f^2\sigma_{g'}^2}{K} \ ,  \\
			& \quad \mathbb{E}\Big[\Big\|\frac{1}{K}\sum_{k=1}^{K}\Big(\nabla g^{(k)}(\mathbf{x}^{(k)}_{t};  \xi^{(k)}_{t})^T\nabla_{g}  f^{(k)}(\mathbf{h}^{(k)}_{t}, \mathbf{y}^{(k)}_{t})   - \nabla g^{(k)}(\mathbf{x}^{(k)}_{t};  \xi^{(k)}_{t})^T\nabla_{g}  f^{(k)}(\mathbf{h}^{(k)}_{t}, \mathbf{y}^{(k)}_{t}; \zeta^{(k)}_{t}) \Big) \Big\|^2\Big]  \leq  \frac{C_g^2\sigma_f^2}{K} \  . 
		\end{aligned}
	\end{equation}
 }
\end{lemma}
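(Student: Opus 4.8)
The plan is to prove both inequalities by the same two-ingredient argument: conditional unbiasedness makes each per-device summand mean-zero, and independence of the sampling across the $K$ devices kills all cross terms, leaving only the diagonal, which carries the $1/K$ factor. For the first bound I would factor out the common factor and set $X_k \triangleq \big(\nabla g^{(k)}(\mathbf{x}^{(k)}_{t}) - \nabla g^{(k)}(\mathbf{x}^{(k)}_{t};\xi^{(k)}_{t})\big)^T \nabla_g f^{(k)}(\mathbf{h}^{(k)}_{t},\mathbf{y}^{(k)}_{t})$, which is exactly the $k$-th summand. Conditioning on the filtration $\mathcal{F}_t$ generated by all iterates through step $t$ (so that $\mathbf{x}^{(k)}_t,\mathbf{h}^{(k)}_t,\mathbf{y}^{(k)}_t$ are measurable), the unbiasedness $\mathbb{E}[\nabla g^{(k)}(\mathbf{x}^{(k)}_t;\xi^{(k)}_t)\mid \mathcal{F}_t]=\nabla g^{(k)}(\mathbf{x}^{(k)}_t)$ yields $\mathbb{E}[X_k\mid\mathcal{F}_t]=0$.

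Because the draws $\xi^{(k)}_t$ are independent across devices, the $X_k$ are conditionally independent and mean-zero, so $\mathbb{E}\big[\|\tfrac1K\sum_k X_k\|^2\big]=\tfrac1{K^2}\sum_k\mathbb{E}[\|X_k\|^2]$, with all $k\neq k'$ cross terms vanishing. For the diagonal terms I would use $\|A^T b\|^2\le \|A\|^2\|b\|^2$ together with the deterministic bound $\|\nabla_g f^{(k)}(\mathbf{h}^{(k)}_t,\mathbf{y}^{(k)}_t)\|^2\le C_f^2$ of Assumption~\ref{assumption_bound_gradient} and the inner-gradient variance bound $\mathbb{E}[\|\nabla g^{(k)}(\mathbf{x};\xi)-\nabla g^{(k)}(\mathbf{x})\|^2]\le\sigma_{g'}^2$ of Assumption~\ref{assumption_bound_variance}, giving $\mathbb{E}[\|X_k\|^2]\le C_f^2\sigma_{g'}^2$. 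Summing produces $\tfrac1{K^2}\cdot K\cdot C_f^2\sigma_{g'}^2=\tfrac{C_f^2\sigma_{g'}^2}{K}$, which is the first claim.

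The second bound is symmetric. I would factor out $\nabla g^{(k)}(\mathbf{x}^{(k)}_t;\xi^{(k)}_t)$ and set $Y_k\triangleq \nabla g^{(k)}(\mathbf{x}^{(k)}_t;\xi^{(k)}_t)^T\big(\nabla_g f^{(k)}(\mathbf{h}^{(k)}_t,\mathbf{y}^{(k)}_t)-\nabla_g f^{(k)}(\mathbf{h}^{(k)}_t,\mathbf{y}^{(k)}_t;\zeta^{(k)}_t)\big)$. Conditioning now on the enlarged filtration that additionally includes $\xi^{(k)}_t$, the factor $\nabla g^{(k)}(\mathbf{x}^{(k)}_t;\xi^{(k)}_t)$ becomes measurable and unbiasedness of the outer stochastic gradient gives $\mathbb{E}[Y_k\mid\cdot]=0$; independence of $\zeta^{(k)}_t$ across devices again removes the cross terms. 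For the diagonal I would condition first on $\xi^{(k)}_t$, pull out $\|\nabla g^{(k)}(\mathbf{x}^{(k)}_t;\xi^{(k)}_t)\|^2$, invoke the outer-gradient variance bound $\sigma_f^2$ of Assumption~\ref{assumption_bound_variance}, then take expectation over $\xi^{(k)}_t$ using $\mathbb{E}[\|\nabla g^{(k)}(\mathbf{x};\xi)\|^2]\le C_g^2$ of Assumption~\ref{assumption_bound_gradient}, yielding $\mathbb{E}[\|Y_k\|^2]\le C_g^2\sigma_f^2$ and hence $\tfrac{C_g^2\sigma_f^2}{K}$.

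The only real subtlety, and the single step I would verify most carefully, is the vanishing of the cross terms, since this is precisely where the $1/K$ (linear-speedup) factor originates: it requires both the correct choice of filtration, so that the averaged noise is genuinely mean-zero given everything else, and the independence of the per-device samples. If either failed, one would obtain only an $O(1)$ rather than $O(1/K)$ bound. Everything else is a routine application of norm submultiplicativity and the moment and variance assumptions.
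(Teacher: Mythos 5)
Your proposal is correct and follows essentially the same argument as the paper's proof: both condition on the history so that the per-device sampling noise terms ($\xi^{(k)}_t$ for the first bound, $\zeta^{(k)}_t$ for the second, with the enlarged conditioning that makes $\nabla g^{(k)}(\mathbf{x}^{(k)}_t;\xi^{(k)}_t)$ measurable) are conditionally independent and mean-zero across devices, so the cross terms vanish and the average decomposes as $\frac{1}{K^2}\sum_{k}\mathbb{E}[\|\cdot\|^2]$, after which the diagonal terms are bounded by $C_f^2\sigma_{g'}^2$ and $C_g^2\sigma_f^2$ via Assumptions~\ref{assumption_bound_gradient} and~\ref{assumption_bound_variance}. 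Your explicit flagging of the filtration choice and cross-device independence as the source of the $1/K$ factor matches exactly the justification the paper gives for dropping the cross terms.
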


\begin{proof}
\small{
	\begin{equation}
		\begin{aligned}
			&\quad   \mathbb{E}\Big[\Big\|\frac{1}{K}\sum_{k=1}^{K}\Big(\nabla g^{(k)}(\mathbf{x}^{(k)}_{t})^T\nabla_{g}  f^{(k)}(\mathbf{h}^{(k)}_{t}, \mathbf{y}^{(k)}_{t})  - \nabla g^{(k)}(\mathbf{x}^{(k)}_{t}; \xi^{(k)}_{t})^T\nabla_{g}  f^{(k)}(\mathbf{h}^{(k)}_{t}, \mathbf{y}^{(k)}_{t}) \Big)\Big\|^2\Big]\\
			&  = \mathbb{E}_{\mathcal{F}}\mathbb{E}_{\xi|\mathcal{F}}\mathbb{E}_{\zeta|\mathcal{F}}\Big[\Big\|\frac{1}{K}\sum_{k=1}^{K}\Big(\nabla g^{(k)}(\mathbf{x}^{(k)}_{t})^T\nabla_{g}  f^{(k)}(\mathbf{h}^{(k)}_{t}, \mathbf{y}^{(k)}_{t})  - \nabla g^{(k)}(\mathbf{x}^{(k)}_{t}; \xi^{(k)}_{t})^T\nabla_{g}  f^{(k)}(\mathbf{h}^{(k)}_{t}, \mathbf{y}^{(k)}_{t}) \Big)\Big\|^2\Big]\\
			& =  \mathbb{E}_{\mathcal{F}}\mathbb{E}_{\xi|\mathcal{F}}\Big[\Big\|\frac{1}{K}\sum_{k=1}^{K}\Big(\nabla g^{(k)}(\mathbf{x}^{(k)}_{t})- \nabla g^{(k)}(\mathbf{x}^{(k)}_{t}; \xi^{(k)}_{t})\Big)^T\nabla_{g}  f^{(k)}(\mathbf{h}^{(k)}_{t}, \mathbf{y}^{(k)}_{t})  \Big\|^2\Big]\\
			& =   \frac{1}{K^2}\sum_{k=1}^{K} \mathbb{E}_{\mathcal{F}}\mathbb{E}_{\xi|\mathcal{F}}\Big[\Big\|\Big(\nabla g^{(k)}(\mathbf{x}^{(k)}_{t})- \nabla g^{(k)}(\mathbf{x}^{(k)}_{t}; \xi^{(k)}_{t})\Big)^T\nabla_{g}  f^{(k)}(\mathbf{h}^{(k)}_{t}, \mathbf{y}^{(k)}_{t})  \Big\|^2\Big]\\
			& \leq \frac{\sigma_{g'}^2C_f^2}{K} \ , 
		\end{aligned}
	\end{equation}
 }
	where $\mathcal{F}$ denotes all random factors except the sampling operation in the $t$-th iteration,  $\xi$ and $\zeta$ are independent given $\mathcal{F}$, the third step holds since  $(\nabla g^{(k)}(\mathbf{x}^{(k)}_{t})- \nabla g^{(k)}(\mathbf{x}^{(k)}_{t}; \xi^{(k)}_{t}))^T\nabla_{g}  f^{(k)}(\mathbf{h}^{(k)}_{t}, \mathbf{y}^{(k)}_{t})$ are independent random vectors regarding $\xi$ across workers and the mean is zero, the last step holds due to Assumptions~\ref{assumption_smooth},~\ref{assumption_bound_gradient}. 
	
	Similarly,  we can get
 \small{
	\begin{equation}
		\begin{aligned}
			& \quad \mathbb{E}\Big[\Big\|\frac{1}{K}\sum_{k=1}^{K}\Big(\nabla g^{(k)}(\mathbf{x}^{(k)}_{t};  \xi^{(k)}_{t})^T\nabla_{g}  f^{(k)}(\mathbf{h}^{(k)}_{t}, \mathbf{y}^{(k)}_{t})   - \nabla g^{(k)}(\mathbf{x}^{(k)}_{t};  \xi^{(k)}_{t})^T\nabla_{g}  f^{(k)}(\mathbf{h}^{(k)}_{t}, \mathbf{y}^{(k)}_{t}; \zeta^{(k)}_{t}) \Big) \Big\|^2\Big] \\
			& =  \mathbb{E}_{\mathcal{F}}\mathbb{E}_{\xi|\mathcal{F}}\mathbb{E}_{\zeta|\mathcal{F}}\Big[\Big\|\frac{1}{K}\sum_{k=1}^{K}\Big(\nabla g^{(k)}(\mathbf{x}^{(k)}_{t};  \xi^{(k)}_{t})^T\nabla_{g}  f^{(k)}(\mathbf{h}^{(k)}_{t}, \mathbf{y}^{(k)}_{t})   - \nabla g^{(k)}(\mathbf{x}^{(k)}_{t};  \xi^{(k)}_{t})^T\nabla_{g}  f^{(k)}(\mathbf{h}^{(k)}_{t}, \mathbf{y}^{(k)}_{t}; \zeta^{(k)}_{t}) \Big) \Big\|^2\Big]\\
			& =  \mathbb{E}_{\mathcal{F}}\mathbb{E}_{\xi|\mathcal{F}}\mathbb{E}_{\zeta|\mathcal{F}}\Big[\Big\|\frac{1}{K}\sum_{k=1}^{K}\nabla g^{(k)}(\mathbf{x}^{(k)}_{t};  \xi^{(k)}_{t})^T\Big(\nabla_{g}  f^{(k)}(\mathbf{h}^{(k)}_{t}, \mathbf{y}^{(k)}_{t})   - \nabla_{g}  f^{(k)}(\mathbf{h}^{(k)}_{t}, \mathbf{y}^{(k)}_{t}; \zeta^{(k)}_{t}) \Big) \Big\|^2\Big]\\ 
			& =  \mathbb{E}_{\mathcal{F}}\mathbb{E}_{\xi|\mathcal{F}} \frac{1}{K^2}\sum_{k=1}^{K}\mathbb{E}_{\zeta|\mathcal{F}}\Big[\Big\|\nabla g^{(k)}(\mathbf{x}^{(k)}_{t};  \xi^{(k)}_{t})^T\Big(\nabla_{g}  f^{(k)}(\mathbf{h}^{(k)}_{t}, \mathbf{y}^{(k)}_{t})   - \nabla_{g}  f^{(k)}(\mathbf{h}^{(k)}_{t}, \mathbf{y}^{(k)}_{t}; \zeta^{(k)}_{t}) \Big) \Big\|^2\Big]\\ 
			& =   \frac{1}{K^2}\sum_{k=1}^{K}\mathbb{E}_{\mathcal{F}}\mathbb{E}_{\xi|\mathcal{F}}\mathbb{E}_{\zeta|\mathcal{F}}\Big[\Big\|\nabla g^{(k)}(\mathbf{x}^{(k)}_{t};  \xi^{(k)}_{t})^T\Big(\nabla_{g}  f^{(k)}(\mathbf{h}^{(k)}_{t}, \mathbf{y}^{(k)}_{t})   - \nabla_{g}  f^{(k)}(\mathbf{h}^{(k)}_{t}, \mathbf{y}^{(k)}_{t}; \zeta^{(k)}_{t}) \Big) \Big\|^2\Big]\\ 
			& \leq  \frac{C_g^2\sigma_f^2}{K} \ , 
		\end{aligned}
	\end{equation}
 }
	where $\mathcal{F}$ denotes all random factors except the sampling operation in the $t$-th iteration,  $\xi$ and $\zeta$ are independent given $\mathcal{F}$,  the third step holds since $\nabla g^{(k)}(\mathbf{x}^{(k)}_{t};  \xi^{(k)}_{t})^T\Big(\nabla_{g}  f^{(k)}(\mathbf{h}^{(k)}_{t}, \mathbf{y}^{(k)}_{t})   - \nabla_{g}  f^{(k)}(\mathbf{h}^{(k)}_{t}, \mathbf{y}^{(k)}_{t}; \zeta^{(k)}_{t}) \Big)$ are independent random vectors regarding $\zeta$ across workers and the mean is zero,  the last step holds due to Assumptions~\ref{assumption_smooth},~\ref{assumption_bound_gradient}.  
\end{proof}

\begin{lemma} \label{lemma:variance_u}
	Given Assumption~\ref{assumption_smooth}-\ref{assumption_strong}, $\beta_x\eta \in (0, 1)$, and $\eta<1$, we can get
	\begin{equation}
		\begin{aligned}
			& \quad \frac{1}{T}\sum_{t=0}^{T-1}\mathbb{E}\Big[\Big\| \frac{1}{K}\sum_{k=1}^{K}\nabla_x f^{(k)}(g^{(k)}({\mathbf{x}}^{(k)}_{t}), {\mathbf{y}}^{(k)}_{t}) -  \frac{1}{K}\sum_{k=1}^{K}\mathbf{u}^{(k)}_{t}\Big\|^2\Big]  \\
            & \leq \frac{3C_g^2 L_f^2\sigma_g^2 + 3C_f^2\sigma_{g'}^2 + 3C_g^2\sigma_f^2}{\beta_x \eta T} + 6C_g^2\frac{1}{T}\sum_{t=0}^{T-1}\mathbb{E}\Big[\Big\|\bar{\mathbf{h}}_{t} - \frac{1}{K}\sum_{k=1}^{K}g^{(k)}({\mathbf{x}}^{(k)}_{t}) \Big\|^2\Big] \\
            & + \frac{8\gamma_x^2(C_g^4L_f^2+ C_f^2L_g^2 )}{\beta_x^2}\frac{1}{T}\sum_{t=0}^{T-1}\mathbb{E}[\| \bar{\mathbf{u}}_{t} \|^2] 
             +\frac{8\gamma_y^2 C_g^2L_f^2}{\beta_x^2} \frac{1}{T}\sum_{t=0}^{T-1}\mathbb{E}[\| \bar{\mathbf{v}}_{t}\|^2] + \frac{12\gamma_x^2 C_g^4}{\alpha}\frac{1}{T}\sum_{t=0}^{T-1}\mathbb{E}[\| \bar{\mathbf{u}}_{t}  \|^2] \\
            &  + 48\gamma_x^2p^2\eta^2(C_g^4L_f^2+ C_f^2L_g^2 ) C_g^2C_f^2
            + 144\gamma_x^2  \beta_x^2p^4\eta^4 C_g^6C_f^4  +  216\alpha^2p^2\eta^2C_g^4\sigma_g^2 +  2592\alpha^2\gamma_x^2  \beta_x^2p^6\eta^6C_g^8C_f^2  \\
            &  + \frac{27648\beta_y^2\gamma_y^2\alpha^2p^4\eta^4C_g^4L_f^4 \sigma_g^2}{\beta_x^2} +  \frac{331776\beta_y^2\gamma_y^2\alpha^2\gamma_x^2  \beta_x^2p^8\eta^{8}C_g^8C_f^2L_f^4}{\beta_x^2}  +  \frac{786\beta_y^2\gamma_y^2p^2\eta^2C_g^2L_f^2  \sigma_f^2}{\beta_x^2} \\
            & + \frac{2\beta_x \eta C_f^2 \sigma_{g'}^2}{K}  + \frac{2\beta_x\eta C_g^2\sigma_f^2}{K}
            + \frac{  72\gamma_x^2 \beta_x^2p^2\eta^2 C_g^6C_f^2}{\alpha} + \frac{6\alpha^2\eta C_g^2\sigma_g^2}{K} \ .  \\
		\end{aligned}
	\end{equation}
\end{lemma}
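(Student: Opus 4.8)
The plan is to run a momentum error-contraction argument on the device-averaged iterate, carefully separating the mean-zero stochastic noise from the compositional bias that arises because each $\mathbf{h}^{(k)}_t$ only approximates $g^{(k)}(\mathbf{x}^{(k)}_t)$. Writing $\bar{\mathbf{u}}_t=\frac1K\sum_k\mathbf{u}^{(k)}_t$ and $\bar{\mathbf{G}}_t=\frac1K\sum_k\nabla_x f^{(k)}(g^{(k)}(\mathbf{x}^{(k)}_t),\mathbf{y}^{(k)}_t)$, I first observe that the periodic averaging in Step 6 of Algorithm~\ref{alg_dscgdamgp} leaves the device-average unchanged, so $\bar{\mathbf{u}}_t$ obeys the clean recursion $\bar{\mathbf{u}}_{t+1}=(1-\beta_x\eta)\bar{\mathbf{u}}_t+\beta_x\eta\,\bar{\mathbf{S}}_{t+1}$, where $\bar{\mathbf{S}}_{t+1}$ is the averaged stochastic compositional gradient. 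Subtracting $\bar{\mathbf{G}}_{t+1}$ and setting $\mathbf{e}_t=\bar{\mathbf{u}}_t-\bar{\mathbf{G}}_t$ yields $\mathbf{e}_{t+1}=(1-\beta_x\eta)\mathbf{e}_t+(1-\beta_x\eta)(\bar{\mathbf{G}}_t-\bar{\mathbf{G}}_{t+1})+\beta_x\eta(\bar{\mathbf{S}}_{t+1}-\bar{\mathbf{G}}_{t+1})$.

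The crucial decomposition is of the last term: I split $\bar{\mathbf{S}}_{t+1}-\bar{\mathbf{G}}_{t+1}$ into (i) the outer-sampling noise, (ii) the inner-Jacobian sampling noise, and (iii) the deterministic bias $\frac1K\sum_k\nabla g^{(k)}(\mathbf{x}^{(k)}_{t+1})^{T}\big[\nabla_g f^{(k)}(\mathbf{h}^{(k)}_{t+1},\mathbf{y}^{(k)}_{t+1})-\nabla_g f^{(k)}(g^{(k)}(\mathbf{x}^{(k)}_{t+1}),\mathbf{y}^{(k)}_{t+1})\big]$. Pieces (i)--(ii) are mean-zero given the past, so after squaring and conditioning on the history $\mathcal{F}$ their cross terms with the $\mathcal{F}$-measurable part vanish, and by the across-worker independence argument of Lemma~\ref{lemma:grad_fx_variance} they contribute the $O(1/K)$ variance $\beta_x^2\eta^2(C_f^2\sigma_{g'}^2+C_g^2\sigma_f^2)/K$ per step. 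For piece (iii), the Lipschitz continuity of $\nabla_g f$ (Assumption~\ref{assumption_smooth}) together with $\|\nabla g^{(k)}\|\le C_g$ bounds it by a constant multiple of $C_g^2$ times the per-device discrepancy $\frac1K\sum_k\|\mathbf{h}^{(k)}_{t+1}-g^{(k)}(\mathbf{x}^{(k)}_{t+1})\|^2$; a three-way triangle split then expresses this discrepancy through the $\mathbf{h}$-consensus error (Lemma~\ref{lemma:consensus_h}), the $g$-consensus error (Lemma~\ref{lemma:consensus_g}), and the averaged estimation error $\|\bar{\mathbf{h}}_{t+1}-\frac1K\sum_k g^{(k)}(\mathbf{x}^{(k)}_{t+1})\|^2$, the latter of which I keep explicitly and partially substitute via Lemma~\ref{lemma:h_variance}.

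For the increment $\bar{\mathbf{G}}_t-\bar{\mathbf{G}}_{t+1}$ I invoke Jensen and Lemma~\ref{lemma:grad_fx_increment} at the shifted index, which turns it into the momentum norms $\|\bar{\mathbf{u}}\|^2,\|\bar{\mathbf{v}}\|^2$ and a collection of $p$- and $\eta$-dependent constants. I then apply Young's inequality to $\|\mathbf{e}_{t+1}\|^2$ with parameter $\lambda\asymp\beta_x\eta$, chosen so that $(1-\beta_x\eta)^2(1+\lambda)\le 1-\beta_x\eta$ while $1+1/\lambda\le 2/(\beta_x\eta)$; this produces the geometric contraction together with the factor $2/(\beta_x\eta)$ multiplying every perturbation. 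Summing the one-step inequality over $t=0,\dots,T-1$ and telescoping contributes an overall $1/(\beta_x\eta T)$ in front of the initialization $\mathbb{E}\|\mathbf{e}_0\|^2$, which I bound using $\mathbf{h}^{(k)}_0=g^{(k)}(\mathbf{x}_0;\xi_0)$ and Assumption~\ref{assumption_bound_variance} to get the $3(C_g^2L_f^2\sigma_g^2+C_f^2\sigma_{g'}^2+C_g^2\sigma_f^2)/(\beta_x\eta T)$ term; collecting all contributions yields the claim.

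I expect the main obstacle to be the bookkeeping of piece (iii): because the inner functions are distributed, the momentum bias cannot be reduced to a single averaged quantity but instead couples to three different consensus/estimation lemmas, each carrying its own high powers of $p\eta$. Ensuring that, after passing through the $2/(\beta_x\eta)$ amplification and the telescope, each of these lands at the intended order (rather than having the $1/(\beta_x\eta)$ factor blow it up) is the delicate part. Conceptually, the only subtle probabilistic step is the zero-mean conditioning that decouples the $1/K$ noise terms from the deterministic contraction; the remainder is careful constant tracking.
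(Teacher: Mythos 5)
Your proposal is correct and follows essentially the same route as the paper's own proof: the momentum error recursion for $\bar{\mathbf{u}}_t$ with Young's inequality at parameter $\asymp\beta_x\eta$ (contraction plus $2/(\beta_x\eta)$ amplification), the mean-zero separation of the $\xi$- and $\zeta$-sampling noise handled by Lemma~\ref{lemma:grad_fx_variance}, the increment term via Lemma~\ref{lemma:grad_fx_increment}, the bias term resolved by the three-way split into Lemmas~\ref{lemma:consensus_g}, \ref{lemma:consensus_h} and the averaged estimation error of Lemma~\ref{lemma:h_variance} (Eq.~(\ref{eq:g_var})), and the telescoping with the initialization bound $3(C_g^2L_f^2\sigma_g^2+C_f^2\sigma_{g'}^2+C_g^2\sigma_f^2)/(\beta_x\eta T)$. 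The only differences are cosmetic: you write the recursion forward in $t$ and apply the Lipschitz/boundedness step before the triangle split of the $\mathbf{h}$--$g$ discrepancy, whereas the paper indexes backward and splits the gradient arguments first.
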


\begin{proof}
	\begin{equation}
		\begin{aligned}
			& \quad \mathbb{E}\Big[\Big\| \frac{1}{K}\sum_{k=1}^{K}\nabla_x f^{(k)}(g^{(k)}({\mathbf{x}}^{(k)}_{t}), {\mathbf{y}}^{(k)}_{t}) -  \frac{1}{K}\sum_{k=1}^{K}\mathbf{u}^{(k)}_{t}\Big\|^2\Big] \\
   & \quad  =  \mathbb{E}\Big[\Big\|(1-\beta_x\eta )\frac{1}{K}\sum_{k=1}^{K}(\nabla_x f^{(k)}(g^{(k)}({\mathbf{x}}^{(k)}_{t-1}), {\mathbf{y}}^{(k)}_{t-1}) -\mathbf{u}^{(k)}_{t-1}) \\
			& \quad +(1-\beta_x\eta ) \frac{1}{K}\sum_{k=1}^{K}(\nabla_x f^{(k)}(g^{(k)}({\mathbf{x}}^{(k)}_{t}), {\mathbf{y}}^{(k)}_{t})-\nabla_x f^{(k)}(g^{(k)}({\mathbf{x}}^{(k)}_{t-1}), {\mathbf{y}}^{(k)}_{t-1})) \\
			& \quad +\beta_x\eta \frac{1}{K}\sum_{k=1}^{K}\Big(\nabla g^{(k)}(\mathbf{x}^{(k)}_{t})^T\nabla_g f^{(k)}(g^{(k)}({\mathbf{x}}^{(k)}_{t}), {\mathbf{y}}^{(k)}_{t}) - \nabla g^{(k)}(\mathbf{x}^{(k)}_{t})^T\nabla_{g}  f^{(k)}(\mathbf{h}^{(k)}_{t}, \mathbf{y}^{(k)}_{t}) \\
			& \quad+ \nabla g^{(k)}(\mathbf{x}^{(k)}_{t})^T\nabla_{g}  f^{(k)}(\mathbf{h}^{(k)}_{t}, \mathbf{y}^{(k)}_{t})  - \nabla g^{(k)}(\mathbf{x}^{(k)}_{t}; \xi^{(k)}_{t})^T\nabla_{g}  f^{(k)}(\mathbf{h}^{(k)}_{t}, \mathbf{y}^{(k)}_{t}) \\
			& \quad  + \nabla g^{(k)}(\mathbf{x}^{(k)}_{t};  \xi^{(k)}_{t})^T\nabla_{g}  f^{(k)}(\mathbf{h}^{(k)}_{t}, \mathbf{y}^{(k)}_{t})   - \nabla g^{(k)}(\mathbf{x}^{(k)}_{t};  \xi^{(k)}_{t})^T\nabla_{g}  f^{(k)}(\mathbf{h}^{(k)}_{t}, \mathbf{y}^{(k)}_{t}; \zeta^{(k)}_{t})   \Big )\Big\|^2\Big] \\
			& =  \mathbb{E}\Big[\Big\| (1-\beta_x\eta )\frac{1}{K}\sum_{k=1}^{K}(\nabla_x f^{(k)}(g^{(k)}({\mathbf{x}}^{(k)}_{t-1}), {\mathbf{y}}^{(k)}_{t-1}) -\mathbf{u}^{(k)}_{t-1}) \\
			& \quad \quad+(1-\beta_x\eta )  \frac{1}{K}\sum_{k=1}^{K}(\nabla_x f^{(k)}(g^{(k)}({\mathbf{x}}^{(k)}_{t}), {\mathbf{y}}^{(k)}_{t})-\nabla_x f^{(k)}(g^{(k)}({\mathbf{x}}^{(k)}_{t-1}), {\mathbf{y}}^{(k)}_{t-1})) \\
			& \quad \quad +\beta_x\eta  \frac{1}{K}\sum_{k=1}^{K}(\nabla g^{(k)}(\mathbf{x}^{(k)}_{t})^T\nabla_g f^{(k)}(g^{(k)}({\mathbf{x}}^{(k)}_{t}), {\mathbf{y}}^{(k)}_{t}) - \nabla g^{(k)}(\mathbf{x}^{(k)}_{t})^T\nabla_{g}  f^{(k)}(\mathbf{h}^{(k)}_{t}, \mathbf{y}^{(k)}_{t})   )\Big\|^2\Big] \\
			& \quad+ \beta_x\eta ^2 \mathbb{E}\Big[\Big\|\frac{1}{K}\sum_{k=1}^{K}\Big(\nabla g^{(k)}(\mathbf{x}^{(k)}_{t})^T\nabla_{g}  f^{(k)}(\mathbf{h}^{(k)}_{t}, \mathbf{y}^{(k)}_{t})  - \nabla g^{(k)}(\mathbf{x}^{(k)}_{t}; \xi^{(k)}_{t})^T\nabla_{g}  f^{(k)}(\mathbf{h}^{(k)}_{t}, \mathbf{y}^{(k)}_{t}) \\
			& \quad \quad  + \nabla g^{(k)}(\mathbf{x}^{(k)}_{t};  \xi^{(k)}_{t})^T\nabla_{g}  f^{(k)}(\mathbf{h}^{(k)}_{t}, \mathbf{y}^{(k)}_{t})   - \nabla g^{(k)}(\mathbf{x}^{(k)}_{t};  \xi^{(k)}_{t})^T\nabla_{g}  f^{(k)}(\mathbf{h}^{(k)}_{t}, \mathbf{y}^{(k)}_{t}; \zeta^{(k)}_{t}) \Big )\Big\|^2\Big] \\
			& \triangleq T_1 + \beta_x^2\eta^2 T_2 \\
		\end{aligned}
	\end{equation}
	where the second step holds due to $\mathbb{E}[\nabla g^{(k)}(\mathbf{x}^{(k)}_{t}; \xi^{(k)}_{t})]=\nabla g^{(k)}(\mathbf{x}^{(k)}_{t})$ and $\mathbb{E}[f^{(k)}(\mathbf{h}^{(k)}_{t}, \mathbf{y}^{(k)}_{t}; \zeta^{(k)}_{t})]=f^{(k)}(\mathbf{h}^{(k)}_{t}, \mathbf{y}^{(k)}_{t})$, $T_1$ denotes the first expectation and $T_2$ denotes the second expectation. Then, $T_1$ can be bounded as follows:
	\begin{equation}
		\begin{aligned}
			& T_1 \leq    (1-\beta_x\eta )^2(1+a)\mathbb{E}\Big[\Big\|\frac{1}{K}\sum_{k=1}^{K}\nabla_x f^{(k)}(g^{(k)}({\mathbf{x}}^{(k)}_{t-1}), {\mathbf{y}}^{(k)}_{t-1}) -\frac{1}{K}\sum_{k=1}^{K}\mathbf{u}^{(k)}_{t-1}\Big \|^2\Big]\\
			& \quad +2(1-\beta_x\eta )^2(1+\frac{1}{a})\mathbb{E}\Big[\Big\| \frac{1}{K}\sum_{k=1}^{K}\nabla_x f^{(k)}(g^{(k)}({\mathbf{x}}^{(k)}_{t}), {\mathbf{y}}^{(k)}_{t})-\frac{1}{K}\sum_{k=1}^{K}\nabla_x f^{(k)}(g^{(k)}({\mathbf{x}}^{(k)}_{t-1}), {\mathbf{y}}^{(k)}_{t-1})\Big\|^2\Big]\\
			& \quad +2\beta_x^2\eta ^2 (1+\frac{1}{a})\mathbb{E}\Big[\Big\|\frac{1}{K}\sum_{k=1}^{K}\nabla g^{(k)}(\mathbf{x}^{(k)}_{t})^T\nabla_g f^{(k)}(g^{(k)}({\mathbf{x}}^{(k)}_{t}), {\mathbf{y}}^{(k)}_{t}) \\
            & \quad - \frac{1}{K}\sum_{k=1}^{K}\nabla g^{(k)}(\mathbf{x}^{(k)}_{t})^T\nabla_{g}  f^{(k)}(\mathbf{h}^{(k)}_{t}, \mathbf{y}^{(k)}_{t})   \Big\|^2 \Big]\\
			& \leq  (1-\beta_x\eta )\mathbb{E}\Big[\Big\|\frac{1}{K}\sum_{k=1}^{K}\nabla_x f^{(k)}(g^{(k)}({\mathbf{x}}^{(k)}_{t-1}), {\mathbf{y}}^{(k)}_{t-1}) -\frac{1}{K}\sum_{k=1}^{K}\mathbf{u}^{(k)}_{t-1} \Big\|^2\Big]\\
			& \quad +\frac{2}{\beta_x\eta }{\mathbb{E}\Big[\Big\| \frac{1}{K}\sum_{k=1}^{K}\nabla_x f^{(k)}(g^{(k)}({\mathbf{x}}^{(k)}_{t}), {\mathbf{y}}^{(k)}_{t})-\frac{1}{K}\sum_{k=1}^{K}\nabla_x f^{(k)}(g^{(k)}({\mathbf{x}}^{(k)}_{t-1}), {\mathbf{y}}^{(k)}_{t-1})\Big\|^2\Big]}\\
			& \quad +2\beta_x\eta \mathbb{E}\Big[\Big\|\frac{1}{K}\sum_{k=1}^{K}\nabla g^{(k)}(\mathbf{x}^{(k)}_{t})^T\nabla_g f^{(k)}(g^{(k)}({\mathbf{x}}^{(k)}_{t}), {\mathbf{y}}^{(k)}_{t}) \\
            & \quad - \frac{1}{K}\sum_{k=1}^{K}\nabla g^{(k)}(\mathbf{x}^{(k)}_{t})^T\nabla_{g}  f^{(k)}(\mathbf{h}^{(k)}_{t}, \mathbf{y}^{(k)}_{t})  \Big\|^2\Big]\\
			& \leq  (1-\beta_x\eta )\mathbb{E}\Big[\Big\|\frac{1}{K}\sum_{k=1}^{K}\nabla_x f^{(k)}(g^{(k)}({\mathbf{x}}^{(k)}_{t-1}), {\mathbf{y}}^{(k)}_{t-1}) -\frac{1}{K}\sum_{k=1}^{K}\mathbf{u}^{(k)}_{t-1} \Big\|^2\Big]\\
			& \quad +\frac{2}{\beta_x\eta }\frac{1}{K}\sum_{k=1}^{K}{\mathbb{E}\Big[\Big\| \nabla_x f^{(k)}(g^{(k)}({\mathbf{x}}^{(k)}_{t}), {\mathbf{y}}^{(k)}_{t})-\nabla_x f^{(k)}(g^{(k)}({\mathbf{x}}^{(k)}_{t-1}), {\mathbf{y}}^{(k)}_{t-1})\Big\|^2\Big]}\\
			& \quad +2\beta_x\eta \underbrace{\frac{1}{K}\sum_{k=1}^{K}\mathbb{E}\Big[\Big\|\nabla g^{(k)}(\mathbf{x}^{(k)}_{t})^T\nabla_g f^{(k)}(g^{(k)}({\mathbf{x}}^{(k)}_{t}), {\mathbf{y}}^{(k)}_{t}) - \nabla g^{(k)}(\mathbf{x}^{(k)}_{t})^T\nabla_{g}  f^{(k)}(\mathbf{h}^{(k)}_{t}, \mathbf{y}^{(k)}_{t})   \Big\|^2\Big] }_{T_3} \\
			& \leq  (1-\beta_x\eta )\mathbb{E}\Big[\Big\|\frac{1}{K}\sum_{k=1}^{K}\nabla_x f^{(k)}(g^{(k)}({\mathbf{x}}^{(k)}_{t-1}), {\mathbf{y}}^{(k)}_{t-1}) -\frac{1}{K}\sum_{k=1}^{K}\mathbf{u}^{(k)}_{t-1} \Big\|^2\Big] + 48\beta_x\gamma_x^2p^2\eta^3(C_g^4L_f^2+ C_f^2L_g^2 ) C_g^2C_f^2 \\
            & \quad + \frac{27648\beta_y^2\gamma_y^2\alpha^2p^4\eta^5C_g^4L_f^4 \sigma_g^2}{\beta_x} +  \frac{331776\beta_y^2\gamma_y^2\alpha^2\gamma_x^2  \beta_x^2p^8\eta^{9}C_g^8C_f^2L_f^4}{\beta_x}  +  \frac{786\beta_y^2\gamma_y^2p^2\eta^3C_g^2L_f^2  \sigma_f^2}{\beta_x} \\
            & \quad + 144\gamma_x^2  \beta_x^3p^4\eta^5 C_g^6C_f^4  +  216\beta_x\alpha^2p^2\eta^3C_g^4\sigma_g^2 +  2592\alpha^2\gamma_x^2  \beta_x^3p^6\eta^7C_g^8C_f^2  \\
            & \quad + \frac{8\gamma_x^2\eta(C_g^4L_f^2+ C_f^2L_g^2 )}{\beta_x}\mathbb{E}[\| \bar{\mathbf{u}}_{t-1} \|^2] 
            +\frac{8\gamma_y^2\eta C_g^2L_f^2}{\beta_x} \mathbb{E}[\| \bar{\mathbf{v}}_{t-1}\|^2]  +  \frac{12\beta_x\eta\gamma_x^2 C_g^4}{\alpha}\mathbb{E}[\| \bar{\mathbf{u}}_{t-1}  \|^2] \\
			& \quad + 6\beta_x\eta C_g^2\mathbb{E}\Big[\Big\|\bar{\mathbf{h}}_{t-1} - \frac{1}{K}\sum_{k=1}^{K}g^{(k)}({\mathbf{x}}^{(k)}_{t-1}) \Big\|^2\Big]  + \frac{  72\gamma_x^2 \beta_x^3p^2\eta^3 C_g^6C_f^2}{\alpha} + \frac{6\alpha^2\eta^2\beta_x C_g^2\sigma_g^2}{K} \  , \\
		\end{aligned}
	\end{equation}
	where the second step holds due to $a=\frac{\beta_x\eta}{1-\beta_x\eta}$  and $0<\beta_x\eta <1$, the last step holds due to Lemma~\ref{lemma:grad_fx_increment} and the following inequality. 
	\begin{equation}
		\begin{aligned}
			&  T_3  = \frac{1}{K}\sum_{k=1}^{K}\mathbb{E}\Big[\Big\|\nabla g^{(k)}(\mathbf{x}^{(k)}_{t})^T\nabla_g f^{(k)}(g^{(k)}({\mathbf{x}}^{(k)}_{t}), {\mathbf{y}}^{(k)}_{t}) \\
            & \quad - \nabla g^{(k)}(\mathbf{x}^{(k)}_{t})^T\nabla_g f^{(k)}(\frac{1}{K}\sum_{k'=1}^{K}g^{(k')}({\mathbf{x}}^{(k')}_{t}), {\mathbf{y}}^{(k)}_{t})  \\
			& \quad +  \nabla g^{(k)}(\mathbf{x}^{(k)}_{t})^T\nabla_g f^{(k)}(\frac{1}{K}\sum_{k'=1}^{K}g^{(k')}({\mathbf{x}}^{(k')}_{t}), {\mathbf{y}}^{(k)}_{t})\\
            & \quad - \nabla g^{(k)}(\mathbf{x}^{(k)}_{t})^T\nabla_{g}  f^{(k)}(\frac{1}{K}\sum_{k'=1}^{K}\mathbf{h}^{(k')}_{t}, \mathbf{y}^{(k)}_{t})  \\
			& \quad + \nabla g^{(k)}(\mathbf{x}^{(k)}_{t})^T\nabla_{g}  f^{(k)}(\frac{1}{K}\sum_{k'=1}^{K}\mathbf{h}^{(k')}_{t}, \mathbf{y}^{(k)}_{t})  - \nabla g^{(k)}(\mathbf{x}^{(k)}_{t})^T\nabla_{g}  f^{(k)}(\mathbf{h}^{(k)}_{t}, \mathbf{y}^{(k)}_{t})   \Big\|^2\Big] \\
			& \leq 3\frac{1}{K}\sum_{k=1}^{K}\mathbb{E}\Big[\Big\|\nabla g^{(k)}(\mathbf{x}^{(k)}_{t})^T\nabla_g f^{(k)}(g^{(k)}({\mathbf{x}}^{(k)}_{t}), {\mathbf{y}}^{(k)}_{t}) \\
            & \quad - \nabla g^{(k)}(\mathbf{x}^{(k)}_{t})^T\nabla_g f^{(k)}(\frac{1}{K}\sum_{k'=1}^{K}g^{(k')}({\mathbf{x}}^{(k')}_{t}), {\mathbf{y}}^{(k)}_{t}) \Big\|^2\Big]  \\
			& \quad +  3\frac{1}{K}\sum_{k=1}^{K}\mathbb{E}\Big[\Big\|\nabla g^{(k)}(\mathbf{x}^{(k)}_{t})^T\nabla_g f^{(k)}(\frac{1}{K}\sum_{k'=1}^{K}g^{(k')}({\mathbf{x}}^{(k')}_{t}), {\mathbf{y}}^{(k)}_{t})\\
            & \quad - \nabla g^{(k)}(\mathbf{x}^{(k)}_{t})^T\nabla_{g}  f^{(k)}(\frac{1}{K}\sum_{k'=1}^{K}\mathbf{h}^{(k')}_{t}, \mathbf{y}^{(k)}_{t})  \Big\|^2\Big] \\
			& \quad + 3\frac{1}{K}\sum_{k=1}^{K}\mathbb{E}\Big[\Big\|\nabla g^{(k)}(\mathbf{x}^{(k)}_{t})^T\nabla_{g}  f^{(k)}(\frac{1}{K}\sum_{k'=1}^{K}\mathbf{h}^{(k')}_{t}, \mathbf{y}^{(k)}_{t})  - \nabla g^{(k)}(\mathbf{x}^{(k)}_{t})^T\nabla_{g}  f^{(k)}(\mathbf{h}^{(k)}_{t}, \mathbf{y}^{(k)}_{t})   \Big\|^2\Big] \\
			& \leq 3C_g^2\frac{1}{K}\sum_{k=1}^{K}\mathbb{E}\Big[\Big\|\nabla_g f^{(k)}(g^{(k)}({\mathbf{x}}^{(k)}_{t}), {\mathbf{y}}^{(k)}_{t}) -\nabla_g f^{(k)}(\frac{1}{K}\sum_{k'=1}^{K}g^{(k')}({\mathbf{x}}^{(k')}_{t}), {\mathbf{y}}^{(k)}_{t}) \Big\|^2\Big]  \\
			& \quad +  3C_g^2\frac{1}{K}\sum_{k=1}^{K}\mathbb{E}\Big[\Big\|\nabla_g f^{(k)}(\frac{1}{K}\sum_{k'=1}^{K}g^{(k')}({\mathbf{x}}^{(k')}_{t}), {\mathbf{y}}^{(k)}_{t}) - \nabla_{g}  f^{(k)}(\frac{1}{K}\sum_{k'=1}^{K}\mathbf{h}^{(k')}_{t}, \mathbf{y}^{(k)}_{t})  \Big\|^2\Big] \\
			& \quad + 3C_g^2\frac{1}{K}\sum_{k=1}^{K}\mathbb{E}\Big[\Big\|\nabla_{g}  f^{(k)}(\frac{1}{K}\sum_{k'=1}^{K}\mathbf{h}^{(k')}_{t}, \mathbf{y}^{(k)}_{t})  - \nabla_{g}  f^{(k)}(\mathbf{h}^{(k)}_{t}, \mathbf{y}^{(k)}_{t})   \Big\|^2\Big] \\
			& \leq 3C_g^2L_f^2\frac{1}{K}\sum_{k=1}^{K}\mathbb{E}\Big[\Big\|g^{(k)}({\mathbf{x}}^{(k)}_{t}) -\frac{1}{K}\sum_{k'=1}^{K}g^{(k')}({\mathbf{x}}^{(k')}_{t}) \Big\|^2\Big]  +  3C_g^2\mathbb{E}\Big[\Big\|\frac{1}{K}\sum_{k=1}^{K}g^{(k)}({\mathbf{x}}^{(k)}_{t})- \frac{1}{K}\sum_{k=1}^{K}\mathbf{h}^{(k)}_{t}  \Big\|^2\Big] \\
			& \quad + 3C_g^2\frac{1}{K}\sum_{k=1}^{K}\mathbb{E}\Big[\Big\|\frac{1}{K}\sum_{k'=1}^{K}\mathbf{h}^{(k')}_{t} - \mathbf{h}^{(k)}_{t}   \Big\|^2\Big] \\
			& \leq 72\gamma_x^2  \beta_x^2p^4\eta^4 C_g^6C_f^4  +  108\alpha^2p^2\eta^2C_g^4\sigma_g^2 +  1296\alpha^2\gamma_x^2  \beta_x^2p^6\eta^6C_g^8C_f^2  \\
			& \quad + 3C_g^2\mathbb{E}\Big[\Big\|\bar{\mathbf{h}}_{t-1} - \frac{1}{K}\sum_{k=1}^{K}g^{(k)}({\mathbf{x}}^{(k)}_{t-1}) \Big\|^2\Big]  +  \frac{6\eta\gamma_x^2 C_g^4}{\alpha}\mathbb{E}[\| \bar{\mathbf{u}}_{t-1}  \|^2]  + \frac{  36\gamma_x^2 \beta_x^2p^2\eta^3 C_g^6C_f^2}{\alpha} + \frac{3\alpha^2\eta^2 C_g^2\sigma_g^2}{K}  \ , \\
			& \leq 72\gamma_x^2  \beta_x^2p^4\eta^4 C_g^6C_f^4  +  108\alpha^2p^2\eta^2C_g^4\sigma_g^2 +  1296\alpha^2\gamma_x^2  \beta_x^2p^6\eta^6C_g^8C_f^2  \\
			& \quad + 3C_g^2\mathbb{E}\Big[\Big\|\bar{\mathbf{h}}_{t-1} - \frac{1}{K}\sum_{k=1}^{K}g^{(k)}({\mathbf{x}}^{(k)}_{t-1}) \Big\|^2\Big]  +  \frac{6\gamma_x^2 C_g^4}{\alpha}\mathbb{E}[\| \bar{\mathbf{u}}_{t-1}  \|^2]  + \frac{  36\gamma_x^2 \beta_x^2p^2\eta^2 C_g^6C_f^2}{\alpha} + \frac{3\alpha^2\eta C_g^2\sigma_g^2}{K}  \ , \\
		\end{aligned}
	\end{equation}
where the second to last step holds due to Lemma~\ref{lemma:consensus_g},  Lemma~\ref{lemma:consensus_h}, and Eq.~(\ref{eq:g_var}), the last step holds due to $\eta<1$.  As for $T_2$, we can get
\small{
	\begin{equation}
		\begin{aligned}
			& T_2 \leq  2{\mathbb{E}\Big[\Big\|\frac{1}{K}\sum_{k=1}^{K}\Big(\nabla g^{(k)}(\mathbf{x}^{(k)}_{t})^T\nabla_{g}  f^{(k)}(\mathbf{h}^{(k)}_{t}, \mathbf{y}^{(k)}_{t})  - \nabla g^{(k)}(\mathbf{x}^{(k)}_{t}; \xi^{(k)}_{t})^T\nabla_{g}  f^{(k)}(\mathbf{h}^{(k)}_{t}, \mathbf{y}^{(k)}_{t}) \Big)\Big\|^2\Big]}\\
			& \quad  \quad + 2{\mathbb{E}\Big[\Big\|\frac{1}{K}\sum_{k=1}^{K}\Big(\nabla g^{(k)}(\mathbf{x}^{(k)}_{t};  \xi^{(k)}_{t})^T\nabla_{g}  f^{(k)}(\mathbf{h}^{(k)}_{t}, \mathbf{y}^{(k)}_{t})   - \nabla g^{(k)}(\mathbf{x}^{(k)}_{t};  \xi^{(k)}_{t})^T\nabla_{g}  f^{(k)}(\mathbf{h}^{(k)}_{t}, \mathbf{y}^{(k)}_{t}; \zeta^{(k)}_{t}) \Big) \Big\|^2\Big]}\\
			& \leq \frac{2C_f^2 \sigma_{g'}^2}{K}  + \frac{2C_g^2\sigma_f^2}{K}  \ , 
		\end{aligned}
	\end{equation}
 }
where the last step holds due to Lemma~\ref{lemma:grad_fx_variance}.  By combining $T_1$ and $T_2$, we can get
	\begin{equation}
	   \begin{aligned}
		  & \quad \mathbb{E}\Big[\Big\| \frac{1}{K}\sum_{k=1}^{K}\nabla_x f^{(k)}(g^{(k)}({\mathbf{x}}^{(k)}_{t}), {\mathbf{y}}^{(k)}_{t}) -  \frac{1}{K}\sum_{k=1}^{K}\mathbf{u}^{(k)}_{t}\Big\|^2\Big]  \\
		  & \leq  (1-\beta_x\eta )\mathbb{E}\Big[\Big\|\frac{1}{K}\sum_{k=1}^{K}\nabla_x f^{(k)}(g^{(k)}({\mathbf{x}}^{(k)}_{t-1}), {\mathbf{y}}^{(k)}_{t-1}) -\frac{1}{K}\sum_{k=1}^{K}\mathbf{u}^{(k)}_{t-1} \Big\|^2\Big] + 48\beta_x\gamma_x^2p^2\eta^3(C_g^4L_f^2+ C_f^2L_g^2 ) C_g^2C_f^2 \\
          & \quad + \frac{27648\beta_y^2\gamma_y^2\alpha^2p^4\eta^5C_g^4L_f^4 \sigma_g^2}{\beta_x} +  \frac{331776\beta_y^2\gamma_y^2\alpha^2\gamma_x^2  \beta_x^2p^8\eta^{9}C_g^8C_f^2L_f^4}{\beta_x}  +  \frac{786\beta_y^2\gamma_y^2p^2\eta^3C_g^2L_f^2  \sigma_f^2}{\beta_x} \\
          & \quad + 144\gamma_x^2  \beta_x^3p^4\eta^5 C_g^6C_f^4  +  216\beta_x\alpha^2p^2\eta^3C_g^4\sigma_g^2 +  2592\alpha^2\gamma_x^2  \beta_x^3p^6\eta^7C_g^8C_f^2  \\
          & \quad + \frac{8\gamma_x^2\eta(C_g^4L_f^2+ C_f^2L_g^2 )}{\beta_x}\mathbb{E}[\| \bar{\mathbf{u}}_{t-1} \|^2] 
          +\frac{8\gamma_y^2\eta C_g^2L_f^2}{\beta_x} \mathbb{E}[\| \bar{\mathbf{v}}_{t-1}\|^2]  +  \frac{12\beta_x\eta\gamma_x^2 C_g^4}{\alpha}\mathbb{E}[\| \bar{\mathbf{u}}_{t-1}  \|^2] \\
		  & \quad + 6\beta_x\eta C_g^2\mathbb{E}\Big[\Big\|\bar{\mathbf{h}}_{t-1} - \frac{1}{K}\sum_{k=1}^{K}g^{(k)}({\mathbf{x}}^{(k)}_{t-1}) \Big\|^2\Big]  + \frac{  72\gamma_x^2 \beta_x^3p^2\eta^3 C_g^6C_f^2}{\alpha} + \frac{6\alpha^2\eta^2\beta_x C_g^2\sigma_g^2}{K} \\
          & \quad + \frac{2\beta_x^2\eta^2C_f^2 \sigma_{g'}^2}{K}  + \frac{2\beta_x^2\eta^2C_g^2\sigma_f^2}{K} \ ,  \\
	   \end{aligned}
    \end{equation}
    It can be reformulated as follows:
    \begin{equation}
    	\begin{aligned}
            &\beta_x\eta\mathbb{E}\Big[\Big\| \frac{1}{K}\sum_{k=1}^{K}\nabla_x f^{(k)}(g^{(k)}({\mathbf{x}}^{(k)}_{t}), {\mathbf{y}}^{(k)}_{t}) -  \frac{1}{K}\sum_{k=1}^{K}\mathbf{u}^{(k)}_{t}\Big\|^2\Big]\\
            & \leq \mathbb{E}\Big[\Big\| \frac{1}{K}\sum_{k=1}^{K}\nabla_x f^{(k)}(g^{(k)}({\mathbf{x}}^{(k)}_{t}), {\mathbf{y}}^{(k)}_{t}) -  \frac{1}{K}\sum_{k=1}^{K}\mathbf{u}^{(k)}_{t}\Big\|^2\Big] \\
            & \quad  - \mathbb{E}\Big[\Big\| \frac{1}{K}\sum_{k=1}^{K}\nabla_x f^{(k)}(g^{(k)}({\mathbf{x}}^{(k)}_{t+1}), {\mathbf{y}}^{(k)}_{t+1}) -  \frac{1}{K}\sum_{k=1}^{K}\mathbf{u}^{(k)}_{t+1}\Big\|^2\Big] + \frac{8\gamma_x^2\eta(C_g^4L_f^2+ C_f^2L_g^2 )}{\beta_x}\mathbb{E}[\| \bar{\mathbf{u}}_{t} \|^2] 
            \\
            & \quad +\frac{8\gamma_y^2\eta C_g^2L_f^2}{\beta_x} \mathbb{E}[\| \bar{\mathbf{v}}_{t}\|^2] + \frac{12\beta_x\eta\gamma_x^2 C_g^4}{\alpha}\mathbb{E}[\| \bar{\mathbf{u}}_{t}  \|^2] 
            + 6\beta_x\eta C_g^2\mathbb{E}\Big[\Big\|\bar{\mathbf{h}}_{t} - \frac{1}{K}\sum_{k=1}^{K}g^{(k)}({\mathbf{x}}^{(k)}_{t}) \Big\|^2\Big]\\
            & \quad  + 48\beta_x\gamma_x^2p^2\eta^3(C_g^4L_f^2+ C_f^2L_g^2 ) C_g^2C_f^2
            + 144\gamma_x^2  \beta_x^3p^4\eta^5 C_g^6C_f^4  +  216\beta_x\alpha^2p^2\eta^3C_g^4\sigma_g^2 +  2592\alpha^2\gamma_x^2  \beta_x^3p^6\eta^7C_g^8C_f^2  \\
            & \quad + \frac{27648\beta_y^2\gamma_y^2\alpha^2p^4\eta^5C_g^4L_f^4 \sigma_g^2}{\beta_x} +  \frac{331776\beta_y^2\gamma_y^2\alpha^2\gamma_x^2  \beta_x^2p^8\eta^{9}C_g^8C_f^2L_f^4}{\beta_x}  +  \frac{786\beta_y^2\gamma_y^2p^2\eta^3C_g^2L_f^2  \sigma_f^2}{\beta_x} \\
            & \quad +\frac{2\beta_x^2\eta^2C_f^2 \sigma_{g'}^2}{K}  + \frac{2\beta_x^2\eta^2C_g^2\sigma_f^2}{K} + \frac{  72\gamma_x^2 \beta_x^3p^2\eta^3 C_g^6C_f^2}{\alpha} + \frac{6\alpha^2\eta^2\beta_x C_g^2\sigma_g^2}{K} \ .  \\
	   \end{aligned}
    \end{equation}
    By summing over t from 0 to T - 1, we can get
    \begin{equation}
        \begin{aligned}
          &  \frac{1}{T}\sum_{t=0}^{T-1}\mathbb{E}\Big[\Big\| \frac{1}{K}\sum_{k=1}^{K}\nabla_x f^{(k)}(g^{(k)}({\mathbf{x}}^{(k)}_{t}), {\mathbf{y}}^{(k)}_{t}) -  \frac{1}{K}\sum_{k=1}^{K}\mathbf{u}^{(k)}_{t}\Big\|^2\Big]\\
          &  \leq \frac{1}{\beta_x \eta T}\mathbb{E}\Big[\Big\| \frac{1}{K}\sum_{k=1}^{K}\nabla_x f^{(k)}(g^{(k)}({\mathbf{x}}^{(k)}_{0}), {\mathbf{y}}^{(k)}_{0})-  \frac{1}{K}\sum_{k=1}^{K}\mathbf{u}^{(k)}_{0} \Big\|^2\Big] 
            + 6C_g^2\frac{1}{T}\sum_{t=0}^{T-1}\mathbb{E}\Big[\Big\|\bar{\mathbf{h}}_{t} - \frac{1}{K}\sum_{k=1}^{K}g^{(k)}({\mathbf{x}}^{(k)}_{t}) \Big\|^2\Big] \\
          &  \quad + \frac{8\gamma_x^2(C_g^4L_f^2+ C_f^2L_g^2 )}{\beta_x^2}\frac{1}{T}\sum_{t=0}^{T-1}\mathbb{E}[\| \bar{\mathbf{u}}_{t} \|^2] 
             +\frac{8\gamma_y^2 C_g^2L_f^2}{\beta_x^2} \frac{1}{T}\sum_{t=0}^{T-1}\mathbb{E}[\| \bar{\mathbf{v}}_{t}\|^2] + \frac{12\gamma_x^2 C_g^4}{\alpha}\frac{1}{T}\sum_{t=0}^{T-1}\mathbb{E}[\| \bar{\mathbf{u}}_{t}  \|^2] \\
          & \quad + 48\gamma_x^2p^2\eta^2(C_g^4L_f^2+ C_f^2L_g^2 ) C_g^2C_f^2
            + 144\gamma_x^2  \beta_x^2p^4\eta^4 C_g^6C_f^4  +  216\alpha^2p^2\eta^2C_g^4\sigma_g^2 +  2592\alpha^2\gamma_x^2  \beta_x^2p^6\eta^6C_g^8C_f^2  \\
           & \quad + \frac{27648\beta_y^2\gamma_y^2\alpha^2p^4\eta^4C_g^4L_f^4 \sigma_g^2}{\beta_x^2} +  \frac{331776\beta_y^2\gamma_y^2\alpha^2\gamma_x^2  \beta_x^2p^8\eta^{8}C_g^8C_f^2L_f^4}{\beta_x^2}  +  \frac{786\beta_y^2\gamma_y^2p^2\eta^2C_g^2L_f^2  \sigma_f^2}{\beta_x^2} \\
           & \quad+ \frac{2\beta_x \eta C_f^2 \sigma_{g'}^2}{K}  + \frac{2\beta_x\eta C_g^2\sigma_f^2}{K}
            + \frac{  72\gamma_x^2 \beta_x^2p^2\eta^2 C_g^6C_f^2}{\alpha} + \frac{6\alpha^2\eta C_g^2\sigma_g^2}{K} \\
           & \leq \frac{3C_g^2 L_f^2\sigma_g^2 + 3C_f^2\sigma_{g'}^2 + 3C_g^2\sigma_f^2}{\beta_x \eta T} + 6C_g^2\frac{1}{T}\sum_{t=0}^{T-1}\mathbb{E}\Big[\Big\|\bar{\mathbf{h}}_{t} - \frac{1}{K}\sum_{k=1}^{K}g^{(k)}({\mathbf{x}}^{(k)}_{t}) \Big\|^2\Big] \\
            & \quad+ \frac{8\gamma_x^2(C_g^4L_f^2+ C_f^2L_g^2 )}{\beta_x^2}\frac{1}{T}\sum_{t=0}^{T-1}\mathbb{E}[\| \bar{\mathbf{u}}_{t} \|^2] 
             +\frac{8\gamma_y^2 C_g^2L_f^2}{\beta_x^2} \frac{1}{T}\sum_{t=0}^{T-1}\mathbb{E}[\| \bar{\mathbf{v}}_{t}\|^2] + \frac{12\gamma_x^2 C_g^4}{\alpha}\frac{1}{T}\sum_{t=0}^{T-1}\mathbb{E}[\| \bar{\mathbf{u}}_{t}  \|^2] \\
            & \quad + 48\gamma_x^2p^2\eta^2(C_g^4L_f^2+ C_f^2L_g^2 ) C_g^2C_f^2 + 144\gamma_x^2  \beta_x^2p^4\eta^4 C_g^6C_f^4  +  216\alpha^2p^2\eta^2C_g^4\sigma_g^2 +  2592\alpha^2\gamma_x^2  \beta_x^2p^6\eta^6C_g^8C_f^2  \\
            & \quad + \frac{27648\beta_y^2\gamma_y^2\alpha^2p^4\eta^4C_g^4L_f^4 \sigma_g^2}{\beta_x^2} +  \frac{331776\beta_y^2\gamma_y^2\alpha^2\gamma_x^2  \beta_x^2p^8\eta^{8}C_g^8C_f^2L_f^4}{\beta_x^2}  +  \frac{786\beta_y^2\gamma_y^2p^2\eta^2C_g^2L_f^2  \sigma_f^2}{\beta_x^2} \\
           & \quad+ \frac{2\beta_x \eta C_f^2 \sigma_{g'}^2}{K}  + \frac{2\beta_x\eta C_g^2\sigma_f^2}{K} + \frac{  72\gamma_x^2 \beta_x^2p^2\eta^2 C_g^6C_f^2}{\alpha} + \frac{6\alpha^2\eta C_g^2\sigma_g^2}{K} \ , \\
    \end{aligned}
\end{equation}
    where the second step holds due to the following inequality:
     \begin{equation}
        \begin{aligned}
          & \quad \mathbb{E}\Big[\Big\| \frac{1}{K}\sum_{k=1}^{K}\nabla_x f^{(k)}(g^{(k)}({\mathbf{x}}^{(k)}_{0}), {\mathbf{y}}^{(k)}_{0})-  \frac{1}{K}\sum_{k=1}^{K}\mathbf{u}^{(k)}_{0} \Big\|^2\Big] \\
          &  = \mathbb{E}\Big[\Big\| \frac{1}{K}\sum_{k=1}^{K}\nabla_x g^{(k)}(x^{(k)})^T
            \nabla_g f^{(k)}(g^{(k)}({\mathbf{x}}^{(k)}_{0}), {\mathbf{y}}^{(k)}_{0})-
            \frac{1}{K}\sum_{k=1}^{K}\nabla g^{(k)}    (x^{(k)}_0;\xi^{(k)}_{0})^T\nabla_{g}f^{(k)}(h^{(k)}_0,y^{(k)}_0;\zeta^{(k)}_0) \Big\|^2\Big] \\
          &  = \mathbb{E}\Big[\Big\| \frac{1}{K}\sum_{k=1}^{K}\nabla_x g^{(k)}(x^{(k)}_0)^T \nabla_g f^{(k)}(g^{(k)}({\mathbf{x}}^{(k)}_{0}), {\mathbf{y}}^{(k)}_{0})
             - \frac{1}{K}\sum_{k=1}^{K}\nabla g^{(k)}(x^{(k)}_0)^T\nabla_g f^{(k)}(h^{(k)}_{0}), y^{(k)}_{0}) \\
          & \quad + \frac{1}{K}\sum_{k=1}^{K}\nabla g^{(k)}(x^{(k)}_0)^T\nabla_g f^{(k)}(h^{(k)}_{0}), y^{(k)}_{0})
            - \frac{1}{K}\sum_{k=1}^{K}\nabla g^{(k)}(x^{(k)}_0;\xi^{(k)}_0)^T\nabla_g f^{(k)}(h^{(k)}_{0}), y^{(k)}_{0}) \\
           & \quad + \frac{1}{K}\sum_{k=1}^{K}\nabla g^{(k)}(x^{(k)}_0;\xi^{(k)}_0)^T\nabla_g f^{(k)}(h^{(k)}_{0}), y^{(k)}_{0})
              -  \frac{1}{K}\sum_{k=1}^{K}\nabla g^{(k)}  (x^{(k)}_0;\xi^{(k)}_{0})^T\nabla_{g}f^{(k)}(h^{(k)}_0,y^{(k)}_0;\zeta^{(k)}_0) \Big\|^2\Big] \\
           & \leq 3C_g^2 L_f^2 \frac{1}{K}\sum_{k=1}^{K}\mathbb{E}\Big[\Big\| g^{(k)}(x^{(k)}_0)-h^{(k)}_0 \Big\|^2\Big] + 3C_f^2\sigma_{g'}^2 + 3C_g^2\sigma_f^2\\
            & \leq 3C_g^2 L_f^2\sigma_g^2 + 3C_f^2\sigma_{g'}^2 + 3C_g^2\sigma_f^2 \ . \\
    \end{aligned}
\end{equation}
\end{proof}

\begin{lemma} \label{lemma:variance_v}
	Given Assumption~\ref{assumption_smooth}-\ref{assumption_strong}, $\beta_y\eta \in (0, 1)$, and $\eta<1$, we can get
 \small{
	\begin{equation}
		\begin{aligned}
			& \quad \frac{1}{T}\sum_{t=0}^{T-1} \mathbb{E}\Big[\Big\|\frac{1}{K}\sum_{k=1}^{K}\nabla_y f^{(k)}(g^{(k)}(\mathbf{x}_{t}^{(k)}), \mathbf{y}_{t}^{(k)}) - \frac{1}{K}\sum_{k=1}^{K} \mathbf{v}_{t}^{(k)} \Big\|^2\Big] \\
            & \quad \leq \frac{2L_f^2 \sigma_g^2  + 2\sigma_f^2 }{\beta_y\eta T}  + 6L_f^2\frac{1}{T}\sum_{t=0}^{T-1}\mathbb{E}\Big[\Big\|\bar{\mathbf{h}}_{t} - \frac{1}{K}\sum_{k=1}^{K}g^{(k)}({\mathbf{x}}^{(k)}_{t}) \Big\|^2\Big]\\
			& \quad   +  \frac{12L_f^2 \gamma_x^2 C_g^2}{\alpha}\frac{1}{T}\sum_{t=0}^{T-1}\mathbb{E}[\| \bar{\mathbf{u}}_{t}  \|^2]   + \frac{4\gamma_x^2 L_f^2C_g^2}{\beta_y^2}\frac{1}{T}\sum_{t=0}^{T-1}\mathbb{E}[\|\bar{\mathbf{u}}_{t}\|^2]+\frac{4\gamma_y^2 L_f^2}{\beta_y^2}\frac{1}{T}\sum_{t=0}^{T-1}\mathbb{E}[\|\bar{\mathbf{v}}_{t}\|^2]\\
			& \quad + 144\gamma_x^2\beta_x^2p^4\eta^4 C_g^4C_f^2 L_f^2  + 216 \alpha^2p^2\eta^2C_g^2L_f^2\sigma_g^2 +   2592\alpha^2\gamma_x^2  \beta_x^2p^6\eta^6C_g^6C_f^2 L_f^2 \\
            & \quad + 13824\gamma_y^2\alpha^2p^4\eta^4C_g^2L_f^4 \sigma_g^2 +  165888\gamma_y^2\alpha^2\gamma_x^2  \beta_x^2p^8\eta^{8}C_g^6C_f^2L_f^4  +  384\gamma_y^2p^2\eta^2L_f^2  \sigma_f^2  \\
			& \quad + \frac{72\gamma_x^2\beta_x^2p^2\eta^2 C_g^4C_f^2L_f^2}{\alpha} + \frac{6\alpha^2\eta\sigma_g^2L_f^2}{K} + \frac{\beta_y\eta\sigma_f^2}{K} +  \frac{24\gamma_x^2p^2\beta_x^2\eta^2L_f^2C_g^4C_f^2}{\beta_y^2} \ . \\
		\end{aligned}
	\end{equation}
 }
\end{lemma}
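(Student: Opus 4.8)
The plan is to mirror the argument for the $\mathbf{u}$-momentum in Lemma~\ref{lemma:variance_u}, exploiting that the $\mathbf{y}$-direction is \emph{non-compositional}: the target gradient $\nabla_y f^{(k)}(g^{(k)}(\mathbf{x}_t^{(k)}),\mathbf{y}_t^{(k)})$ carries no Jacobian factor $\nabla g^{(k)}$, so the smoothness bounds produce $L_f^2$ rather than $C_g^2 L_f^2$ prefactors and the sampling variance collapses directly to $\sigma_f^2/K$. First I would unroll the recursion $\mathbf{v}_{t}^{(k)}=(1-\beta_y\eta)\mathbf{v}_{t-1}^{(k)}+\beta_y\eta\,\nabla_y f^{(k)}(\mathbf{h}_t^{(k)},\mathbf{y}_t^{(k)};\zeta_t^{(k)})$, average over $k$, and split the error $\|\frac{1}{K}\sum_k\nabla_y f^{(k)}(g^{(k)}(\mathbf{x}_t^{(k)}),\mathbf{y}_t^{(k)})-\bar{\mathbf{v}}_t\|^2$ into a deterministic-in-conditional part and a conditionally zero-mean stochastic part. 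Since $\mathbb{E}[\nabla_y f^{(k)}(\cdot;\zeta)]=\nabla_y f^{(k)}(\cdot)$, the cross term vanishes in expectation, leaving $T_1+\beta_y^2\eta^2 T_2$, where $T_2$ is bounded by $\sigma_f^2/K$ through Assumption~\ref{assumption_bound_variance} together with independence across workers, exactly as in Lemma~\ref{lemma:grad_fx_variance}.

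Next I would apply Young's inequality with weight $a=\beta_y\eta/(1-\beta_y\eta)$ to $T_1$, producing a contraction $(1-\beta_y\eta)\mathbb{E}[\mathrm{error}_{t-1}]$, an increment term $\tfrac{2}{\beta_y\eta}\mathbb{E}\|\nabla_y f_t-\nabla_y f_{t-1}\|^2$, and a bias term $2\beta_y\eta\,\mathbb{E}\|\nabla_y f^{(k)}(g^{(k)}(\mathbf{x}_t^{(k)}),\mathbf{y}_t^{(k)})-\nabla_y f^{(k)}(\mathbf{h}_t^{(k)},\mathbf{y}_t^{(k)})\|^2$. The increment is controlled purely by $L_f$-smoothness (Assumption~\ref{assumption_smooth}): $\|g^{(k)}(\mathbf{x}_t^{(k)})-g^{(k)}(\mathbf{x}_{t-1}^{(k)})\|^2\le C_g^2\gamma_x^2\eta^2\|\mathbf{u}_{t-1}^{(k)}\|^2$ and $\|\mathbf{y}_t^{(k)}-\mathbf{y}_{t-1}^{(k)}\|^2=\gamma_y^2\eta^2\|\mathbf{v}_{t-1}^{(k)}\|^2$; splitting each local momentum as consensus error plus average and invoking Lemmas~\ref{lemma:consensus_u} and~\ref{lemma:consensus_v} yields the $\tfrac{4\gamma_x^2 L_f^2 C_g^2}{\beta_y^2}$ and $\tfrac{4\gamma_y^2 L_f^2}{\beta_y^2}$ terms (from the averaged momenta) and the explicit polynomial-in-$(p\eta)$ residuals. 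For the bias term I would reuse the three-point decomposition of $T_3$ in Lemma~\ref{lemma:variance_u}, routing through $\frac{1}{K}\sum_{k'}g^{(k')}(\mathbf{x}_t^{(k')})$ and $\bar{\mathbf{h}}_t$: the $g$-consensus piece is bounded by Lemma~\ref{lemma:consensus_g}, the $\mathbf{h}$-consensus piece by Lemma~\ref{lemma:consensus_h}, and the moving-average piece $\mathbb{E}\|\frac{1}{K}\sum_k g^{(k)}(\mathbf{x}_t^{(k)})-\bar{\mathbf{h}}_t\|^2$ is kept and then expanded via the one-step recursion~(\ref{eq:g_var}), which is where the retained $6L_f^2\frac{1}{T}\sum_t\mathbb{E}\|\bar{\mathbf{h}}_t-\frac{1}{K}\sum_k g^{(k)}(\mathbf{x}_t^{(k)})\|^2$ and $\tfrac{12 L_f^2\gamma_x^2 C_g^2}{\alpha}\frac{1}{T}\sum_t\mathbb{E}\|\bar{\mathbf{u}}_t\|^2$ terms originate.

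Finally I would rearrange the one-step estimate into $\beta_y\eta\,\mathbb{E}[\mathrm{error}_{t-1}]\le \mathbb{E}[\mathrm{error}_{t-1}]-\mathbb{E}[\mathrm{error}_{t}]+(\cdots)$, sum over $t=0,\dots,T-1$, divide by $\beta_y\eta T$, and use $(1-\alpha\eta)\le 1$ on the retained $\mathbf{h}$-error to keep its coefficient stable. The telescoped initial error $\mathbb{E}\|\bar{\mathbf{v}}_0-\frac{1}{K}\sum_k\nabla_y f^{(k)}(g^{(k)}(\mathbf{x}_0^{(k)}),\mathbf{y}_0^{(k)})\|^2\le 2L_f^2\sigma_g^2+2\sigma_f^2$ (the bias from $\mathbf{h}_0^{(k)}=g^{(k)}(\mathbf{x}_0^{(k)};\xi_0^{(k)})$ via Assumption~\ref{assumption_bound_variance} plus the sampling variance, with the $1/K$ dropped for an upper bound) supplies the leading $\tfrac{2L_f^2\sigma_g^2+2\sigma_f^2}{\beta_y\eta T}$ term.

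I expect the main obstacle to be the bookkeeping. The difficulty is keeping three qualitatively distinct residual families cleanly separated: the averaged momenta $\|\bar{\mathbf{u}}_t\|^2,\|\bar{\mathbf{v}}_t\|^2$ and the moving-average error, which must be left un-expanded so they can be absorbed later in the master descent inequality, versus the consensus errors, which must be fully expanded into explicit $p^{j}\eta^{j}$ monomials via Lemmas~\ref{lemma:consensus_u}--\ref{lemma:consensus_h}. Every constant has to be tracked faithfully, since a single misplaced factor of $C_g$, $L_f$, or $p$ would break the downstream cancellations that ultimately deliver the linear speedup.
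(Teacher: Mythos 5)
Your proposal reproduces the paper's own proof essentially step for step: the same momentum-error recursion with the conditionally zero-mean sampling noise split off into a $\beta_y^2\eta^2\sigma_f^2/K$ term, the same Young's-inequality contraction with $a=\beta_y\eta/(1-\beta_y\eta)$, the same treatment of the increment via $L_f$-smoothness and Lemmas~\ref{lemma:consensus_u}--\ref{lemma:consensus_v}, the same three-point bias decomposition routed through $\frac{1}{K}\sum_{k'}g^{(k')}(\mathbf{x}_t^{(k')})$ and $\bar{\mathbf{h}}_t$ using Lemmas~\ref{lemma:consensus_g},~\ref{lemma:consensus_h} and the one-step recursion~(\ref{eq:g_var}), and the same telescoping with the initial-error bound $2L_f^2\sigma_g^2+2\sigma_f^2$. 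It is correct and takes the same route as the paper.
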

\begin{proof}
	\begin{equation}
		\begin{aligned}
			& \quad \mathbb{E}\Big[\Big\|\frac{1}{K}\sum_{k=1}^{K}\nabla_y f^{(k)}(g^{(k)}(\mathbf{x}_{t}^{(k)}), \mathbf{y}_{t}^{(k)}) - \frac{1}{K}\sum_{k=1}^{K} \mathbf{v}_{t}^{(k)} \Big\|^2\Big] \\
			& = \mathbb{E}\Big[\Big\|(1-\beta_y\eta)\Big(\frac{1}{K}\sum_{k=1}^{K}\nabla_y f^{(k)}(g^{(k)}(\mathbf{x}_{t-1}^{(k)}), \mathbf{y}_{t-1}^{(k)}) - \frac{1}{K}\sum_{k=1}^{K} \mathbf{v}_{t-1}^{(k)}\Big) \\
			& \quad + (1-\beta_y\eta)\Big(\frac{1}{K}\sum_{k=1}^{K}\nabla_y f^{(k)}(g^{(k)}(\mathbf{x}_{t}^{(k)}), \mathbf{y}_{t}^{(k)}) - \frac{1}{K}\sum_{k=1}^{K}\nabla_y f^{(k)}(g^{(k)}(\mathbf{x}_{t-1}^{(k)}), \mathbf{y}_{t-1}^{(k)}) \Big) \\
			& \quad + \beta_y\eta \Big(\frac{1}{K}\sum_{k=1}^{K}\nabla_y f^{(k)}(g^{(k)}(\mathbf{x}_{t}^{(k)}), \mathbf{y}_{t}^{(k)})- \frac{1}{K}\sum_{k=1}^{K}\nabla_y f^{(k)}(\mathbf{h}_{t}^{(k)}, \mathbf{y}_{t}^{(k)})\\
			& \quad\quad  + \frac{1}{K}\sum_{k=1}^{K}\nabla_y f^{(k)}(\mathbf{h}_{t}^{(k)}, \mathbf{y}_{t}^{(k)})- \frac{1}{K}\sum_{k=1}^{K}\nabla_{y}  f^{(k)}(\mathbf{h}_{t}^{(k)}, \mathbf{y}_{t}^{(k)}; \zeta_{t}^{(k)})\Big)\Big\|^2\Big] \\
			& \leq (1-\beta_y\eta)^2(1+a)\mathbb{E}\Big[\Big\|\frac{1}{K}\sum_{k=1}^{K}\nabla_y f^{(k)}(g^{(k)}(\mathbf{x}_{t-1}^{(k)}), \mathbf{y}_{t-1}^{(k)}) - \frac{1}{K}\sum_{k=1}^{K} \mathbf{v}_{t-1}^{(k)}\Big\|^2\Big]\\
			& \quad + 2(1-\beta_y\eta)^2(1+\frac{1}{a})\mathbb{E}\Big[\Big\|\frac{1}{K}\sum_{k=1}^{K}\nabla_y f^{(k)}(g^{(k)}(\mathbf{x}_{t}^{(k)}), \mathbf{y}_{t}^{(k)}) - \frac{1}{K}\sum_{k=1}^{K}\nabla_y f^{(k)}(g^{(k)}(\mathbf{x}_{t-1}^{(k)}), \mathbf{y}_{t-1}^{(k)}) \Big\|^2\Big]\\
			& \quad + 2\beta_y^2\eta^2  (1+\frac{1}{a})\mathbb{E}\Big[\Big\|\frac{1}{K}\sum_{k=1}^{K}\nabla_y f^{(k)}(g^{(k)}(\mathbf{x}_{t}^{(k)}), \mathbf{y}_{t}^{(k)})- \frac{1}{K}\sum_{k=1}^{K}\nabla_y f^{(k)}(\mathbf{h}_{t}^{(k)}, \mathbf{y}_{t}^{(k)})\Big\|^2\Big] \\
			& \quad + \beta_y^2\eta^2 \mathbb{E}\Big[\Big\|\frac{1}{K}\sum_{k=1}^{K}\nabla_y f^{(k)}(\mathbf{h}_{t}^{(k)}, \mathbf{y}_{t}^{(k)})- \frac{1}{K}\sum_{k=1}^{K}\nabla_{y}  f^{(k)}(\mathbf{h}_{t}^{(k)}, \mathbf{y}_{t}^{(k)}; \zeta_{t}^{(k)})\Big\|^2\Big] \\
			& \leq (1-\beta_y\eta)\mathbb{E}\Big[\Big\|\frac{1}{K}\sum_{k=1}^{K}\nabla_y f^{(k)}(g^{(k)}(\mathbf{x}_{t-1}^{(k)}), \mathbf{y}_{t-1}^{(k)}) - \frac{1}{K}\sum_{k=1}^{K} \mathbf{v}_{t-1}^{(k)}\Big\|^2\Big]\\
			& \quad + \frac{2}{\beta_y\eta}\underbrace{\mathbb{E}\Big[\Big\|\frac{1}{K}\sum_{k=1}^{K}\nabla_y f^{(k)}(g^{(k)}(\mathbf{x}_{t}^{(k)}), \mathbf{y}_{t}^{(k)}) - \frac{1}{K}\sum_{k=1}^{K}\nabla_y f^{(k)}(g^{(k)}(\mathbf{x}_{t-1}^{(k)}), \mathbf{y}_{t-1}^{(k)}) \Big\|^2\Big]}_{T_1}\\
			& \quad + 2\beta_y\eta\underbrace{\mathbb{E}\Big[\Big\|\frac{1}{K}\sum_{k=1}^{K}\nabla_y f^{(k)}(g^{(k)}(\mathbf{x}_{t}^{(k)}), \mathbf{y}_{t}^{(k)})- \frac{1}{K}\sum_{k=1}^{K}\nabla_y f^{(k)}(\mathbf{h}_{t}^{(k)}, \mathbf{y}_{t}^{(k)})\Big\|^2\Big] }_{T_2} +  \frac{\beta_y^2\eta^2\sigma_f^2}{K} \ . \\
		\end{aligned}
	\end{equation}
	Then, for $T_1$, we can get
	\begin{equation}
		\begin{aligned}
			& T_1 \leq L_f^2\frac{1}{K}\sum_{k=1}^{K}\mathbb{E}\Big[\Big\|g^{(k)}(\mathbf{x}_{t}^{(k)})-g^{(k)}(\mathbf{x}_{t-1}^{(k)})\Big\|^2\Big] + L_f^2\frac{1}{K}\sum_{k=1}^{K}\mathbb{E}\Big[\Big\|\mathbf{y}_{t}^{(k)} -\mathbf{y}_{t-1}^{(k)} \Big\|^2\Big]\\
			& \leq L_f^2C_g^2\frac{1}{K}\sum_{k=1}^{K}\mathbb{E}\Big[\Big\|\mathbf{x}_{t}^{(k)}-\mathbf{x}_{t-1}^{(k)}\Big\|^2\Big] + L_f^2\frac{1}{K}\sum_{k=1}^{K}\mathbb{E}\Big[\Big\|\mathbf{y}_{t}^{(k)} -\mathbf{y}_{t-1}^{(k)} \Big\|^2\Big]\\
			& \leq 2\gamma_x^2\eta^2L_f^2C_g^2\frac{1}{K}\sum_{k=1}^{K}\mathbb{E}\Big[\Big\|\mathbf{u}_{t-1}^{(k)}-\bar{\mathbf{u}}_{t-1}\Big\|^2\Big] + 2\gamma_x^2\eta^2L_f^2C_g^2\mathbb{E}[\|\bar{\mathbf{u}}_{t-1}\|^2]\\
			& \quad  + 2\gamma_y^2\eta^2L_f^2\frac{1}{K}\sum_{k=1}^{K}\mathbb{E}\Big[\Big\|\mathbf{v}_{t-1}^{(k)} -\bar{\mathbf{v}}_{t-1} \Big\|^2\Big]+2\gamma_y^2\eta^2L_f^2\mathbb{E}[\|\bar{\mathbf{v}}_{t-1}\|^2] \ , \\
            & \leq 12\gamma_x^2p^2\beta_x^2\eta^4L_f^2C_g^4C_f^2 + 2\gamma_x^2\eta^2L_f^2C_g^2\mathbb{E}[\|\bar{\mathbf{u}}_{t-1}\|^2]\\
            & \quad + 6912\beta_y^2\gamma_y^2\alpha^2p^4\eta^6C_g^2L_f^4 \sigma_g^2 +  82944\beta_y^2\gamma_y^2\alpha^2\gamma_x^2  \beta_x^2p^8\eta^{10}C_g^6C_f^2L_f^4  +  192\beta_y^2\gamma_y^2p^2\eta^4L_f^2  \sigma_f^2 +2\gamma_y^2\eta^4L_f^2\mathbb{E}[\|\bar{\mathbf{v}}_{t-1}\|^2] \ , \\
		\end{aligned}
	\end{equation}
	where these inequalities hold due to Assumptions~\ref{assumption_smooth}-\ref{assumption_bound_gradient}.
	
	As for $T_2$, we can get
 \small{
	\begin{equation}
		\begin{aligned}
			& T_2 = \mathbb{E}\Big[\Big\|\frac{1}{K}\sum_{k=1}^{K}\nabla_y f^{(k)}(g^{(k)}(\mathbf{x}_{t}^{(k)}), \mathbf{y}_{t}^{(k)})- \frac{1}{K}\sum_{k=1}^{K}\nabla_y f^{(k)}(\frac{1}{K}\sum_{k'=1}^{K}g^{(k')}(\mathbf{x}_{t}^{(k')}), \mathbf{y}_{t}^{(k)}) \\
			& \quad + \frac{1}{K}\sum_{k=1}^{K}\nabla_y f^{(k)}(\frac{1}{K}\sum_{k'=1}^{K}g^{(k')}(\mathbf{x}_{t}^{(k')}), \mathbf{y}_{t}^{(k)}) - \frac{1}{K}\sum_{k=1}^{K}\nabla_y f^{(k)}(\frac{1}{K}\sum_{k'=1}^{K}\mathbf{h}_{t}^{(k')}, \mathbf{y}_{t}^{(k)}) \\
			& \quad + \frac{1}{K}\sum_{k=1}^{K}\nabla_y f^{(k)}(\frac{1}{K}\sum_{k'=1}^{K}\mathbf{h}_{t}^{(k')}, \mathbf{y}_{t}^{(k)}) - \frac{1}{K}\sum_{k=1}^{K}\nabla_y f^{(k)}(\mathbf{h}_{t}^{(k)}, \mathbf{y}_{t}^{(k)})\Big\|^2\Big] \\
			& \leq 	3\mathbb{E}\Big[\Big\|\frac{1}{K}\sum_{k=1}^{K}\nabla_y f^{(k)}(g^{(k)}(\mathbf{x}_{t}^{(k)}), \mathbf{y}_{t}^{(k)})- \frac{1}{K}\sum_{k=1}^{K}\nabla_y f^{(k)}(\frac{1}{K}\sum_{k'=1}^{K}g^{(k')}(\mathbf{x}_{t}^{(k')}), \mathbf{y}_{t}^{(k)}) \Big\|^2\Big] \\
			& \quad + 3\mathbb{E}\Big[\Big\|\frac{1}{K}\sum_{k=1}^{K}\nabla_y f^{(k)}(\frac{1}{K}\sum_{k'=1}^{K}g^{(k')}(\mathbf{x}_{t}^{(k')}), \mathbf{y}_{t}^{(k)}) - \frac{1}{K}\sum_{k=1}^{K}\nabla_y f^{(k)}(\frac{1}{K}\sum_{k'=1}^{K}\mathbf{h}_{t}^{(k')}, \mathbf{y}_{t}^{(k)}) \Big\|^2\Big] \\
			& \quad +3 \mathbb{E}\Big[\Big\|\frac{1}{K}\sum_{k=1}^{K}\nabla_y f^{(k)}(\frac{1}{K}\sum_{k'=1}^{K}\mathbf{h}_{t}^{(k')}, \mathbf{y}_{t}^{(k)}) - \frac{1}{K}\sum_{k=1}^{K}\nabla_y f^{(k)}(\mathbf{h}_{t}^{(k)}, \mathbf{y}_{t}^{(k)})\Big\|^2\Big] \\
			& \leq 	3L_f^2\frac{1}{K}\sum_{k=1}^{K}\mathbb{E}\Big[\Big\|g^{(k)}(\mathbf{x}_{t}^{(k)})- \frac{1}{K}\sum_{k'=1}^{K}g^{(k')}(\mathbf{x}_{t}^{(k')}) \Big\|^2\Big] + 3L_f^2\mathbb{E}\Big[\Big\|\frac{1}{K}\sum_{k=1}^{K}g^{(k)}(\mathbf{x}_{t}^{(k)}) - \frac{1}{K}\sum_{k=1}^{K}\mathbf{h}_{t}^{(k)} \Big\|^2\Big] \\
			& \quad +3 L_f^2\frac{1}{K}\sum_{k=1}^{K}\mathbb{E}\Big[\Big\|\frac{1}{K}\sum_{k'=1}^{K}\mathbf{h}_{t}^{(k')}- \mathbf{h}_{t}^{(k)}\Big\|^2\Big] \\
			& \leq  72\gamma_x^2  \beta_x^2p^4\eta^4 C_g^4C_f^2 L_f^2  + 108 \alpha^2p^2\eta^2C_g^2L_f^2\sigma_g^2 +   1296\alpha^2\gamma_x^2  \beta_x^2p^6\eta^6C_g^6C_f^2 L_f^2\\
            & \quad + 3L_f^2 \mathbb{E}\Big[\Big\|\bar{\mathbf{h}}_{t-1} - \frac{1}{K}\sum_{k=1}^{K}g^{(k)}({\mathbf{x}}^{(k)}_{t-1}) \Big\|^2\Big]  +  \frac{6 \eta L_f^2 \gamma_x^2 C_g^2}{\alpha}\mathbb{E}[\| \bar{\mathbf{u}}_{t-1}  \|^2]  + \frac{  36 \gamma_x^2\beta_x^2p^2\eta^3 C_g^4C_f^2L_f^2}{\alpha} + \frac{3\alpha^2\eta^2\sigma_g^2L_f^2}{K} \\
            & \leq  72\gamma_x^2  \beta_x^2p^4\eta^4 C_g^4C_f^2 L_f^2  + 108 \alpha^2p^2\eta^2C_g^2L_f^2\sigma_g^2 +   1296\alpha^2\gamma_x^2  \beta_x^2p^6\eta^6C_g^6C_f^2 L_f^2\\
			& \quad + 3L_f^2 \mathbb{E}\Big[\Big\|\bar{\mathbf{h}}_{t-1} - \frac{1}{K}\sum_{k=1}^{K}g^{(k)}({\mathbf{x}}^{(k)}_{t-1}) \Big\|^2\Big]  +  \frac{6 L_f^2 \gamma_x^2 C_g^2}{\alpha}\mathbb{E}[\| \bar{\mathbf{u}}_{t-1}  \|^2]  + \frac{  36 \gamma_x^2\beta_x^2p^2\eta^2 C_g^4C_f^2L_f^2}{\alpha} + \frac{3\alpha^2\eta\sigma_g^2L_f^2}{K}  \ , \\
		\end{aligned}
	\end{equation}
 }
	where the last step holds due to Lemmas~\ref{lemma:consensus_g},~\ref{lemma:consensus_h}, and Eq.~(\ref{eq:g_var}),  the last step holds due to $\eta<1$. 
 
	Then, combining $T_1$, $T_2$,  Lemma~\ref{lemma:consensus_u} and Lemma~\ref{lemma:consensus_v}, we can get
	\begin{equation}
		\begin{aligned}
			& \quad \mathbb{E}\Big[\Big\|\frac{1}{K}\sum_{k=1}^{K}\nabla_y f^{(k)}(g^{(k)}(\mathbf{x}_{t}^{(k)}), \mathbf{y}_{t}^{(k)}) - \frac{1}{K}\sum_{k=1}^{K} \mathbf{v}_{t}^{(k)} \Big\|^2\Big] \\
            & \leq (1-\beta_y\eta)\mathbb{E}\Big[\Big\|\frac{1}{K}\sum_{k=1}^{K}\nabla_y f^{(k)}(g^{(k)}(\mathbf{x}_{t-1}^{(k)}), \mathbf{y}_{t-1}^{(k)}) - \frac{1}{K}\sum_{k=1}^{K} \mathbf{v}_{t-1}^{(k)}\Big\|^2\Big] + 6\beta_y\eta L_f^2 \mathbb{E}\Big[\Big\|\bar{\mathbf{h}}_{t-1} - \frac{1}{K}\sum_{k=1}^{K}g^{(k)}({\mathbf{x}}^{(k)}_{t-1}) \Big\|^2\Big] \\
            &  \quad +  \frac{12\beta_y \eta L_f^2 \gamma_x^2 C_g^2}{\alpha}\mathbb{E}[\| \bar{\mathbf{u}}_{t-1}  \|^2]  + \frac{4\gamma_x^2\eta L_f^2C_g^2}{\beta_y}\mathbb{E}[\|\bar{\mathbf{u}}_{t-1}\|^2] + \frac{4\gamma_y^2\eta L_f^2}{\beta_y}\mathbb{E}[\|\bar{\mathbf{v}}_{t-1}\|^2] \\
            & \quad + 144\beta_y\gamma_x^2  \beta_x^2p^4\eta^5 C_g^4C_f^2 L_f^2  + 216\beta_y \alpha^2p^2\eta^3C_g^2L_f^2\sigma_g^2 +   2592\beta_y\alpha^2\gamma_x^2  \beta_x^2p^6\eta^7C_g^6C_f^2 L_f^2  \\
            & \quad + 13824\beta_y\gamma_y^2\alpha^2p^4\eta^5C_g^2L_f^4 \sigma_g^2 +  165888\beta_y\gamma_y^2\alpha^2\gamma_x^2  \beta_x^2p^8\eta^{9}C_g^6C_f^2L_f^4  +  384\beta_y\gamma_y^2p^2\eta^3L_f^2  \sigma_f^2  \\
			& \quad + \frac{72\beta_y \gamma_x^2\beta_x^2p^2\eta^3 C_g^4C_f^2L_f^2}{\alpha} + \frac{6\beta_y\alpha^2\eta^2\sigma_g^2L_f^2}{K} + \frac{\beta_y^2\eta^2\sigma_f^2}{K} +  \frac{24\gamma_x^2p^2\beta_x^2\eta^3L_f^2C_g^4C_f^2}{\beta_y}  \ . \\
		\end{aligned}
	\end{equation}

	It can be reformulated as 
	\begin{equation}
		\begin{aligned}
			& \quad \beta_y\eta\mathbb{E}\Big[\Big\|\frac{1}{K}\sum_{k=1}^{K}\nabla_y f^{(k)}(g^{(k)}(\mathbf{x}_{t}^{(k)}), \mathbf{y}_{t}^{(k)}) - \frac{1}{K}\sum_{k=1}^{K} \mathbf{v}_{t}^{(k)} \Big\|^2\Big] \\
			& \leq \mathbb{E}\Big[\Big\|\frac{1}{K}\sum_{k=1}^{K}\nabla_y f^{(k)}(g^{(k)}(\mathbf{x}_{t}^{(k)}), \mathbf{y}_{t}^{(k)}) - \frac{1}{K}\sum_{k=1}^{K} \mathbf{v}_{t}^{(k)}\Big\|^2\Big]  - \mathbb{E}\Big[\Big\|\frac{1}{K}\sum_{k=1}^{K}\nabla_y f^{(k)}(g^{(k)}(\mathbf{x}_{t+1}^{(k)}), \mathbf{y}_{t+1}^{(k)}) - \frac{1}{K}\sum_{k=1}^{K} \mathbf{v}_{t+1}^{(k)}\Big\|^2\Big] \\
            & \quad + 6\beta_y\eta L_f^2 \mathbb{E}\Big[\Big\|\bar{\mathbf{h}}_{t} - \frac{1}{K}\sum_{k=1}^{K}g^{(k)}({\mathbf{x}}^{(k)}_{t}) \Big\|^2\Big] +  \frac{12\beta_y \eta L_f^2 \gamma_x^2 C_g^2}{\alpha}\mathbb{E}[\| \bar{\mathbf{u}}_{t}  \|^2]  + \frac{4\gamma_x^2\eta L_f^2C_g^2}{\beta_y}\mathbb{E}[\|\bar{\mathbf{u}}_{t}\|^2] + \frac{4\gamma_y^2\eta L_f^2}{\beta_y}\mathbb{E}[\|\bar{\mathbf{v}}_{t}\|^2] \\
            & \quad + 144\beta_y\gamma_x^2  \beta_x^2p^4\eta^5 C_g^4C_f^2 L_f^2  + 216\beta_y \alpha^2p^2\eta^3C_g^2L_f^2\sigma_g^2 +   2592\beta_y\alpha^2\gamma_x^2  \beta_x^2p^6\eta^7C_g^6C_f^2 L_f^2 \\
            & \quad + 13824\beta_y\gamma_y^2\alpha^2p^4\eta^5C_g^2L_f^4 \sigma_g^2 +  165888\beta_y\gamma_y^2\alpha^2\gamma_x^2  \beta_x^2p^8\eta^{9}C_g^6C_f^2L_f^4  +  384\beta_y\gamma_y^2p^2\eta^3L_f^2  \sigma_f^2  \\
			& \quad + \frac{72\beta_y \gamma_x^2\beta_x^2p^2\eta^3 C_g^4C_f^2L_f^2}{\alpha} + \frac{6\beta_y\alpha^2\eta^2\sigma_g^2L_f^2}{K} + \frac{\beta_y^2\eta^2\sigma_f^2}{K} +  \frac{24\gamma_x^2p^2\beta_x^2\eta^3L_f^2C_g^4C_f^2}{\beta_y} \ .  \\
		\end{aligned}
	\end{equation}
	By summing over $t$ from $0$ to $T-1$, we can get
	\begin{equation}
		\begin{aligned}
			& \quad \frac{1}{T}\sum_{t=0}^{T-1}\mathbb{E}\Big[\Big\|\frac{1}{K}\sum_{k=1}^{K}\nabla_y f^{(k)}(g^{(k)}(\mathbf{x}_{t}^{(k)}), \mathbf{y}_{t}^{(k)}) - \frac{1}{K}\sum_{k=1}^{K} \mathbf{v}_{t}^{(k)} \Big\|^2\Big] \\
			& \leq \frac{1}{\beta_y\eta T}\mathbb{E}\Big[\Big\|\frac{1}{K}\sum_{k=1}^{K}\nabla_y f^{(k)}(g^{(k)}(\mathbf{x}_{0}^{(k)}), \mathbf{y}_{0}^{(k)}) - \frac{1}{K}\sum_{k=1}^{K} \mathbf{v}_{0}^{(k)}\Big\|^2\Big]    + 6L_f^2\frac{1}{T}\sum_{t=0}^{T-1}\mathbb{E}\Big[\Big\|\bar{\mathbf{h}}_{t} - \frac{1}{K}\sum_{k=1}^{K}g^{(k)}({\mathbf{x}}^{(k)}_{t}) \Big\|^2\Big]\\
			& \quad   +  \frac{12L_f^2 \gamma_x^2 C_g^2}{\alpha}\frac{1}{T}\sum_{t=0}^{T-1}\mathbb{E}[\| \bar{\mathbf{u}}_{t}  \|^2]   + \frac{4\gamma_x^2 L_f^2C_g^2}{\beta_y^2}\frac{1}{T}\sum_{t=0}^{T-1}\mathbb{E}[\|\bar{\mathbf{u}}_{t}\|^2]+\frac{4\gamma_y^2 L_f^2}{\beta_y^2}\frac{1}{T}\sum_{t=0}^{T-1}\mathbb{E}[\|\bar{\mathbf{v}}_{t}\|^2]\\
			& \quad + 144\gamma_x^2\beta_x^2p^4\eta^4 C_g^4C_f^2 L_f^2  + 216 \alpha^2p^2\eta^2C_g^2L_f^2\sigma_g^2 +   2592\alpha^2\gamma_x^2  \beta_x^2p^6\eta^6C_g^6C_f^2 L_f^2 \\
            & \quad + 13824\gamma_y^2\alpha^2p^4\eta^4C_g^2L_f^4 \sigma_g^2 +  165888\gamma_y^2\alpha^2\gamma_x^2  \beta_x^2p^8\eta^{8}C_g^6C_f^2L_f^4  +  384\gamma_y^2p^2\eta^2L_f^2  \sigma_f^2  \\
			& \quad + \frac{72\gamma_x^2\beta_x^2p^2\eta^2 C_g^4C_f^2L_f^2}{\alpha} + \frac{6\alpha^2\eta\sigma_g^2L_f^2}{K} + \frac{\beta_y\eta\sigma_f^2}{K} +  \frac{24\gamma_x^2p^2\beta_x^2\eta^2L_f^2C_g^4C_f^2}{\beta_y^2} \\
            & \leq \frac{2L_f^2 \sigma_g^2  + 2\sigma_f^2 }{\beta_y\eta T}  + 6L_f^2\frac{1}{T}\sum_{t=0}^{T-1}\mathbb{E}\Big[\Big\|\bar{\mathbf{h}}_{t} - \frac{1}{K}\sum_{k=1}^{K}g^{(k)}({\mathbf{x}}^{(k)}_{t}) \Big\|^2\Big]\\
			& \quad   +  \frac{12L_f^2 \gamma_x^2 C_g^2}{\alpha}\frac{1}{T}\sum_{t=0}^{T-1}\mathbb{E}[\| \bar{\mathbf{u}}_{t}  \|^2]   + \frac{4\gamma_x^2 L_f^2C_g^2}{\beta_y^2}\frac{1}{T}\sum_{t=0}^{T-1}\mathbb{E}[\|\bar{\mathbf{u}}_{t}\|^2]+\frac{4\gamma_y^2 L_f^2}{\beta_y^2}\frac{1}{T}\sum_{t=0}^{T-1}\mathbb{E}[\|\bar{\mathbf{v}}_{t}\|^2]\\
			& \quad + 144\gamma_x^2\beta_x^2p^4\eta^4 C_g^4C_f^2 L_f^2  + 216 \alpha^2p^2\eta^2C_g^2L_f^2\sigma_g^2 +   2592\alpha^2\gamma_x^2  \beta_x^2p^6\eta^6C_g^6C_f^2 L_f^2  \\
            & \quad + 13824\gamma_y^2\alpha^2p^4\eta^4C_g^2L_f^4 \sigma_g^2 +  165888\gamma_y^2\alpha^2\gamma_x^2  \beta_x^2p^8\eta^{8}C_g^6C_f^2L_f^4  +  384\gamma_y^2p^2\eta^2L_f^2  \sigma_f^2  \\
			& \quad + \frac{72\gamma_x^2\beta_x^2p^2\eta^2 C_g^4C_f^2L_f^2}{\alpha} + \frac{6\alpha^2\eta\sigma_g^2L_f^2}{K} + \frac{\beta_y\eta\sigma_f^2}{K} +  \frac{24\gamma_x^2p^2\beta_x^2\eta^2L_f^2C_g^4C_f^2}{\beta_y^2} \ .  \\
		\end{aligned}
	\end{equation}
	In addition, we have
	\begin{equation}
		\begin{aligned}
			& \quad \mathbb{E}\Big[\Big\|\frac{1}{K}\sum_{k=1}^{K}\nabla_y f^{(k)}(g^{(k)}(\mathbf{x}_{0}^{(k)}), \mathbf{y}_{0}^{(k)}) - \frac{1}{K}\sum_{k=1}^{K} \mathbf{v}_{0}^{(k)}\Big\|^2\Big]   \\
			& =  \mathbb{E}\Big[\Big\|\frac{1}{K}\sum_{k=1}^{K}\nabla_y f^{(k)}(g^{(k)}(\mathbf{x}_{0}^{(k)}), \mathbf{y}_{0}^{(k)}) - \frac{1}{K}\sum_{k=1}^{K} \nabla_{y}  f^{(k)}(g^{(k)}(\mathbf{x}_{ 0}^{(k)};  \xi_{ 0}^{(k)}), \mathbf{y}_{0}^{(k)}; \zeta_{0}^{(k)})\Big\|^2\Big]   \\
			& =  \mathbb{E}\Big[\Big\|\frac{1}{K}\sum_{k=1}^{K}\nabla_y f^{(k)}(g^{(k)}(\mathbf{x}_{0}^{(k)}), \mathbf{y}_{0}^{(k)}) -\frac{1}{K}\sum_{k=1}^{K} \nabla_{y}  f^{(k)}(g^{(k)}(\mathbf{x}_{ 0}^{(k)};  \xi_{ 0}^{(k)}), \mathbf{y}_{0}^{(k)}) \\
			& \quad + \frac{1}{K}\sum_{k=1}^{K} \nabla_{y}  f^{(k)}(g^{(k)}(\mathbf{x}_{ 0}^{(k)};  \xi_{ 0}^{(k)}), \mathbf{y}_{0}^{(k)})- \frac{1}{K}\sum_{k=1}^{K} \nabla_{y}  f^{(k)}(g^{(k)}(\mathbf{x}_{ 0}^{(k)};  \xi_{ 0}^{(k)}), \mathbf{y}_{0}^{(k)}; \zeta_{0}^{(k)})\Big\|^2\Big]   \\
			& \leq 2L_f^2 \frac{1}{K}\sum_{k=1}^{K}\mathbb{E}\Big[\Big\|g^{(k)}(\mathbf{x}_{0}^{(k)})- g^{(k)}(\mathbf{x}_{ 0}^{(k)};  \xi_{ 0}^{(k)}) \Big\|^2\Big]   + 2\sigma_f^2   \\
			& \leq 2L_f^2 \sigma_g^2  + 2\sigma_f^2   \  , \\
		\end{aligned}
	\end{equation}
	which completes the proof. 
\end{proof}

\begin{lemma} \label{lemma:y_star}
			Given Assumption~\ref{assumption_smooth}-\ref{assumption_strong} and if $\gamma_y\leq \frac{1}{6L_f}$ and $\eta \leq 1$, we have
\begin{equation}
	\begin{aligned}
		&   \frac{1}{T}\sum_{t=0}^{T-1}\|\bar{\mathbf{y}}_{t}   - \mathbf{y}^{*}(\bar{\mathbf{x}}_t)\| ^2\leq \frac{4}{\eta\gamma_y\mu T} \|\bar{\mathbf{y}}_{0}   - \mathbf{y}^{*}(\bar{\mathbf{x}}_0)\| ^2 - \frac{3\gamma_y}{\mu }\frac{1}{T}\sum_{t=0}^{T-1} [\|\bar{\mathbf{v}}_t\|^2] + \frac{50\gamma_x^2 C_g^2L_f^2}{3\gamma_y^2\mu^4}\frac{1}{T}\sum_{t=0}^{T-1}[\| \bar{\mathbf{u}}_{t}  \|^2]   \\ 
        & \quad + \frac{50}{\mu^2}\frac{1}{T}\sum_{t=0}^{T-1}\Big[\Big\|\frac{1}{K}\sum_{k=1}^{K}\nabla_y f^{(k)}(g^{(k)}({\mathbf{x}}^{(k)}_t), {\mathbf{y}}^{(k)}_t)-\frac{1}{K}\sum_{k=1}^{K}{\mathbf{v}}^{(k)}_t\Big\|^2\Big]  + \frac{300 \gamma_x^2  \beta_x^2p^4\eta^4 C_g^4C_f^2L_f^2}{\mu^2} \\
        & \quad + \frac{100}{\mu^2} (1728\gamma_y^2\beta_y^2\alpha^2p^6\eta^6C_g^2L_f^4 \sigma_g^2 +  20736\gamma_y^2\beta_y^2\alpha^2\gamma_x^2  \beta_x^2p^{10}\eta^{10}C_g^6C_f^2L_f^4  +  48\gamma_y^2\beta_y^2p^4\eta^4 L_f^2 \sigma_f^2) \ .  \\
	\end{aligned}
\end{equation}
\end{lemma}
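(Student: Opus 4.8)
The plan is to derive a one-step contraction for the squared tracking error $a_t \triangleq \|\bar{\mathbf{y}}_t - \mathbf{y}^*(\bar{\mathbf{x}}_t)\|^2$ and then average it over $t$. First I would split off the moving target: inserting $\mathbf{y}^*(\bar{\mathbf{x}}_t)$ into $a_{t+1}$ and applying Young's inequality gives
\begin{equation}
a_{t+1} \le (1+c)\,\|\bar{\mathbf{y}}_{t+1}-\mathbf{y}^*(\bar{\mathbf{x}}_t)\|^2 + (1+\tfrac{1}{c})\,\|\mathbf{y}^*(\bar{\mathbf{x}}_{t+1})-\mathbf{y}^*(\bar{\mathbf{x}}_t)\|^2 .
\end{equation}
The drift term is controlled by the Lipschitz continuity of $\mathbf{y}^*$ (Lemma~\ref{lemma_y_opt_smooth}) together with $\bar{\mathbf{x}}_{t+1}-\bar{\mathbf{x}}_t=-\gamma_x\eta\bar{\mathbf{u}}_t$, producing a term proportional to $\frac{\gamma_x^2\eta^2 C_g^2 L_f^2}{\mu^2}\|\bar{\mathbf{u}}_t\|^2$; this is the origin of the $\bar{\mathbf{u}}_t$ contribution in the statement.

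Next I would expand $\|\bar{\mathbf{y}}_{t+1}-\mathbf{y}^*\|^2 = a_t + 2\gamma_y\eta\langle\bar{\mathbf{v}}_t,\bar{\mathbf{y}}_t-\mathbf{y}^*\rangle + \gamma_y^2\eta^2\|\bar{\mathbf{v}}_t\|^2$ via $\bar{\mathbf{y}}_{t+1}=\bar{\mathbf{y}}_t+\gamma_y\eta\bar{\mathbf{v}}_t$. Writing $G_t = \frac{1}{K}\sum_k\nabla_{\mathbf{y}}f^{(k)}(\frac{1}{K}\sum_{k'}g^{(k')}(\bar{\mathbf{x}}_t),\bar{\mathbf{y}}_t)$ for the exact global ascent direction, I decompose $\bar{\mathbf{v}}_t$ into $G_t$, a consensus error $\frac{1}{K}\sum_k\nabla_{\mathbf{y}}f^{(k)}(g^{(k)}(\mathbf{x}_t^{(k)}),\mathbf{y}_t^{(k)})-G_t$, and an estimation/momentum error $\bar{\mathbf{v}}_t-\frac{1}{K}\sum_k\nabla_{\mathbf{y}}f^{(k)}(g^{(k)}(\mathbf{x}_t^{(k)}),\mathbf{y}_t^{(k)})$. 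Since the global objective's $\mathbf{y}$-gradient vanishes at $\mathbf{y}^*$, Assumption~\ref{assumption_strong} gives $\langle G_t,\bar{\mathbf{y}}_t-\mathbf{y}^*\rangle\le -\mu a_t$, yielding the contraction factor $(1-c_1\gamma_y\eta\mu)$. The cross terms are handled by Young's inequalities: the momentum error becomes the symbolic $\frac{1}{\mu^2}\|\frac{1}{K}\sum_k\nabla_{\mathbf{y}}f^{(k)}(g^{(k)}(\mathbf{x}_t^{(k)}),\mathbf{y}_t^{(k)})-\bar{\mathbf{v}}_t\|^2$ term (to be bounded later by Lemma~\ref{lemma:variance_v}), while the consensus errors, through smoothness of $\nabla_{\mathbf{y}}f$ in both arguments (Assumption~\ref{assumption_smooth}), reduce to $L_f^2\|\mathbf{y}_t^{(k)}-\bar{\mathbf{y}}_t\|^2$ and $L_f^2\|g^{(k)}(\mathbf{x}_t^{(k)})-\bar g(\bar{\mathbf{x}}_t)\|^2$ pieces.

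I then assemble the recursion $a_{t+1}\le(1-c_1\gamma_y\eta\mu)a_t - c_2\gamma_y^2\eta\|\bar{\mathbf{v}}_t\|^2 + (\text{errors})_t$, where the negative $\|\bar{\mathbf{v}}_t\|^2$ survives because, for $\gamma_y\le\frac{1}{6L_f}$ and $\eta\le1$, the gradient-norm gain extracted from the strongly-concave descent dominates the positive $\gamma_y^2\eta^2\|\bar{\mathbf{v}}_t\|^2$ coming from the squared update. Telescoping, dividing by the effective rate $\approx\frac{\gamma_y\eta\mu}{4}$, and substituting the consensus bounds of Lemma~\ref{lemma:consensus_g}, Lemma~\ref{lemma:consensus_x}, and Lemma~\ref{lemma:consensus_y} (the last supplying the $\gamma_y^2\beta_y^2\alpha^2 p^6\eta^6$ and related polynomial terms) delivers the claimed inequality; in particular the $\frac{4}{\eta\gamma_y\mu T}\|\bar{\mathbf{y}}_0-\mathbf{y}^*(\bar{\mathbf{x}}_0)\|^2$ and $-\frac{3\gamma_y}{\mu}$ coefficients emerge directly from this normalization.

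The main obstacle is the bookkeeping that keeps the coefficient of $\|\bar{\mathbf{v}}_t\|^2$ negative: the Young's-inequality weights must be chosen so the $-\|\bar{\mathbf{v}}_t\|^2$ gain from the strongly-concave step is not canceled by the $\gamma_y^2\eta^2\|\bar{\mathbf{v}}_t\|^2$ update term or by the error splittings, which is precisely what the restriction $\gamma_y\le\frac{1}{6L_f}$ guarantees; preserving this negative term is what makes the lemma useful for absorbing the positive $\|\bar{\mathbf{v}}_t\|^2$ terms generated later by Lemma~\ref{lemma:variance_v}. A secondary difficulty, not present in standard federated minimax analyses, is that the ascent direction is built from the \emph{local} inner functions $g^{(k)}(\mathbf{x}_t^{(k)})$ whereas $\mathbf{y}^*$ is defined through the \emph{global} inner function $\frac{1}{K}\sum_{k'}g^{(k')}$; bridging this gap is exactly what forces both the $g$-consensus error (Lemma~\ref{lemma:consensus_g}) and the $\mathbf{h}$-based estimation error to appear in the final bound.
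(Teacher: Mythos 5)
Your proposal is correct and follows essentially the same route as the paper's proof: the same one-step contraction for $\|\bar{\mathbf{y}}_t-\mathbf{y}^{*}(\bar{\mathbf{x}}_t)\|^2$ (contraction factor $\approx 1-\gamma_y\eta\mu/4$, a surviving $-\frac{3\eta\gamma_y^2}{4}\|\bar{\mathbf{v}}_t\|^2$ term, and a $\|\bar{\mathbf{u}}_t\|^2$ drift term from the Lipschitzness of $\mathbf{y}^{*}$), the same three-way split of the ascent-direction error into $\mathbf{y}$-consensus, $g$/$\mathbf{x}$-consensus, and momentum-tracking pieces bounded by the consensus lemmas, and the same telescoping and normalization by $\gamma_y\eta\mu T/4$. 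The only difference is presentational: the paper imports the one-step inequality wholesale from Lemma~5 of \cite{gao2021convergence}, whereas you re-derive it inline via Young's inequality, Lemma~\ref{lemma_y_opt_smooth}, and strong concavity combined with smoothness (note it is this combination, not strong concavity alone, that yields the negative gradient-norm term which $\gamma_y\leq\frac{1}{6L_f}$ keeps dominant) --- which is precisely what that cited lemma encapsulates.
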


\begin{proof}
	\begin{equation}
		\begin{aligned}
			&  \quad \|\bar{\mathbf{y}}_{t+1} - \mathbf{y}^{*}(\bar{\mathbf{x}}_{t+1})\| ^2\\
			&  \leq  (1-\frac{\eta\gamma_y\mu}{4}) \|\bar{\mathbf{y}}_{t}   - \mathbf{y}^{*}(\bar{\mathbf{x}}_t)\| ^2 - \frac{3\eta\gamma_y^2}{4} \|\bar{\mathbf{v}}_t\|^2   + \frac{25\eta\gamma_x^2 C_g^2L_f^2}{6\gamma_y\mu^3}\| \bar{\mathbf{u}}_{t} \| ^2 +  \frac{25\eta \gamma_y}{6\mu} \|\nabla_y f(g(\bar{\mathbf{x}}_t), \bar{\mathbf{y}}_t)-\bar{\mathbf{v}}_t\|^2    \\
			& \leq (1-\frac{\eta\gamma_y\mu}{4}) \|\bar{\mathbf{y}}_{t}   - \mathbf{y}^{*}(\bar{\mathbf{x}}_t)\| ^2 - \frac{3\eta\gamma_y^2}{4} \|\bar{\mathbf{v}}_t\|^2  +  \frac{25\eta\gamma_x^2 C_g^2L_f^2}{6\gamma_y\mu^3}\| \bar{\mathbf{u}}_{t} \| ^2 \\
			& \quad  + \frac{25\eta \gamma_y}{6\mu} \Big[\Big\|\frac{1}{K}\sum_{k=1}^{K}\nabla_y f^{(k)}(\frac{1}{K}\sum_{k'=1}^{K}g^{(k')}(\bar{\mathbf{x}}_t), \bar{\mathbf{y}}_t)- \frac{1}{K}\sum_{k=1}^{K}\nabla_y f^{(k)}(g^{(k)}(\bar{\mathbf{x}}_t), {\mathbf{y}}^{(k)}_t) \\
			& \quad \quad + \frac{1}{K}\sum_{k=1}^{K}\nabla_y f^{(k)}(g^{(k)}(\bar{\mathbf{x}}_t), {\mathbf{y}}^{(k)}_t)- \frac{1}{K}\sum_{k=1}^{K}\nabla_y f^{(k)}(g^{(k)}({\mathbf{x}}^{(k)}_t), {\mathbf{y}}^{(k)}_t)\\
			& \quad \quad + \frac{1}{K}\sum_{k=1}^{K}\nabla_y f^{(k)}(g^{(k)}({\mathbf{x}}^{(k)}_t), {\mathbf{y}}^{(k)}_t)-\frac{1}{K}\sum_{k=1}^{K}{\mathbf{v}}^{(k)}_t\Big\|^2\Big]    \\
			& \leq (1-\frac{\eta\gamma_y\mu}{4}) \|\bar{\mathbf{y}}_{t}   - \mathbf{y}^{*}(\bar{\mathbf{x}}_t)\| ^2 - \frac{3\eta\gamma_y^2}{4} \|\bar{\mathbf{v}}_t\|^2 +  \frac{25\eta\gamma_x^2 C_g^2L_f^2}{6\gamma_y\mu^3}\| \bar{\mathbf{u}}_{t} \| ^2 \\
			& \quad  + \frac{25\eta \gamma_y}{2\mu} \Big[\Big\|\frac{1}{K}\sum_{k=1}^{K}\nabla_y f^{(k)}(\frac{1}{K}\sum_{k'=1}^{K}g^{(k')}(\bar{\mathbf{x}}_t), \bar{\mathbf{y}}_t)- \frac{1}{K}\sum_{k=1}^{K}\nabla_y f^{(k)}(g^{(k)}(\bar{\mathbf{x}}_t), {\mathbf{y}}^{(k)}_t) \Big\|^2\Big] \\
			& \quad  +  \frac{25\eta \gamma_y}{2\mu} \Big[\Big\|\frac{1}{K}\sum_{k=1}^{K}\nabla_y f^{(k)}(g^{(k)}(\bar{\mathbf{x}}_t), {\mathbf{y}}^{(k)}_t)- \frac{1}{K}\sum_{k=1}^{K}\nabla_y f^{(k)}(g^{(k)}({\mathbf{x}}^{(k)}_t), {\mathbf{y}}^{(k)}_t)\Big\|^2\Big] \\
			& \quad  +  \frac{25\eta \gamma_y}{2\mu} \Big[\Big\|\frac{1}{K}\sum_{k=1}^{K}\nabla_y f^{(k)}(g^{(k)}({\mathbf{x}}^{(k)}_t), {\mathbf{y}}^{(k)}_t)-\frac{1}{K}\sum_{k=1}^{K}{\mathbf{v}}^{(k)}_t\Big\|^2\Big]    \\
			& \leq (1-\frac{\eta\gamma_y\mu}{4})\|\bar{\mathbf{y}}_{t}   - \mathbf{y}^{*}(\bar{\mathbf{x}}_t)\| ^2 - \frac{3\eta\gamma_y^2}{4} \|\bar{\mathbf{v}}_t\|^2 +  \frac{25\eta\gamma_x^2 C_g^2L_f^2}{6\gamma_y\mu^3}\| \bar{\mathbf{u}}_{t} \| ^2  + \frac{25\eta \gamma_yL_f^2}{2\mu} \frac{1}{K}\sum_{k=1}^{K}\Big[\Big\| \bar{\mathbf{y}}_t- {\mathbf{y}}^{(k)}_t \Big\|^2\Big] \\
            & \quad +\frac{25\eta \gamma_yL_f^2C_g^2}{2\mu} \frac{1}{K}\sum_{k=1}^{K}\Big[\Big\|\bar{\mathbf{x}}_t-  {\mathbf{x}}^{(k)}_t \Big\|^2\Big]  + \frac{25\eta \gamma_y}{2\mu}\Big[\Big\|\frac{1}{K}\sum_{k=1}^{K}\nabla_y f^{(k)}(g^{(k)}({\mathbf{x}}^{(k)}_t), {\mathbf{y}}^{(k)}_t)-\frac{1}{K}\sum_{k=1}^{K}{\mathbf{v}}^{(k)}_t\Big\|^2\Big] \\
			& \leq (1-\frac{\eta\gamma_y\mu}{4}) [\|\bar{\mathbf{y}}_{t}   - \mathbf{y}^{*}(\bar{\mathbf{x}}_t)\| ^2] - \frac{3\eta\gamma_y^2}{4} [\|\bar{\mathbf{v}}_t\|^2] +  \frac{25\eta\gamma_x^2 C_g^2L_f^2}{6\gamma_y\mu^3}[\| \bar{\mathbf{u}}_{t} \| ^2 ]\\
            & \quad + \frac{25}{\mu} (1728\gamma_y^3\beta_y^2\alpha^2p^6\eta^7C_g^2L_f^4 \sigma_g^2 +  20736\gamma_y^3\beta_y^2\alpha^2\gamma_x^2  \beta_x^2p^{10}\eta^{11}C_g^6C_f^2L_f^4  +  48\gamma_y^3\beta_y^2p^4\eta^5 L_f^2 \sigma_f^2) \\
            & \quad  +\frac{75 \gamma_y\gamma_x^2  \beta_x^2p^4\eta^5 C_g^4C_f^2L_f^2}{\mu}  + \frac{25\eta \gamma_y}{2\mu}\Big[\Big\|\frac{1}{K}\sum_{k=1}^{K}\nabla_y f^{(k)}(g^{(k)}({\mathbf{x}}^{(k)}_t), {\mathbf{y}}^{(k)}_t)-\frac{1}{K}\sum_{k=1}^{K}{\mathbf{v}}^{(k)}_t\Big\|^2\Big]  \ ,   \\
		\end{aligned}
	\end{equation}
where the first step holds due to Lemma 5 in \cite{gao2021convergence}, the second step holds due to the homogeneous data distribution assumption, the second to last step holds due to Assumption~\ref{assumption_smooth} and Assumption~\ref{assumption_bound_gradient}, the last step holds due to Lemmas~\ref{lemma:consensus_x},~\ref{lemma:consensus_y}. We further reformulate it as follows:
	\begin{equation}
		\begin{aligned}
			& \quad \frac{\eta\gamma_y\mu}{4}\|\bar{\mathbf{y}}_{t}   - \mathbf{y}^{*}(\bar{\mathbf{x}}_t)\| ^2 \leq \|\bar{\mathbf{y}}_{t}   - \mathbf{y}^{*}(\bar{\mathbf{x}}_t)\| ^2 -\|\bar{\mathbf{y}}_{t+1}   - \mathbf{y}^{*}(\bar{\mathbf{x}}_{t+1})\| ^2  - \frac{3\eta\gamma_y^2}{4} [\|\bar{\mathbf{v}}_t\|^2]+  \frac{25\eta\gamma_x^2 C_g^2L_f^2}{6\gamma_y\mu^3}[\| \bar{\mathbf{u}}_{t} \| ^2 ] \\
            & \quad + \frac{25\eta \gamma_y}{2\mu}\Big[\Big\|\frac{1}{K}\sum_{k=1}^{K}\nabla_y f^{(k)}(g^{(k)}({\mathbf{x}}^{(k)}_t), {\mathbf{y}}^{(k)}_t)-\frac{1}{K}\sum_{k=1}^{K}{\mathbf{v}}^{(k)}_t\Big\|^2\Big] +\frac{75 \gamma_y\gamma_x^2  \beta_x^2p^4\eta^5 C_g^4C_f^2L_f^2}{\mu}\\
            & \quad + \frac{25}{\mu} (1728\gamma_y^3\beta_y^2\alpha^2p^6\eta^7C_g^2L_f^4 \sigma_g^2 +  20736\gamma_y^3\beta_y^2\alpha^2\gamma_x^2  \beta_x^2p^{10}\eta^{11}C_g^6C_f^2L_f^4  +  48\gamma_y^3\beta_y^2p^4\eta^5 L_f^2 \sigma_f^2) \ .   \\
		\end{aligned}
	\end{equation}
 By summing over $t$ from $0$ to $T-1$, we can complete the proof.
\end{proof}

Based on the aforementioned lemmas, we are ready to prove Theorem~\ref{th}.
\begin{proof}
At first, from Lemmas~\ref{lemma:consensus_r}, we can get
\begin{equation} 
		\begin{aligned}
			& \frac{\gamma_x\eta  }{2} \mathbb{E}[\|\nabla \Phi(\bar{\mathbf{x}}_{t})\|^2] \leq \mathbb{E}[\Phi(\bar{\mathbf{x}}_{t})] - \mathbb{E}[\Phi(\bar{\mathbf{x}}_{t+1}) ]-  \frac{\gamma_x\eta  }{4}\mathbb{E}[\|\bar{\mathbf{u}}_t\|^2]  + 3\gamma_x\eta {C}_g^2L_f^2 \mathbb{E}[\|\mathbf{y}_*(\bar{\mathbf{x}}_{t})-\bar{\mathbf{y}}_{t} \|^2]  \\
			& \quad + 12\gamma_x\eta(C_g^4L_f^2+ C_f^2L_g^2)\frac{1}{K}\sum_{k=1}^{K}\Big[\Big \|\bar{\mathbf{x}}_{t}- {\mathbf{x}}_{t}^{(k)} \Big \|^2 \Big] +   6\gamma_x\eta C_g^2L_f^2 \frac{1}{K}\sum_{k=1}^{K}\mathbb{E}\Big[\Big\|\bar{\mathbf{y}}_{t}-  {\mathbf{y}}_{t}^{(k)} \Big\|^2 \Big] \\
			& \quad + 3\gamma_x\eta  \mathbb{E}\Big[\Big\|\frac{1}{K}\sum_{k=1}^{K}\nabla_{\mathbf{x}} f^{(k)}(g^{(k)}({\mathbf{x}}_{t}^{(k)}), {\mathbf{y}}_{t}^{(k)}) -\bar{\mathbf{u}}_{t}\Big\|^2\Big]  \  .  \\
		\end{aligned}
	\end{equation}
By summing it over $t$ from $0$ to $T-1$, we can get
\small{
	\begin{equation} 
		\begin{aligned}
			& \frac{1}{T}\sum_{t=0}^{T-1} \mathbb{E}[\|\nabla \Phi(\bar{\mathbf{x}}_{t})\|^2] \leq \frac{2(\Phi(\bar{\mathbf{x}}_{0})-\Phi(\bar{\mathbf{x}}_{T})}{\gamma_x\eta T} -  \frac{1}{2}\frac{1}{T}\sum_{t=0}^{T-1}\mathbb{E}[\|\bar{\mathbf{u}}_t\|^2]\\  
            & \quad + 24(C_g^4L_f^2+ C_f^2L_g^2)\frac{1}{T}\sum_{t=0}^{T-1}\frac{1}{K}\sum_{k=1}^{K}\Big[\Big \|\bar{\mathbf{x}}_{t}- {\mathbf{x}}_{t}^{(k)} \Big \|^2 \Big] +  12 C_g^2L_f^2 \frac{1}{T}\sum_{t=0}^{T-1}\frac{1}{K}\sum_{k=1}^{K}\mathbb{E}\Big[\Big\|\bar{\mathbf{y}}_{t}-  {\mathbf{y}}_{t}^{(k)} \Big\|^2 \Big]\\
            & \quad + 6 \frac{1}{T}\sum_{t=0}^{T-1}\mathbb{E}\Big[\Big\|\frac{1}{K}\sum_{k=1}^{K}\nabla_{\mathbf{x}} f^{(k)}(g^{(k)}({\mathbf{x}}_{t}^{(k)}), {\mathbf{y}}_{t}^{(k)}) -\bar{\mathbf{u}}_{t}\Big\|^2\Big] 
            + 6C_g^2L_f^2 \frac{1}{T}\sum_{t=0}^{T-1} \mathbb{E}[\|\mathbf{y}_*(\bar{\mathbf{x}}_{t})-\bar{\mathbf{y}}_{t} \|^2]  \\
            & \leq \frac{2(\Phi(\bar{\mathbf{x}}_{0})-\Phi(\bar{\mathbf{x}}_{T})}{\gamma_x\eta T}
			  \ + \frac{24C_g^2L_f^2}{\gamma_y\eta\mu T} \|\bar{\mathbf{y}}_{0}   - \mathbf{y}^{*}(\bar{\mathbf{x}}_0)\| ^2 \\
            & \quad + ( \frac{300C_g^2L_f^2(1+6L_f^2)}{\mu^2}\frac{2\gamma_x^2C_g^2}{\alpha^2}+ \frac{300C_g^2L_f^2}{\mu^2}\frac{4\gamma_x^2 L_f^2C_g^2}{\beta_y^2} + \frac{300C_g^2L_f^2}{\mu^2} \frac{12L_f^2 \gamma_x^2 C_g^2}{\alpha} + \frac{100\gamma_x^2 C_g^4L_f^4}{\gamma_y^2\mu^4} \\
            & \quad \quad  + \frac{72\gamma_x^2 C_g^4}{\alpha} + \frac{48\gamma_x^2(C_g^4L_f^2+ C_f^2L_g^2)}{\beta_x^2} -  \frac{1}{2})\frac{1}{T}\sum_{t=0}^{T-1}\mathbb{E}[\|\bar{\mathbf{u}}_t\|^2]\\
            & \quad + ( \frac{300C_g^2L_f^2}{\mu^2}\frac{4\gamma_y^2 L_f^2}{\beta_y^2} +\frac{48\gamma_y^2 C_g^2L_f^2}{\beta_x^2} - \frac{18\gamma_y C_g^2L_f^2}{\mu } )\frac{1}{T}\sum_{t=0}^{T-1}\mathbb{E}[\|\bar{\mathbf{v}}_t\|^2]\\
            &  \quad+ \frac{300C_g^2L_f^2(1+6L_f^2)}{\mu^2} \frac{\sigma_g^2}{\alpha \eta TK}  + \frac{300C_g^2L_f^2(1+6L_f^2)}{\mu^2}\frac{12\gamma_x^2\beta_x^2p^2\eta^2 C_g^4C_f^2}{\alpha^2}  +\frac{300C_g^2L_f^2(1+6L_f^2)}{\mu^2}\frac{\alpha\eta \sigma_g^2}{K}\\
            &  \quad+ \frac{600(L_f^2 \sigma_g^2+\sigma_f^2)C_g^2L_f^2}{\beta_y\eta T\mu^2} + \frac{43200\gamma_x^2\beta_x^2p^4\eta^4 C_g^6C_f^2L_f^4}{\mu^2}  + \frac{64800 \alpha^2p^2\eta^2C_g^4L_f^6\sigma_g^2}{\mu^2} +   \frac{7776000\alpha^2\gamma_x^2  \beta_x^2p^6\eta^6C_g^8C_f^2 L_f^4}{\mu^2}\\
             & \quad + \frac{4147200\gamma_y^2\alpha^2p^4\eta^4C_g^4L_f^6 \sigma_g^2}{\mu^2}+  \frac{49766400\gamma_y^2\alpha^2\gamma_x^2  \beta_x^2p^8\eta^{8}C_g^8C_f^2L_f^6 }{\mu^2} +  \frac{115200\gamma_y^2p^2\eta^2 C_g^2L_f^4 \sigma_f^2 }{\mu^2} \\
			& \quad + \frac{21600\gamma_x^2\beta_x^2p^2\eta^2 C_g^6C_f^2L_f^4}{\alpha\mu^2} + \frac{1800\alpha^2\eta\sigma_g^2C_g^2L_f^4}{K\mu^2}  + \frac{300\beta_y\eta\sigma_f^2C_g^2L_f^2}{K\mu^2} +  \frac{7200\gamma_x^2p^2\beta_x^2\eta^2L_f^4C_g^6C_f^2}{\beta_y^2\mu^2} \\
            & \quad + 144(C_g^4L_f^2+ C_f^2L_g^2)\gamma_x^2  \beta_x^2p^4\eta^4 C_g^2C_f^2 +41472\gamma_y^2\beta_y^2\alpha^2p^6\eta^6C_g^4L_f^4 \sigma_g^2 +  497664\gamma_y^2\beta_y^2\alpha^2\gamma_x^2  \beta_x^2p^{10}\eta^{10}C_g^8C_f^2L_f^4 \\
            & \quad +  1152\gamma_y^2\beta_y^2p^4\eta^4 C_g^2L_f^2 \sigma_f^2+ \frac{6(3C_g^2 L_f^2\sigma_g^2 + 3C_f^2\sigma_{g'}^2 + 3C_g^2\sigma_f^2)}{\beta_x \eta T} + 288\gamma_x^2p^2\eta^2(C_g^4L_f^2+ C_f^2L_g^2 ) C_g^2C_f^2 + 864\gamma_x^2  \beta_x^2p^4\eta^4 C_g^6C_f^4 \\
            & \quad + 1296\alpha^2p^2\eta^2C_g^4\sigma_g^2 +  15552\alpha^2\gamma_x^2 \beta_x^2p^6\eta^6C_g^8C_f^2  + \frac{12\beta_x \eta C_f^2 \sigma_{g'}^2}{K}  + \frac{12\beta_x\eta C_g^2\sigma_f^2}{K} \\
            & \quad + \frac{165888\beta_y^2\gamma_y^2\alpha^2p^4\eta^4C_g^4L_f^4 \sigma_g^2}{\beta_x^2} +  \frac{1990656\beta_y^2\gamma_y^2\alpha^2\gamma_x^2  \beta_x^2p^8\eta^{8}C_g^8C_f^2L_f^4}{\beta_x^2}  +  \frac{4716\beta_y^2\gamma_y^2p^2\eta^2C_g^2L_f^2  \sigma_f^2}{\beta_x^2} \\
            & \quad  + \frac{432\gamma_x^2 \beta_x^2p^2\eta^2 C_g^6C_f^2}{\alpha} + \frac{36\alpha^2\eta C_g^2\sigma_g^2}{K}  + \frac{1800 \gamma_x^2  \beta_x^2p^4\eta^4 C_g^6C_f^2L_f^4}{\mu^2} \\
            & \quad + \frac{600}{\mu^2} (1728\gamma_y^2\beta_y^2\alpha^2p^6\eta^6C_g^4L_f^6 \sigma_g^2 +  20736\gamma_y^2\beta_y^2\alpha^2\gamma_x^2  \beta_x^2p^{10}\eta^{10}C_g^8C_f^2L_f^6  +  48\gamma_y^2\beta_y^2p^4\eta^4 C_g^2L_f^4 \sigma_f^2) \ .  \\
		\end{aligned}
	\end{equation}
 }
	where the second step holds due to Lemmas~\ref{lemma:consensus_x},~\ref{lemma:consensus_y},~\ref{lemma:variance_u},~\ref{lemma:variance_v},~\ref{lemma:h_variance},~\ref{lemma:y_star}. 
	Then, we enforce the coefficient of $\frac{1}{T}\sum_{t=0}^{T-1}\mathbb{E}[\|\bar{\mathbf{u}}_t\|^2]$ to be non-positive in the following. In particular, it can be done by solving the following inequalities:
	\begin{equation} 
	\begin{aligned}
        & \frac{100\gamma_x^2 C_g^4L_f^4}{\gamma_y^2\mu^4} - \frac{1}{2} \leq -\frac{1}{4}  \ , \\
        &\frac{300C_g^2L_f^2(1+6L_f^2)}{\mu^2}\frac{2\gamma_x^2C_g^2}{\alpha^2}+ \frac{300C_g^2L_f^2}{\mu^2}\frac{4\gamma_x^2 L_f^2C_g^2}{\beta_y^2} + \frac{300C_g^2L_f^2}{\mu^2} \frac{12L_f^2 \gamma_x^2 C_g^2}{\alpha} + \frac{72\gamma_x^2 C_g^4}{\alpha} \\
        & \quad + \frac{48\gamma_x^2(C_g^4L_f^2+ C_f^2L_g^2)}{\beta_x^2} -  \frac{1}{4} \leq 0 \ . \\
	\end{aligned}
\end{equation}
Furthermore, we have the following inequalities:
	\begin{equation} 
	\begin{aligned}
        & \frac{100\gamma_x^2 C_g^4L_f^4}{\gamma_y^2\mu^4} - \frac{1}{2} \leq -\frac{1}{4} \ , \\
        & \frac{\gamma_x^2}{\alpha^2}\frac{600C_g^4L_f^2(1+6L_f^2)}{\mu^2} \leq \frac{1}{16} \ ,  \\
        & \frac{\gamma_x^2}{\alpha} (\frac{3600C_g^4L_f^4}{\mu^2}+72C_g^4) \leq \frac{1}{16} \ , \\
        & \frac{\gamma_x^2}{\beta_y^2}\frac{1200C_g^4L_f^4}{\mu^2} \leq \frac{1}{16}  \ , \\
        & \frac{\gamma_x^2}{\beta_x^2}48(C_g^4L_f^2+C_f^2L_g^2) \leq \frac{1}{16} \ ,  \\
	\end{aligned}
\end{equation}
By solving these inequalities, we can get
	\begin{equation} 
	\begin{aligned}
        & \gamma_x \leq \min\Big\{ \frac{\gamma_y\mu^2}{20C_g^2L_f^2}, \frac{\alpha\mu}{100C_g^2L_f\sqrt{1+6L_f^2}}, \frac{\sqrt{\alpha}\mu}{24C_g\sqrt{100C_g^2L_f^4+2C_g^2\mu^2}},
        \frac {\beta_y\mu}{144C_g^2L_f^2}, \frac{\beta_x}{32\sqrt{C_g^4L_f^2+C_f^2L_g^2}} \Big\} \ .  \\     
	\end{aligned}
\end{equation}
Similarly, we enforce the coefficient of $\frac{1}{T}\sum_{t=0}^{T-1}\mathbb{E}[\|\bar{\mathbf{v}}_t\|^2]$ to be non-positive as follows:
\begin{equation} 
	\begin{aligned}
        & \frac{300C_g^2L_f^2}{\mu^2}\frac{4\gamma_y^2 L_f^2}{\beta_y^2} +\frac{48\gamma_y^2 C_g^2L_f^2}{\beta_x^2} - \frac{18\gamma_y C_g^2L_f^2}{\mu } \leq 0 \ .  \\
	\end{aligned}
\end{equation}
Then, it can be done by solving the following inequalities:
\begin{equation} 
	\begin{aligned}
        & \frac{1200\gamma_yL_f^2}{\mu^2\beta_y^2} \leq \frac{9}{\mu} \ ,  \\
        & \frac{48\gamma_y}{\beta_x^2} \leq \frac{9}{\mu} \ . \\
	\end{aligned}
\end{equation}
Therefore, we can get
\begin{equation} 
	\begin{aligned}
        & \gamma_y \leq \min\Big\{  \frac{3\mu\beta_y^2}{400L_f^2}, \frac{3\beta_x^2}{16\mu}\  \Big\} \ . 
	\end{aligned}
\end{equation}

As a result, by setting $\alpha>0$, $\beta_x>0$, $\beta_y>0$, $\eta \leq \min\{\frac{1}{2\gamma_x L_{\phi}}, \frac{1}{\alpha}, \frac{1}{\beta_x}, \frac{1}{\beta_y}, 1\}$, and  
    \begin{equation}
        \begin{aligned}
        & \gamma_x \leq \min\Big\{ \frac{\gamma_y\mu^2}{20C_g^2L_f^2},
         \frac{\alpha\mu}{100C_g^2L_f\sqrt{1+6L_f^2}}, \frac{\beta_x}{32\sqrt{C_g^4L_f^2+C_f^2L_g^2}} ,  \frac {\beta_y\mu}{144C_g^2L_f^2}, \frac{\sqrt{\alpha}\mu}{24C_g\sqrt{100C_g^2L_f^4+2C_g^2\mu^2}}\Big\} \ , \\  
        & \gamma_y \leq \min\Big\{ \frac{1}{6L_f},
        \frac{3\mu\beta_y^2}{400L_f^2}, \frac{3\beta_x^2}{16\mu}\  \Big\} \ , \\
        \end{aligned}
    \end{equation}
we can get
\begin{equation} 
		\begin{aligned}
			& \frac{1}{T}\sum_{t=0}^{T-1} \mathbb{E}[\|\nabla \Phi(\bar{\mathbf{x}}_{t})\|^2] \leq \frac{2(\Phi(\bar{\mathbf{x}}_{0})-\Phi(\bar{\mathbf{x}}_{T})}{\gamma_x\eta T}
			  \ + \frac{24C_g^2L_f^2}{\gamma_y\eta\mu T} \|\bar{\mathbf{y}}_{0}   - \mathbf{y}^{*}(\bar{\mathbf{x}}_0)\| ^2 \\
            &  \quad+ \frac{300C_g^2L_f^2(1+6L_f^2)}{\mu^2} \frac{\sigma_g^2}{\alpha \eta TK}  + \frac{300C_g^2L_f^2(1+6L_f^2)}{\mu^2}\frac{12\gamma_x^2\beta_x^2p^2\eta^2 C_g^4C_f^2}{\alpha^2}  +\frac{300C_g^2L_f^2(1+6L_f^2)}{\mu^2}\frac{\alpha\eta \sigma_g^2}{K}\\
            &  \quad+ \frac{600(L_f^2 \sigma_g^2+\sigma_f^2)C_g^2L_f^2}{\beta_y\eta T\mu^2} + \frac{43200\gamma_x^2\beta_x^2p^4\eta^4 C_g^6C_f^2L_f^4}{\mu^2}  + \frac{64800 \alpha^2p^2\eta^2C_g^4L_f^6\sigma_g^2}{\mu^2} +   \frac{7776000\alpha^2\gamma_x^2  \beta_x^2p^6\eta^6C_g^8C_f^2 L_f^4}{\mu^2}\\
             & \quad + \frac{4147200\gamma_y^2\alpha^2p^4\eta^4C_g^4L_f^6 \sigma_g^2}{\mu^2}+  \frac{49766400\gamma_y^2\alpha^2\gamma_x^2  \beta_x^2p^8\eta^{8}C_g^8C_f^2L_f^6 }{\mu^2} +  \frac{115200\gamma_y^2p^2\eta^2 C_g^2L_f^4 \sigma_f^2 }{\mu^2} \\
			& \quad + \frac{21600\gamma_x^2\beta_x^2p^2\eta^2 C_g^6C_f^2L_f^4}{\alpha\mu^2} + \frac{1800\alpha^2\eta\sigma_g^2C_g^2L_f^4}{K\mu^2}  + \frac{300\beta_y\eta\sigma_f^2C_g^2L_f^2}{K\mu^2} +  \frac{7200\gamma_x^2p^2\beta_x^2\eta^2L_f^4C_g^6C_f^2}{\beta_y^2\mu^2} \\
            & \quad + 144(C_g^4L_f^2+ C_f^2L_g^2)\gamma_x^2  \beta_x^2p^4\eta^4 C_g^2C_f^2 +41472\gamma_y^2\beta_y^2\alpha^2p^6\eta^6C_g^4L_f^4 \sigma_g^2 +  497664\gamma_y^2\beta_y^2\alpha^2\gamma_x^2  \beta_x^2p^{10}\eta^{10}C_g^8C_f^2L_f^4 \\
            & \quad +  1152\gamma_y^2\beta_y^2p^4\eta^4 C_g^2L_f^2 \sigma_f^2+ \frac{6(3C_g^2 L_f^2\sigma_g^2 + 3C_f^2\sigma_{g'}^2 + 3C_g^2\sigma_f^2)}{\beta_x \eta T} + 288\gamma_x^2p^2\eta^2(C_g^4L_f^2+ C_f^2L_g^2 ) C_g^2C_f^2 + 864\gamma_x^2  \beta_x^2p^4\eta^4 C_g^6C_f^4 \\
            & \quad + 1296\alpha^2p^2\eta^2C_g^4\sigma_g^2 +  15552\alpha^2\gamma_x^2 \beta_x^2p^6\eta^6C_g^8C_f^2  + \frac{12\beta_x \eta C_f^2 \sigma_{g'}^2}{K}  + \frac{12\beta_x\eta C_g^2\sigma_f^2}{K} \\
            & \quad + \frac{165888\beta_y^2\gamma_y^2\alpha^2p^4\eta^4C_g^4L_f^4 \sigma_g^2}{\beta_x^2} +  \frac{1990656\beta_y^2\gamma_y^2\alpha^2\gamma_x^2  \beta_x^2p^8\eta^{8}C_g^8C_f^2L_f^4}{\beta_x^2}  +  \frac{4716\beta_y^2\gamma_y^2p^2\eta^2C_g^2L_f^2  \sigma_f^2}{\beta_x^2} \\
            & \quad  + \frac{432\gamma_x^2 \beta_x^2p^2\eta^2 C_g^6C_f^2}{\alpha} + \frac{36\alpha^2\eta C_g^2\sigma_g^2}{K}  + \frac{1800 \gamma_x^2  \beta_x^2p^4\eta^4 C_g^6C_f^2L_f^4}{\mu^2} \\
            & \quad + \frac{600}{\mu^2} (1728\gamma_y^2\beta_y^2\alpha^2p^6\eta^6C_g^4L_f^6 \sigma_g^2 +  20736\gamma_y^2\beta_y^2\alpha^2\gamma_x^2  \beta_x^2p^{10}\eta^{10}C_g^8C_f^2L_f^6  +  48\gamma_y^2\beta_y^2p^4\eta^4 C_g^2L_f^4 \sigma_f^2) \ .  \\
		\end{aligned}
	\end{equation}
Since $\alpha$, $\beta_x$, $\beta_y$, $\gamma_x$, and $\gamma_y$ can be set as free hyperparameters, i.e., they are independent of the number of iterations, we can obtain
\begin{equation} 
	\begin{aligned}
		& \frac{1}{T}\sum_{t=0}^{T-1} \mathbb{E}[\|\nabla \Phi(\bar{\mathbf{x}}_{t})\|^2]  \leq \frac{2(\Phi({\mathbf{x}}_{0})-\Phi({\mathbf{x}}_{*})}{\gamma_x\eta T}
		 + \frac{24C_g^2L_f^2}{\gamma_y\eta\mu T} \|{\mathbf{y}}_{0}   - \mathbf{y}^{*}({\mathbf{x}}_0)\|^2  + O(\frac{\eta}{  K}) + O(\frac{1}{ \eta T})\\
		& \quad\quad \quad \quad \quad \quad  \quad \quad \quad \quad + O(p^2\eta^2) + O(p^4\eta^4) + O(p^6\eta^6) + O(p^8\eta^8) + O(p^{10}\eta^{10}) \ , \\
		\end{aligned}
	\end{equation}
	where ${\mathbf{x}}_{*}$ denotes the optimal solution.
\end{proof}

\newpage
\section{Experimental Details} \label{appendix:exp}
In Table~\ref{tabel:hyperparameters}, we summarize the hyperparameters for all methods. For a fair comparison, we use similar   learning rates  for all algorithms. For instance, the learning rate of LocalSGDAM and LocalSCGDAM is $\eta\gamma_x=0.099$, which is very close to that of LocalSGDM and CoDA. In addition,  the learning rate is decayed by 10 at $50\%$ and $75\%$ epochs for all methods. As for the number of epochs, we set it to 16 for Melanoma, 50 for FashionMNIST, and 100 for the others. Additionally, since CoDA is a stage-wise method, we use the same stage as that for learning rate decay. 

\begin{table*}[h] 
	\setlength{\tabcolsep}{15pt}
	\caption{The hyperparameters of different methods.}
	\label{tabel:hyperparameters}
	\begin{center}
		\begin{tabular}{l|c|c}
			\toprule
			Methods &  Hyperparameters & Value  \\
			\midrule
            \multirow{2} * {LocalSGDM} & learning rate & 0.1   \\
            & momentum coefficient & 0.1 \\
            \hline
            \multirow{1} * {CoDA} & learning rate & 0.1   \\
            \hline
            \multirow{3} * {LocalSGDAM} & learning rate $\eta$ & 0.3   \\
            & learning rate coefficient $\gamma_x$ and $\gamma_y$ & 0.33 \\
            & momentum coefficient $\beta_x$ and $\beta_y$ & 3.3 \\
            \hline
            \multirow{4} * {LocalSCGDAM (Ours)} & learning rate $\eta$ & 0.3   \\
            & learning rate coefficient $\gamma_x$ and $\gamma_y$ & 0.33 \\
            & momentum coefficient $\beta_x$ and $\beta_y$ & 3.3 \\
            & coefficient $\alpha$ & 3.0 \\
			\bottomrule 
		\end{tabular}
	\end{center}
\end{table*}

The classifier for FashionMNIST is summarized in Table~\ref{tabel:classifier}. 

\begin{table*}[h] 
	\setlength{\tabcolsep}{15pt}
	\caption{The classifier for FashionMNIST.}
	\label{tabel:classifier}
	\begin{center}
		\begin{tabular}{l|c|c}
			\toprule
			Layers &  Operators & Configuration  \\
			\midrule
            \multirow{4} * {Layer 1} & CNN&  output channels: 32  \\
            & Batchnorm & - \\
            & ReLU &  -\\
            & Maxpooling  & kernel size: $2$, stride: $2$ \\
            \hline
            \multirow{4} * {Layer 2} & CNN&  output channels: 64  \\
            & Batchnorm & - \\
            & ReLU &  -\\
            & Maxpooling  & kernel size: $2$, stride: $2$ \\
            \hline
            \multirow{1} * {Layer 3} & FC & output features: 600  \\
            \hline
             \multirow{1} * {Layer 4} & FC & output features: 120  \\
            \hline
             \multirow{1} * {Layer 5} & FC & output features: 1  \\
			\bottomrule 
		\end{tabular}
	\end{center}
\end{table*}

\begin{table}[h] 
	\setlength{\tabcolsep}{4.5pt}
    \vspace{-0.1in}
	\caption{Description of benchmark datasets. Here, \#pos denotes the number of positive samples, and \#neg denotes the number of negative samples.}
	\vspace{-0.05in}
	\begin{center}
		\begin{tabular}{l|cc|cc}
			\toprule
			\multirow{2} * {Dataset} &  \multicolumn{2}{c|} {Training set} & \multicolumn{2}{c} {Testing set} \\
			\cline{2-5}
			& \#pos & \#neg  &  \#pos & \#neg  \\
			\midrule
			CIFAR10  & 2,777 & 25,000 & 5,000 & 5,000\\
			\hline 
			CIFAR100  & 2,777 & 25,000 & 5,000 & 5,000\\
			\hline
			STL10 & 277 & 2,500 & 8,000 & 8,000 \\
			\hline
			FashionMNIST & 3,333 & 30,000 & 5,000 & 5,000 \\
			\hline
			CATvsDOG & 1,112 & 10,016  & 2,516 & 2,888\\
			\hline
			Melanoma & 868 & 25,670 & 117 & 6,881 \\
			\bottomrule 
		\end{tabular}
	\end{center}
	\vspace{-0.1in}
	\label{dataset}
\end{table}

\end{document}